\newcommand{\comment}[1]{}
\newenvironment{myitemize}[1][]{%%%%%定义新环境
\begin{list}{{\hei #1}} %%标签格式
    {
     \setlength{\leftmargin}{0.4em}     %左边界
     \setlength{\parsep}{0.2em}         %段落间距
     \setlength{\topsep}{0.2em}         %列表到上下文的垂直距离
     \setlength{\itemsep}{0.2em}        %标签间距
     \setlength{\labelsep}{0.4em}     %标号和列表项之间的距离,默认0.5em
     \setlength{\itemindent}{0.1em}    %标签缩进量
     \setlength{\listparindent}{1.4em} %段落缩进量
    }}
{\end{list}}%%%%%
\newcommand{\op}{\mbox{op}}
\newtheorem{thm}{Theorem}%[section]
\newtheorem{lem}{Lemma}
\newtheorem{cor}{Corollary}
\newtheorem{defn}{Definition}
\newtheorem{assum}{Assumption}
\DeclareMathOperator*{\argmin}{argmin}
\newcommand{\vecto}{\mcode{vec}}
\newcommand{\vect}[1]{\mcode{vec}\left(#1\right)}
\newcommand{\Rsss}[3]{{\mathbb{R}^{#1\times#2\times#3}}}
\newcommand{\nee}{{n_\epsilon}}
\newcommand{\kt}{{k_{\bm{w}}}}
\newcommand{\Jhn}{{\hat{\bm{J}}_{n}}}
\newcommand{\led}[1]{\overset{\text{\ding{#1}}}{\leq}}
\renewcommand{\SS}{\bm{\mathbb{S}}}
\newcommand{\Ss}{\bm{\mathcal{S}}}
\newcommand{\lr}{\left\langle}
\newcommand{\rl}{\right\rangle}
\newcommand{\lv}{\left\|}
\newcommand{\rv}{\right\|}
\newcommand{\lvs}{\left|}
\newcommand{\rvs}{\right|}
\newcommand{\xmi}[1]{\bm{x}_{(#1)}}
\newcommand{\ymi}[1]{\bm{y}_{(#1)}}
\newcommand{\umi}[1]{\bm{u}^{(#1)}}
\newcommand{\vmi}[1]{\bm{v}^{(#1)}}
\newcommand{\emmi}[1]{\bm{e}_{(#1)}}
\newcommand{\wmi}[1]{\bm{w}_{(#1)}}
\newcommand{\wmii}[1]{\bm{w}^{(#1)}}
\newcommand{\wmin}[1]{\bm{w}_n^{(#1)}}
\newcommand{\wms}[1]{\bm{w}_*^{#1}}
\newcommand{\wmn}{\bm{w}^n}
\newcommand{\rmi}[1]{\bm{r}_{#1}}
\newcommand{\am}{\bm{a}}
\newcommand{\bmm}{\bm{b}}
\newcommand{\dm}{\bm{d}}
\newcommand{\emm}{\bm{e}}
\newcommand{\wm}{\bm{w}}
\newcommand{\xm}{\bm{x}}
\newcommand{\ym}{\bm{y}}
\newcommand{\zm}{\bm{z}}
\newcommand{\Wmi}[1]{\bm{W}^{(#1)}}
\newcommand{\Bmi}[1]{\bm{B}_{#1}}
\newcommand{\Cm}{\bm{C}}
\newcommand{\Dm}{\bm{D}}
\newcommand{\Gm}{\bm{G}}
\newcommand{\Pm}{\bm{P}}
\newcommand{\Fm}{\bm{F}}
\newcommand{\Mm}{\bm{M}}
\newcommand{\Nm}{\bm{N}}
\newcommand{\EE}{\mathbb{E}}
\newcommand{\Pro}{\mathbb{P}}
\newcommand{\lam}{\boldsymbol{\lambda}}
\newcommand{\Bi}[2]{{\sf{B}}^{#1}(#2)}
\newcommand{\Rs}[1]{\mathbb{R}^{#1}}
\newcommand{\Rss}[2]{\mathbb{R}^{#1\times #2}}
\newcommand{\A}{\bm{\mathcal{A}}}
\newcommand{\D}{\bm{\mathcal{D}}}
\newcommand{\Am}{\bm{{A}}}
\newcommand{\Bm}{\bm{{B}}}
\newcommand{\Em}{\bm{E}}
\newcommand{\Xm}{\bm{X}}
\newcommand{\Zm}{\bm{Z}}
\newcommand{\Jm}{\bm{J}}
\newcommand{\Hm}{\bm{H}}
\newcommand{\Wm}{\bm{W}}
\newcommand{\Qm}{\bm{Q}}
\renewcommand{\Im}{\bm{I}}
\newcommand{\f}{\ensuremath{\bm{f}}}
\newcommand{\x}{\ensuremath{\bm{x}}}
\begin{document}

% \title{\textbf{Gradient Descent Finds Local Minima as Fast as It Finds First-Order Stationary Point}}
\title{\textbf{The Landscape of Deep Learning Algorithms}}

\author{Pan Zhou\footnote{National University of Singapore, Singapore. Email: pzhou@u.nus.edu} \and
Jiashi Feng\footnote{National University of Singapore, Singapore. Email: elefjia@nus.edu.sg}
}

\maketitle

\begin{abstract}
This paper studies the landscape of empirical risk  of deep  neural networks by theoretically analyzing its convergence behavior to the population risk as well as  its stationary points and properties. For an $l$-layer linear neural network, we  prove  its empirical risk uniformly converges to its population risk at the rate of $\mathcal{O}(r^{2l}\sqrt{d\log(l)}/\sqrt{n})$ with training sample size of $n$, the total weight dimension of $d$ and the magnitude bound $r$ of weight of each layer. We then derive  the stability and  generalization bounds for the empirical risk based on this result. Besides, we establish
the uniform convergence of gradient of the empirical risk  to its population counterpart. We prove the one-to-one correspondence of the non-degenerate stationary points between the empirical  and population risks with convergence guarantees, which  describes the landscape of deep neural networks. In addition, we  analyze these properties for deep nonlinear neural networks with sigmoid activation functions. We prove   similar results for convergence behavior of their empirical risks  as well as the gradients and analyze properties of their non-degenerate stationary points.

To our best knowledge, this work is the first one theoretically characterizing landscapes of deep learning algorithms. Besides, our  results provide  the sample complexity of training a good deep neural network. We also provide theoretical understanding on  how  the neural network depth $l$, the layer width, the network size $d$ and parameter magnitude determine the neural network landscapes.
\end{abstract}

\section{Introduction}
Deep learning algorithms have  achieved remarkable  practical successes in many fields, such as computer vision~\cite{hinton2006fast,szegedy2015going,he2016deep}, natural language processing~\cite{collobert2008unified,bakshi1993wave}, and speech recognition~\cite{hinton2012deep,graves2013speech}, to name a few. However,  theoretical understanding on properties  of these deep learning algorithms still  lags their practical achievements~\cite{shalev2017failures,Kawaguchi2016} due to their high non-convexity and internal complexity.
%This raises  an important  question: what conditions (\emph{e.g.}, how many training samples) can guarantee the practical success of the empirical risk minimization  of deep learning algorithms?
In practice, deep learning algorithms usually learn their  model parameters by minimizing the \emph{empirical risk} (a sum of losses associated to each training sample). Thus, we aim to analyze   landscape of the {empirical risk}  of deep learning algorithms  for better understanding their performance in practice.

Formally, we consider a deep  neural network  model consisting  of $l$ layers $(l\geq 2)$ which is trained by minimizing the commonly used squared loss function
 over samples $\xm\in\Rs{\dm_0}$ from unknown distribution  $\D$.
Ideally,  deep learning algorithms can find the optimal parameter $\wm^*$ by minimizing the  \textit{population risk}:
\begin{equation*}\label{PRM}
\min_{\wm} \Jm(\wm) \triangleq \EE_{\xm\sim\D}\ f(\wm,\xm),
\end{equation*}
where $\wm$ is the model parameter and $f(\wm,\xm)=\frac{1}{2}\|\vmi{l}-\ym\|_2^2$ is the squared  loss associated to the sample $\xm \sim \D$. Here $\vmi{l}$ is the output of the $l$-th layer and $\ym$ is the target output for the sample $\xm$. In practice,  as the sample distribution $\D$ is usually unknown and only finite
training samples $\left\{\xmi{i},\ymi{i}\right\} _{i=1}^{n}$  \textit{i.i.d.}\ drawn from $\D$ are provided, one usually trains the network model by minimizing  the {empirical risk}:
\begin{equation*}\label{ERM}
\min_{\wm} \Jhn(\wm) \triangleq \frac{1}{n}\sum_{i=1}^{n} f(\wm,\xmi{i}).
\end{equation*}
In this work, we characterize the landscape of empirical risk $\Jhn(\wm)$ of deep learning algorithms by analyzing its convergence behavior  to the  population risk $\Jm(\wm)$ as well as its stationary points and  properties, for both multi-layer linear and nonlinear neural networks. In particular, we first prove the uniform convergence of the empirical risk $\Jhn(\wm)$  to its population risk $\Jm(\wm)$ with the convergence rate of $\mathcal{O}(r^{2l}\sqrt{d\log(l)}/\sqrt{n})$ with training sample size of $n$, the total weight dimension of $d$ and the magnitude bound $r$ of weight of each layer. Such  result also  bounds the  generalization error of deep learning algorithms  and  implies  stability of their empirical risk.
 Besides, we establish  the uniform convergence rate $\mathcal{O}(r^{2l-1}\sqrt{ld\log(l)\max_j(\dm_j\dm_{j-1})}/\sqrt{n})$ of empirical gradients $\nabla \Jhn(\wm) $  to its population counterpart $\nabla \Jm(\wm)$ where $\dm_j$ denotes the output dimension of the $j$-th layer. Accordingly, as long  as the training sample size $n$ is sufficiently large,  any stationary point of $\Jhn(\wm)$ is also a stationary point of $\Jm(\wm)$ and vise versa.  We then further establish the exact correspondence of their non-degenerate stationary points. Indeed, the corresponding non-degenerate stationary points also uniformly converge to each other.  Such analysis results also reveal the role of  the depth $l$ of a neural network in the convergence behavior. Also, the width factor $\sqrt{\max_j(\dm_j\dm_{j-1})}$ and the total network size $d$ are critical to the convergence performance. In addition, controlling magnitudes of the parameters (weights) in deep neural networks are demonstrated to be important for performance. To our best knowledge, this work is the first one theoretically characterizing landscapes of both deep linear and nonlinear neural networks.

\section{Related Work}
To date, only a few theories are developed for understanding deep learning and they can be roughly divided into three categories. The first category aims to analyze the training error of deep learning. Bartlett~\cite{bartlett1997valid} first analyzed the misclassification probability of deep learning for two-classification problems. On the other hand, Baum~\cite{baum1988capabilities} pointed out that zero training error can be obtained when the last layer of a network has more units than training samples. However, when facing millions of training data, an extreme-wide network suffers from over-fitting problems and is impractical. Later, Soudry \textit{et al.}~\cite{soudry2016no} proved that for deep leaky rectified linear units (ReLU) networks with one single output, the training error at its any local minimum is zero if the product of the number of units in the last two layers is larger than the training sample size.

The second kind of  works~\cite{dauphin2014identifying,choromanska2015loss,Kawaguchi2016, tian2017layered} focus on analyzing the loss surfaces of highly nonconvex loss functions in deep learning, such as the distribution of stationary points. Those results may be helpful for understanding radically different practical performance of large- and small-size networks~\cite{choromanska2015open}. Among them, Dauphin~\textit{et al.}~\cite{dauphin2014identifying} experimentally verified the existence of a large number of saddle points in deep neural networks. With strong assumptions,  Choromanska~\textit{et al.}~\cite{choromanska2015loss} established  connection between the loss function of deep ReLU networks and the spherical spin-class model, describing the location of local minima. Later, Kawaguchi~\cite{Kawaguchi2016} proved the existence of degenerate saddle points for  deep linear neural networks with squared loss function  and the fact that any local minimum is also a global minimum, with slightly weaker assumptions. By utilizing dynamical system analysis, Tian~\cite{tian2017layered} declared that for two-layered bias-free networks with ReLUs, if the inputs follow Gaussian distribution, gradient algorithm with certain symmetric weight initialization can guarantee the global convergence to the true weights. Recently, Nguyen~\textit{et al}.~\cite{Quynh2017surface} proved that: for a fully connected network with squared loss and analytic activation functions, almost all the local minima are globally optimal\textemdash  when one hidden layer has more units than training samples and the network structure from this layer is pyramidal.

Thirdly, some recent works try to alleviate the analysis difficulty by relaxing the problems into easier ones. For instance, by utilizing the kernel strategy, Zhang~\textit{et al.}~\cite{zhang2016l1} transformed $\ell_1$-regularized multi-layer networks into single-layer convex problems which have almost the same loss as that of the original one with high probability. Later they adopted similar strategy and transformed convolutional neural network into a convex problem~\cite{zhang2016convexified}. In this way, saddle points and local minima can be avoided and the learning efficiency is also higher.

However, there are no works that analyze the landscape of the empirical risk  of deep learning algorithms. Notice, some previous works analyzed the empirical risk for single-layer optimization problems. For example, Negahban~\textit{et al.}~\cite{negahban2009unified} proved that for a regularized convex program, the minima of empirical risk uniformly converges to the true minima  of the population risk under certain conditions. Mei~\textit{et al.}~\cite{mei2016landscape} analyzed the   convergence behavior of empirical risk for nonconvex problems. However, they only considered the single-layer nonconvex problems and their analysis demands strong  sub-Gaussian and sub-exponential assumptions on  the gradient and Hession of empirical risk respectively. In contrast, we get rid of these assumptions.  Besides, they did not analyze the convergence rate of the empirical risk, stability and generalization error of deep learning which is presented in our work. Gonen~\textit{et al.}~\cite{gonen2017fast} proved that for nonconvex problems without degenerated saddle points, the difference between empirical risk and population risk can be bounded. Unfortunately, the loss of deep learning is highly nonconvex and has degenerated saddle points~\cite{fyodorov2007replica,dauphin2014identifying,Kawaguchi2016}. Thus, their analysis results are not applicable to deep learning.

\section{Preliminaries \label{sec:Preliminaries}}
Throughout the paper, we denote matrices by
boldface capital letters, \textit{e.g.} $\Am$. Vectors are denoted by boldface
lowercase letters, \textit{e.g.} $\am$, and scalars are denoted by
lowercase letters, \textit{e.g.} $a$. We define the $r$-radius ball as $\Bi{d}{r}\triangleq\{\zm\in\Rs{d}\,|\,\|\zm\|_{2}\leq r\}$.
For explaining the results, we also need the vectorization operation $\vecto(\cdot)$. It is defined as $\vecto(\Am)=\left(\Am(:,1);\cdots;\Am(:,t)\right)\in\Rs{st}$ that vectorizes $\Am\in\Rss{s}{t}$ along its columns. We use  $d\!=\!\sum_{j=1}^{l}\!\dm_j\dm_{j-1}$ to denote the total weight parameter dimension, where $\dm_j$ denotes the output dimension of the $j$-th layer (see blow).

Here we briefly describe deep linear and nonlinear neural network models.  Suppose both networks consist of $l$ layers. We use $\umi{j}$ and $\vmi{j}$ to respectively denote the input and output of the $j$-th layer, $\forall j=1,\ldots, l$.

{\textbf{Deep linear neural networks:}} the function of the $j$-th layer is formulated as
\begin{equation*}\label{formMNN}
\umi{j}\triangleq\Wmi{j}\vmi{j-1}\in\Rs{\dm_j}\,,\quad \vmi{j}\triangleq \umi{j}\in\Rs{\dm_j},\  \forall j=1,\cdots,l,
\end{equation*}
where $\vmi0=\xm$ is the input of the network; $\Wmi{j}\in\Rss{\dm_{j}}{\dm_{j-1}}$
is the weight matrix of the $j$-th layer.

{\textbf{Deep nonlinear neural networks:}} here we use the sigmoid function as the non-linear activation function. Accordingly, the function within the $j$-th layer is written as
\begin{equation*}\label{formMNN}
\umi{j}\triangleq\Wmi{j}\vmi{j-1}\in\Rs{\dm_j}\,,\quad \vmi{j}\triangleq h_j(\umi{j})=(\sigma(\umi{j}_1); \cdots;\sigma(\umi{j}_{\dm_j}))\in\Rs{\dm_j},\ \forall j=1,\cdots,l,
\end{equation*}
where $\umi{j}_{i}$ denotes the $i$-th entry of $\umi{j}$ and $\sigma(\cdot)$ is the sigmoid function, \textit{i.e.}, $\sigma(a)=1/(1+\mcode{e}^{-a})$.
Following the common practice in deep learning, both network models adopt the squared loss function. For notational simplicity, we further define $\emm\triangleq \vmi{l}-\ym$
as the output error vector, where $\vmi{l}$
is output of the network and $\ym\in\Rs{\dm_l}$ is the target output. Then the squared loss is defined  as $f(\wm,\xm)=\frac{1}{2}\|\emm\|_2^2$, where $\wm=(\wmi{1};\cdots;\wmi{l})\in\Rs{d}$ contains all the weights in the network in which $\wmi{j}=\vect{\Wmi{j}}\in\Rs{\dm_j\dm_{j-1}}$. Then the \textit{empirical risk}  $\Jhn(\wm)$ is computed  as
\begin{equation}\label{qrwwrwe}
\Jhn(\wm)=\frac{1}{n}\sum_{i=1}^{n} f(\wm,\xmi{i})=\frac{1}{2n}\sum_{i=1}^{n}\|\emmi{i}\|_2^2,
\end{equation}
where $\emmi{i}$ represents the output error of the $i$-th sample $\xmi{i}$.

\section{Results for Deep Linear Neural Networks}
We first prove the uniform convergence of the empirical risk  to the population risk for deep linear neural networks. Based on this result, we also give stability and generalization bounds. Subsequently, we present the uniform convergence guarantee of the empirical gradient to its population counterpart, and then analyze properties of  non-degenerate stationary points of the empirical risk.

In the analysis, we assume that the input data $\xm$ are $\tau^2$-sub-Gaussian and meanwhile have bounded magnitude, as stated in Assumption~\ref{assumption12}.
\begin{assum}\label{assumption12}
	The input datum  $\xm \in \mathbb{R}^{\dm_0} $ has zero mean and is $\tau^2$-sub-Gaussian. That is, $\xm$ obeys
	\begin{align*}
	\EE[\exp\left(\langle \lam,\xm\rangle\right)] \leq \exp\left(\frac{\tau^2\|\lam\|_2^2}{2}\right),\ \forall \lam\in\Rs{\dm_0}.
	\end{align*}
	Besides, the magnitude $\xm$ are bounded as $\|\xm\|_2\leq r_x$, where $r_x$ is a positive universal constant.
\end{assum}
Note that any random vector $\zm$ with independent random bounded entries is  sub-Gaussian and  satisfies Assumption~\ref{assumption12}~\cite{VRMT}. Moreover, for the parameters $\tau$ and $r_x$,
we have $\tau=\|\zm\|_\infty\leq \|\zm\|_2\leq r_x$. Here the assumption of having bounded magnitude generally holds for real data (\emph{e.g.}, images and speech signal). In addition, we also assume the weight parameters  $\wmi{j}$ of each layer to be bounded. We use $\wm\in\Omega$ to denote the constraint $\{\wm\,|\,\wmi{j}\in\Bi{\dm_j\dm_{j-1}}{\rmi{j}},\, \forall j=1,\cdots, l\}$ where $\rmi{j}$ is a constant. For notational simplicity, we let $r=\max_j \rmi{j}$. This is a common and reasonable assumption. For instance,
Xu~\textit{et al.}~\cite{xu2012robustness} use such an assumption for robustness analysis of deep neural networks.

Though we only analyze deep linear neural networks in this section, with making proper assumptions our  results can be generalized to deep ReLU neural networks by applying the results from  Choromanska~\textit{et al.}~\cite{choromanska2015loss} and Kawaguchi~\cite{Kawaguchi2016}~\textemdash~they transformed deep ReLU neural networks into deep linear neural networks. We will leave this for future work.

\subsection{Uniform Convergence,  Stability and Generalization   of Empirical Risk }
Theorem~\ref{thm:stability} gives the uniform convergence results of empirical risk for deep linear neural networks.
\begin{thm} \label{thm:stability}
%\hyperlink{lemmauniloss}{Here}
Suppose Assumption~\ref{assumption12} on the input data $\xm$ holds and the activation functions in deep neural network are linear. Then there exist two universal constants $c_{f'}$ and $c_f$ such that if $n\geq c_{f'}\max(lr_x^4/(\dm_l d\varepsilon^2\tau^4\log(l)), d\log(l)/\dm_l)$, then
\begin{equation}\label{uniform_loss}
\sup_{\wm\in\Omega} \left|  \Jhn(\wm)- \Jm(\wm)\right|\leq \epsilon_l\triangleq c_f \tau  \max\left(\sqrt{\dm_l}\tau r^{2l},r^{l}\right) \sqrt{\frac{d\log(nl)+\log(8/\varepsilon)}{n}}
\end{equation}
holds with probability at least $1-\varepsilon$. Here $l$ is the number of layers in the neural network, $n$ is the sample size and $\dm_{l}$ is the dimension of the final layer.
\end{thm}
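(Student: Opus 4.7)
I plan to establish the uniform concentration in three steps: a pointwise deviation bound at each fixed $\wm$ via a Hanson--Wright/Bernstein-type inequality, a Lipschitz estimate of $f(\cdot,\xm)$ over the compact domain $\Omega$, and a covering-net union bound that lifts the pointwise guarantee to a uniform one.

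For the pointwise step, set $\Mm_\wm\triangleq\Wmi{l}\Wmi{l-1}\cdots\Wmi{1}$, so that the squared loss decomposes as
\begin{equation*}
f(\wm,\xm)=\tfrac12\xm^\top\Mm_\wm^\top\Mm_\wm\xm-\ym^\top\Mm_\wm\xm+\tfrac12\|\ym\|_2^2.
\end{equation*}
Under $\wm\in\Omega$ we have $\|\Mm_\wm\|_2\leq r^l$ and, since the range of $\Mm_\wm$ has dimension at most $\dm_l$, $\|\Mm_\wm^\top\Mm_\wm\|_F\leq\sqrt{\dm_l}\,r^{2l}$. The $\tau^2$-sub-Gaussianity of $\xm$ then yields, via a Hanson--Wright-type bound, concentration of the quadratic piece with variance proxy $\tau^4\dm_l r^{4l}$ and sub-exponential parameter $\tau^2 r^{2l}$, while the linear cross term concentrates sub-Gaussianly with variance proxy $\tau^2 r^{2l}\|\ym\|_2^2$. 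Averaging over the $n$ i.i.d.\ samples and applying Bernstein's inequality to each piece gives, for any $t>0$,
\begin{equation*}
\Pro\!\left(|\Jhn(\wm)-\Jm(\wm)|>t\right)\leq 4\exp\!\left(-c\,n\min\!\left(\tfrac{t^2}{\tau^2\max(\sqrt{\dm_l}\tau r^{2l},r^l)^2},\tfrac{t}{\tau^2 r^{2l}}\right)\right).
\end{equation*}
Inverting this yields a deviation of order $\tau\max(\sqrt{\dm_l}\tau r^{2l},r^l)\sqrt{\log(1/\varepsilon')/n}$ at confidence $1-\varepsilon'$ as soon as $n$ is large enough to sit in the Gaussian regime.

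For the Lipschitz step, telescoping $\Mm_\wm-\Mm_{\wm'}$ into $l$ summands and using $\|\Wmi{j}\|_2\leq\|\wmi{j}\|_2\leq r$ gives $\|\Mm_\wm-\Mm_{\wm'}\|_2\leq lr^{l-1}\|\wm-\wm'\|_2$; combined with $\|\Mm_\wm\xm-\ym\|_2\leq r^lr_x+\|\ym\|_2$ this implies $|f(\wm,\xm)-f(\wm',\xm)|\leq L\|\wm-\wm'\|_2$ with $L$ of order $lr^{2l-1}r_x^2$. For the covering step, I would take a $\zeta$-net of $\Omega=\prod_j\Bi{\dm_j\dm_{j-1}}{\rmi{j}}$ of cardinality at most $(3r/\zeta)^d$, union-bound the pointwise inequality over the net with the budget $\varepsilon$, and then transport to arbitrary $\wm\in\Omega$ through the Lipschitz bound. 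Choosing $\zeta$ as an inverse polynomial in $n$ and $l$ (so that $L\zeta\leq\epsilon_l/2$) produces the $d\log(nl)$ contribution to $\epsilon_l$ and the $\log(8/\varepsilon)$ term from the confidence budget; the stated lower bound on $n$ is precisely what forces the quadratic contribution to live in Bernstein's Gaussian tail rather than its heavier sub-exponential tail.

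The main obstacle I anticipate is keeping the $\tau$ dependence sharp throughout. A naive Hoeffding bound that uses only the deterministic bound $\|\xm\|_2\leq r_x$ would replace $\tau$ by $r_x$ and would lose both the $\tau^2\sqrt{\dm_l}$ factor and the outer $\tau$ that appear in $\epsilon_l$. It is the Hanson--Wright bound driven by $\|\Mm_\wm^\top\Mm_\wm\|_F$ rather than $\|\Mm_\wm^\top\Mm_\wm\|_2$ that supplies the $\sqrt{\dm_l}$ inside the maximum, and the separate treatment of the quadratic and cross terms is what supplies the outer $\tau$; matching these constants to the exact sample-size threshold stated in the theorem is where the bookkeeping is most delicate.
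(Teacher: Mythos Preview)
Your proposal is correct and matches the paper's argument: the paper also splits into pointwise concentration (its Lemma~7, which derives the same sub-exponential tail by treating the quadratic piece $\|v^{(l)}\|_2^2$ and the cross term $\ym^\top v^{(l)}$ separately, with $v^{(l)}=\Mm_\wm\xm$ shown to be $r^{2l}\tau^2$-sub-Gaussian), a Lipschitz estimate (its Lemma~10, giving the gradient bound of order $\sqrt{l}\,r^{2l-1}r_x^2$), and an $\epsilon$-net union bound over $\Omega$ with $\epsilon\asymp r/n$. The only cosmetic differences are that the paper extracts the $\sqrt{\dm_l}$ factor by summing $\dm_l$ squared sub-Gaussian coordinates of $v^{(l)}$ rather than via the rank-based bound $\|\Mm_\wm^\top\Mm_\wm\|_F\le\sqrt{\dm_l}\,r^{2l}$, and handles the transfer from net to arbitrary $\wm$ through Markov's inequality on the sup of the gradient rather than a direct deterministic Lipschitz bound.
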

From  Theorem~\ref{thm:stability}, one can observe that with increasingly larger sample size $n$, the difference between empirical risk and population risk decreases monotonically. In particular, when $n\to +\infty$, we have $|  \Jhn(\wm)- \Jm(\wm)|\to 0$. Then according to the definition of uniform convergence~\cite{vapnik1998statistical,shalev2010learnability}, we have under the distribution $\D$, the uniform convergence rate of the empirical risk of a deep linear neural network to its population risk  is $\mathcal{O}(1/\sqrt{n})$ (up to a $\log$ factor). Theorem~\ref{thm:stability} also characterizes the role of the depth $l$ in a deep network model for obtaining small difference between the empirical risk and population risk. Specifically, a deeper neural network will incur larger difference between empirical and population risk. Thus it needs more training samples for achieving good generalization performance. This result also matches the one in~\cite{bartlett1997valid} for achieving small misclassification probability in deep learning. Also, due to the factor $d$ in the convergence rate, a network of larger size also require more training samples. Theorem~\ref{thm:stability} also suggests one should not choose the weight $\wm$ with large magnitude (reflected by the factor $r$ in the theorem) for the sake of convergence rate. Therefore, adding regularization over the weight $\wm$, such as the commonly used $\|\wm\|_{2}^2$ and $\|\wm\|_1$, indeed help avoid over-fitting.

Based on  Theorem~\ref{thm:stability}, we proceed to analyze  the stability property of the empirical risk  and the  convergence rate of the  generalization error in expectation.  Let $\Ss=\{\xmi{1},\cdots,\xmi{n}\}$ denote the sample set in which the samples are~\textit{i.i.d.}~drawn from $\D$. When the optimal solution $\wmn$ to problem~\eqref{qrwwrwe} is computed by deterministic algorithms, then the generalization error is defined as $\epsilon_g=\Jhn(\wmn)-\Jm(\wmn)$. But one usually employs randomized algorithms (\textit{e.g.} stochastic gradient descent, SGD) for computing $\wmn$. For this case, stability and generalization error in expectation defined in Definition~\ref{defi1} are  used.
\begin{defn}{\textbf{(Stability and generalization in expectation)}}~\cite{vapnik1998statistical,shalev2010learnability, gonen2017fast}\label{defi1}
Assume randomized algorithm $\Am$ is employed, $(\xmi{1}',\cdots,\xmi{n}')\sim\D$ and $\wmn=\argmin_{\wm} \Jhn(\wm)$ is the empirical risk minimizer (ERM).
For every $j\in[n]$, suppose $\wms{j}=\argmin_{\wm} \frac{1}{n-1} \sum_{i\neq j} f_i(\wm,\xmi{i})$. We say that the ERM is on average stable with stability rate $\epsilon_s$ under distribution $\D$ if $\left|\EE_{\Ss\sim\D,\Am,(\xmi{1}',\cdots,\xmi{n}')\sim\D}\right.$ $\left.\frac{1}{n} \sum_{j=1}^{n}\left(f_j(\wms{j}, \xmi{j}') -f_j(\wmn,\xmi{j}')\right)\right|\leq \epsilon_s.$ The ERM is said to have generalization error with convergence rate $\epsilon_g$ under distribution $\D$ if we have $\left|\EE_{\Ss\sim\D,\Am}\left(\Jm(\wmn)-\Jhn(\wmn)\right)\right|\leq \epsilon_g.$
\end{defn}
Stability is useful for measuring the sensibility of empirical risk to the input and generalization error  measures the effectiveness of ERM on new data. Generalization error in expectation is especially useful for deep learning algorithms considering its internal randomness (from SGD optimization).
Now we present the results on stability and generalization performance of deep linear neural networks.
\begin{cor}\label{stability and generalization}
%\hyperlink{cor1}{Here}
Suppose Assumption~\ref{assumption12} on the input data $\xm$ holds and the activation functions in deep neural network are linear.
Then with probability at least $1-\varepsilon$, both the stability rate and the generalization error rate of ERM of deep linear neural network are at least $\epsilon_l$:
\begin{equation*}
\Bigg|\EE_{\Ss\sim\D,\Am,(\xmi{1}',\cdots,\xmi{n}'\!)\sim\D}\frac{1}{n}\! \sum_{j=1}^{n}\!\!\left(\!f_j(\wms{j},\!\xmi{j}'\!)\!-\!\!f_j(\wmn,\xmi{j}'\!)\! \right)\!\!\Bigg|\!\leq\! \epsilon_l,\, \ \Bigg|\EE_{\Ss\sim\D,\Am}\!\left(\!\Jm(\wmn\!)\!- \!\Jhn(\wmn\!)\!\right)\!\!\Bigg|\! \leq\! \epsilon_l,
\end{equation*}
where $\epsilon_l$ is defined in Eqn.~\eqref{uniform_loss}.
\end{cor}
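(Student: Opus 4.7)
The plan is to derive both bounds as consequences of the uniform convergence estimate in Theorem~\ref{thm:stability}, combined with elementary ERM optimality and a symmetry reduction. Throughout, I would work on the event $\mathcal{E}$ of probability at least $1-\varepsilon$ on which $\sup_{\wm \in \Omega} |\Jhn(\wm) - \Jm(\wm)| \leq \epsilon_l$, and use the crude bound $B \triangleq \sup\{f(\wm,\xm) : \wm \in \Omega,\ \|\xm\|_2 \leq r_x\} < \infty$ (finite because the constraints $\|\wmi{j}\|_2 \leq r$ and $\|\xm\|_2 \leq r_x$ force $\|\vmi{l}\|_2 \leq r^l r_x$, together with a corresponding bound on $\|\ym\|_2$) to absorb the exceptional $\varepsilon$-event into the final rate by routine event-splitting.

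For the generalization bound, since any ERM $\wmn$ lies almost surely in $\Omega$, a pointwise application of Theorem~\ref{thm:stability} at $\wmn$ gives $|\Jhn(\wmn) - \Jm(\wmn)| \leq \epsilon_l$ on $\mathcal{E}$; taking expectation over $\Ss$ and $\Am$ and bounding the exceptional event by $B$ delivers the claim. For the stability bound, the first step is a symmetry reduction: because $\wms{j}$ is a function only of $\Ss \setminus \{\xmi{j}\}$ (and $\Am$) while $\xmi{j}'$ is independent of $\Ss$, integrating $\xmi{j}'$ out first gives $\EE[f_j(\wms{j}, \xmi{j}')] = \EE[\Jm(\wms{j})]$ and likewise $\EE[f_j(\wmn, \xmi{j}')] = \EE[\Jm(\wmn)]$; averaging over $j$ and invoking exchangeability of the samples collapses the sum to $|\EE[\Jm(\wms{1}) - \Jm(\wmn)]|$.

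I would then interpose $\Jhn$ as an intermediary:
\begin{equation*}
\Jm(\wms{1}) - \Jm(\wmn) = \bigl[\Jm(\wms{1}) - \Jhn(\wms{1})\bigr] + \bigl[\Jhn(\wms{1}) - \Jhn(\wmn)\bigr] + \bigl[\Jhn(\wmn) - \Jm(\wmn)\bigr].
\end{equation*}
On $\mathcal{E}$ the two outer brackets are each bounded by $\epsilon_l$ since $\wms{1}, \wmn \in \Omega$. For the middle bracket, decompose $\Jhn = \tfrac{n-1}{n}\Jhn^{\setminus 1} + \tfrac{1}{n} f_1(\cdot, \xmi{1})$, where $\Jhn^{\setminus 1}$ is the empirical risk over the $n-1$ samples other than $\xmi{1}$; optimality of $\wms{1}$ on $\Jhn^{\setminus 1}$ and of $\wmn$ on $\Jhn$ then give
\begin{equation*}
0 \leq \Jhn(\wms{1}) - \Jhn(\wmn) \leq \tfrac{1}{n}\bigl[f_1(\wms{1},\xmi{1}) - f_1(\wmn,\xmi{1})\bigr] \leq B/n,
\end{equation*}
which is of strictly smaller order than $\epsilon_l = \Theta(1/\sqrt{n})$. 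Taking expectations and handling the exceptional event as in the generalization case yields the stability claim. The main obstacle is precisely this middle term: because $\wms{1}$ and $\wmn$ minimize different empirical objectives, neither optimality inequality applies directly, and the split into an $(n-1)$-sample average plus a singleton is what allows each ERM to act on its own objective and leaves only an $O(1/n)$ residual that is dominated by the uniform convergence rate.
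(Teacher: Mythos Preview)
Your generalization argument is essentially the paper's: evaluate the uniform convergence bound of Theorem~\ref{thm:stability} at the random point $\wmn\in\Omega$ and take expectation.

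For stability, however, the paper takes a much shorter route. It invokes Lemma~\ref{stab} (the known identity between average stability and expected generalization error for ERM), so that the stability expression is \emph{equal} to $\bigl|\EE_{\Ss,\Am}[\Jm(\wmn)-\Jhn(\wmn)]\bigr|$. The latter is then bounded in one stroke by $\EE_{\Ss}\bigl[\sup_{\wm\in\Omega}|\Jhn(\wm)-\Jm(\wm)|\bigr]\le\epsilon_l$. Both inequalities therefore come out simultaneously with the exact constant $\epsilon_l$.

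Your stability argument instead bypasses Lemma~\ref{stab} and works directly: you perform the symmetry reduction to $|\EE[\Jm(\wms{1})-\Jm(\wmn)]|$, then split into three brackets and control the middle one by the ERM swap trick, getting $0\le \Jhn(\wms{1})-\Jhn(\wmn)\le B/n$. This is a valid argument of the correct order, but note that on $\mathcal{E}$ the two outer brackets each contribute $\epsilon_l$, so your bound is $2\epsilon_l+B/n$ (plus the $O(\varepsilon)$ from the exceptional event), not $\epsilon_l$; so the stated constant is not recovered exactly. The trade-off: your route is more self-contained (no appeal to the stability--generalization equivalence) and makes the ERM structure explicit, whereas the paper's route is a one-liner once Lemma~\ref{stab} is quoted and is tight in the leading constant.
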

According to Corollary~\ref{stability and generalization}, both the stability rate and the convergence rate of generalization error are $\mathcal{O}(1/\sqrt{n})$. This result indicates that deep learning empirical risk  is stable and its output is robust to  slight change over the input training data. When $n$ is sufficiently large, small generalization error of deep learning algorithms is also  guaranteed. Such result is helpful for explaining the practically good generalization performance of deep learning algorithms on new data.

	\textbf{Remark 1 }
Some existing works, \textit{e.g}~\cite{bartlett2003vapnik,neyshabur2015norm}, also analyzes the generalization ability of a deep neural network model. However,  their results differ from ours in the following sense. In the following discussion,  for notational simplicity, we use $\EE_{\Ss\sim\D} (\Jhn(\wm)-\Jm(\wm))$ to denote the generalization error $\EE_{\Ss\sim\D,\Am,(\xmi{1}',\cdots,\xmi{n}'\!)\sim\D}\frac{1}{n}\! \sum_{j=1}^{n}\!\!\left(\!f_j(\wms{j},\!\xmi{j}'\!)\!-\!\!f_j(\wmn,\xmi{j}'\!)\! \right)$.  Based on VC-dimension techniques, Bartlett~\textit{et al.}~\cite{bartlett2003vapnik} proved that with probability at least $1-\varepsilon$, $|\EE_{\Ss\sim\D} (\Jhn(\wm)-\Jm(\wm))| \leq  \mathcal{O}(\sqrt{(\gamma \log^2(n)+\log(1/\varepsilon))/n})$. Here $\gamma$ is the shattered parameter and can be as large as the VC-dimension of the network model,~\textit{i.e.} at the order of $\mathcal{O}(ld \log(d)+l^2 d)$. In contrast, the generalization error bound derived in Corollary~\ref{stability and generalization} is $|\EE_{\Ss\sim\D} (\Jhn(\wm)-\Jm(\wm))| \leq  \mathcal{O}(\sqrt{(d \log(nl)+\log(1/\varepsilon))/n})$ which is   tighter. Indeed, we obtain a faster  convergence rate for $\sup_{\wm\in\Omega} |\Jhn(\wm)- \Jm(\wm)|$ in Theorem~\ref{thm:stability}  than the known generalization error rate established in~\cite{bartlett2003vapnik}, although the former is more challenging to bound.

	\textbf{Remark 2 }
 In~\cite{neyshabur2015norm}, Neyshabur~\textit{et al.} proved that: for a fully-connected neural network model with ReLU activation functions  and  bounded input entries, its Rademacher complexity is $\mathcal{O}\left(r^l/\sqrt{n}\right)$ (see Corollary 2 in~\cite{neyshabur2015norm}). Then by applying Rademacher complexity based argument~\cite{Shwartz2014A}, we have $ | \EE_{\Ss\sim\D} (\Jhn(\wm)-\Jm(\wm))| \leq \mathcal{O} ((r^l+\sqrt{\log(1/\varepsilon)})/\sqrt{n})$ with probability at least $1-\varepsilon$. But our Theorem~\ref{thm:stability} provides the \emph{uniform} convergence guarantee $\sup_{\wm\in\Omega} |\Jhn(\wm)- \Jm(\wm)| \leq \mathcal{O}(r^l\sqrt{d/n})$. By comparison, our uniform convergence result holds even for the worst case where the model is not well trained. Indeed, such uniform convergence (with the sign of $\sup$) is much more difficult to bound. Applying $\epsilon$-net arguments  is possible to obtain uniform convergence bound from the generalization result in~\cite{neyshabur2015norm} but the resulted uniform convergence rate will be slower than ours. This is because $\epsilon$-net argument considers the whole parameter space $\mathbb{R}^d$ and will introduce a factor that  is at least at the order of  $\mathcal{O}(\sqrt{d})$ into the convergence rate. So our uniform convergence rate is  tight.

	\textbf{Remark 3 }	
 The generalization bound in Corollary~\ref{stability and generalization} is directly induced by  Theorem~\ref{thm:stability}. As the uniform convergence is stronger than generalization, directly applying Theorem~\ref{thm:stability} gives a slightly loose generalization bound. But our main contribution is to provide uniform convergence guarantees for the empirical loss of networks as well as uniform convergence of gradient and stationary points to their population counterparts, instead of pursuing tighter generalization bound. Uniform convergence considers the worst  learned model that concerns  deep learning practitioners more. The uniform convergence of the gradient and stationary points (see blow) has not been ever considered before. Moreover, the generalization result in~\cite{neyshabur2015norm} is not applicable to deep neural network models with sigmoid activation functions as analyzed in Sec.~\ref{sigmoid_generalization}.

\subsection{Uniform Convergence of Gradient}
Here we analyze the convergence of gradients of empirical and population risks for deep linear neural networks. Results on gradient convergence are useful for characterizing their landscapes. Our results are stated blow.

\begin{thm} \label{thm:uniformconvergence1}
%\hyperlink{lemmaunigradient}{Here}
Suppose Assumption~\ref{assumption12} on the input data $\xm$ holds and the activation functions in deep neural network are linear.  Then the empirical gradient uniformly converges to the population gradient in Euclidean norm. Specifically, if $n\geq c_{g'}\max(l^2r^2 r_x^4/(\dm_0d^2\varepsilon^2\tau^4 \log(l)),d\log(l))$ where $c_{g'}$ is a universal constant, there exist a universal constant $c_g$ such that
\begin{equation*}
\sup_{\wm\in\Omega}\left\| \nabla\Jhn(\wm)\!-\!\nabla\Jm(\wm)\right\|_2\!\leq\! c_g \tau \omega_g \sqrt{l\max_j(\dm_j\dm_{j-1})} \sqrt{ \frac{d\log(nl)\!+\!\log(12/\varepsilon)}{n}}
\end{equation*}
holds with probability at least $1-\varepsilon$, where $\omega_g=\max\left(\tau \sqrt{\dm_0} r^{2l-1}, \sqrt{\dm_0}  r^{2l-1}, r^{l-1} \right)$.
\end{thm}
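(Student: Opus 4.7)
My strategy follows the standard pointwise-concentration + $\epsilon$-net pattern, but I carefully preserve the per-layer block structure of the gradient throughout, so that the dimensional prefactor in the final bound is $\sqrt{l\max_j(\dm_j\dm_{j-1})}$ rather than a crude $\sqrt{d}$, and so that the three regimes appearing in $\omega_g$ separate cleanly.

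\emph{Step 1: Explicit block form of the gradient.} Setting $P_j\triangleq\prod_{k=j+1}^{l}\Wmi{k}$ and $Q_{j-1}\triangleq\prod_{k=1}^{j-1}\Wmi{k}$, for a linear network one has
\[
\nabla_{\Wmi{j}} f(\wm,\xm)=P_j^{\top}\emm\bigl(Q_{j-1}\xm\bigr)^{\top},\qquad \emm=Q_l\xm-\ym,
\]
so $\nabla_{\wmi{j}}f$ lies in $\Rs{\dm_j\dm_{j-1}}$, is a polynomial of degree $\le 2$ in $\xm$ (quadratic $+$ linear $+$ constant), and on $\Omega$ satisfies the operator-norm bounds $\|P_j\|\le r^{l-j}$, $\|Q_{j-1}\|\le r^{j-1}$. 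These bounds, combined with $\|\xm\|_2\le r_x$ (hard bound) and $\EE\|\xm\|_2^2\le \dm_0\tau^2$ (sub-Gaussian moment), are what feed the three regimes $\tau\sqrt{\dm_0}r^{2l-1}$, $\sqrt{\dm_0}\,r^{2l-1}$, $r^{l-1}$ inside $\omega_g$.

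\emph{Step 2: Pointwise concentration at a fixed $\wm$.} For fixed $\wm\in\Omega$ each coordinate of the centered block $\nabla_{\wmi{j}}f(\wm,\xm)-\EE\,\nabla_{\wmi{j}}f(\wm,\xm)$ is a quadratic-plus-linear form in the $\tau^2$-sub-Gaussian vector $\xm$. I apply a Hanson--Wright tail bound coordinatewise, where the Frobenius norm of the quadratic part and the Euclidean norm of the linear part are controlled by $\|P_j\|\|Q_{j-1}\|$ and $\|P_j\|\|\ym\|_2\|Q_{j-1}\|$ respectively; the hard bound $\|\xm\|_2\le r_x$ furnishes the uniform a.s.\ bound needed for the sub-exponential Bernstein regime. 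Aggregating the $\dm_j\dm_{j-1}$ coordinates of each block into a vector Bernstein inequality, and then summing the $l$ blockwise squared norms via $\|\nabla\Jhn(\wm)-\nabla\Jm(\wm)\|_2^2=\sum_{j=1}^{l}\|\nabla_{\wmi{j}}\Jhn(\wm)-\nabla_{\wmi{j}}\Jm(\wm)\|_2^2$, gives a pointwise estimate of the form $c\,\tau\omega_g\sqrt{l\max_j(\dm_j\dm_{j-1})}\sqrt{\log(1/\delta)/n}$ with probability $\ge 1-\delta$.

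\emph{Step 3: Lipschitz in $\wm$ and covering.} Differentiating the product form for $\nabla f$ layer by layer and telescoping over the $l$ factors (replacing one $\Wmi{k}$ at a time) yields, on $\Omega$, a Lipschitz constant $L$ of order $l\,r^{2l-2}\,\mathrm{poly}(r_x,\|\ym\|_2)$ for the map $\wm\mapsto\nabla f(\wm,\xm)$; the same bound transfers almost surely to $\wm\mapsto\nabla\Jhn(\wm)-\nabla\Jm(\wm)$. I then cover $\Omega\subset\Bi{d}{\sqrt{l}\,r}$ by an $\epsilon$-net $\mathcal{N}_\epsilon$ of cardinality $\le(3\sqrt{l}\,r/\epsilon)^d$, union-bound the Step~2 estimate over $\mathcal{N}_\epsilon$, and interpolate to all of $\Omega$ via the Lipschitz bound. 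Choosing $\epsilon$ polynomially small in $n$ and $l$ replaces $\log(1/\delta)$ by $d\log(nl)+\log(12/\varepsilon)$, producing exactly the stated rate; the sample-size lower bound $n\ge c_{g'}\max(l^2r^2 r_x^4/(\dm_0 d^2\varepsilon^2\tau^4\log l),\,d\log l)$ arises from requiring simultaneously (i) that the Hanson--Wright tail operates in its sub-Gaussian regime, and (ii) that the Lipschitz interpolation error is dominated by the statistical fluctuation. The principal obstacle is precisely preserving the blockwise dimensional factor $\sqrt{l\max_j(\dm_j\dm_{j-1})}$: a naive vector concentration treating $\nabla f\in\Rs{d}$ as a single object would give a crude $\sqrt{d}$ and would also merge the $P_j,Q_{j-1}$ operator-norm controls that separately yield the three terms of $\omega_g$, so the per-layer decomposition must be maintained through both the Hanson--Wright step and the $\epsilon$-net step, with blocks combined only at the very end.
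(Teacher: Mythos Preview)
Your three-event skeleton (Lipschitz perturbation to a net point, concentration on the net, Lipschitz back) is exactly the paper's scheme, and your covering and interpolation step matches theirs up to cosmetic differences (the paper covers each layer ball separately with an $\epsilon/l$-net, but this gives the same $(3lr/\epsilon)^d$ cardinality).

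The real difference is in Step~2. The paper does \emph{not} work coordinatewise and then aggregate via a vector Bernstein inequality. Instead it fixes a single unit vector $\lam\in\SS^{d-1}$, writes
\[
\bigl\langle \lam,\nabla f(\wm,\xm)-\EE\nabla f(\wm,\xm)\bigr\rangle
=\sum_{j=1}^{l}\bigl\langle \lam_j,\Qm_j-\EE\Qm_j\bigr\rangle
\]
as one scalar quadratic-plus-linear form in $\xm$, splits it into the off-diagonal quadratic piece $\Em_1$, the diagonal quadratic piece $\Em_2$, and the linear piece $\Em_3$, and applies Hanson--Wright/sub-exponential bounds to each piece separately (this is the content of Lemma~\ref{thm:gradient12}). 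A $\tfrac12$-net on $\SS^{d-1}$ (Lemma~\ref{lem:2norm}) then converts the directional bound into an $\ell_2$-norm bound. The factors $\sqrt{l}$ and $\sqrt{\max_j(\dm_j\dm_{j-1})}$ emerge directly from the Cauchy--Schwarz bound $a_{pq}^2\le l\sum_j\bigl(\sum_i(\lam_j^i)^2\bigr)\bigl(\sum_i(z_{pq}^{ij})^2\bigr)$ on the scalar coefficients, not from a post-hoc block aggregation.

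This also means your account of where the three terms in $\omega_g$ come from is off. They do \emph{not} arise from mixing the hard bound $\|\xm\|_2\le r_x$ with a moment bound $\EE\|\xm\|_2^2\le \dm_0\tau^2$. In the paper's argument the three regimes are precisely the three pieces of the Hanson--Wright split: the off-diagonal quadratic contributes the $\omega_g\tau^2$ scale, the diagonal quadratic contributes $\omega_g\tau^4$, and the linear part contributes the $\omega_{g'}\tau^2$ scale with $\omega_{g'}=r^{2(l-1)}\max_j(\dm_j\dm_{j-1})$; taking square roots and factoring out $\tau\sqrt{l\max_j(\dm_j\dm_{j-1})}$ yields the three entries of $\omega_g$. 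The hard bound $r_x$ never enters the concentration step at all. It appears only through the Hessian bound $\|\nabla^2 f\|_{\op}\le l\sqrt{\alpha_l}$ with $\alpha_l=c_{t'}r_x^4 r^{4l-2}$ (Lemma~\ref{thmgradisagf5675ent12}), which is what controls the covering interpolation error and produces the $r_x^4$ in the sample-size threshold. So if you intend to recover the stated $\omega_g$ you should drop the coordinatewise/vector-Bernstein route and the $r_x$-based explanation of the regimes, and instead run the scalar directional Hanson--Wright analysis as in Lemma~\ref{thm:gradient12}.
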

By Theorem~\ref{thm:stability}, we can know that the convergence rate of empirical gradient to population gradient is $\mathcal{O}(1/\sqrt{n})$ (up to a  $\log$ factor). Here we can observe a factor $\sqrt{\max_j(\dm_j\dm_{j-1})}$ in the convergence rate, which suggest to avoid deep neural network  architecture of  unbalanced layer sizes (where some layers are extremely ``wide''). This result also matches the trend in deep learning applications for building deep but thin networks~\cite{he2016deep,szegedy2015going}.

Theorem~\ref{thm:uniformconvergence1} also conveys the similar properties of a point in empirical and population risk optimization  when sample number $n$ is large. For example, by Theorem~\ref{thm:uniformconvergence1}, if a point $\tilde{\wm}$ is an $\epsilon/2$-stationary point of $\Jm(\wm)$ and $n\geq c_\epsilon (\tau \omega_g/\epsilon)^2 l\max_j(\dm_j\dm_{j-1})d\log(l)$ where $c_\epsilon$ is a constant,  $\tilde{\wm}$ is also an $\epsilon$-stationary point of $\Jhn(\wm)$ with probability $1-\varepsilon$ and vice versa. Here by $\epsilon$-stationary point for a function $\Fm$, we mean a point $\wm$ satisfying $\|\nabla_{\wm} \Fm\|_2\leq \epsilon$. Understanding such properties is useful, since in practice one usually computes an $\epsilon$-stationary point of $\Jhn(\wm)$. These results  guarantee the computed point is at most a $2\epsilon$-stationary point of $\Jm(\wm)$ and is thus close to the  optima.

\subsection{Uniform Convergence of Stationary Points}
Here we analyze the properties of the stationary points when optimizing empirical risk for deep learning algorithms. For explanation simplicity, we consider the non-degenerate stationary points which are geometrically isolated and thus are unique in local regions.
\begin{defn}{\textbf{(Non-degenerate stationary points)}}~\cite{gromoll1969differentiable}
If a stationary point $\wm$ is said to be a non-degenerate stationary point of $\Jm(\wm)$, then it satisfies
\begin{align*}
\inf_i \left|\lambda_i \left(\nabla^2 \Jm(\wm)\right)\right| \geq \zeta,
\end{align*}
where $\lambda_i \left(\nabla^2 \Jm(\wm)\right)$ denotes the $i$-th eigenvalue of the Hessian $\nabla^2 \Jm(\wm)$ and $\zeta$ is a positive constant.
\end{defn}

\begin{defn}{\textbf{(Index of non-degenerate stationary points)}}~\cite{dubrovin2012modern}
The index of a symmetric non-degenerate matrix is the number of its negative eigenvalues,
and the index of a non-degenerate stationary point $\wm$ of a smooth function $\Fm$ is simply the index of its Hessian
$\nabla^2 \Fm(\wm)$.
\end{defn}
Non-degenerate stationary points include local minimum/maximum and non-degenerate saddle points, while degenerate stationary points refer to degenerate saddle points. Suppose that $\Jm(\wm)$ has $m$ non-degenerate stationary points that are denoted as $\{\wmii{1},$ $\wmii{2},\cdots,\wmii{m}\}$. Now we are ready to present our results on the behavior of stationary points.
\begin{thm} \label{thm:localminimal}
%\hyperlink{lemmaunilocalminima}{Here}
Suppose Assumption~\ref{assumption12} on the input data $\xm$ holds and the activation functions in deep neural network are linear. Then if
$n\geq c_h\max(l^2r^2 r_x^4/(\dm_0d^2\varepsilon^2\tau^4 \log(l)),d\log(l)/\zeta^2)$
where $c_h$ is a  constant, for $k\in\{1,\cdots,m\}$, there exists a non-degenerate stationary point $\wmin{k}$ of $\Jhn(\wm)$ which corresponds to the non-degenerate stationary point $\wmii{k}$ of $\Jm(\wm)$ with probability at least $1-\varepsilon$. In addition,  $\wmin{k}$ and $\wmii{k}$ have the same non-degenerate index and they satisfy
\begin{equation*}
\left\|\wmin{k}\!-\!\wmii{k}\right\|_2\leq \frac{2c_g \tau \omega_g}{\zeta} \sqrt{l\max_j(\dm_j\dm_{j-1})} \sqrt{ \frac{d\log(nl)+\log(12/\varepsilon)}{n}},\quad  (k=1,\cdots,m)
\end{equation*}
with probability at least $1-\varepsilon$. Here the parameter $\omega_g$ and the constant $c_g$ are given  in Theorem~\ref{thm:uniformconvergence1}.
\end{thm}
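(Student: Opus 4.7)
The plan is to obtain the correspondence via a quantitative implicit function theorem (equivalently, a Newton-type contraction argument) applied to the gradient map $\nabla \Jhn$, anchored at each non-degenerate stationary point $\wmii{k}$ of $\Jm$. Fix one such $\wmii{k}$. By non-degeneracy, the Hessian $\Hm_{\star}\triangleq\nabla^2\Jm(\wmii{k})$ is invertible with $\|\Hm_{\star}^{-1}\|_2\le 1/\zeta$. Write
\begin{equation*}
\nabla\Jhn(\wm) \;=\; \nabla\Jm(\wm) \;+\; \G(\wm), \qquad \G(\wm)\triangleq \nabla\Jhn(\wm)-\nabla\Jm(\wm),
\end{equation*}
and by Theorem~\ref{thm:uniformconvergence1} we have $\sup_{\wm\in\Omega}\|\G(\wm)\|_2\le \epsilon_g$ with $\epsilon_g \triangleq c_g\tau\omega_g\sqrt{l\max_j(\dm_j\dm_{j-1})}\sqrt{(d\log(nl)+\log(12/\varepsilon))/n}$ on the high-probability event, uniformly over $\Omega$.

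First, I would construct $\wmin{k}$ via the Banach fixed-point theorem applied to the map
\begin{equation*}
\Phi(\wm) \;\triangleq\; \wm \;-\; \Hm_{\star}^{-1}\,\nabla\Jhn(\wm)
\end{equation*}
on a small closed ball $\Ba_{\rho}(\wmii{k})$. Taylor-expanding $\nabla\Jm(\wm)$ around $\wmii{k}$ and using that $\nabla\Jm(\wmii{k})=\0$ gives
\begin{equation*}
\Phi(\wm)-\wmii{k} \;=\; \Hm_{\star}^{-1}\bigl(\Hm_{\star}(\wm-\wmii{k})-\nabla\Jm(\wm)\bigr) \;-\; \Hm_{\star}^{-1}\G(\wm).
\end{equation*}
The first term is $\mathcal{O}(\|\wm-\wmii{k}\|_2^2)$ by the Lipschitzness of $\nabla^2 \Jm$ on $\Omega$ (which follows from the polynomial structure of the loss together with the boundedness of $\wm$ and sub-Gaussianity of $\xm$, as already used in the previous proofs), so choosing $\rho=2\epsilon_g/\zeta$ and taking $n$ large enough (this is where the sample-complexity condition $n\ge c_h\max(\cdots,d\log(l)/\zeta^2)$ enters) makes $\Phi$ map $\Ba_{\rho}(\wmii{k})$ into itself and be a $\tfrac{1}{2}$-contraction. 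The unique fixed point $\wmin{k}$ satisfies $\nabla\Jhn(\wmin{k})=\0$, and iterating the contraction estimate yields
\begin{equation*}
\|\wmin{k}-\wmii{k}\|_2 \;\le\; \frac{2}{\zeta}\sup_{\wm\in\Omega}\|\G(\wm)\|_2\;\le\;\frac{2\epsilon_g}{\zeta},
\end{equation*}
which is precisely the stated bound.

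Second, to get non-degeneracy and index preservation at $\wmin{k}$, I would establish a companion uniform convergence of Hessians $\sup_{\wm\in\Omega}\|\nabla^2\Jhn(\wm)-\nabla^2\Jm(\wm)\|_2\le \epsilon_H$ by the same concentration / $\epsilon$-net route used for Theorem~\ref{thm:uniformconvergence1}; combined with the Lipschitzness of $\nabla^2\Jm$ and the distance bound above, this gives $\|\nabla^2\Jhn(\wmin{k})-\Hm_{\star}\|_2<\zeta$ for $n$ large enough. Weyl's inequality then forces every eigenvalue of $\nabla^2\Jhn(\wmin{k})$ to have the same sign as the corresponding eigenvalue of $\Hm_{\star}$ and magnitude bounded away from zero, so $\wmin{k}$ is non-degenerate and has the same Morse index as $\wmii{k}$.

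Finally, uniqueness of the local correspondence is inherited from the fixed-point theorem (each $\wmii{k}$ yields a unique $\wmin{k}$ in its $\rho$-ball, and for $n$ large enough the balls around distinct $\wmii{k}$'s are disjoint because non-degenerate stationary points are geometrically isolated). The main obstacle is the Hessian uniform convergence step: the entries of $\nabla^2\Jhn$ are degree-$(2l{-}2)$ polynomials in the sub-Gaussian input $\xm$, so controlling $\sup_{\wm\in\Omega}\|\nabla^2\Jhn-\nabla^2\Jm\|_2$ requires careful use of sub-exponential concentration together with an $\epsilon$-net over $\Omega$ and an interior Lipschitz estimate — analogous to but more delicate than the gradient case, and it is what drives the $\zeta^{-2}d\log(l)$ term in the sample-size condition.
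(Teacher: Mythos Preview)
Your approach is correct but genuinely different from the paper's. The paper does not run a Newton contraction; instead it invokes two differential-topology lemmas (due to Mei et al.): first it decomposes the set $D=\{\wm:\|\nabla\Jm(\wm)\|_2<\epsilon,\ \inf_i|\lambda_i(\nabla^2\Jm(\wm))|\ge\zeta\}$ into components $D_k$, each containing at most one population critical point; then it shows that on $\partial D_k$ the homotopy $t\nabla\Jhn+(1-t)\nabla\Jm$ never vanishes (using the uniform gradient bound $\epsilon/2$) and that $\nabla^2\Jhn$ stays nonsingular on $D_k$ (using a uniform Hessian bound $\zeta/2$), and applies the topological stability lemma to conclude that $\Jhn$ has a unique critical point in $D_k$ with the same index. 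The distance bound is then obtained in one line from $\epsilon\ge\|\nabla\Jm(\wmin{k})\|_2\ge\zeta\|\wmin{k}-\wmii{k}\|_2$ via a mean-value estimate.

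Your implicit-function/Banach-fixed-point route is more elementary and self-contained, avoids the topological machinery, and gives local uniqueness for free; the paper's route is shorter given the black-box lemmas and yields the two-sided correspondence (no spurious empirical critical points in $D_k$ when $\Jm$ has none there). One organizational point: you present the uniform Hessian convergence as needed only for index preservation, but in your argument it is already required to make $\Phi$ a $\tfrac12$-contraction, since $\|D\Phi(\wm)\|=\|\Hm_\star^{-1}(\Hm_\star-\nabla^2\Jhn(\wm))\|$ must be small on the ball; this needs both Lipschitzness of $\nabla^2\Jm$ and $\sup_{\wm}\|\nabla^2\Jhn-\nabla^2\Jm\|_{\op}$ small. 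All the ingredients are present in your proposal, but the dependency should be stated earlier.
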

Theorem~\ref{thm:localminimal} guarantees that the non-degenerate stationary points of the empirical risk $\Jhn(\wm)$ one-to-one correspond to the non-degenerate stationary points of the popular risk $\Jm(\wm)$. In addition,  the corresponding pairs have the same non-degenerate index, which means their corresponding Hessian matrices have the same properties, such as the same number of negative eigenvalues. Thus when $n$ is sufficiently large, the properties of stationary points of  $\Jhn{(\wm)}$ are similar to the points of the population risk $\Jm{(\wm)}$ in the sense that they have exactly matching local minima/maxima and saddle points. By comparing Theorems~\ref{thm:uniformconvergence1} and~\ref{thm:localminimal}, we find that the uniform convergence rate of non-degenerate stationary points has an extra factor $1/\zeta$. This is because bounding stationary points needs to access not only the gradient itself but also the Hessian matrix.

Kawaguchi~\cite{Kawaguchi2016} points out the existence of degenerate stationary points in deep linear neural networks. But since degenerate stationary points are not isolated, such as flat regions, it is hard to establish the unique correspondence in these points. Fortunately, by Theorem~\ref{thm:uniformconvergence1}, the gradient at these points of $\Jhn{(\wm)}$ and $\Jm{(\wm)}$ are close. This means that if a point is a degenerate stationary point of $\Jm{(\wm)}$, then its gradient in $\Jhn{(\wm)}$ is also close to 0, \emph{i.e.}, it is also a stationary point for the empirical risk $\Jhn{(\wm)}$. Notice, Kawaguchi~\cite{Kawaguchi2016} analyzed the loss surface of deep linear networks but they explored the existence of saddle points and the relations between global minimum and local minimum, which differs from our uniform convergence analysis work.

\section{Results for Deep Nonlinear Neural Networks}
In the above section, we present analysis on the empirical risk optimization landscape for deep linear neural network models. In this section, we proceed to analyze deep nonlinear neural networks, which adopts the sigmoid activation  function  and is more popular in practice.  Notice, our analysis techniques  are also applicable for other
third-order differentiable  functions, \textit{e.g.}, tanh function with different convergence rate. Here we assume the input data are \textit{i.i.d.}\ Gaussian variables.
\begin{assum}\label{assumption12sg}
The input datum  $\xm$ is a vector of \textit{i.i.d.} Gaussian variables from $\mathcal{N}(0, \tau^2)$.
\end{assum} Since for any input, the sigmoid function always maps it  to the range $[0,1]$. Thus, we do not require the input $\xm$  to have bounded magnitude. Such  assumptions are common. For instance, Tian~\cite{tian2017layered} and Soudry \textit{et al.}~\cite{soudry2017exponentially} all assume the entries in the input vector  are from Gaussian distribution. We also assume $\wm\in\Omega$ which is also used in~\cite{xu2012robustness} for deep learning robust analysis. Similar to the analysis of deep linear neural networks, here we also analyze the empirical risk and its gradient and stationary points for deep nonlinear neural network. %Next, we will present our results in turn.
\subsection{Uniform Convergence, Stability and Generalization of Empirical Risk}\label{sigmoid_generalization}
Here we first give the uniform convergence analysis of empirical risk and then analyze its stability and generalization.
\begin{thm} \label{thm:stability_sig}
%\hyperlink{lemmaunisigloss}{Here}
Assume the input sample $\xm$ obeys Assumption~\ref{assumption12sg} and the activation functions in deep neural network are the sigmoid functions. Then if $n\geq 18r^2/(d\tau^2\varepsilon^2\log(l))$, there exists a universal constant  $c_y$  such that
\vspace{-0.2em}
\begin{equation}\label{unifosfdarm_loss}
\sup_{\wm\in\Omega} \left|  \Jhn(\wm)- \Jm(\wm)\right|\leq \epsilon_n\triangleq \tau\sqrt{\frac{9}{8}c_yc_d\left(1 + c_r (l-1)\right)} \sqrt{\frac{d\log(nl)+\log(4/\varepsilon)}{n}}
\vspace{-0.2em}
\end{equation}
holds with probability at least $1\!-\!\varepsilon$, where $c_{d}\!=\!\max_j \dm_j\, (0\leq j\leq l)$,  $c_r\!=\!\max\left(r^2/16,\left(r^2/16\right)^{l-1}\right)$.
\end{thm}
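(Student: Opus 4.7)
\textbf{Proof plan for Theorem~\ref{thm:stability_sig}.}
The plan is to prove uniform convergence via a standard discretization-plus-Lipschitz argument: (i) show that for each fixed $\wm\in\Omega$ the random variable $\Jhn(\wm)-\Jm(\wm)$ is a centered average of bounded i.i.d.\ terms and concentrates via Hoeffding; (ii) control how $f(\wm,\xm)$ changes with $\wm$, so that a concentration bound at the points of a fine $\epsilon$-net of $\Omega$ transfers to a bound uniform over the whole set.

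First I would establish boundedness. Because each $\sigma(\cdot)\in[0,1]$, the output $\vmi{l}\in[0,1]^{\dm_l}$, so $\|\vmi{l}\|_2\le\sqrt{\dm_l}\le\sqrt{c_d}$, and, combined with the standard boundedness assumption on the target $\ym$, the per-sample loss $f(\wm,\xm)=\tfrac12\|\vmi{l}-\ym\|_2^2$ is bounded by an absolute constant times $c_d$. This lets me apply Hoeffding's inequality pointwise: for any fixed $\wm$,
\[
\Pro\!\left(|\Jhn(\wm)-\Jm(\wm)|>t\right)\le 2\exp\!\left(-\tfrac{c\,n\,t^{2}}{c_d^{2}}\right).
\]

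Next I would compute a Lipschitz constant of $f(\cdot,\xm)$ on $\Omega$. Using $\vmi{j}=h_j(\Wmi{j}\vmi{j-1})$, $|\sigma'|\le 1/4$, and $\|\Wmi{j}\|_{\rm op}\le r$, an induction gives $\|\vmi{j}\|_2\le 1$ (after the first sigmoid, entrywise) and the Jacobian $\partial \vmi{l}/\partial\wmi{j}$ satisfies a bound that, after taking products across the $l-j$ remaining layers, contributes at most a factor of order $(r/4)^{\,l-j}$; multiplied by the input norm (which under Assumption~\ref{assumption12sg} is $O(\tau\sqrt{\dm_0})$ with overwhelming probability by standard Gaussian tail bounds) this yields a Lipschitz constant $L$ of order $\tau\sqrt{c_d\,c_r}$. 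The crucial feature, absent in the linear case, is that each layer's sigmoid contributes the $1/4$ factor, which is precisely why the bound has $c_r=\max(r^2/16,(r^2/16)^{l-1})$ and does not blow up like $r^{2l}$.

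Then I would build an $\eta$-net $\mathcal{N}_\eta$ of $\Omega\subset\Bi{d}{r}$ with $|\mathcal{N}_\eta|\le(3r/\eta)^{d}$ (a standard covering bound), apply Hoeffding at each net point, and take a union bound: with probability at least $1-\varepsilon$,
\[
\max_{\wm\in\mathcal{N}_\eta}|\Jhn(\wm)-\Jm(\wm)|\le C\,c_d\sqrt{\tfrac{d\log(3r/\eta)+\log(2/\varepsilon)}{n}}.
\]
For any $\wm\in\Omega$, pick $\wm'\in\mathcal{N}_\eta$ with $\|\wm-\wm'\|_2\le\eta$; by the Lipschitz bound, $|\Jhn(\wm)-\Jhn(\wm')|+|\Jm(\wm)-\Jm(\wm')|\le 2L\eta$. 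Choosing $\eta\sim 1/(nl)$ (so $\log(3r/\eta)$ contributes the $\log(nl)$ factor and $L\eta$ is a lower-order term) absorbs the discretization error and yields the stated bound $\epsilon_n$ proportional to $\tau\sqrt{c_d(1+c_r(l-1))\bigl(d\log(nl)+\log(4/\varepsilon)\bigr)/n}$. The sample size condition $n\ge 18r^2/(d\tau^2\varepsilon^2\log(l))$ is then used to ensure that the lower-order term from $L\eta$ is dominated.

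The main obstacle is the Lipschitz step: getting a bound on $\|\nabla_\wm f(\wm,\xm)\|_2$ that scales as $(r/4)^{l-1}$ rather than $r^{l-1}$, so that the $1/4$ factor from the sigmoid derivative is properly harvested across all $l-1$ intermediate layers. This requires a careful layer-by-layer induction, using $|\sigma'(a)|\le 1/4$ and the fact that $\|\vmi{j}\|_2\le\sqrt{\dm_j}\le\sqrt{c_d}$ for $j\ge 1$, together with a high-probability bound $\|\xm\|_2\le c\tau\sqrt{\dm_0}$ for the input. Everything else is routine concentration and covering.
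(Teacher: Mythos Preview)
Your three-part architecture (covering net on $\Omega$, pointwise concentration at net points, Lipschitz transfer to all of $\Omega$) is exactly the paper's, and your Lipschitz-in-$\wm$ computation is essentially Lemma~\ref{sig_loss}. The genuine gap is in the pointwise concentration step, and it prevents you from reaching the stated $\epsilon_n$.

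You propose Hoeffding for bounded random variables, using $|f(\wm,\xm)|\le C\,c_d$. After the union bound over the net this yields a leading term of order $c_d\sqrt{(d\log(nl)+\log(1/\varepsilon))/n}$, with \emph{no} $\tau$ and \emph{no} $c_r(l-1)$ in the prefactor. The $\tau$ you introduce enters only through the Lipschitz constant in the discretization error $L\eta$, which you yourself declare lower-order; it cannot migrate into the dominant term. So your own steps do not produce $\epsilon_n=\tau\sqrt{\tfrac{9}{8}c_yc_d(1+c_r(l-1))}\sqrt{(d\log(nl)+\log(4/\varepsilon))/n}$.

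The paper instead uses Assumption~\ref{assumption12sg} (Gaussian input) through the dimension-free Gaussian Lipschitz concentration inequality: if $g:\mathbb{R}^{\dm_0}\to\mathbb{R}$ is $L$-Lipschitz and $\xm\sim\mathcal{N}(0,\tau^2 I)$, then $g(\xm)-\EE g(\xm)$ is $O(L^2\tau^2)$-sub-Gaussian. Taking $g=f(\wm,\cdot)$ with Lipschitz constant $\alpha=\sqrt{\tfrac{1}{16}c_yc_d(1+c_r(l-1))}$ makes the sub-Gaussian parameter proportional to $\alpha\tau$, which is precisely the prefactor in $\epsilon_n$; the factor $(1+c_r(l-1))$ thus enters through the Lipschitz constant fed into the \emph{concentration} step, not through the discretization remainder. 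Your Hoeffding route would prove \emph{a} uniform bound, but to recover the theorem as stated you must replace boundedness-plus-Hoeffding by Lipschitzness-in-$\xm$ plus Gaussian concentration.
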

From Theorem~\ref{thm:stability_sig}, we  obtain that under the distribution $\D$, the empirical risk  of a deep nonlinear neural network converges at the rate of  $\mathcal{O}(1/\sqrt{n})$ (up to a $\log$ factor). Similar to the deep linear neural network, the layer number $l$, the network size $d$ and the magnitude  of weights are also important for  the convergence rate. Also, since there is a factor $\max_j \dm_j$ in the convergence rate, it is better to avoid choices of a layer of extremely large width, since  a network with extremely wide layers have  high sample complexity. Interestingly, when analyzing the representation ability of deep learning, Eldan \textit{et al.}~\cite{eldan2016power} also suggest non-extreme-wide layers, though the conclusions are derived from different perspectives. By comparing Theorems~\ref{thm:stability} and~\ref{thm:stability_sig}, one can observe that there is a factor $(1/16)^{l-1}$ in the convergence rate in Theorem~\ref{thm:stability_sig}. This is because the convergence rate accesses the Lipschitz constant and when we bound it, sigmoid activation function brings in the factor $1/16$ for each layer.

We  then establish the stability property  and the  generalization error of the empirical risk for nonlinear neural networks. By Theorem~\ref{thm:stability_sig}, we can obtain the following results.
\begin{cor}\label{stabiliaftysig}
%\hyperlink{cor2}{Here}
Assume the input sample  $\xm$ obeys Assumption~\ref{assumption12sg} and the activation functions in deep neural network are sigmoid functions.  Then  with probability at least $1-\varepsilon$, we have :
\begin{equation*}
\Bigg|\EE_{\Ss\sim\D,\Am,(\xmi{1}',\cdots,\xmi{n}'\!)\sim\D}\frac{1}{n}\! \sum_{j=1}^{n}\!\!\left(\!f_j(\wms{j},\!\xmi{j}'\!)\!-\!\!f_j(\wmn,\xmi{j}'\!)\! \right)\!\!\Bigg|\!\!\leq\! \epsilon_n,\ \ \Bigg|\EE_{\Ss\sim\D,\Am}\!\left(\!\Jm(\wmn\!)\!- \!\Jhn(\wmn\!)\!\right)\!\!\Bigg|\!\! \leq\! \epsilon_n,
\end{equation*}
where $\epsilon_n$ is defined in Eqn.~\eqref{unifosfdarm_loss}. The notations $\xmi{i}'$ and $f_j(\wms{j},\!\xmi{j}')$ here are the same in Definition~\ref{defi1}.
\end{cor}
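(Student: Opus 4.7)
The plan is to derive both the stability and the generalization bounds as essentially direct consequences of the uniform convergence statement in Theorem~\ref{thm:stability_sig}, following the same template used for Corollary~\ref{stability and generalization} in the linear case. The whole argument is generic to empirical risk minimization over the constrained class $\Omega$: it uses only that $\wmn\in\Omega$ and $\wms{j}\in\Omega$ are ERM minimizers of the full and leave-one-out empirical risks, together with the bound $\sup_{\wm\in\Omega}|\Jhn(\wm)-\Jm(\wm)|\le\epsilon_n$ already supplied by Theorem~\ref{thm:stability_sig}; no additional sigmoid-specific computation is needed.

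For the generalization bound I would first condition on the high-probability event $\mathcal{A}=\{\sup_{\wm\in\Omega}|\Jhn(\wm)-\Jm(\wm)|\le\epsilon_n\}$, which by Theorem~\ref{thm:stability_sig} has probability at least $1-\varepsilon$. Instantiating the supremum at the (random) ERM point $\wmn\in\Omega$ gives $|\Jm(\wmn)-\Jhn(\wmn)|\le\epsilon_n$ pointwise on $\mathcal{A}$. Taking expectation over the sample $\Ss\sim\D$ and the algorithmic randomness $\Am$ then yields $|\EE_{\Ss\sim\D,\Am}(\Jm(\wmn)-\Jhn(\wmn))|\le\epsilon_n$, which is the second inequality in the corollary.

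For the stability bound the key first step is to integrate out the fresh sample $\xmi{j}'\sim\D$. Since $\wms{j}$ depends only on $\{\xmi{i}\}_{i\neq j}$ and $\wmn$ depends on $\Ss=\{\xmi{1},\ldots,\xmi{n}\}$, both weights are independent of $\xmi{j}'$, so
\begin{equation*}
\EE_{\xmi{j}'\sim\D}f_j(\wms{j},\xmi{j}')=\Jm(\wms{j}),\qquad \EE_{\xmi{j}'\sim\D}f_j(\wmn,\xmi{j}')=\Jm(\wmn).
\end{equation*}
The stability quantity therefore reduces to $\EE_{\Ss,\Am}\tfrac{1}{n}\sum_{j=1}^{n}\bigl(\Jm(\wms{j})-\Jm(\wmn)\bigr)$. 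I would then introduce the leave-one-out empirical risk $\Jhn^{-j}(\wm)=\tfrac{1}{n-1}\sum_{i\neq j}f(\wm,\xmi{i})$, apply Theorem~\ref{thm:stability_sig} to sandwich both $\Jm(\wms{j})$ against $\Jhn^{-j}(\wms{j})$ and $\Jm(\wmn)$ against $\Jhn(\wmn)$, and then chain the two optimality inequalities $\Jhn^{-j}(\wms{j})\le\Jhn^{-j}(\wmn)$ and $\Jhn(\wmn)\le\Jhn(\wms{j})$ to obtain $|\Jm(\wms{j})-\Jm(\wmn)|\le\epsilon_n$ on the intersection of the relevant uniform convergence events. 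Averaging over $j$ and taking expectation preserves this bound.

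The one piece of bookkeeping that needs care, and the main (minor) obstacle, is keeping the final rate at $\epsilon_n$ rather than $2\epsilon_n$ or $\epsilon_n+\epsilon_{n-1}$: this requires noting that uniform convergence over $\Omega$ with $n$ and with $n-1$ samples produces the same order bound up to constants already absorbed into the constant $c_y$ in Theorem~\ref{thm:stability_sig}, and combining the two-sided sandwich so that each direction uses only one invocation of uniform convergence. This is precisely the same argument used to pass from Theorem~\ref{thm:stability} to Corollary~\ref{stability and generalization}, so no new ideas are required beyond verifying that the constants match.
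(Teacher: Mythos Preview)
Your generalization argument is the same as the paper's: instantiate the uniform bound from Theorem~\ref{thm:stability_sig} at $\wmn\in\Omega$ and take expectations.

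For the stability bound, however, the paper takes a much shorter route than your leave-one-out sandwich. It simply invokes Lemma~\ref{stab}, which asserts that the stability quantity and the generalization quantity are \emph{equal}:
\[
\Bigg|\EE_{\Ss,\Am,(\xmi{1}',\dots,\xmi{n}')}\frac{1}{n}\sum_{j=1}^{n}\bigl(f_j(\wms{j},\xmi{j}')-f_j(\wmn,\xmi{j}')\bigr)\Bigg|
=\Bigg|\EE_{\Ss,\Am}\bigl(\Jm(\wmn)-\Jhn(\wmn)\bigr)\Bigg|.
\]
Once the right-hand side is bounded by $\epsilon_n$ via uniform convergence, the stability bound with the \emph{exact} constant $\epsilon_n$ follows for free; no leave-one-out analysis, no $\Jhn^{-j}$, and no second application of Theorem~\ref{thm:stability_sig} with $n-1$ samples is needed.

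Your direct approach is sound at the level of rates, but as you yourself flag, the natural output of chaining two uniform-convergence steps plus the two optimality inequalities is $2\max(\epsilon_n,\epsilon_{n-1})$, not $\epsilon_n$. The proposed fix of absorbing the extra factor into $c_y$ does not work: $c_y$ is a fixed upper bound on $\|\vmi{l}-\ym\|_2^2$, not a tunable leading constant in the definition of $\epsilon_n$. So your route proves the right order but not the sharp constant stated in the corollary, whereas the paper's one-line appeal to Lemma~\ref{stab} recovers exactly $\epsilon_n$ for both inequalities.
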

By Corollary~\ref{stabiliaftysig}, we know that both the stability convergence rate and the convergence rate of generalization error are $\mathcal{O}(1/\sqrt{n})$. This result accords with  Theorems~8 and 9 in~\cite{shalev2010learnability} which implies $\mathcal{O}(1/\sqrt{n})$ is the bottleneck of the stability and  generalization convergence rate for generic learning algorithms. From this result, we have that if $n$ is sufficiently large, empirical risk can be expected to be very stable.  This also dispels misgivings of the random selection of training samples in practice. % Such a result indicates that the deep nonlinear neural network can offer  good performance on testing data if it achieves small training error.

\subsection{Uniform Convergence of Gradient and Stationary Points}
Here we  analyze convergence property of   gradients of empirical risk for deep nonlinear neural networks.
\begin{thm} \label{thm:uniformconvergence1_sig}
%\hyperlink{uniformconvergencsig}{Here}
Assume the input sample  $\xm$ obeys Assumption~\ref{assumption12sg} and the activation functions in deep neural network are sigmoid functions. Then the sample gradient uniformly converges to the population gradient in Euclidean norm. Specifically, if $n\geq c_{y'}c_d lr^2/(d\tau^2\varepsilon^2\log(l))$ where $c_{y'}$ is a constant,
\begin{equation*}
\sup_{\wm\in\Omega} \!\left\| \nabla\Jhn(\wm)\!-\!\nabla\Jm(\wm)\right\|_2\!\!\leq\! \tau\!\sqrt{\!\frac{512}{729}c_yc_r(l\!+\!2)\left(dc_r\!+\!lc_d\!+ \!(l\!-\!1) lc_dc_r\!\right)} \sqrt{\!\frac{d\log(nl)\!+\!\log(4/\varepsilon)}{n}}
\end{equation*}
holds with probability at least $1-\varepsilon$,
where $c_y$, $c_{d}$ and $c_{r}$ are the same parameters in Theorem~\ref{thm:stability_sig}.
\end{thm}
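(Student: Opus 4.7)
The plan is to mirror the three-step strategy already deployed for Theorem \ref{thm:stability_sig}: (i) derive an explicit backpropagation expression for $\nabla_{\wm} f(\wm,\xm)$ and bound its magnitude/sub-Gaussian proxy; (ii) prove a pointwise concentration of $\nabla \Jhn(\wm)-\nabla \Jm(\wm)$ for each fixed $\wm\in\Omega$; and (iii) promote this pointwise bound to a uniform bound over $\Omega$ via a covering-number ($\epsilon$-net) argument combined with a Lipschitz estimate on $\wm\mapsto \nabla f(\wm,\xm)$.

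For step (i), the chain rule gives $\nabla_{\Wmi{j}} f(\wm,\xm)=\bdelta^{(j)}(\vmi{j-1})^{\top}$, where $\bdelta^{(l)}=\emm$ and $\bdelta^{(j)}=\mathrm{diag}(\sigma'(\umi{j}))(\Wmi{j+1})^{\top}\bdelta^{(j+1)}$. Two deterministic facts are the workhorses here: sigmoid outputs satisfy $\|\vmi{j}\|_2\leq \sqrt{\dm_j}\leq\sqrt{c_d}$ for $j\geq 1$ and, crucially, $\|\sigma'(\cdot)\|_\infty\leq 1/4$, which is the source of the $1/16$ per-layer shrinkage encoded in $c_r$. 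The only randomness enters through $\vmi{0}=\xm$, which under Assumption~\ref{assumption12sg} is a Gaussian vector with $\|\xm\|_2$ concentrating near $\tau\sqrt{\dm_0}$. Backpropagating $\bdelta^{(j)}$ through $l-j$ sigmoid-weight pairs, together with the outer factor $\vmi{j-1}$, produces the mixed powers of $r^2/16$ that accumulate into $c_r$ and yields a sub-Gaussian-norm bound for each block $\nabla_{\Wmi{j}} f$.

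Step (ii) applies a sub-Gaussian vector concentration inequality. For a fixed $\wm$, the samples $\nabla f(\wm,\xmi{i})-\EE\nabla f(\wm,\xm)$ are i.i.d.\ mean-zero with the sub-Gaussian proxy established in (i), so a Hoeffding-/Bernstein-type bound in $\Rs{d}$ gives
\[
\bigl\|\nabla\Jhn(\wm)-\nabla\Jm(\wm)\bigr\|_2\leq C\tau\,\sqrt{c_r(dc_r+lc_d+(l-1)lc_dc_r)}\sqrt{\tfrac{d+\log(1/\delta)}{n}}
\]
with probability at least $1-\delta$ at each fixed point. Step (iii) covers $\Omega$ by an $\epsilon_0$-net $\mathcal{N}$ of cardinality at most $(3r/\epsilon_0)^d$. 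Between net points one needs a Lipschitz estimate $\|\nabla f(\wm,\xm)-\nabla f(\wm',\xm)\|_2\leq L(\xm)\|\wm-\wm'\|_2$, which requires bounding $\nabla^{2}f$; this Hessian involves triple products of weight matrices together with first and second sigmoid derivatives ($|\sigma''|\leq 1/6\sqrt{3}$) across all $l$ layers, and differentiating once more couples \emph{every pair} of layers, producing the extra $(l+2)$ factor and the additional $lc_d$ terms in the stated bound. Choosing $\epsilon_0\asymp 1/(n\bar L)$, taking a union bound over $\mathcal{N}$, and absorbing $\log|\mathcal{N}|$ into the exponent gives the $d\log(nl)$ factor in the final rate.

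The main obstacle is step (iii), specifically the uniform Hessian/Lipschitz bound for $\nabla f$. Carefully collecting the triple-product terms of the Hessian into the exact form $c_r(l+2)(dc_r+lc_d+(l-1)lc_dc_r)$—distinguishing the indices that contribute $d$ (from the $\vmi{0}=\xm$ direction whose squared norm concentrates near $\tau^2\dm_0$), those that contribute $c_d$ (from sigmoidal layers), and those that accumulate the geometric factor $c_r$ in $l$—is the delicate combinatorial/algebraic core of the argument. Once this Hessian bound is in place, combining it with the pointwise concentration of step (ii) and the net cardinality gives the claimed uniform convergence rate; the sample-size assumption $n\geq c_{y'}c_dlr^2/(d\tau^2\varepsilon^2\log(l))$ arises in ensuring $\epsilon_0$ is small enough that the Lipschitz slack dominates neither the concentration term nor the probability budget.
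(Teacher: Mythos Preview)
Your three–event decomposition and covering strategy are correct, but you have misidentified the mechanism behind the pointwise concentration step and, as a consequence, misattributed the constant $\sqrt{\tfrac{512}{729}c_yc_r(l+2)\bigl(dc_r+lc_d+(l-1)lc_dc_r\bigr)}$ to the wrong derivative.

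In the paper, the sub-Gaussianity of $\langle\lam,\nabla_{\wm}f(\wm,\xm)\rangle$ is \emph{not} obtained by bounding $\nabla_{\wm}f$ directly through backpropagation. Instead, for fixed $\wm$ and fixed unit $\lam$, the scalar map $\xm\mapsto\langle\lam,\nabla_{\wm}f(\wm,\xm)\rangle$ is shown to be $\beta\|\lam\|_2$-Lipschitz in $\xm$, where $\beta=\|\nabla_{\xm}\nabla_{\wm}f(\wm,\xm)\|_{\op}$ is the \emph{mixed} second derivative (Lemma~\ref{exp fsd_gsfasradient}). Because $\xm$ is Gaussian under Assumption~\ref{assumption12sg}, the Gaussian Lipschitz concentration inequality (Lemma~\ref{Lip_gaus}) then makes this quantity $8\beta^2\tau^2$-sub-Gaussian. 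The explicit value $\beta^2=\tfrac{2^6}{3^8}c_yc_r(l+2)\bigl(dc_r+(l-1)lc_dc_r+lc_d\bigr)$ is obtained by computing $\nabla_{\xm}\nabla_{\wm}f$ block by block; the factor $2^6/3^8$ comes from the bound $|\sigma''|=|\sigma(1-\sigma)(1-2\sigma)|\le 2^3/3^4$ appearing in the $\Pm_k$ matrices, and the $(l+2)$ arises from summing the $(l-j+3)$ cross terms over layers. After the $1/2$-net reduction and union bound this yields exactly $\tau\sqrt{72}\,\beta$ in front of $\sqrt{(d\log(nl)+\log(4/\varepsilon))/n}$, i.e.\ the constant in the theorem.

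By contrast, the Hessian $\nabla^2_{\wm}f$ (bounded by $\varsigma$ in Lemma~\ref{sig_gradient}) is used only to control the events $\Em_1$ and $\Em_3$---the discretization error between $\wm$ and its nearest net point---and it is eliminated from the final bound by taking $\epsilon=18r/n$ and imposing the sample-size condition $n\ge c_{y'}c_dlr^2/(d\tau^2\varepsilon^2\log(l))$ so that $108\varsigma r/(n\varepsilon)$ is dominated by the concentration term. So your claim that the $(l+2)$ factor and the form $dc_r+lc_d+(l-1)lc_dc_r$ arise from ``carefully collecting the triple-product terms of the Hessian'' is backwards: these come from $\nabla_{\xm}\nabla_{\wm}f$, not from $\nabla^2_{\wm}f$. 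Note also that your direct ``sub-Gaussian-norm bound for each block $\nabla_{\Wmi{j}}f$'' would, for $j\ge2$, give a deterministic bound independent of $\tau$ (since $\vmi{j-1}$ and $\bdelta^{(j)}$ are both bounded), which cannot reproduce the $\tau$-scaling in the stated rate; the Gaussian--Lipschitz route is what couples the rate to $\tau$ uniformly across all blocks and is precisely why the nonlinear theorem requires the stronger Gaussian Assumption~\ref{assumption12sg} rather than the sub-Gaussian Assumption~\ref{assumption12} used in the linear case.
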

 Theorem~\ref{thm:uniformconvergence1_sig} gives similar results as Theorem~\ref{thm:stability_sig}, including the $\mathcal{O}(1/\sqrt{n})$ uniform convergence rate and suggestion on non-extreme-wide layers. But by comparing Theorems~\ref{thm:uniformconvergence1_sig} and~\ref{thm:stability_sig}, one can observe that the depth $l$ and the magnitude of the weights (reflected by the factor $c_r$) have more significant effect on the convergence rate. This is because for highly nonconvex problems, \textit{e.g.}, the loss function of deep neural network, bounding higher order information is more technically challenging. By comparing Theorems~\ref{thm:uniformconvergence1_sig} and~\ref{thm:uniformconvergence1}, the convergence rate in Theorem~\ref{thm:uniformconvergence1_sig} depends on $l$, $r$ and $d$ more heavily since nonlinear networks is more complex and its convergence rate is thus more challenging to bound.

Now we analyze the non-degenerate stationary points of the empirical risk  for deep nonlinear neural networks. Here we also assume that population risk has $m$  non-degenerate stationary points denoted by $\{\wmii{1},\wmii{2},\cdots,\wmii{m}\}$.
\begin{thm} \label{thm:localminimal_sig}
%\hyperlink{uniformdfglocasig}{Here}
Assume the input sample  $\xm$ obeys Assumption~\ref{assumption12sg} and the activation functions in deep neural network are sigmoid functions.  Then if $n\geq c_s\max(c_d lr^2/(d\tau^2\varepsilon^2\log(l)), d\log(l)/\zeta^2)$ where $c_s$ is a constant, for $k\in\{1,\cdots,m\}$, there exists a non-degenerate stationary point $\wmin{k}$ of $\Jhn(\wm)$ which corresponds to the non-degenerate stationary point $\wmii{k}$ of $\Jm(\wm)$ with probability at least $1-\varepsilon$. Moreover, $\wmin{k}$ and $\wmii{k}$ have the same non-degenerate index and they obey
\begin{align*}
\left\|\wmin{k}\!\!-\!\wmii{k}\right\|_2\!\leq\!\! \frac{2\tau}{\zeta} \sqrt{\!\frac{512}{729}c_yc_r(l\!+\!2)\left(dc_r\!+\!lc_d\!+\!(l\!-\!1) lc_dc_r\!\right)}\sqrt{\!\frac{d\log(nl)\! +\!\log(4/\varepsilon)}{n}},\, (k\!=\!1,\cdots\!,m)
\end{align*}
with probability at least $1-\varepsilon$,
where $c_y$, $c_{d}$ and $c_{r}$ are the same parameters in Theorem~\ref{thm:stability_sig}.
\end{thm}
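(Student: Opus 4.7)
The plan is to mirror the proof of Theorem~\ref{thm:localminimal} (the deep-linear analogue) for the sigmoid setting: combine the uniform gradient bound of Theorem~\ref{thm:uniformconvergence1_sig} with an analogous uniform Hessian concentration bound, and then use a quantitative implicit-function / Newton--Kantorovich argument to match, one-for-one and with preserved signature, the non-degenerate stationary points of $\Jhn$ and $\Jm$. The two sample-size thresholds $c_d l r^2/(d\tau^2\varepsilon^2 \log(l))$ and $d\log(l)/\zeta^2$ in the statement correspond precisely to the two ingredients needed: the former is inherited from gradient convergence, while the latter is what drives the Hessian perturbation below the non-degeneracy gap $\zeta$.

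First I would establish a uniform Hessian bound $\sup_{\wm\in\Omega}\|\nabla^{2}\Jhn(\wm)-\nabla^{2}\Jm(\wm)\|_{\op}\leq \epsilon_{H}$ of order $\mathcal{O}(\sqrt{d\log(nl)/n})$, following the same template as Theorem~\ref{thm:uniformconvergence1_sig}. Since $\sigma,\sigma',\sigma'',\sigma'''$ are uniformly bounded on $\mathbb{R}$, each entry of $\nabla^{2}f(\wm,\xm)$ is a polynomial in the Gaussian input whose coefficients are products of at most $l$ weight matrices and diagonal activation-derivative matrices, hence sub-exponential with parameters controlled by $c_{d}$, $c_{r}$, and $l$. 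A matrix Bernstein inequality combined with an $\epsilon$-net over $\Omega$ of metric entropy $\mathcal{O}(d\log(1/\epsilon))$ then delivers the required uniform bound, and the sample-size condition $n \geq c_{s} d\log(l)/\zeta^{2}$ is calibrated so that $\epsilon_{H}\leq \zeta/4$.

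Second, fix a non-degenerate stationary point $\wmii{k}$ of $\Jm$. Non-degeneracy gives $\|(\nabla^{2}\Jm(\wmii{k}))^{-1}\|_{\op}\leq 1/\zeta$, and the uniform Hessian bound together with Lipschitz continuity of $\nabla^{2}\Jm$ on $\Omega$ (third sigmoid derivative bounded) ensures $\|(\nabla^{2}\Jhn(\wm))^{-1}\|_{\op}\leq 2/\zeta$ on a small ball around $\wmii{k}$. Applying a Banach fixed-point argument to the Newton map $T(\wm)=\wm-(\nabla^{2}\Jhn(\wm))^{-1}\nabla\Jhn(\wm)$, with initial residual $\nabla\Jhn(\wmii{k})=\nabla\Jhn(\wmii{k})-\nabla\Jm(\wmii{k})$, produces a unique fixed point $\wmin{k}$ with $\nabla\Jhn(\wmin{k})=0$ satisfying
\begin{equation*}
\|\wmin{k}-\wmii{k}\|_{2}\leq \frac{2}{\zeta}\,\bigl\|\nabla\Jhn(\wmii{k})-\nabla\Jm(\wmii{k})\bigr\|_{2};
\end{equation*}
plugging in the uniform gradient bound from Theorem~\ref{thm:uniformconvergence1_sig} recovers exactly the stated distance estimate. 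Weyl's eigenvalue perturbation inequality then certifies that the Hessian signature at $\wmin{k}$ matches that at $\wmii{k}$, so the non-degenerate index is preserved; injectivity of $k\mapsto \wmin{k}$ follows because distinct $\wmii{k}$ are isolated and for large $n$ the fixed-point balls can be chosen pairwise disjoint.

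The main obstacle is the uniform Hessian concentration step. Although structurally parallel to Theorem~\ref{thm:uniformconvergence1_sig}, $\nabla^{2}f(\wm,\xm)$ for a depth-$l$ sigmoid network is a nested product of weight matrices and diagonal sigmoid-derivative matrices, so tracking sub-exponential moments to yield the precise $c_{d}$, $c_{r}$, and $l$ dependence demands careful bookkeeping rather than new ideas. Once both the uniform gradient and Hessian bounds are in hand, the Newton fixed-point, distance, and Weyl steps reduce to a standard quantitative implicit-function argument.
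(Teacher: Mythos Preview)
Your overall architecture is right and matches the paper's: combine the uniform gradient bound of Theorem~\ref{thm:uniformconvergence1_sig} with a companion uniform Hessian bound, then transfer non-degenerate stationary points from $\Jm$ to $\Jhn$ with preserved index and the stated distance estimate. But there is one genuine error in your Hessian-concentration step, and the transfer mechanism you propose differs from the paper's.

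\textbf{The Hessian concentration argument as written is wrong.} You claim that ``each entry of $\nabla^{2}f(\wm,\xm)$ is a polynomial in the Gaussian input.'' That is true in the \emph{linear} network (and is exactly how Lemma~\ref{thm: hessian} proceeds, decomposing each bilinear form into a quadratic-plus-linear polynomial in $\xm$ and invoking Hanson--Wright), but it is false for sigmoid activations: the Hessian entries are nested compositions of $\sigma,\sigma',\sigma''$ applied to affine functions of $\xm$, not polynomials. A matrix-Bernstein argument built on polynomial moment control will not go through as stated. The paper's fix (Lemma~\ref{thm:garadifghtent12} and Lemma~\ref{thm:uniformconvergence1fds_sig}) is actually simpler than what you propose: for any unit $\lam$, the map $\xm\mapsto \lam^{T}\nabla^{2}_{\wm}f(\wm,\xm)\lam$ is Lipschitz in $\xm$ (because $\|\nabla_{\xm}\nabla^{2}_{\wm}f\|_{\op}$ is uniformly bounded, the sigmoid being smooth with bounded derivatives), hence sub-Gaussian by Gaussian Lipschitz concentration. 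An $\epsilon$-net over $\Omega$ and over the unit sphere then yields the uniform operator-norm bound. Once you replace the polynomial/Bernstein step by this Lipschitz--Gaussian argument, your sample-size calibration $n\gtrsim d\log(l)/\zeta^{2}$ is exactly what drives $\epsilon_{H}\leq\zeta/2$.

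\textbf{The critical-point transfer: Newton--Kantorovich versus differential topology.} Your Banach-fixed-point construction of $\wmin{k}$ is a legitimate alternative to what the paper does. The paper instead invokes a topological lemma (Lemma~\ref{lemma:Stability2}, due to Mei et al.): it decomposes the sublevel set $D=\{\|\nabla\Jm\|_{2}<\epsilon,\ \inf_{i}|\lambda_{i}(\nabla^{2}\Jm)|\geq\zeta\}$ into components $D_{k}$ via Lemma~\ref{lemma:Decomposition}, checks that $t\nabla\Jhn+(1-t)\nabla\Jm\neq 0$ on $\partial D_{k}$ and that $\nabla^{2}\Jhn$ stays non-degenerate with constant index on $D_{k}$, and concludes that $\Jhn$ has a unique critical point in $D_{k}$ with the same index. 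The distance bound then comes from $\|\nabla\Jm(\wmin{k})\|_{2}\leq\epsilon$ together with a mean-value expansion about $\wmii{k}$. Your Newton route is more constructive and avoids the topological black box, but requires explicit control of the Hessian Lipschitz constant (third-derivative bound, available as $\xi$ in Lemma~\ref{sig_gradient}) to make the contraction work; the paper's route needs only non-degeneracy on the whole component and is agnostic to Lipschitz constants. Both lead to the same $2/\zeta$ prefactor in the distance bound.
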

According to Theorem~\ref{thm:localminimal_sig}, there is one-to-one correspondence relationship between the non-degenerate stationary points of $\Jhn(\wm)$ and $\Jm(\wm)$. Also the corresponding pairs have the same non-degenerate index, which implies they have exactly matching local minima/maxima and saddle points. When $n$ is sufficiently large, the non-degenerate stationary point $\wmin{k}$ in $\Jhn(\wm)$ is very close to its corresponding non-degenerate stationary point $\wmii{k}$ in $\Jm(\wm)$. As for the degenerate stationary points, Theorem~\ref{thm:uniformconvergence1_sig} guarantees the gradient at these points of $\Jm(\wm)$ and $\Jhn(\wm)$ are very close to each other.

\section{Proof Roadmap}
Here we briefly introduce our proof roadmap. Due to space limitation, all the proofs of Theorems~\ref{thm:stability}~$\sim$~\ref{thm:localminimal_sig} and Corollaries~\ref{stability and generalization} and~\ref{stabiliaftysig} as well as technical  lemmas are deferred to  the supplementary material.

The proofs of Theorems~\ref{thm:stability} and~\ref{thm:stability_sig} are similar but essentially differ in some techniques for bounding probability due to their different assumptions. % but differ from some proof techniques.
For explanation simplicity, we define four events:
$\Em\!=\!\{\sup_{\wm\in\Omega}|\Jhn(\wm)\!-\!\Jm(\wm)|\!>\!t\}$, $\Em_1\!=\!\left\{\sup_{\wm\in\Omega} \lvs \frac{1}{n}\!\sum_{i=1}^n \!\! \left(f(\wm,\xmi{i})\!-\!f(\wm_{\kt},\xmi{i})\right)\rvs\!>\!t/3\right\}$, $\Em_2\!=\!\{\sup_{\wm_{\kt}^i\! \in\! \mathcal{N}_{j},\,i\in[l]}\lvs\frac{1}{n}\!\sum_{i=1}^n \!\! f(\wm_{\kt},\xmi{i})\!-\!\EE f(\wm_{\kt},\xm)\rvs\!>\!t/3\}$, and $\Em_3\!=\!\left\{\sup_{\wm\in\Omega} \lvs\EE f(\wm_{\kt},\xm)\right.\right.$ $\left.\left.-\EE f(\wm,\xm)\rvs\!>\!t/3\right\}$, where $\wm_{\kt}=[\wm_{\kt}^1;\wm_{\kt}^2;\cdots;\wm_{\kt}^l]$ is constructed by selecting $\wm_{\kt}^i\in\Rs{\dm_{i}\dm_{i-1}}$ from $\epsilon/l$-net $\mathcal{N}_{j}$ such that $\|\wm - \wm_{\kt}\|_2\leq \epsilon$. Notice, in Theorems~\ref{thm:stability} and~\ref{thm:stability_sig}, $t$ are respectively set to $\epsilon_l$ and $\epsilon_n$ (see Eqn.~\eqref{uniform_loss} and \eqref{unifosfdarm_loss}). Then we have $\Pro(\Em)\leq \Pro(\Em_1)+\Pro(\Em_2)+\Pro(\Em_3)$. So we only need to separately bound $\Pro(\Em_1)$, $\Pro(\Em_2)$ and $\Pro(\Em_3)$. For $\Pro(\Em_1)$ and $\Pro(\Em_3)$, we use the Lipschitz constant of the loss function and the properties of $\epsilon$-net to prove $\Pro(\Em_1)\leq \varepsilon/2$ and $\Pro(\Em_3)=0$, while bounding $\Pro(\Em_2)$ need more efforts. Here based on the assumptions, we prove that $\Pro(\Em_2)$ has sub-exponential tail associated to the sample number $n$ and the networks parameters, and it satisfies $\Pro(\Em_2)\leq \varepsilon/2$ with proper conditions. Finally, combining the bounds of the three terms, we obtain the desired results. Then we utilize the uniform convergence of $\Jhn(\wm)$ to prove the stability and generalization bounds of $\Jhn(\wm)$ (\textit{i.e.}~Corollaries~\ref{stability and generalization} and~\ref{stabiliaftysig}).

We adopt similar strategy to prove Theorems~\ref{thm:uniformconvergence1} and~\ref{thm:uniformconvergence1_sig}. Specifically, we divide the event $\sup_{\wm\in\Omega} \! \| \nabla\Jhn(\wm)\!-\!\nabla\Jm(\wm) \|_2>t$ into $\Em_1$, $\Em_2$ and $\Em_3$ which have the same forms as their counterparts in the proofs of Theorem~\ref{thm:stability} with replacing loss function by its gradient. But to prove $\Pro(\Em_1)\leq \varepsilon/2$ and $\Pro(\Em_3)=0$, we need to access the Lipschitz constant of the gradient which is more challenging to bound, especially for deep neural networks. The remaining is to prove $\Pro(\Em_2)$. We also prove that it has sub-exponential tail associated to the sample number $n$ and the networks parameters and it obeys $\Pro(\Em_2)\leq \varepsilon/2$ with proper conditions.

To prove Theorems~\ref{thm:localminimal} and~\ref{thm:localminimal_sig}, we first prove the uniform convergence of the empirical Hessian to its population Hessian. Then, we define such a set
$D = \{\wm\in \Omega:\; \lv \nabla \Jm(\wm) \rv_2 < \epsilon\ \text{and}$ $  \inf_i \left|\lambda_i\left(\nabla^2 \Jm(\wm)\right)\right| \geq \zeta\}$. In this way, $D$ can be decomposed into countably components,
with each component containing either exactly one non-degenerate stationary point, or no non-degenerate stationary point. For each component, uniform convergence of gradient and results in differential topology guarantee that if $\Jm(\wm)$ has no stationary points, then $\Jhn(\wm)$ also has no stationary points and vise versa. Similarly, for each component, uniform convergence of Hessian and results in differential topology guarantee that if $\Jm(\wm)$ has a unique non-degenerate stationary point, $\Jhn(\wm)$  has also a unique non-degenerate stationary point with the same index. After establishing exact correspondence between the non-degenerate stationary points of empirical risk and population risk, we use the uniform convergence of gradient and Hessian to bound the distance between the corresponding pairs.

\section{Conclusion}
In this work, we provided theoretical analysis on the landscape of empirical risk optimization for deep linear/nonlinear neural networks, including the uniform convergence, stability, and generalization of the empirical risk itself as well as the properties of its gradient and  stationary points. We proved their convergence rate to their population counterparts of $\mathcal{O}(1/\sqrt{n})$.  These results also reveal that the depth $l$, the network size $d$ and the width of a network are critical for the convergence rates. We also proved that the weight parameter magnitude also plays an important role in  the convergence rate. Indeed, small magnitude of the weights are suggested. All the results match the widely used network architectures in practice.

%\medskip
%{\small
%\setlength{\bibsep}{0.55ex}
%\bibliographystyle{unsrt}
%\bibliography{referen}
%}

% \newpage
%\bibliographystyle{plainnat}
\bibliographystyle{unsrt}
\bibliography{referen}

\newpage
\appendix

\section{Structure of This Document}
This document gives some other necessary notations and preliminaries for our analysis in Sec.~\ref{notations}. Then we prove Theorems~\ref{thm:stability} $\sim$ \ref{thm:localminimal} and Corollary~\ref{stability and generalization} for deep linear neural networks in Sec.~\ref{Linearactivati}. Then we present the proofs of Theorems~\ref{thm:stability_sig} $\sim$ \ref{thm:localminimal_sig} and Corollary~\ref{stabiliaftysig} for deep nonlinear neural networks in Sec.~\ref{deepnonlinear}.

In both Sec.~\ref{Linearactivati} and~\ref{deepnonlinear}, we first present the technical lemmas for proving our final results and subsequently present the proofs of these lemmas. Then we utilize these technical lemmas to prove our desired results. Finally, we give the proofs of other auxiliary lemmas.

\section{Notations and Preliminary Tools}\label{notations}

Beyond the notations introduced in the manuscript, we need some other notations used in this document. Then we introduce several lemmas that will be used later.

\subsection{Notations}
Throughout this document, we use $\lr\cdot, \cdot\rl$ to denote the inner product. $\Am\otimes\Cm$ denotes the Kronecker product between $\Am$ and $\Cm$. Note that $\Am$ and $\Cm$ in $\Am\otimes\Cm$ can be matrices or vectors. For a matrix $\Am\in\Rss{n_1}{n_2}$, we use $\|\Am\|_F=\sqrt{\sum_{i,j}\Am_{ij}^2}$ to denote its Frobenius norm, where $\Am_{ij}$ is the $(i,j)$-th entry of $\Am$. We use $\|\Am\|_{\op}=\max_i |\lambda_i(\Am)|$ to denote the operation norm of a matrix $\Am\in\Rss{n_1}{n_1}$, where $\lambda_i(\Am)$ denotes the $i$-th eigenvalue of the matrix $\Am$. For a 3-way tensor $\A\in\Rsss{n_1}{n_2}{n_3}$, its operation norm is computed as
 $$\|\A\|_{\op} = \sup_{\|\lam\|_2\leq 1}\lr\lam^{\otimes^3}, \A\rl=\sum_{i,j,k}\A_{ijk}\lam_i\lam_j\lam_k,$$
 where $\A_{ijk}$ denotes the $(i,j,k)$-th entry of $\A$.
Also we denote the vectorization of $\Wmi{j}$ (the weight matrix of the $j$-th layer) as $$\wmi{j}=\vect{\Wmi{j}}\in\Rs{\dm_j\dm_{j-1}}.$$  We denote $\Im_k$ as the identity matrix of size $k\times k$.

\subsection{Technical Lemmas}
We first introduce Lemmas~\ref{lem:2norm} and~\ref{lem:opnorm} which are  respectively used for bounding the $\ell_2$-norm of a vector and the operation norm of a matrix. Then we introduce Lemmas~\ref{lemma:Stability2} and ~\ref{lemma:Decomposition} which discuss the topology of functions. In Lemma~\ref{stab}, we give the relationship between the stability and generalization of empirical risk.
%Finally, we introduce two commonly used inequalities in  Lemma~\ref{asfasasfshdfhfdet} for our analysis.
\begin{lem}~\cite{VRMT}\label{lem:2norm}
For any vector $\xm\in\Rs{d}$, its $\ell_2$-norm can be bounded as
\begin{equation*}
\|\xm\|_2 \leq \frac{1}{1-\epsilon} \sup_{\lam \in \lam_\epsilon} \lr \lam,\xm\rl.
\end{equation*}
where $\lam_\epsilon=\{\lam_1,\dots,\lam_{\kt}\}$ be an $\epsilon$-covering net of $\Bi{d}{1}$.
\end{lem}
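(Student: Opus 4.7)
The plan is to exploit the dual representation $\|\xm\|_2 = \sup_{\lam \in \Bi{d}{1}} \lr \lam, \xm\rl$ and then replace the supremum over the full unit ball by a supremum over the finite $\epsilon$-net, paying a controlled price for the approximation.

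First I would let $\lam^\star = \xm/\|\xm\|_2 \in \Bi{d}{1}$ (assuming $\xm \neq \0$; the zero case is trivial), so that $\lr \lam^\star, \xm\rl = \|\xm\|_2$. Since $\lam_\epsilon$ is an $\epsilon$-covering net of $\Bi{d}{1}$, there exists some $\lam_i \in \lam_\epsilon$ with $\|\lam^\star - \lam_i\|_2 \leq \epsilon$. Next I would decompose
\begin{equation*}
\|\xm\|_2 = \lr \lam^\star, \xm\rl = \lr \lam_i, \xm\rl + \lr \lam^\star - \lam_i, \xm\rl,
\end{equation*}
and apply Cauchy--Schwarz to the second term to get $\lr \lam^\star - \lam_i, \xm\rl \leq \|\lam^\star - \lam_i\|_2 \|\xm\|_2 \leq \epsilon \|\xm\|_2$.

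Finally I would bound $\lr \lam_i, \xm\rl \leq \sup_{\lam \in \lam_\epsilon} \lr \lam, \xm\rl$, which yields $\|\xm\|_2 \leq \sup_{\lam \in \lam_\epsilon} \lr \lam, \xm\rl + \epsilon \|\xm\|_2$. Rearranging (for $\epsilon < 1$) produces the claimed inequality $\|\xm\|_2 \leq \frac{1}{1-\epsilon} \sup_{\lam \in \lam_\epsilon} \lr \lam, \xm\rl$.

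There is no real obstacle here: the result is a standard $\epsilon$-net approximation lemma, and all steps are immediate from the definition of an $\epsilon$-cover together with Cauchy--Schwarz. The only technical nuance is ensuring that the optimal dual vector $\lam^\star$ actually lies in $\Bi{d}{1}$ (so that the covering applies to it), which is handled by the explicit choice $\lam^\star = \xm/\|\xm\|_2$.
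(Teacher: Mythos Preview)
Your proof is correct and is the standard $\epsilon$-net approximation argument. Note that the paper does not actually prove this lemma; it is simply cited from~\cite{VRMT} without proof, so there is no paper proof to compare against --- your argument is exactly the classical one given in that reference.
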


\begin{lem}~\cite{VRMT}\label{lem:opnorm}
For any symmetric matrix $\Xm\in\Rss{d}{d}$, its operator norm can be bounded  as
\begin{equation*}
\|\Xm\|_{\op} \leq \frac{1}{1-2\epsilon} \sup_{\lam \in \lam_\epsilon} \left|\lr \lam,\Xm\lam\rl\right|.
\end{equation*}
where $\lam_\epsilon=\{\lam_1,\dots,\lam_{\kt}\}$ be an $\epsilon$-covering net of $\Bi{d}{1}$.
\end{lem}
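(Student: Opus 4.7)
The plan is to combine the variational characterization $\|\Xm\|_{\op} = \sup_{\|\lam\|_2 \leq 1} |\langle \lam, \Xm\lam\rangle|$, which holds for symmetric $\Xm$ by the spectral theorem, with a standard $\epsilon$-net discretization of the unit ball. The covering hypothesis on $\lam_\epsilon$ then lets me replace the supremum over the continuous set $\Bi{d}{1}$ by the finite supremum over $\lam_\epsilon$, with the price paid in a multiplicative prefactor that will turn out to be $1/(1-2\epsilon)$.

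Concretely, I would first fix any $\lam^* \in \Bi{d}{1}$ and choose $\mu \in \lam_\epsilon$ with $\|\lam^* - \mu\|_2 \leq \epsilon$, which exists by definition of a covering net. The key algebraic step is the splitting
\[
\langle \lam^*, \Xm \lam^*\rangle - \langle \mu, \Xm \mu\rangle = \langle \lam^* - \mu,\, \Xm \lam^*\rangle + \langle \mu,\, \Xm(\lam^* - \mu)\rangle,
\]
after which Cauchy--Schwarz together with the defining property of the operator norm bounds each term by $\epsilon \|\Xm\|_{\op}$, yielding $|\langle \lam^*, \Xm\lam^*\rangle - \langle \mu, \Xm\mu\rangle| \leq 2\epsilon \|\Xm\|_{\op}$. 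Taking the supremum over $\lam^* \in \Bi{d}{1}$ on the left and invoking the variational identity, I obtain $\|\Xm\|_{\op} \leq \sup_{\lam \in \lam_\epsilon}|\langle \lam, \Xm\lam\rangle| + 2\epsilon \|\Xm\|_{\op}$, and rearranging produces the claimed inequality.

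There is no substantive obstacle; this is a textbook $\epsilon$-net argument. The one delicate point worth flagging is that symmetry of $\Xm$ is essential to reduce the operator norm to a quadratic form in a single variable via the spectral theorem; without symmetry one would instead need to bound $\sup|\langle \lam, \Xm\mu\rangle|$ over pairs in $\lam_\epsilon \times \lam_\epsilon$. This also explains why the constant here is $1/(1-2\epsilon)$ rather than the $1/(1-\epsilon)$ in Lemma~\ref{lem:2norm}: the quadratic form $\langle \lam, \Xm \lam\rangle$ picks up two copies of the net approximation error (one from each slot), doubling the slack compared to the linear functional appearing in the $\ell_2$-norm lemma.
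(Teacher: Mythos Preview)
Your proposal is correct and is precisely the standard $\epsilon$-net argument for this inequality; the paper does not supply its own proof but cites the result from~\cite{VRMT}, where the argument is exactly the one you outline. Your remark on why symmetry is needed and why the constant is $1/(1-2\epsilon)$ rather than $1/(1-\epsilon)$ is also accurate.
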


\begin{lem}\cite{mei2016landscape}\label{lemma:Stability2}
Let $D\subseteq \Rs{d}$ be a compact set with a $C^2$ boundary $\partial D$, and $f,g:A\to\mathbb{R}$ be $C^2$ functions
defined on an open set $A$, with $D\subseteq A$. Assume that for all $\wm \in \partial D$ and all $t\in [0,1]$,
$t\nabla f(\wm)+(1-t)\nabla g(\wm)\neq \bm{0}$. Finally, assume that the Hessian $\nabla^2f(\wm)$ is non-degenerate and
has index equal to $r$ for all $\wm\in D$. Then the following properties hold:
\begin{myitemize}
\item[(1)] If $g$ has no critical point in $D$, then $f$ has no critical point in $D$.
\item[(2)] If $g$ has a  unique critical point $\wm$ in $D$ that is non-degenerate with an index of $r$, then $f$ also has a unique critical point $\wm'$ in $D$ with the index equal to $r$.
\end{myitemize}
\end{lem}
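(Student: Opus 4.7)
The plan is to invoke degree theory for continuous vector fields, specifically the homotopy invariance of the Brouwer degree. First I would consider the straight-line homotopy $H : [0,1]\times D \to \mathbb{R}^d$ defined by $H(t,\wm) = t\nabla f(\wm) + (1-t)\nabla g(\wm)$. The hypothesis that $H(t,\wm)\neq \bm{0}$ for all $\wm\in\partial D$ and all $t\in[0,1]$ makes this homotopy admissible, so by homotopy invariance of the Brouwer degree one obtains $\deg(\nabla f, D, \bm{0}) = \deg(\nabla g, D, \bm{0})$.

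Next I would invoke the local index formula: at any non-degenerate critical point $\wm^*$ of a $C^2$ function $\phi$, the local degree of $\nabla\phi$ equals $\sgn{\det \nabla^2 \phi(\wm^*)} = (-1)^{\mathrm{ind}(\wm^*)}$, where $\mathrm{ind}(\wm^*)$ denotes the Morse index (number of negative Hessian eigenvalues). Because $\nabla^2 f$ is non-degenerate with index exactly $r$ throughout $D$, every critical point of $f$ in $D$ is isolated and contributes exactly $(-1)^r$ to the degree. Compactness of $D$ ensures only finitely many such points, yielding $\deg(\nabla f, D, \bm{0}) = N_f\cdot(-1)^r$, where $N_f$ is the number of critical points of $f$ in $D$.

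The two assertions then reduce to arithmetic. For (1), the absence of critical points of $g$ gives $\deg(\nabla g, D, \bm{0}) = 0$, so $N_f\cdot(-1)^r = 0$ and therefore $N_f = 0$. For (2), the unique non-degenerate critical point of $g$ with Morse index $r$ contributes $\deg(\nabla g, D, \bm{0}) = (-1)^r$, whence $N_f\cdot(-1)^r = (-1)^r$ forces $N_f = 1$; the hypothesis that $\nabla^2 f$ has index $r$ everywhere on $D$ then fixes the Morse index of this unique critical point $\wm'$.

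The main obstacle is more technical than conceptual: one must justify the Brouwer degree machinery in this setting, i.e.\ verify that $\partial D$ is sufficiently regular (the $C^2$ hypothesis suffices), that $\nabla f$ and $\nabla g$ are continuous (follows from $f,g\in C^2$), and that the non-vanishing condition at $t=0,1$ rules out critical points of both $f$ and $g$ on $\partial D$ itself, so that all contributions to the degree come from the interior. A second subtle point is to reconcile sign conventions between Poincar\'e--Hopf indices and Morse indices, which is where the clean $(-1)^r$ factor comes from. Once these details are in place, the $(-1)^r$-counting above yields both conclusions immediately.
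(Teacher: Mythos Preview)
Your degree-theoretic argument is correct and is essentially the standard proof of this result. Note, however, that the paper does not actually prove this lemma: it is quoted verbatim from \cite{mei2016landscape} and invoked as a black box in the proofs of Theorems~\ref{thm:localminimal} and~\ref{thm:localminimal_sig}. There is therefore no ``paper's own proof'' to compare against; your Brouwer-degree sketch (homotopy invariance plus the Poincar\'e--Hopf local index formula $(-1)^r$ at non-degenerate critical points) is exactly the argument one finds in the cited source.
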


\begin{lem}\cite{mei2016landscape}\label{lemma:Decomposition}
Suppose that $F(\wm): \Theta \to \mathbb{R}$ is a $C^2$ function where $\wm\in\Theta$. Assume that $\{\wmii{1},\dots,$ $ \wmii{m}\}$ is its non-degenerate critical points and let
$D = \{\wm\in \Theta:\; \lv \nabla F(\wm) \rv_2 < \epsilon\ \text{and}$ $  \inf_i \left|\lambda_i\left(\nabla^2 F(\wm)\right)\right| \geq \zeta\}$. Then $D$ can be decomposed into (at most) countably  components,
with each component containing either exactly one critical point, or no critical point. Concretely, there exist disjoint open sets $\{D_k\}_{k\in \mathbb{N}}$, with $D_k$ possibly empty for $k\geq m+1$, such that
\begin{align*}
D = \cup_{k=1}^{\infty} D_k\, .
\end{align*}
Furthermore, $\wmii{k}\in D_k$ for $1\leq k\leq m$ and each $D_i$, $k\geq m+1$ contains no stationary points.
\end{lem}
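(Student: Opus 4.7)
The plan is to construct the decomposition $\{D_k\}$ explicitly by (i) showing $D$ is open, (ii) isolating each non-degenerate critical point inside $D$ by a small ball, and (iii) partitioning what remains into its connected components.

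First, I would observe that $D$ is open in $\Theta$: since $F$ is $C^2$, the map $\wm \mapsto \nabla F(\wm)$ is continuous, and the eigenvalues of $\nabla^2 F(\wm)$ depend continuously on $\wm$, so both defining inequalities are open conditions. Moreover, any critical point of $F$ that lies in $D$ must be non-degenerate, because the Hessian in $D$ satisfies $|\lambda_i(\nabla^2 F(\wm))|\geq\zeta>0$ for every $i$. Thus the critical points of $F$ inside $D$ form a subset of $\{\wmii{1},\ldots,\wmii{m}\}$, and since each $\wmii{k}$ satisfies $\lv\nabla F(\wmii{k})\rv_2=0<\epsilon$ together with the non-degeneracy bound $\inf_i|\lambda_i(\nabla^2 F(\wmii{k}))|\geq\zeta$, we have $\wmii{k}\in D$ for all $k$.

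Next I would isolate each $\wmii{k}$. The Jacobian of the map $\nabla F : \Theta \to \Rs{d}$ at $\wmii{k}$ equals $\nabla^2 F(\wmii{k})$, which is invertible by non-degeneracy. The inverse function theorem then yields an open ball $B_k \subset D$ around $\wmii{k}$ on which $\nabla F$ is a $C^1$-diffeomorphism onto its image; in particular $\wmii{k}$ is the unique zero of $\nabla F$ in $B_k$, hence the unique critical point of $F$ in $B_k$. Since the $\wmii{k}$'s are finitely many distinct points, I can shrink the radii so that $B_1,\ldots,B_m$ are pairwise disjoint.

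Finally, I would define $D_k := B_k$ for $1 \leq k \leq m$, and set $D' := D \setminus \bigcup_{k=1}^m B_k$, which is open and contains no critical points of $F$ (all such points have been removed). As an open subset of the second-countable space $\Rs{d}$, $D'$ decomposes into at most countably many connected components, which I label $D_{m+1}, D_{m+2}, \ldots$, padding with empty sets if the number of components is finite. By construction, the $D_k$'s are pairwise disjoint open sets, cover $D$, and satisfy $\wmii{k} \in D_k$ for $k \leq m$ while $D_k$ contains no critical points for $k > m$. The main obstacle is the isolation step: without the uniform bound $|\lambda_i(\nabla^2 F)| \geq \zeta$, a critical point could conceivably be an accumulation point of other critical points and no such $B_k$ would exist, so the hypothesis on $D$ is precisely what permits the inverse function theorem to apply and delivers the required isolation.
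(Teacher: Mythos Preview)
The paper does not prove this lemma; it is quoted from \cite{mei2016landscape} and invoked as a black box in the proofs of Theorems~\ref{thm:localminimal} and~\ref{thm:localminimal_sig}. So there is nothing to compare against, and your proposal must stand on its own.

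There is a genuine gap at ``set $D' := D \setminus \bigcup_{k=1}^m B_k$, which is open.'' Removing \emph{open} balls from an open set does not give an open set: every point of $\partial B_k\cap D$ lies in $D'$ but is not interior to $D'$, since each of its neighborhoods meets $B_k$ (in one dimension, $D=(-2,2)$ and $B_1=(-1,1)$ give $D'=(-2,-1]\cup[1,2)$). Hence the connected components of $D'$ need not be open, and your $D_k$ for $k\ge m+1$ fail the openness requirement. Worse, no local patch can repair this: if $\{D_k\}$ are pairwise disjoint open sets with $D=\bigcup_k D_k$, then every connected component $C$ of $D$ satisfies $C=\bigsqcup_k(C\cap D_k)$ with each piece relatively open, so connectedness forces $C\subseteq D_j$ for a single $j$. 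Thus the lemma is equivalent to the assertion that \emph{each connected component of $D$ contains at most one critical point}, and your ball-carving cannot separate two critical points that happen to lie in the same component. The local isolation you obtain from the inverse function theorem is correct but insufficient; what is missing is a global argument, using the Hessian eigenvalue bound throughout $D$ together with the gradient bound, that distinct non-degenerate critical points cannot share a component of $D$.
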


\begin{lem}\label{stab}~\cite{shalev2014understanding,gonen2017fast}
Assume that $\D$ is a sample distribution and randomized algorithm $\Am$ is employed for optimization. Suppose that $(\xmi{1}',\cdots,\xmi{n}')\sim\D$ and $\wmn=\argmin_{\wm} \Jhn(\wm)$.
For every $j\in\{1,\cdots,n\}$, suppose $\wms{j}=\argmin_{\wm} \frac{1}{n-1} \sum_{i\neq j} f_i(\wm,\xmi{i})$. For arbitrary distribution $\D$, we have
\begin{equation*}
\left|\EE_{\Ss\sim\D,\,\Am\,(\xmi{1}',\cdots,\xmi{n}')\sim\D}\frac{1}{n}\! \sum_{j=1}^{n}\!\!\left(\!f_j(\wms{j},\!\xmi{j}')\!-\!f_j(\wmn,\xmi{j}') \right)\!\right| =\Bigg|\EE_{\Ss\sim\D,\,\Am}\left(\Jm(\wmn)\!-\!\Jhn(\wmn)\right)\!\Bigg|.
\end{equation*}
\end{lem}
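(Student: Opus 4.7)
The plan is to use the exchangeability (``ghost sample'') trick on the i.i.d.\ samples in $\Ss$ together with the independent fresh copies $\xmi{1}',\dots,\xmi{n}'$. Two elementary observations drive the argument: each fresh copy $\xmi{j}'$ is independent of both $\wmn$ (which depends only on $\Ss$ and the internal randomness of $\Am$) and of $\wms{j}$ (which depends only on $\{\xmi{i}:i\neq j\}$); and the joint law of all the i.i.d.\ samples is invariant under any permutation of their indices.

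First I would peel the LHS by linearity of expectation and the independence observation. Conditioning on $(\Ss,\Am)$ and integrating out $\xmi{j}'$ yields $\EE[f(\wmn,\xmi{j}')]=\EE[\Jm(\wmn)]$; conditioning on $(\{\xmi{i}:i\neq j\},\Am)$ and integrating $\xmi{j}'$ yields $\EE[f(\wms{j},\xmi{j}')]=\EE[\Jm(\wms{j})]$. Summing over $j$, and using that the $n$ leave-one-out estimators $\wms{j}$ are identically distributed by exchangeability, reduces the LHS to a clean expression of the form $|\EE[\Jm(\wms{1})]-\EE[\Jm(\wmn)]|$, with no remaining dependence on the ghost samples.

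Second, I would recast $\EE[\Jhn(\wmn)]=\frac{1}{n}\sum_j\EE[f(\wmn(\Ss),\xmi{j})]$ via an index swap. Because $\xmi{j}$ and the ghost $\xmi{j}'$ are i.i.d.\ from $\D$, the joint law of all $n+1$ variables is invariant under their transposition; after swapping, the ERM on the transposed dataset inherits a permutation-equivalent form and the evaluation point shifts from $\xmi{j}$ to $\xmi{j}'$. Aligning this rewritten expression for $\EE[\Jhn(\wmn)]$ with the reduction of the LHS from the first step shows that the two quantities inside the absolute values agree up to a single overall sign flip, which is absorbed by $|\cdot|$ on both sides.

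The main obstacle is the bookkeeping in this swap step: one must verify, via the measurability and permutation-invariance of $\argmin$, that after exchanging $\xmi{j}$ with $\xmi{j}'$ the resulting ERM on the transposed dataset pairs correctly with the leave-one-out ERM $\wms{j}$ in expectation, so that the swapped representation of $\EE[\Jhn(\wmn)]$ matches the reduced LHS exactly up to a sign. Once this coupling is made rigorous, Lemma~\ref{stab} follows directly from the two symmetry reductions and linearity of expectation, with no probabilistic tail bounds or additional structure of the loss needed.
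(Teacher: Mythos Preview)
The paper does not prove this lemma; it is imported from the cited references and invoked as a black box in the proofs of Corollaries~\ref{stability and generalization} and~\ref{stabiliaftysig}, so there is no paper proof to compare against.

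Your ghost-sample plan is the standard derivation of the replace-one identity
\[
\EE\big[\Jm(\wmn)-\Jhn(\wmn)\big]=\EE\Big[\tfrac{1}{n}\sum_{j=1}^n\big(f(\tilde{\wm}_n^{(j)},\xmi{j}')-f(\wmn,\xmi{j}')\big)\Big],
\]
where $\tilde{\wm}_n^{(j)}$ is the ERM on $\Ss$ with $\xmi{j}$ swapped for $\xmi{j}'$; your transposition step produces exactly this $\tilde{\wm}_n^{(j)}$, and for that version the argument closes cleanly. But the lemma as written uses the \emph{leave-one-out} estimator $\wms{j}=\argmin_\wm\tfrac{1}{n-1}\sum_{i\ne j}f(\wm,\xmi{i})$, trained on only $n-1$ samples, and no permutation of the $n+1$ i.i.d.\ points identifies it with $\tilde{\wm}_n^{(j)}$. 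The ``coupling'' you defer as bookkeeping is the entire gap, and it cannot be closed: take $f(\wm,\xm)=(\wm-\xm)^2$ with scalar $\xm\sim\mathcal{N}(0,1)$, so that the ERM is the sample mean; a direct computation gives the right-hand side equal to $2/n$ and the left-hand side equal to $1/\big(n(n-1)\big)$. The cited sources establish the replace-one version, and the leave-one-out transcription here appears to be a slip in the paper, so your approach is correct for the intended statement but cannot succeed for the one actually written.
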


\section{Proofs for Deep Linear Neural Networks}\label{Linearactivati}
In this section, we first present the technical lemmas in Sec.~\ref{keylemma1} and then we give the proofs of these lemmas in Sec.~\ref{keyproof}.  Next, we utilize these lemmas to prove the results in~Theorems~\ref{thm:stability} $\sim$
\ref{thm:localminimal} and Corollary~\ref{stability and generalization} in Sec.~\ref{mainproof}. Finally, we give the proofs of other lemmas in Sec.~\ref{otherlemmas}.
\subsection{Technical Lemmas}\label{keylemma1}
Here we present the technical lemmas for proving our desired results. For brevity, we also define $\Bmi{j:s}$ as follows:
\begin{equation}\label{grdgsda345dient1}
\begin{split}
&\Bmi{s:t}\triangleq\Wmi{s}\Wmi{s-1}\cdots\Wmi{t} \in\Rss{\dm_s}{\dm_{t-1}},\ (s\geq t);\quad  \Bmi{s:t}\triangleq\Im,\ (s<t).
\end{split}
\end{equation}

\begin{lem}\label{uh2397}
%\hyperlink{lemmacompu}{Here}
Assume that the activation functions in the deep neural network $f(\wm,\xm)$ are linear functions. Then the gradient of $f(\wm,\xm)$ with respect to $\wmi{j}$ can be written as
\begin{equation*}\label{gradient1}
\nabla_{\wmi{j}}f(\wm,\xm)=\left((\Bmi{j-1:1}\xm)\otimes\Bmi{l:j+1}^T\right) \emm,\ (j=1,\cdots,l),
\end{equation*}
%\begin{equation*}\label{gradient1}
%\nabla_{\wmi{j}}f(\wm,\xm)=\left((\Bmi{j-1:1}\xm)\otimes\Bmi{l:j+1}^T\right) \emm=\left(\Bmi{j-1:1}\otimes(\Bmi{l:j+1}^T\emm)\right)\xm,\ (j=1,\cdots,l),
%\end{equation*}
where $\otimes$ denotes the Kronecke product. Then we can compute the Hessian matrix as follows:
\begin{equation*}\label{gradient1}
\nabla^2f(\wm,\xm)=
\begin{bmatrix}
\nabla_{\wmi{1}}\left(\nabla_{\wmi{1}}f(\wm,\xm)\right)  &\cdots&\nabla_{\wmi{1}}\left(\nabla_{\wmi{l}}f(\wm,\xm)\right)  \\
\nabla_{\wmi{2}}\left(\nabla_{\wmi{1}}f(\wm,\xm)\right)  &\cdots&\nabla_{\wmi{2}}\left(\nabla_{\wmi{l}}f(\wm,\xm)\right)  \\
\vdots  &\ddots&\vdots   \\
\nabla_{\wmi{l}}\left(\nabla_{\wmi{1}}f(\wm,\xm)\right)  &\cdots&\nabla_{\wmi{l}}\left(\nabla_{\wmi{l}}f(\wm,\xm)\right)  \\
\end{bmatrix},
\end{equation*}
where $\Qm_{st}\triangleq\nabla_{\wmi{s}}\left(\nabla_{\wmi{t}}f(\wm,\xm)\right)$ is defined as
\begin{equation*}\label{gradient1}
\Qm_{st}\!=\!%\nabla_{\wmi{t}}\left(\nabla_{\wmi{s}}f(\wm,\xm)\right)=
\begin{cases}
\left(\Bmi{t-1:s+1}^T\right)\!\otimes\!\left(\Bmi{s-1:1}\xm\emm^T \Bmi{l:t+1}^T\right)\!+\!\left(\Bmi{s-1:1}\xm \xm^T \Bmi{t-1:1}^T\right)\!\otimes\!\left(\Bmi{l:s+1}^T \Bmi{l:t+1}\right),&\!\!\! \mbox{if } s\!<\!t,\\
\left(\Bmi{s-1:1}\xm\xm^T\Bmi{s-1:1}\right)\otimes\left({\Bmi{l:s+1}}^T\Bmi{l:s+1}\right), & \!\!\!\mbox{if } s\!=\!t, \\
\left(\Bmi{l:s+1}^T\emm\xm^T\Bmi{t-1:1}^T\right)\!\otimes\! \Bmi{s-1:t+1}\!+\!\left(\Bmi{s-1:1}\xm \xm^T \Bmi{t-1:1}^T\right)\!\otimes\!\left(\Bmi{l:s+1}^T \Bmi{l:t+1}\right),&\!\!\!\mbox{if } s\!>\!t. \\
\end{cases}
\end{equation*}
\end{lem}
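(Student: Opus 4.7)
The plan is to derive both formulas by reducing everything to the standard matrix-vec identity $\vect{\Am\Xm\Cm} = (\Cm^T \otimes \Am)\vect{\Xm}$, applied once for the gradient and repeatedly for the Hessian. First, isolating $\Wmi{j}$ in the network output as $\vmi{l} = \Bmi{l:j+1}\Wmi{j}\Bmi{j-1:1}\xm$ and vectorizing gives $\vmi{l} = ((\Bmi{j-1:1}\xm)^T \otimes \Bmi{l:j+1})\wmi{j}$, so the Jacobian of $\vmi{l}$ with respect to $\wmi{j}$ is $(\Bmi{j-1:1}\xm)^T \otimes \Bmi{l:j+1}$. Applying the chain rule to $f = \tfrac12 \|\emm\|_2^2$ with $\nabla_{\vmi{l}} f = \emm$, and using $(\Am \otimes \Bm)^T = \Am^T \otimes \Bm^T$, yields the stated gradient $\nabla_{\wmi{j}} f = ((\Bmi{j-1:1}\xm) \otimes \Bmi{l:j+1}^T)\emm$.

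For the Hessian blocks $\Qm_{st} = \nabla_{\wmi{s}}(\nabla_{\wmi{t}} f)$, I would start from the gradient expression $\nabla_{\wmi{t}} f = ((\Bmi{t-1:1}\xm) \otimes \Bmi{l:t+1}^T)\emm$ and identify the three factors that could depend on $\Wmi{s}$: the vector $\Bmi{t-1:1}\xm$, the matrix $\Bmi{l:t+1}$, and the error $\emm = \Bmi{l:1}\xm - \ym$. The position of $s$ relative to $t$ determines which of these survive differentiation: for $s=t$ only $\emm$ contains $\Wmi{s}$; for $s<t$ both $\Bmi{t-1:1}\xm$ and $\emm$ do; and for $s>t$ both $\Bmi{l:t+1}$ and $\emm$ do. The product rule then produces one term in the $s=t$ case and two terms in each of the other cases.

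The derivative of each affected factor with respect to $\wmi{s}$ is itself given by the vec identity: for example, writing $\Bmi{t-1:1}\xm = \Bmi{t-1:s+1}\Wmi{s}\Bmi{s-1:1}\xm$ yields $\partial(\Bmi{t-1:1}\xm)/\partial \wmi{s} = (\Bmi{s-1:1}\xm) \otimes \Bmi{t-1:s+1}^T$, and analogous identities hold for $\partial \Bmi{l:t+1}/\partial \wmi{s}$ and for $\partial \emm/\partial \wmi{s}$. Substituting these Jacobians into the product-rule expansions, applying the mixed-product identity $(\Am \otimes \Bm)(\Cm \otimes \Dm) = (\Am\Cm) \otimes (\Bm\Dm)$, and absorbing $\emm$ via the identity $(\um \otimes \Bm)\emm = \um \otimes (\Bm\emm)$ collapses each contribution into a single Kronecker product matching one of the three cases in the lemma.

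The main technical obstacle is the bookkeeping of transposes and the non-commutativity of Kronecker products. The two summands arising in the $s \neq t$ cases have structurally different forms: the contribution from differentiating the ``forward'' factor $\Bmi{t-1:1}\xm$ or $\Bmi{l:t+1}$ absorbs $\emm$ and produces an outer product $\xm\emm^T$ (or its transpose) paired with a pure $\Bm$-product, while the contribution from differentiating $\emm$ gives the clean pairing $\xm\xm^T$ with $\Bmi{l:s+1}^T\Bmi{l:t+1}$. Reconciling the intermediate ordering of Kronecker factors with the lemma's convention\textemdash ``backward'' products on one side and ``forward'' products on the other\textemdash is where essentially all the effort goes; once the two pieces are correctly aligned, each of the three case-by-case formulas is read off by direct inspection.
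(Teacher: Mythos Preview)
Your proposal is correct and follows essentially the same route as the paper: both derive the gradient from the vec--Kronecker identity applied to $\vmi{l}=\Bmi{l:j+1}\Wmi{j}\Bmi{j-1:1}\xm$, and both obtain the Hessian blocks by a product-rule split according to which of the three factors $\Bmi{t-1:1}\xm$, $\Bmi{l:t+1}$, $\emm$ contains $\Wmi{s}$, followed by the same Kronecker simplifications (mixed product and $\um\otimes(\Bm\emm)=(\um\otimes\Bm)\emm$). The only cosmetic difference is that the paper differentiates $\nabla_{\wmi{s}}f$ with respect to $\wmi{t}$ (so its cases $s>t$ and $s<t$ are the mirror image of yours), but the algebra is identical.
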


\begin{lem}\label{thmsafobjeasdfctive}
%\hyperlink{lemmaloss}{Here}
Suppose Assumption~\ref{assumption12} on the input data $\xm$ holds and the activation functions in deep neural network are linear functions. Then for any $t>0$, the objective $f(\wm,\xm)$ obeys
\begin{align*}
\Pro\!\left(\!\frac{1}{n}\!\sum_{i=1}^n\!\left( f(\wm,\xmi{i})\!-\!\EE(f(\wm,\xmi{i}))\right)\!>\!t\!\right)\leq 2\exp\!\left(\!-c_{f'}n\min\!\left(\!\frac{t^2}{\omega_{f}^2 \max\left(\dm_l\omega_{f}^2\tau^4,\tau^2\right)}, \frac{t}{\omega_{f}^2\tau^2}\!\right)\!\right),
\end{align*}
where $c_{f'}$ is a positive constant and $\omega_{f}=r^{l}$.
\end{lem}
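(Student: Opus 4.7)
The plan is to apply Bernstein's inequality for sums of i.i.d.\ centered sub-exponential random variables to $Z_i \triangleq f(\wm,\xmi{i})-\EE f(\wm,\xmi{i})$, after identifying the variance proxy and $\psi_1$-scale that should appear in the exponent. First, I decompose
\begin{equation*}
f(\wm,\xm) \;=\; \tfrac{1}{2}\,\xm^\top \Mm\,\xm \;-\; \ym^\top \Bmi{l:1}\,\xm \;+\; \tfrac{1}{2}\|\ym\|_2^{2},
\end{equation*}
where $\Mm \triangleq \Bmi{l:1}^\top \Bmi{l:1}$ and $\Bmi{l:1}$ is as in~\eqref{grdgsda345dient1}. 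The constant $\tfrac12\|\ym\|_2^{2}$ cancels under centering, so $Z_i$ is the sum of a centered quadratic form and a centered linear form in the sub-Gaussian vector $\xmi{i}$.

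Second, I bound the relevant norms of $\Mm$ using the constraint $\wm\in\Omega$: since $\|\Wmi{j}\|_{\op}\le\|\Wmi{j}\|_F=\|\wmi{j}\|_2\le r$, a telescoping product yields $\|\Bmi{l:1}\|_{\op}\le r^{l}=\omega_{f}$, whence $\|\Mm\|_{\op}\le \omega_{f}^{2}$ and $\|\Mm\|_{F}\le \sqrt{\dm_l}\,\|\Mm\|_{\op}\le \sqrt{\dm_l}\,\omega_{f}^{2}$.

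Third, I control the two tails separately and recombine. By a Hanson--Wright-type inequality for sub-Gaussian vectors under Assumption~\ref{assumption12}, the centered quadratic form $\xm^\top \Mm\,\xm-\EE[\xm^\top \Mm\,\xm]$ is sub-exponential with variance proxy $\lesssim \tau^{4}\|\Mm\|_F^{2}\le \tau^{4}\dm_l\,\omega_{f}^{4}$ and $\psi_1$-scale $\lesssim \tau^{2}\|\Mm\|_{\op}\le \tau^{2}\omega_{f}^{2}$. The linear piece $\ym^\top \Bmi{l:1}\xm$ is sub-Gaussian with variance proxy $\le \tau^{2}\|\Bmi{l:1}^\top\ym\|_2^{2}\lesssim \tau^{2}\omega_{f}^{2}$ (absorbing $\|\ym\|_2=O(1)$ into universal constants). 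Adding, $Z_i$ is centered sub-exponential with $\sigma^{2}\lesssim \omega_{f}^{2}\max(\dm_l\omega_{f}^{2}\tau^{4},\,\tau^{2})$ and scale $K\lesssim \omega_{f}^{2}\tau^{2}$. Feeding these into the standard Bernstein bound
\begin{equation*}
\Pro\!\left(\tfrac{1}{n}\sum_{i=1}^{n}Z_i>t\right)\le 2\exp\!\left(-c\,n\min\!\left(\tfrac{t^{2}}{\sigma^{2}},\,\tfrac{t}{K}\right)\right)
\end{equation*}
reproduces the claimed inequality with an absolute constant $c_{f'}$.

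The main obstacle is the Hanson--Wright-type step for the quadratic form $\xm^\top \Mm\,\xm$ under the vector sub-Gaussian hypothesis of Assumption~\ref{assumption12} (rather than the usual independent-entry hypothesis under which Hanson--Wright is classically stated), together with the careful accounting that makes the two summands inside $\max(\dm_l\omega_{f}^{2}\tau^{4},\tau^{2})$ arise naturally: the $\dm_l\omega_{f}^{2}\tau^{4}$ term comes from $\|\Mm\|_F^{2}$ in the quadratic piece, and the $\tau^{2}$ term comes from the linear piece via $\|\Bmi{l:1}^\top\ym\|_2^{2}$, while the $\psi_1$-scale is controlled solely by $\|\Mm\|_{\op}$.
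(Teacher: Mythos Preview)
Your proposal is correct and follows essentially the same route as the paper: both split $f(\wm,\xm)-\EE f(\wm,\xm)$ into a centered quadratic form plus a centered linear form in $\xm$, bound the two tails separately, and union-bound. The only packaging difference is that the paper writes the quadratic piece as $\|\vmi{l}\|_2^2-\EE\|\vmi{l}\|_2^2$ with $\vmi{l}=\Bmi{l:1}\xm$ and invokes its Lemma~\ref{sumsafa34sad678safexp} (a sum-of-squared-sub-Gaussians bound) on the $\dm_l$ entries of $\vmi{l}$, rather than applying Hanson--Wright to $\xm^\top\Mm\xm$; both produce the same variance proxy $\asymp \dm_l\omega_f^4\tau^4$ and scale $\asymp \omega_f^2\tau^2$, and the linear piece is handled identically. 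Your flagged obstacle---that Hanson--Wright is usually stated for independent coordinates rather than a generic vector-sub-Gaussian law---applies equally to the paper's argument, since Lemma~\ref{sumsafa34sad678safexp} also assumes independent entries (of $\vmi{l}$, which are in general dependent); your route is arguably cleaner because it keeps the dependence visible in $\Mm$ rather than hiding it in $\vmi{l}$.
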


\begin{lem}\label{thm:gradient12}
%\hyperlink{lemmagradient}{Here}
Suppose Assumption~\ref{assumption12} on the input data $\xm$ holds and the activation functions in deep neural network are linear functions. Then for any $t>0$ and arbitrary unit vector $\lam\in\SS^{d-1}$, the gradient $\nabla f(\wm,\xm)$ obeys
\begin{equation*}\label{gradient14211}
\begin{split}
&\Pro\!\left(\!\frac{1}{n}\sum_{i=1}^n\left( \lr\lam,\nabla_{\wm}f(\wm,\xmi{i})-\!\EE\nabla_{\wm}f(\wm,\xmi{i})\rl\right) \!>\!t\right)\\
&\qquad \quad\qquad\qquad\qquad\leq 3\exp\left(-c_{g'}n\min\left(\frac{t^2}{ l \max\left(\omega_g\tau^2,\omega_g\tau^4,\omega_{g'}\tau^2\right)},
\frac{t}{\sqrt{l\omega_g}\max\left(\tau,\tau^2\right)}\right)\right),
\end{split}
\end{equation*}
where $c_{g'}$ is a constant; $\omega_g=\dm_0 r^{2(2l-1)}\max_j(\dm_j\dm_{j-1})$ and $\omega_{g'}= r^{2(l-1)}\max_j(\dm_j\dm_{j-1})$.
\end{lem}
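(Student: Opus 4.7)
The plan is to use the closed-form gradient in Lemma~\ref{uh2397} to express $\lr\lam,\nabla_{\wm}f(\wm,\xm)\rl$ as a degree-two polynomial in the sub-Gaussian input $\xm$, and then to concentrate its quadratic part via Hanson--Wright combined with a Bernstein bound over the $n$ samples, and its linear part via a standard Hoeffding bound. The factor $3$ in the probability tail will arise from a union bound over the sub-Gaussian tail of the linear piece together with the two tail regimes (Gaussian and exponential) produced by Bernstein's inequality applied to the centered quadratic piece.

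Concretely, I partition $\lam=(\lam^{(1)};\ldots;\lam^{(l)})$ with $\lam^{(j)}\in\Rs{\dm_j\dm_{j-1}}$, and let $\bLambda^{(j)}\in\Rss{\dm_j}{\dm_{j-1}}$ be its matricization so that $\vect{\bLambda^{(j)}}=\lam^{(j)}$. Using the Kronecker identity $(a\otimes B)c=\vect{Bca^{T}}$ in Lemma~\ref{uh2397} together with $\emm=\Bmi{l:1}\xm-\ym$ yields
\begin{equation*}
\lr\lam,\nabla_{\wm}f(\wm,\xm)\rl=\xm^{T}M\xm-\xm^{T}\bmm,
\end{equation*}
where $M=\sum_{j=1}^{l}\Bmi{j-1:1}^{T}\bLambda^{(j)T}\Bmi{l:j+1}^{T}\Bmi{l:1}\in\Rss{\dm_0}{\dm_0}$ and $\bmm=\sum_{j=1}^{l}\Bmi{j-1:1}^{T}\bLambda^{(j)T}\Bmi{l:j+1}^{T}\ym\in\Rs{\dm_0}$. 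From $\|\wmi{j}\|_{2}\le r$ I get $\|\Wmi{j}\|_{\op}\le r$ and hence $\|\Bmi{s:t}\|_{\op}\le r^{s-t+1}$. Applying the submultiplicative inequality $\|ABCD\|_{F}\le\|A\|_{\op}\|B\|_{F}\|C\|_{\op}\|D\|_{\op}$ to each summand of $M$ (and the analogue for $\bmm$), summing in $j$, and using $\sum_{j}\|\lam^{(j)}\|_{2}\le\sqrt{l}$ together with $\|\bLambda^{(j)}\|_{F}\le 1$ while absorbing the ambient dimensions $\dm_{0}$ and $\max_{j}(\dm_j\dm_{j-1})$ through coarse Frobenius-versus-operator comparisons yields $\|M\|_{F}^{2},\|M\|_{\op}^{2}\lesssim l\omega_{g}$ and $\|\bmm\|_{2}^{2}\lesssim l\omega_{g'}$.

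To conclude the concentration, I apply Hanson--Wright to each $X_{i}=\xmi{i}^{T}M\xmi{i}-\EE[\xmi{i}^{T}M\xmi{i}]$: this centered quadratic form is sub-exponential with parameters $(\tau^{2}\|M\|_{F},\tau^{2}\|M\|_{\op})$, so Bernstein's inequality for sums of independent centered sub-exponentials gives
\begin{equation*}
\Pro\!\left(\tfrac{1}{n}\sum_{i}X_{i}>s\right)\le 2\exp\!\left(-c\,n\min\!\left(\tfrac{s^{2}}{\tau^{4}\|M\|_{F}^{2}},\tfrac{s}{\tau^{2}\|M\|_{\op}}\right)\right).
\end{equation*}
For the linear piece, $Y_{i}=\xmi{i}^{T}\bmm-\EE[\xmi{i}^{T}\bmm]$ is $\tau^{2}\|\bmm\|_{2}^{2}$-sub-Gaussian, whence $\Pro(\tfrac{1}{n}\sum_{i}Y_{i}>s)\le\exp(-cns^{2}/(\tau^{2}\|\bmm\|_{2}^{2}))$. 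Taking $s=t/2$ in each and union-bounding over the three tail branches, then substituting the norm bounds from the previous paragraph, gives exactly the three-term tail stated in the lemma with the $\omega_{g}$ and $\omega_{g'}$ parameters.

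The main obstacle is the bookkeeping in the norm estimates: because $M$ and $\bmm$ are each sums of $l$ products of up to four matrices whose dimensions depend on the index $j$, one has to choose carefully which factor in each product is measured in $\|\cdot\|_{F}$ versus $\|\cdot\|_{\op}$ so that the final polynomial dependence on $r$, $\dm_{0}$, $l$, and $\max_{j}(\dm_j\dm_{j-1})$ exactly matches $\omega_{g}=\dm_{0}r^{2(2l-1)}\max_{j}(\dm_j\dm_{j-1})$ and $\omega_{g'}=r^{2(l-1)}\max_{j}(\dm_j\dm_{j-1})$. Getting the two Bernstein scaling regimes $\omega_{g}\tau^{2}$ and $\omega_{g}\tau^{4}$ to combine correctly with the Hoeffding scaling $\omega_{g'}\tau^{2}$ inside a single $\min$ is where the accounting is delicate, and is the main reason the argument is more involved than its scalar-loss counterpart in Lemma~\ref{thmsafobjeasdfctive}.
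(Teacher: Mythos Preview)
Your plan is correct and follows the same high-level strategy as the paper: write $\langle\lam,\nabla_{\wm}f(\wm,\xm)\rangle$ as a degree-two polynomial in $\xm$, split into quadratic and linear pieces, and concentrate each separately. The execution differs in two minor ways. First, you treat the full quadratic form $\xm^{T}M\xm$ via Hanson--Wright plus Bernstein in a single stroke, whereas the paper further splits the quadratic into off-diagonal terms $\sum_{p\neq q}a_{pq}\xm_p\xm_q$ (handled with Lemma~\ref{sumsafsad678exp}) and diagonal terms $\sum_p a_{pp}\xm_p^2$ (handled with Lemma~\ref{sumsafa34sad678safexp}); this three-way split is precisely where the separate $\omega_g\tau^2$ and $\omega_g\tau^4$ scalings come from, and the factor $3$ is the union bound over those three events. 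Your two-way split yields only the $\omega_g\tau^4$ regime in the Gaussian part, which is at least as strong and still implies the stated bound, and your $2+1$ accounting recovers the same constant $3$. Second, your matrix packaging of $M$ and $\bmm$ is actually tighter than the paper's entry-wise argument: the paper picks up the extra $\dm_0\max_j(\dm_j\dm_{j-1})$ factor through the crude inequality $\|\Zm_j(i,:)\|_2^2\le\|\Zm_j\|_F^2$ together with $\|\Im_{\dm_0}\|_F^2=\dm_0$, so to land exactly on the stated $\omega_g,\omega_{g'}$ you do, as you anticipate, have to coarsen your operator-norm estimates deliberately.
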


\begin{lem}\label{thm: hessian}
%\hyperlink{lemmahessian}{Here}
Suppose Assumption~\ref{assumption12} on the input data $\xm$ holds and the activation functions in deep neural network are  linear functions. Then for any $t>0$ and arbitrary unit vector $\lam\in\SS^{d-1}$, the Hessian $\nabla^2 f(\wm,\xm)$ obeys
\begin{equation*}\label{gradient14211}
\begin{split}
&\Pro\left(\frac{1}{n}\sum_{i=1}^n \left(\lr \lam,(\nabla_{\wm}^2f(\wm,\xmi{i})-\EE \nabla_{\wm}^2f(\wm,\xmi{i}))\lam\rl\right)>t\right)\\
&  \qquad\quad\qquad \qquad\qquad\leq 5\exp\left(-c_{h'}n\min\left(\frac{t^2}{ \tau^2l^2\max\left(\omega_g ,\omega_g \tau^2, \omega_h \right)},
\frac{t}{\sqrt{\omega_g} l\max\left(\tau,\tau^2\right)}\right)\right),
\end{split}
\end{equation*}
where $\omega_g=\left(\max_j(\dm_j\dm_{j-1})\right)^2 r^{4(l-1)}$ and $\omega_h=\left(\max_j(\dm_j\dm_{j-1})\right)^2 r^{2(l-2)}$.
\end{lem}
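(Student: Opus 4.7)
The plan is to mirror the strategy of Lemma~\ref{thm:gradient12}, but now one has to concentrate a bilinear form in $\lam$ around its mean for a matrix whose entries are polynomials of degree two in $\xm$. I would begin by invoking Lemma~\ref{uh2397} to write $\nabla^2 f(\wm,\xm)$ as the $l\times l$ block matrix with blocks $\Qm_{st}$. Partitioning the unit vector $\lam = [\lam_1;\lam_2;\ldots;\lam_l]$ with $\lam_s\in\Rs{\dm_s\dm_{s-1}}$ and $\sum_s\|\lam_s\|_2^2=1$, and reshaping each $\lam_s$ into a matrix $\bLambda_s\in\Rss{\dm_s}{\dm_{s-1}}$, I would use the Kronecker identity $(\Am\otimes\Cm)\vect{\Dm}=\vect{\Cm\Dm\Am^T}$ to rewrite each scalar term $\lam_s^T\Qm_{st}\lam_t$ as a sum of traces of matrix products, each of the form $\tr{\Mm_1\xm\xm^T\Mm_2}$, $\tr{\Mm_1\xm\emm^T\Mm_2}$, or $\tr{\Mm_1}$, where the $\Mm_i$ are fixed (data-independent) products of weight matrices, $\bLambda_s$, and $\bLambda_t$.

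Next I would substitute $\emm=\Bmi{l:1}\xm-\ym$ to make explicit that the entire bilinear form
$$Y(\xm)\;\triangleq\;\lr\lam,\nabla^2_{\wm}f(\wm,\xm)\lam\rl$$
is a polynomial of degree at most two in $\xm$, and split it accordingly as $Y(\xm)=Y_0+\lr\bmm,\xm\rl+\xm^T\Am\xm$. Here $\bmm$ collects all the linear-in-$\xm$ coefficients (coming from the $\xm\ym^T$ contributions in the off-diagonal blocks) and $\Am$ collects the quadratic-in-$\xm$ coefficients (from $\xm\xm^T$ terms in every block). The operator norms of the building blocks $\Bmi{s:t}$ are bounded by $r^{s-t+1}$ under $\wm\in\Omega$, and careful accounting using $\|\xm\|_2\le r_x$ and $\sum_s\|\bLambda_s\|_F^2=1$ would give $\|\Am\|_F^2\lesssim l^2\omega_g$, $\|\Am\|_{\op}\lesssim \sqrt{\omega_g}\,l$, and $\|\bmm\|_2^2\lesssim l^2\omega_h$, so that the width/depth dependence ultimately appearing in the bound can be tracked to $\omega_g$ and $\omega_h$, with an overall factor $l^2$ coming from the $l\times l$ block summation.

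After centering $Y(\xm)-\EE Y(\xm)=\lr\bmm,\xm\rl+(\xm^T\Am\xm-\EE\xm^T\Am\xm)$, I would apply two independent concentration tools to the $n$-sample average: the sub-Gaussian tail for the linear part $\frac{1}{n}\sum_i\lr\bmm,\xmi{i}\rl$ under Assumption~\ref{assumption12}, and the Hanson--Wright inequality for sub-Gaussian vectors applied to the quadratic part $\frac{1}{n}\sum_i (\xmi{i})^T\Am\xmi{i}-\EE(\xm^T\Am\xm)$, which produces a mixed Gaussian/sub-exponential tail governed by $\|\Am\|_F^2$ and $\|\Am\|_{\op}$. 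Splitting the event $\{|\frac{1}{n}\sum_i(Y(\xmi{i})-\EE Y(\xmi{i}))|>t\}$ into the linear and quadratic deviations exceeding $t/2$ each, a union bound yields the advertised five-term exponential $5\exp(-c_{h'} n\min(\cdot,\cdot))$, with $\tau^2 l^2\max(\omega_g,\omega_g\tau^2,\omega_h)$ controlling the Gaussian regime and $\sqrt{\omega_g}\,l\max(\tau,\tau^2)$ the sub-exponential regime.

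The technical bottleneck will be the careful bookkeeping of matrix norms in the second paragraph: the off-diagonal blocks $\Qm_{st}$ for $s\ne t$ mix products $\Bmi{s-1:1}\xm\emm^T\Bmi{l:t+1}^T$ (which become cubic in $\xm$ once $\emm$ is expanded) with $\Bmi{s-1:1}\xm\xm^T\Bmi{t-1:1}^T$, so one has to verify that after contracting with $\lam_s$ and $\lam_t$ only degree-two contributions in $\xm$ survive (the cubic part is killed because $\emm$ enters as $\lr\lam_s, \cdot\emm\otimes\cdot\rl$ combinations whose $\xm$-dependence telescopes), and then to bound the resulting $\|\Am\|_F$ and $\|\bmm\|_2$ uniformly over the unit sphere by peeling off one layer at a time. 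Once this norm accounting gives the right $\omega_g$ and $\omega_h$ scaling, the concentration step is standard.
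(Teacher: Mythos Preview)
Your overall strategy is the same as the paper's: write the bilinear form $\lr\lam,\nabla^2_{\wm}f(\wm,\xm)\lam\rl$ as a degree-two polynomial in $\xm$, bound the coefficient norms in terms of $l,r,\dm_j$, and then concentrate the quadratic part via a Hanson--Wright type bound and the linear part via a sub-Gaussian tail. The paper carries this out entrywise---writing the block entries as $\sum_{p,q}z_{pq}^{ik}\xm_p\xm_q+\sum_p y_p^{ik}\xm_p$, forming the aggregate coefficients $a_{pq}=\sum_{t,s}\sum_{i,k}(\lam_t^i\lam_s^k)z_{pq}^{ik}$ (your $\Am$) and $b_p$ (your $\bmm$), and applying Lemma~\ref{sumsafsad678exp} to the off-diagonal quadratic, Lemma~\ref{sumsafa34sad678safexp} to the diagonal quadratic, and Hoeffding to the linear part. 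It also first splits each block as $\Qm_{ts}=\Hm_{ts}+\Gm_{ts}$ (the $\xm\emm^T$ piece versus the $\xm\xm^T$ piece) and handles them separately; this is where the prefactor $5=3+2$ comes from, which your two-way $t/2$ split does not naturally reproduce.

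There is one genuine error in your reasoning that you should fix before proceeding: there are \emph{no} cubic-in-$\xm$ terms in any block $\Qm_{st}$, and nothing ``telescopes.'' Since $\emm=\Bmi{l:1}\xm-\ym$ is affine in $\xm$, the products $\Bmi{s-1:1}\xm\emm^T\Bmi{l:t+1}^T$ and $\Bmi{l:s+1}^T\emm\xm^T\Bmi{t-1:1}^T$ are quadratic in $\xm$ (a $\xm\xm^T$ part plus a linear $\xm\ym^T$ part), and so is every entry of $\nabla^2_{\wm}f$; equivalently, $f(\wm,\xm)=\tfrac12\|\Bmi{l:1}\xm-\ym\|_2^2$ is already degree two in $\xm$ and differentiating in $\wm$ cannot raise that. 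So the ``technical bottleneck'' you describe does not exist, and your decomposition $Y(\xm)=Y_0+\lr\bmm,\xm\rl+\xm^T\Am\xm$ is obtained directly from substituting $\emm=\Bmi{l:1}\xm-\ym$, with the $\ym$-contributions feeding $\bmm$ and the $\xm\xm^T$-contributions feeding $\Am$. Once you drop the spurious cubic discussion, the norm accounting you sketch ($\|\Am\|_F^2\lesssim l^2\omega_g$, $\|\bmm\|_2^2\lesssim l^2\omega_h$) matches the paper's bounds $\sum_{p,q}a_{pq}^2\le l^2\omega$ and $\sum_p b_p^2\le l^2\omega'$, and the rest goes through.
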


%\begin{lem}\label{thmgradisagf5675ent12}
%\hyperlink{lemmabound}{Here} Assume that the input sample  $\xm$ obeys Assumption~\ref{assumption12sg} and the activation functions in deep neural network are linear functions. Then for any $\wm \in\Bi{d}{r}$ and $\xm\in\Bi{\dm_0}{r_x}$, we have
%\begin{align*}
%\left\|\nabla_{\wm} f(\wm,\x)\right\|_2 \leq \sqrt{\alpha_g},\quad \text{where}\quad \alpha_g= lr_x^2 r^{2(l-1)}\left(r_x^2 r^{2l}+cr_x r^l +c^2\right),
%\end{align*}
%where $c$ is a universal constant which can bound the ground truth $\ym$, \textit{i.e.} $c\geq \left\|\ym\right\|_2$.
%Further, for any $\wm \in\Bi{d}{r}$ and $\xm\in\Bi{\dm_0}{r_x}$, we also have
%\begin{align*}
%\left\|\nabla^2 f(\wm,\x)\right\|_{\op}\leq \left\|\nabla^2 f(\wm,\x)\right\|_{F}\leq l\sqrt{\alpha_l},
%\end{align*}
%where $\alpha_l$ is defined as $\alpha_l= r_x^2r^{2(l-2)}\left(r_x^2 r^{2l}+cr_x  r^l +c^2\right)+ r_x^4 r^{4(l-1)}.$
%With the same condition, we can bound the operation norm of $\nabla^3 f(\wm,\x)$. That is, there exists a universal constant $\alpha_p$ such that $\left\|\nabla^3 f(\wm,\x)\right\|_{\op} \leq \alpha_p$.
%\end{lem}

\begin{lem}\label{thmgradisagf5675ent12}
%\hyperlink{lemmabound}{Here}
 Suppose the activation functions in deep neural network are linear functions. Then for any $\wm \in\Bi{d}{r}$ and $\xm\in\Bi{\dm_0}{r_x}$, we have
\begin{align*}
\left\|\nabla_{\wm} f(\wm,\x)\right\|_2 \leq \sqrt{\alpha_g},\quad \text{where}\quad \alpha_g= c_tlr_x^4 r^{4l-2}.
\end{align*}
in which $c_t$ is a constant.
Further, for any $\wm \in\Bi{d}{r}$ and $\xm\in\Bi{\dm_0}{r_x}$, we also have
\begin{align*}
\left\|\nabla^2 f(\wm,\x)\right\|_{\op}\leq \left\|\nabla^2 f(\wm,\x)\right\|_{F}\leq l\sqrt{\alpha_l},\quad \text{where}\quad \alpha_l\triangleq c_{t'}r_x^4 r^{4l-2}.
\end{align*}
in which $c_{t'}$ is a constant.
With the same condition, we can bound the operation norm of $\nabla^3 f(\wm,\x)$. That is, there exists a universal constant $\alpha_p$ such that $\left\|\nabla^3 f(\wm,\x)\right\|_{\op} \leq \alpha_p$.
\end{lem}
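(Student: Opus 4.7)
The plan is to use the closed-form formulas from Lemma~\ref{uh2397} for the gradient and Hessian of $f(\wm,\xm)$ in the linear case, and then bound each block using elementary matrix norm inequalities. The key submultiplicative facts I will use throughout are
$\|\Am\Bm\|_F\leq\|\Am\|_{\op}\|\Bm\|_F$, $\|\Am\otimes\Bm\|_F=\|\Am\|_F\|\Bm\|_F$, $\|\Am\|_F\leq \sqrt{\min(n_1,n_2)}\,\|\Am\|_{\op}$ for $\Am\in\Rss{n_1}{n_2}$, together with the weight bound $\|\Wmi{j}\|_F=\|\wmi{j}\|_2\leq \rmi{j}\leq r$ and the input bound $\|\xm\|_2\leq r_x$. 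These immediately yield, for every $s\geq t$,
\begin{equation*}
\|\Bmi{s:t}\|_{\op}\leq \|\Bmi{s:t}\|_F\leq\prod_{i=t}^{s}\|\Wmi{i}\|_F\leq r^{s-t+1},\qquad \|\Bmi{j-1:1}\xm\|_2\leq r^{j-1}r_x,
\end{equation*}
so that in particular $\|\emm\|_2\leq \|\vmi{l}\|_2+\|\ym\|_2\lesssim r^l r_x$ under the standing boundedness conventions on $\ym$.

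For the gradient bound, Lemma~\ref{uh2397} gives $\nabla_{\wmi{j}}f(\wm,\xm)=((\Bmi{j-1:1}\xm)\otimes \Bmi{l:j+1}^T)\emm$. Applying $\|\Am\otimes\Bm\|_{\op}=\|\Am\|_{\op}\|\Bm\|_{\op}$ together with the three bounds above gives $\|\nabla_{\wmi{j}}f(\wm,\xm)\|_2\lesssim r^{j-1}r_x\cdot r^{l-j}\cdot r^l r_x=r_x^2 r^{2l-1}$. Since $\|\nabla_{\wm}f(\wm,\xm)\|_2^2=\sum_{j=1}^l\|\nabla_{\wmi{j}}f(\wm,\xm)\|_2^2$, summing over $j$ yields $\|\nabla_{\wm}f(\wm,\xm)\|_2^2\leq c_t\, l\, r_x^4 r^{4l-2}$, which is the asserted $\sqrt{\alpha_g}$ bound.

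For the Hessian bound, I will bound $\|\nabla^2 f(\wm,\xm)\|_F^2=\sum_{s,t=1}^l\|\Qm_{st}\|_F^2$ block by block using the explicit Kronecker expressions in Lemma~\ref{uh2397}. For the diagonal block,
$\|\Qm_{ss}\|_F\leq \|\Bmi{s-1:1}\xm\|_2^2\cdot\|\Bmi{l:s+1}\|_F^2\leq r^{2(s-1)}r_x^2\cdot r^{2(l-s)}=r_x^2 r^{2l-2}$.
For the off-diagonal blocks (the case $s<t$ being representative), each of the two Kronecker summands splits under $\|\Am\otimes\Bm\|_F=\|\Am\|_F\|\Bm\|_F$ into products of factors of the form $\|\Bmi{\cdot:\cdot}\|_F$, $\|\Bmi{j-1:1}\xm\|_2$, and $\|\emm\|_2$; telescoping the exponents of $r$ through all the powers that appear gives the same order $\|\Qm_{st}\|_F\lesssim r_x^2 r^{2l-2}$. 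Summing $l^2$ such blocks gives $\|\nabla^2 f(\wm,\xm)\|_F\leq l\sqrt{\alpha_l}$ with $\alpha_l= c_{t'}r_x^4 r^{4l-2}$, and $\|\cdot\|_{\op}\leq\|\cdot\|_F$ finishes this part.

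For the third-order tensor $\nabla^3 f(\wm,\xm)$, I would differentiate the explicit block expression for $\Qm_{st}$ once more with respect to each $\wmi{u}$. By the product rule, each new $(s,t,u)$-block is a finite sum of expressions each of which is again a Kronecker product of factors that are either constant-in-$\wm$ partial products $\Bmi{\cdot:\cdot}$, or the linear-in-$\xm$ vector $\Bmi{\cdot:1}\xm$, or the linear-in-$\wm$ error vector $\emm$. Using the same submultiplicativity bookkeeping (bounding $\|\Bmi{\cdot:\cdot}\|_F$ and $\|\Bmi{\cdot:1}\xm\|_2$ as above), every such summand is bounded by a fixed polynomial in $r,r_x,l$ and the layer widths, and bounding the operator norm via $\|\Tm\|_{\op}\leq \|\Tm\|_F$ and summing over the $O(l^3)$ blocks produces a finite bound $\alpha_p$ independent of $\wm,\xm$ in the stated domain. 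The only real obstacle is the purely organizational one of keeping track of which Kronecker-summands appear after the second differentiation of $\Qm_{st}$; once those are enumerated, each individual bound is a direct repetition of the computations used for the Hessian, so I expect no new analytic difficulty beyond careful bookkeeping.
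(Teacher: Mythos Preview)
Your proposal is correct and follows essentially the same route as the paper: both use the explicit block formulas from Lemma~\ref{uh2397}, bound each $\|\nabla_{\wmi{j}}f\|_2$ and $\|\Qm_{st}\|_F$ via $\|\Am\otimes\Bm\|_F=\|\Am\|_F\|\Bm\|_F$ together with $\|\Bmi{s:t}\|_F\leq r^{s-t+1}$, $\|\Bmi{j-1:1}\xm\|_2\leq r^{j-1}r_x$, and $\|\emm\|_2\lesssim r^l r_x$, and then sum the $l$ (resp.\ $l^2$) blocks. The only notable difference is in the third-order bound: the paper simply observes that $f(\wm,\xm)$ is a polynomial in $(\wm,\xm)$, hence $\nabla_{\wm}^3 f$ is continuous and therefore bounded on the compact set $\Omega\times\Bi{\dm_0}{r_x}$, whereas you propose to differentiate each $\Qm_{st}$ again and do the Kronecker bookkeeping block by block; your route works but is more laborious than necessary.
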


\begin{lem} \label{thm:uniformconvergence321}
%\hyperlink{lemmaunihessian}{Here}
Suppose Assumption~\ref{assumption12} on the input data $\xm$ holds and the activation functions in deep neural network are linear functions.  Then there exist two universal constant $c_g$ and $c_h$ such that the sample Hessian converges uniformly to the population Hessian in operator norm. Specifically, there exit two universal constants $c_{h''}$ and $c_h$  such that if $n\geq c_{h''}\max(\frac{\alpha_p^2r^2}{\tau^2 l^2\omega_h^2\varepsilon^2(\max_j(\dm_j\dm_{j-1}))^2d\log(l)}, d\log(l))$, then
\begin{equation*}
\sup_{\wm\in\Omega}\left\| \nabla^2\Jhn(\wm)\!-\!\nabla^2\Jm(\wm)\right\|_{\op}\!\leq\! c_h \tau l \omega_h\max_j(\dm_j\dm_{j-1})  \sqrt{\!\frac{d\log(nl)\!+\!\log(20/\varepsilon)}{n}}
\end{equation*}
holds with probability at least $1-\varepsilon$, where $\omega_h=\max\!\left(\tau r^{2(l-1)},r^{2(l-1)},r^{l-2}\right)$.
\end{lem}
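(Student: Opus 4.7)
The plan is to mirror the $\epsilon$-net strategy used in the proof of Theorem~\ref{thm:uniformconvergence1} (gradient uniform convergence) but one derivative level up, so that the three ingredients become: Lemma~\ref{thm: hessian} for pointwise sub-exponential concentration of the quadratic form $\lr\lam,\nabla^2 f(\wm,\xm)\lam\rl$; Lemma~\ref{thmgradisagf5675ent12} for a \emph{deterministic} Lipschitz control of $\nabla^2 f$ through the uniform bound $\|\nabla^3 f(\wm,\xm)\|_{\op}\leq \alpha_p$; and Lemma~\ref{lem:opnorm} to replace the operator norm by a supremum over a finite net on $\SS^{d-1}$.

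First, I would construct a layerwise $\epsilon$-net $\mathcal{N}$ for $\Omega$ satisfying $|\mathcal{N}|\leq (3rl/\epsilon)^d$ (so that every $\wm\in\Omega$ admits a $\wm_{\kt}\in\mathcal{N}$ with $\|\wm-\wm_{\kt}\|_2\leq \epsilon$), together with a fixed $1/4$-net $\lam_\epsilon$ on $\SS^{d-1}$ of cardinality at most $9^d$. Then I would decompose
\begin{equation*}
\sup_{\wm\in\Omega}\|\nabla^2\Jhn(\wm)-\nabla^2\Jm(\wm)\|_{\op}\;\leq\; T_1+T_2+T_3,
\end{equation*}
where $T_1=\sup_{\wm\in\Omega}\|\nabla^2\Jhn(\wm)-\nabla^2\Jhn(\wm_{\kt})\|_{\op}$, $T_3$ is its population analogue, and $T_2=\sup_{\wm_{\kt}\in\mathcal{N}}\|\nabla^2\Jhn(\wm_{\kt})-\nabla^2\Jm(\wm_{\kt})\|_{\op}$.

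For $T_1$ and $T_3$ I would use Lemma~\ref{thmgradisagf5675ent12}: since $\xm\in\Bi{\dm_0}{r_x}$ almost surely by Assumption~\ref{assumption12}, the bound $\|\nabla^3 f(\wm,\xm)\|_{\op}\leq \alpha_p$ is deterministic, hence $\nabla^2 f(\cdot,\xm)$ is $\alpha_p$-Lipschitz on $\Omega$; this immediately gives $T_1,T_3\leq \alpha_p\epsilon$, and choosing $\epsilon=t/(3\alpha_p)$ absorbs both into $t/3$ without using any probability. For $T_2$, Lemma~\ref{lem:opnorm} reduces each operator norm to $2\sup_{\lam\in\lam_\epsilon}|\lr\lam,(\nabla^2\Jhn(\wm_{\kt})-\nabla^2\Jm(\wm_{\kt}))\lam\rl|$, and applying Lemma~\ref{thm: hessian} together with a union bound over $\mathcal{N}\times\lam_\epsilon$ yields the desired $\varepsilon/2$ bound. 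The log-cardinality $\log|\mathcal{N}|+\log|\lam_\epsilon|$ contributes $\mathcal{O}(d\log(l/\epsilon))=\mathcal{O}(d\log(nl))$ after substituting the choice of $\epsilon$, which is exactly the factor appearing in the target rate.

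Finally, matching the sub-Gaussian branch of Lemma~\ref{thm: hessian}'s tail, namely $\exp(-c_{h'}nt^2/(\tau^2 l^2\omega_h^2(\max_j(\dm_j\dm_{j-1}))^2))$, to this union bound gives $t=c_h\tau l\omega_h\max_j(\dm_j\dm_{j-1})\sqrt{(d\log(nl)+\log(20/\varepsilon))/n}$ as claimed. The condition $n\geq c_{h''}\max(\alpha_p^2 r^2/(\tau^2 l^2\omega_h^2\varepsilon^2(\max_j(\dm_j\dm_{j-1}))^2 d\log(l)),\,d\log(l))$ is precisely what ensures we remain in the sub-Gaussian (quadratic) regime of Lemma~\ref{thm: hessian} rather than the linear regime, and simultaneously that the Lipschitz perturbation at scale $\epsilon=t/(3\alpha_p)$ is small enough. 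The main obstacle is bookkeeping: tracking the constants $\omega_h$ and $\alpha_p$ through the two nested nets and verifying that all three logarithmic contributions (net over $\Omega$, net over $\SS^{d-1}$, and the failure probability $\log(1/\varepsilon)$) consolidate into the single factor $d\log(nl)+\log(20/\varepsilon)$ with the correct leading constant.
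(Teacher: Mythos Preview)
Your proposal is correct and follows the same three-term $\epsilon$-net decomposition as the paper, invoking Lemma~\ref{lem:opnorm} plus Lemma~\ref{thm: hessian} for the middle term and Lemma~\ref{thmgradisagf5675ent12} for the Lipschitz control of the outer terms. The one difference is that the paper bounds $T_1$ via Markov's inequality (getting $\Pro(T_1\geq t/3)\leq 3\alpha_p\epsilon/t$) and then fixes $\epsilon=36r/n$ explicitly, whereas you bound $T_1\leq\alpha_p\epsilon$ deterministically and set $\epsilon=t/(3\alpha_p)$; your route is slightly cleaner but leaves an implicit equation for $t$ through the $\log(1/\epsilon)$ term, while the paper's explicit choice of $\epsilon$ is what generates the first branch $\alpha_p^2r^2/(\tau^2 l^2\omega_h^2\varepsilon^2(\max_j\dm_j\dm_{j-1})^2d\log(l))$ of the sample-size condition in the stated form.
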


\subsection{Proofs of Technical Lemmas}\label{keyproof}
To prove the above lemmas, we first introduce some useful results.
\begin{lem}~\cite{rudelson2013hanson}\label{sumsafsad678exp}
Assume that $\xm=(\xm_1;\xm_2;\cdots;\xm_k)\in\Rs{k}$ is a random vector with independent components $x_i$ which have zero mean and are independent $\tau_i^2$-sub-Gaussian variables. Here $\max_i\tau_i^2\leq \tau^2$. Let $\Am$ be an $k\times k$ matrix.  Then we have
\begin{align*}
\EE  \exp \left(\lambda\left(\sum_{i,j:i\neq j}\Am_{ij} x_ix_j-\EE(\sum_{i,j:i\neq j}\Am_{ij} x_ix_j)\right)\right)
\leq\exp\left(2\tau^2\lambda^2\|\Am\|_F^2\right), \ |\lambda| \leq 1/ (2\tau\|\Am\|_2).
\end{align*}
\end{lem}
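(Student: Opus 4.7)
The plan is to carry out the classical Hanson--Wright argument for the off-diagonal quadratic form $Z:=\sum_{i\neq j}A_{ij}x_ix_j$. By independence and zero mean $\EE[x_ix_j]=0$ for $i\neq j$, so $\EE Z=0$ and the bracket in the statement coincides with $Z$ itself. Throughout I may assume $\lambda>0$.

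First I would decouple: introducing an independent copy $x'$ of $x$ and Rademacher selectors $\delta_i$ with $\EE[\delta_i(1-\delta_j)]=\mathbf{1}\{i\neq j\}$, the standard de la Pe\~na--Montgomery-Smith decoupling inequality for off-diagonal chaos (Rudelson--Vershynin 2013, Lem.~2.5), combined with Jensen's inequality applied to $u\mapsto e^{\lambda u}$, yields
\begin{equation*}
\EE\exp(\lambda Z)\;\le\;\EE\exp\!\bigl(4\lambda\,\widetilde Z\bigr),\qquad \widetilde Z:=\sum_{i,j}A_{ij}x_ix_j'.
\end{equation*}
Conditional on $x'$, the quantity $\widetilde Z=\sum_i x_i(Ax')_i$ is a weighted sum of independent zero-mean $\tau^2$-sub-Gaussians, so the Hoeffding-type MGF bound gives $\EE_x[\exp(4\lambda\widetilde Z)\mid x']\le\exp(8\tau^2\lambda^2\|Ax'\|_2^2)$.

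The remaining task, and the heart of the proof, is to bound $\EE_{x'}\exp(\mu\|Ax'\|_2^2)$ with $\mu:=8\tau^2\lambda^2$. The trick is a Gaussian linearization of the square: using $\exp(\mu\|v\|_2^2)=\EE_g\exp(\sqrt{2\mu}\,g^\top v)$ for $g\sim\mathcal N(0,I_k)$ independent of $x'$, together with Fubini and a second application of the sub-Gaussian MGF bound (now to $\sqrt{2\mu}\,(A^\top g)^\top x'$ conditional on $g$), I reduce the problem to a purely Gaussian quadratic form,
\begin{equation*}
\EE_{x'}\exp\!\bigl(\mu\|Ax'\|_2^2\bigr)\;\le\;\EE_g\exp\!\bigl(\mu\tau^2\,g^\top AA^\top g\bigr)\;=\;\det\!\bigl(I-2\mu\tau^2 AA^\top\bigr)^{-1/2},
\end{equation*}
valid whenever $2\mu\tau^2 AA^\top\prec I$. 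Under the hypothesis $|\lambda|\le 1/(2\tau\|A\|_2)$ this operator is a strict contraction, and the elementary inequality $-\tfrac12\log(1-u)\le u$ for small $u\ge 0$ together with $\mathrm{tr}(AA^\top)=\|A\|_F^2$ converts the determinant into an exponential of the form $\exp(C\tau^2\lambda^2\|A\|_F^2)$, the numerical constant being absorbed into the stated ``$2$'' by a standard tuning of the decoupling factor. Chaining the decoupling, conditional sub-Gaussian, linearization, and determinant bounds completes the argument.

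The main obstacle is precisely the quadratic-form MGF step. For Gaussian $x'$ one has the determinant identity directly, but for a general sub-Gaussian $x'$ no such closed form exists, and any naive expansion of $\|Ax'\|_2^2$ loses the component-wise independence needed to control an exponential moment. The Gaussian-linearization trick resolves this by trading $\|Ax'\|^2$ for a bilinear form in $(x',g)$; an extra use of the sub-Gaussian MGF then eliminates $x'$ entirely and leaves a genuinely Gaussian quadratic form whose MGF is explicit. The bandwidth restriction $|\lambda|\le 1/(2\tau\|A\|_2)$ enters exactly to keep $I-2\mu\tau^2 AA^\top$ bounded away from singular, so that the $\det(\cdot)^{-1/2}$ bound is finite and the trace linearization yields the desired $\|A\|_F^2$ dependence.
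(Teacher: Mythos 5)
The paper does not prove this lemma at all: it is imported verbatim (with citation) from Rudelson--Vershynin's Hanson--Wright paper, and no argument for it appears anywhere in the appendix. Your proposal reconstructs exactly the canonical proof from that reference --- decoupling the off-diagonal chaos via de la Pe\~na--Montgomery-Smith, bounding the conditional MGF of the decoupled bilinear form $x^\top A x'$ by $\exp(8\tau^2\lambda^2\|Ax'\|_2^2)$, linearizing the square with an auxiliary Gaussian $g$, eliminating $x'$ with a second sub-Gaussian MGF bound, and evaluating the resulting Gaussian quadratic form by the determinant identity. Every step is sound, and the restriction on $\lambda$ enters exactly where you say it does.

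One caveat on the constants, which is worth recording because it points at an inconsistency in the paper's own statement rather than in your argument. Chaining your bounds gives $\EE\exp(\lambda Z)\le\exp\bigl(C\lambda^2\tau^4\|\Am\|_F^2\bigr)$ valid for $|\lambda|\le c/(\tau^2\|\Am\|_2)$ (with, e.g., $C=16$ from $2\mu\tau^2=16\tau^4\lambda^2$), which is the dimensionally consistent form and matches the original reference. The lemma as printed claims $\exp(2\tau^2\lambda^2\|\Am\|_F^2)$ under $|\lambda|\le 1/(2\tau\|\Am\|_2)$; the powers of $\tau$ there do not scale correctly (the quadratic form $Z$ scales like $\tau^2$, so its MGF exponent must scale like $\lambda^2\tau^4$), and your remark that the numerical factor can be ``absorbed into the stated $2$ by tuning the decoupling factor'' is not quite right --- no tuning converts $\tau^4$ into $\tau^2$ or a factor of $16$ into $2$. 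Since the downstream uses of this lemma in the paper only track unspecified universal constants and treat $\tau$ as a constant, this does not break anything, but your derivation gives the correct statement and the paper's transcription of the cited lemma should be read with $\tau^4$ in the exponent and $\tau^2$ in the bandwidth condition.
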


\begin{lem}\label{sumsafa34sad678safexp}
%\hyperlink{lemmasumofsubga}{Here}
Assume that $\xm=(\xm_1;\xm_2;\cdots;\xm_k)\in\Rs{k}$ is a random vector with independent components $x_i$ which have zero mean and are independent $\tau_i^2$-sub-Gaussian variables. Here $\max_i\tau_i^2\leq \tau^2$. Let $\am$ be an $n$-dimensional vector.  Then we have
\begin{align*}
\EE  \exp \left(\lambda\left(\sum_{i=1}^k\am_{i} \xm_i^2-\EE\left(\sum_{i=1}^k\am_{i} \xm_i^2\right)\right)\right)
\leq \EE \exp\left(128\lambda^2\tau^4\left(\sum_{i=1}^{k}\am_{i}^2 \right)\right),\quad |\lambda|\leq \frac{1}{\tau^2\max_i \am_i}.
\end{align*}
\end{lem}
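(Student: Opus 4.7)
The plan is to factor the MGF using independence of the coordinates, then apply a one-dimensional sub-exponential bound for each centered squared sub-Gaussian. First I would use that $\xm_1,\ldots,\xm_k$ are independent to write
\begin{equation*}
\EE\exp\!\left(\lambda\sum_{i=1}^k \am_i\bigl(\xm_i^2-\EE\xm_i^2\bigr)\right)
=\prod_{i=1}^k \EE\exp\bigl(\lambda\am_i(\xm_i^2-\EE\xm_i^2)\bigr),
\end{equation*}
reducing the problem to a single-coordinate MGF bound.

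The core step is the one-variable claim: for any centered $\tau_i^2$-sub-Gaussian variable $X$ with $\tau_i^2\leq\tau^2$, one has $\EE\exp(\mu(X^2-\EE X^2))\leq \exp(128\mu^2\tau^4)$ whenever $|\mu|\leq 1/\tau^2$. I would derive this by Taylor-expanding $e^y=1+y+\sum_{p\geq 2}y^p/p!$ and controlling the $p$-th moment of $X^2-\EE X^2$ via the standard sub-Gaussian moment bound $\EE X^{2p}\leq 2\,p!\,(2\tau^2)^p$, which follows from integrating the tail $\Pro(|X|\geq t)\leq 2e^{-t^2/(2\tau^2)}$. Combining this with $\EE X^2\leq 4\tau^2$ and the binomial expansion of $(X^2-\EE X^2)^p$ yields $\EE|X^2-\EE X^2|^p\leq p!(C\tau^2)^p$ for a universal $C$; summing the resulting series under $|\mu|\leq 1/\tau^2$ produces the desired sub-exponential bound, with the constant $128$ obtained by tracking the arithmetic.

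Applying this coordinate-wise with $\mu=\lambda\am_i$ requires $|\lambda\am_i|\leq 1/\tau^2$ for every $i$, which matches exactly the hypothesis $|\lambda|\leq 1/(\tau^2\max_i\am_i)$. Multiplying the per-coordinate bounds then gives
\begin{equation*}
\prod_{i=1}^k \exp\bigl(128\lambda^2\am_i^2\tau^4\bigr)=\exp\!\left(128\lambda^2\tau^4\sum_{i=1}^k\am_i^2\right),
\end{equation*}
which is the claimed inequality (the outer $\EE$ on the right-hand side of the statement is vacuous, since its argument is deterministic). The main obstacle will be the explicit numerical constant $128$: any $O(1)$ constant follows immediately from the equivalence between sub-Gaussian and sub-exponential norms, but extracting a sharp constant requires carefully summing the moment series and verifying that its radius of convergence covers the full range $|\mu|\leq 1/\tau^2$ rather than a shrunk version of it.
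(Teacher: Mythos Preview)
The proposal is correct and follows essentially the same approach as the paper: factor the MGF by independence, apply a one-dimensional sub-exponential bound for the centered square of each sub-Gaussian coordinate, and multiply the resulting per-coordinate bounds. The only difference is that the paper invokes this one-dimensional bound as a cited lemma (from Rigollet's notes, stated there as $\EE\exp(\mu(X^2-\EE X^2))\leq\exp(128\mu^2\tau^4)$ for $|\mu|\leq 1/(16\tau^2)$) rather than deriving it via the moment expansion you sketch.
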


\begin{lem}\label{asfasasfshdfhfdet}
%\hyperlink{lemma6}{Here}
For $\Bm_{j:t}$ defined in Eqn.~\eqref{grdgsda345dient1}, we have the following properties:
\begin{equation*}\label{gradient1}
\begin{split}
\|\Bmi{s:t}\|_{\op}\leq\left\|\Bm_{s:t}\right\|_F\leq \omega_{r}\quad \text{and} \quad \|\Bmi{l:1}\|_{\op}\leq \left\|\Bm_{l:1}\right\|_F\leq \omega_{f},
\end{split}
\end{equation*}
where $\omega_{r}= r^{s-t+1}\leq \max\left(r,r^{l}\right)$ and $\omega_{f}=r^{l}$.
\end{lem}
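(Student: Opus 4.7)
The plan is to prove this lemma by invoking two elementary facts about matrix norms together with the assumed weight bound $\|\wmi{j}\|_2 \le \rmi{j} \le r$. First I would recall the standard inequality $\|A\|_{\op} \le \|A\|_F$ that holds for every matrix $A$, which instantly gives the leftmost inequality $\|\Bmi{s:t}\|_{\op} \le \|\Bmi{s:t}\|_F$. No work beyond quoting a textbook fact is required here.

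The main remaining task is to control $\|\Bmi{s:t}\|_F$. I would use submultiplicativity: for any conformable matrices $A, B$, one has $\|AB\|_F \le \|A\|_{\op}\|B\|_F \le \|A\|_F\|B\|_F$. Iterating this along the product $\Bmi{s:t} = \Wmi{s}\Wmi{s-1}\cdots\Wmi{t}$ yields
\begin{equation*}
\|\Bmi{s:t}\|_F \;\le\; \prod_{j=t}^{s} \|\Wmi{j}\|_F.
\end{equation*}

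Next I would translate the assumed constraint $\wm \in \Omega$ into a Frobenius-norm bound on each $\Wmi{j}$. Since $\wmi{j} = \vect{\Wmi{j}}$ is merely a reshaping, we have $\|\Wmi{j}\|_F = \|\wmi{j}\|_2 \le \rmi{j} \le r$. Plugging this into the displayed product bound gives $\|\Bmi{s:t}\|_F \le r^{s-t+1} = \omega_r$. Because $1 \le s-t+1 \le l$ and $r$ is a positive constant, the quantity $r^{s-t+1}$ is sandwiched between $\min(r, r^l)$ and $\max(r, r^l)$, so in particular $\omega_r \le \max(r, r^l)$, establishing the first claim. Specializing to $s=l$, $t=1$ gives $s-t+1 = l$ and hence $\|\Bmi{l:1}\|_F \le r^l = \omega_f$, which is the second claim.

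There is no real obstacle here: everything follows from the submultiplicativity of the Frobenius norm together with the bookkeeping identification $\|\Wmi{j}\|_F = \|\wmi{j}\|_2$. The only point worth being careful about is that the product notation $\Bmi{s:t} = \Wmi{s}\Wmi{s-1}\cdots\Wmi{t}$ contains exactly $s-t+1$ factors (not $s-t$), so the exponent of $r$ in $\omega_r$ is $s-t+1$ rather than $s-t$; I would double-check the index count when applying submultiplicativity to ensure the exponent matches the statement of the lemma.
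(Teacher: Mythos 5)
Your proposal is correct and follows essentially the same route as the paper's proof: bound $\|\Bmi{s:t}\|_F$ by the product of the $\|\Wmi{j}\|_F=\|\wmi{j}\|_2\leq r$ via submultiplicativity, count the $s-t+1$ factors, and note that $r^{x}$ on $[1,l]$ is maximized at an endpoint, giving $\omega_r\leq\max(r,r^l)$. Your care with the factor count is warranted — the paper's own proof writes the exponent as $2(t-s+1)$, an apparent sign typo, whereas your $s-t+1$ matches the lemma statement.
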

Lemma~\ref{sumsafa34sad678safexp} is useful for bounding probability. The two inequalities in Lemma~\ref{asfasasfshdfhfdet} can be obtained by using $\|\wmi{j}\|_2\leq r\, (\forall j=1,\cdots,l)$. We defer the proofs of Lemmas~\ref{sumsafa34sad678safexp} and~\ref{asfasasfshdfhfdet} to Sec.~\ref{12rs43}.

\subsubsection{Proof of Lemma~\ref{uh2397}}
\begin{proof}[\hypertarget{lemmacompu}{Proof}]
When the activation functions are linear functions, we can easily compute the gradient of $f(\wm,\xm)$ with respect to $\wmi{j}$:
\begin{equation*}\label{gradient1}
\nabla_{\wmi{j}}f(\wm,\xm)=\left((\Bmi{j-1:1}\xm)\otimes\Bmi{l:j+1}^T\right) \emm,\ (j=1,\cdots,l),
\end{equation*}
where $\otimes$ denotes the Kronecker product. Now we consider the computation of the Hessian matrix. For brevity, let $\Qm_s=\left((\Bmi{s-1:1}\xm)\otimes\Bmi{l:s+1}^T\right)$. Then we can compute $\nabla_{\wmi{s}}^2 f(\wm,\xm)$ as follows:
\begin{equation*}\label{gradient1}
\begin{split}
\nabla_{\wmi{s}}^2f(\wm,\xm)=&  \frac{\partial^2f(\wm,\xm)}{\partial\wmi{s}^T\partial\wmi{s}} =\frac{\partial^2f(\wm,\xm)}{\partial\wmi{s}^T\partial\wmi{s}}
= \frac{\partial (\Qm_s\emm)}{\partial\wmi{s}^T}
= \frac{\partial \vect{\Qm_s\emm}}{\partial\wmi{s}^T} \\
= &\frac{\partial \vect{\Qm_s\Bmi{l:s+1}\Wmi{t}\Bmi{s-1:1}\xm}}{\partial\wmi{s}^T}\\
= &\frac{\partial \left((\Bmi{s-1:1}\xm)^T\otimes (\Qm_s\Bmi{l:s+1})\right) \vect{\Wmi{s}}}{\partial\wmi{s}^T}\\
= &(\Bmi{s-1:1}\xm)^T\otimes \left(\left((\Bmi{s-1:1}\xm)\otimes\Bmi{l:s+1}^T\right) \Bmi{l:s+1}\right)\\
\overset{\text{\ding{172}}}{=}&(\Bmi{s-1:1}\xm)^T\otimes \left((\Bmi{s-1:1}\xm)\otimes\left(\Bmi{l:s+1}^T \Bmi{l:s+1}\right)\right)\\
\overset{\text{\ding{173}}}{=}&\left((\Bmi{s-1:1}\xm)^T\otimes (\Bmi{s-1:1}\xm)\right)\otimes\left(\Bmi{l:s+1}^T \Bmi{l:s+1}\right)\\
\overset{\text{\ding{174}}}{=}&\left((\Bmi{s-1:1}\xm) (\Bmi{s-1:1}\xm)^T\right)\otimes\left(\Bmi{l:s+1}^T \Bmi{l:s+1}\right),
\end{split}
\end{equation*}
where $\text{\ding{172}}$ holds since $\Bmi{j-1:1}\xm$ is a vector and for any vector $\xm$, we have $(\xm\otimes \Am)\Bm=\xm\otimes(\Am\Bm)$. \ding{173} holds because for any four matrices $\Zm_1\sim\Zm_3$ of proper sizes, we have $(\Zm_1\otimes\Zm_2) \otimes\Zm_3= \Zm_1\otimes(\Zm_2 \otimes\Zm_3)$. \ding{174} holds because for any two matrices $\zm_1, \zm_2$ of proper sizes, we have $\zm_1\zm_2^T=\zm_1\otimes\zm_2^T=\zm_2^T\otimes\zm_1$.

Then, we consider the case $s>t$:
\begin{equation*}\label{gradient1}
\begin{split}
\nabla_{\wmi{t}}\left(\nabla_{\wmi{s}}f(\wm,\xm)\right)=&  \frac{\partial^2f(\wm,\xm)}{\partial\wmi{t}^T\partial\wmi{s}} =\frac{\partial^2f(\wm,\xm)}{\partial\wmi{t}^T\partial\wmi{s}}
= \frac{\partial (\Qm_s\emm)}{\partial\wmi{t}^T}
= \frac{\partial \vect{\Qm_s\emm}}{\partial\wmi{t}^T} \\
= &\frac{\partial \vect{\Qm_s\Bmi{l:t+1}\Wmi{t}\Bmi{t-1:1}\xm}}{\partial\wmi{t}^T}+ \frac{\partial \vect{\left((\Bmi{s-1:1}\xm)\otimes\Bmi{l:s+1}^T\right)\emm}}{ \partial\wmi{t}^T}.
\end{split}
\end{equation*}
Notice, here we just think that $\Qm_s$ in the $\frac{\partial \vect{\Qm_s\Bmi{l:t+1}\Wmi{t}\Bmi{t-1:1}\xm}}{\partial\wmi{t}^T}$ is a constant matrix and is not related to $\Wmi{t}$. Similarly, we also take $\emm$ in $\frac{\partial \vect{\left((\Bmi{s-1:1}\xm)\otimes\Bmi{l:s+1}^T\right)\emm}}{ \partial\wmi{t}^T}$ as a constant vector.
Since we have
\begin{equation*}\label{gradient1}
\begin{split}
\frac{\partial \vect{\Qm_s\Bmi{l:t+1}\Wmi{t}\Bmi{t-1:1}\xm}}{\partial\wmi{t}^T}= \left(\Bmi{s-1:1}\xm \xm^T \Bmi{t-1:1}^T\right)\otimes\left(\Bmi{l:s+1}^T \Bmi{l:t+1}\right),
\end{split}
\end{equation*}
we only need to consider
\begin{equation*}\label{gradient1}
\begin{split}
\frac{\partial \vect{\left((\Bmi{s-1:1}\xm)\otimes\Bmi{l:s+1}^T\right)\emm}}{\partial\wmi{t}^T}
= & \frac{\partial \vect{(\Bmi{s-1:1}\xm)\otimes\left(\Bmi{l:s+1}^T\emm\right)}}{\partial\wmi{t}^T}\\
\\
= & \frac{\partial \vect{(\Bmi{s-1:1}\xm) \left(\Bmi{l:s+1}^T\emm\right)^T}}{\partial\wmi{t}^T}\\
\\
= & \frac{\partial \vect{\Bmi{s-1:t+1}\Wmi{t}\left(\Bmi{t-1:1}\xm  \emm^T\Bmi{l:s+1}\right)}}{\partial\wm_{t}^T}\\
= & \frac{\partial \left(\Bmi{t-1:1}\xm  \emm^T\Bmi{l:s+1}\right)^T\otimes \Bmi{s-1:t+1} \vect{\Wmi{t}}}{\partial\wm_{t}^T}\\
= & \left(\Bmi{t-1:1}\xm  \emm^T\Bmi{l:s+1}\right)^T\otimes \Bmi{s-1:t+1}.
\end{split}
\end{equation*}
Therefore, for $s>t$, by combining the above two terms, we can obtain
\begin{equation*}\label{gradient1}
\begin{split}
\nabla_{\wmi{t}}\left(\nabla_{\wmi{s}}f(\wm,\xm)\right)= \left(\Bmi{l:s+1}^T\emm\xm^T\Bmi{t-1:1}^T\right)\!\otimes\! \Bmi{s-1:t+1}\!+\!\left(\Bmi{s-1:1}\xm \xm^T \Bmi{t-1:1}^T\right)\!\otimes\!\left(\Bmi{l:s+1}^T \Bmi{l:t+1}\right).
\end{split}
\end{equation*}
Then, by similar method, we can compute the Hessian for the case $s<t$ as follows:
\begin{equation*}\label{gradient1}
\begin{split}
\nabla_{\wmi{t}}\!\left(\nabla_{\wmi{s}}\!f(\wm,\xm)\right)\!=\! \left(\Bmi{t-1:s+1}^T\right)\!\otimes\!\left(\Bmi{s-1:1}\xm\emm^T \Bmi{l:t+1}^T\right)\!+\!\left(\Bmi{s-1:1}\xm \xm^T \Bmi{t-1:1}^T\right)\!\otimes\!\left(\Bmi{l:s+1}^T \Bmi{l:t+1}\right).
\end{split}
\end{equation*}
The proof is completed.
\end{proof}

\subsubsection{Proof of Lemma~\ref{thmsafobjeasdfctive}}
\begin{proof}[\hypertarget{lemmaloss}{Proof}]
We first prove that $\vmi{l}$, which is defined in Eqn.~\eqref{vl_definition}, is sub-Gaussian.
\begin{equation}\label{vl_definition}
\vmi{l}=\Wm^{(l)}\cdots\Wm^{(1)}\xm=\Bm_{l:1}\xm.
\end{equation}
Then by the convexity in $\lambda$ of $\exp(\lambda t)$ and Lemma~\ref{asfasasfshdfhfdet}, we can obtain
\begin{equation}\label{EFAFQ}
\begin{split}
\EE \left(\exp \left(\lr\lam,\vmi{l}-\EE(\vmi{l})\rl\right)\right)=&\EE \left(\exp \left(\lr\lam,\Bm_{l:1}\xm-\EE \Bm_{l:1}\xm\rl\right)\right)\\
\leq &\EE \left(\exp \left(\lr\Bmi{l:1}^T\lam,\xm\rl\right)\right)\\
\leq  &\exp \left(\frac{\|\Bmi{l:1}^T\lam\|_2^2\tau^2}{2}\right)\\
\overset{\text{\ding{172}}}{\leq} &\exp \left(\frac{\omega_{f}^2\tau^2\|\lam\|_2^2}{2}\right),
\end{split}
\end{equation}
where \ding{172} uses the conclusion that $\|\Bmi{l:1}\|_{\op}\leq \|\Bmi{l:1}\|_F\leq \omega_f$ in Lemma~\ref{asfasasfshdfhfdet}.
This means that $\vmi{l}$ is centered and is $\omega_{f}^2\tau^2$-sub-Gaussian. Accordingly, we can obtain that the $k$-th entry of $\vmi{l}$ is also $z_k\tau^2$-sub-Gaussian, where $z_k$ is a universal positive constant. Note that $\max_k z_k\leq\omega_{f}^2$.
Let $\vmi{l}_i$ denotes the output of the $i$-th sample $\xmi{i}$. By Lemma~\ref{sumsafa34sad678safexp}, we have that for $s>0$,
\begin{align*}
\Pro\left(\frac{1}{n}\sum_{i=1}^n \left(\|\vmi{l}_i\|_{2}^2-\EE\|\vmi{l}_i\|_{2}^2\right) >\frac{t}{2}\right)&=\Pro\left(s\sum_{i=1}^n \left(\|\vmi{l}_i\|_{2}^2-\EE\|\vmi{l}_i\|_{2}^2\right) >\frac{nst}{2}\right)\\
&\led{172}\exp\left(-\frac{snt}{2}\right) \EE\left(s\sum_{i=1}^n \left(\|\vmi{l}\|_{2}^2-\EE\|\vmi{l}\|_{2}^2\right)\right)\\
&\led{173}\exp\left(-\frac{snt}{2}\right) \prod_{i=1}^n\EE\left(s \left(\|\vmi{l}\|_{2}^2-\EE\|\vmi{l}\|_{2}^2\right)\right)\\
&\led{174}\exp\left(-\frac{snt}{2}\right) \prod_{i=1}^n\exp\left(128\dm_ls^2\omega_{f}^4\tau^4 \right)\quad |s|\leq \frac{1}{\omega_{f}^2\tau^2}\\
&\led{175} \exp\left(-c'n\min\left(\frac{t^2}{\dm_l\omega_{f}^4 \tau^4}, \frac{t}{\omega_{f}^2\tau^2}\right)\right).
\end{align*}
Note that \ding{172} holds because of Chebyshev's inequality. \ding{173} holds since $\xmi{i}$ are independent. \ding{174} is established by applying  Lemma~\ref{sumsafa34sad678safexp}. We have \ding{175} by optimizing $s$. %Let $\vmi{l}_i$ denote the output of the $i$-th sample $\xmi{i}$.
Since $\vmi{l}$ is sub-Gaussian, we have
\begin{align*}
\Pro\left(\frac{1}{n}\sum_{i=1}^n\left(\ym^T\vmi{l}_i-\EE \ym^T\vmi{l}_i\right)>\frac{t}{2}\right) \leq &\Pro\left(s\sum_{i=1}^n\left(\ym^T\vmi{l}_i-\EE \ym^T\vmi{l}_i\right)>\frac{nst}{2}\right)\\
\leq &\exp\left(-\frac{nst}{2}\right) \EE\exp\left(s\sum_{i=1}^n\left(\ym^T\vmi{l}_i-\EE \ym^T\vmi{l}_i\right) \right)\\
\leq &\exp\left(-\frac{nst}{2}\right) \prod_{i=1}^{n}\EE\exp\left(s\left(\ym^T\vmi{l}_i-\EE \ym^T\vmi{l}_i\right) \right)\\
\led{172} &\exp\left(-\frac{nst}{2}\right) \prod_{i=1}^{n}\exp\left(\frac{\omega_{f}^2\tau^2s^2\|\ym\|_2^2}{2}\right)\\
\overset{\text{\ding{173}}}{\leq}& \exp\left(-\frac{nt^2}{8 \omega_{f}^2\tau^2\|\ym\|_2^2}\right),
\end{align*}
where \ding{172} holds because of Eqn.~\eqref{EFAFQ} and we have \ding{173} since we optimize $s$.

Since the loss function $f(\wm,\xm)$ is defined as $f(\wm,\xm)=\|\vmi{l}-\ym\|_{2}^2$, we have
\begin{align*}
f(\wm,\xm)-\EE(f(\wm,\xm))\!=\!\|\vmi{l}-\ym\|_{2}^2\!-\! \EE(\|\vmi{l}\!-\!\ym\|_{2}^2)
\!=\!\left(\|\vmi{l}\|_{2}^2\!-\!\EE\|\vmi{l}\|_{2}^2\right) \!+\!\left(\ym^T\vmi{l}\!-\!\EE \ym^T\vmi{l}\right).
\end{align*}
Therefore, we have
\begin{align*}
&\Pro\left(\frac{1}{n}\sum_{i=1}^n\left( f(\wm,\xmi{i})-\EE(f(\wm,\xmi{i}))\right)>t\right)\\
\leq&  \Pro\left(\frac{1}{n}\sum_{i=1}^n\left(\|\vmi{l}_i\|_{2}^2 -\EE\|\vmi{l}_i\|_{2}^2\right)>\frac{t}{2}\right) +\Pro\left(\frac{1}{n}\sum_{i=1}^n\left(\ym^T\vmi{l}_i-\EE \ym^T\vmi{l}_i\right)>\frac{t}{2}\right) \\
\leq & 2\exp\left(-c_{f'}n\min\left(\frac{t^2}{\dm_l\omega_{f}^4 \tau^4}, \frac{t^2}{\omega_{f}^2\tau^2}, \frac{t}{\omega_{f}^2\tau^2}\right)\right).
\end{align*}
where $c_{f'}$ is a constant. Note that $\|\ym\|_2^2$ is the label of $\xm$, then it can also be bounded. The proof is completed.
\end{proof}

\subsubsection{Proof of Lemma~\ref{thm:gradient12}}
\begin{proof}[\hypertarget{lemmagradient}{Proof}]
For brevity, let $\Qm_{j}$ denote $\nabla_{\wmi{j}}f(\wm,\xm)$. Then, by Lemma~\ref{uh2397} we have
\begin{equation}\label{gradgdsgsient211}
\nabla_{\wmi{j}}f(\wm)=\left((\Bmi{j-1:1}\xm)\otimes\Bmi{l:j+1}^T\right) \emm\overset{\text{\ding{172}}}{=}(\Bmi{j-1:1}\xm)\otimes(\Bmi{l:j+1}^T\emm) \overset{\text{\ding{173}}}{=} \left(\Bmi{j-1:1}\otimes\Bmi{l:j+1}^T\right) \left(\xm\otimes\emm\right),
\end{equation}
where $\text{\ding{172}}$ holds since $\Bmi{j-1:1}\xm$ is a vector, and \ding{173} holds because for any four matrices $\Zm_1\sim\Zm_4$ of proper sizes, we have $(\Zm_1\Zm_3)\otimes (\Zm_2\Zm_4)= (\Zm_1\otimes \Zm_2)(\Zm_3\otimes \Zm_4)$. Note that $\emm=\vmi{l}-\ym=\Bmi{l:1}\xm-\ym$. Then we know that the $i$-th entry $\Qm_{j}^{i}$ has the form $\Qm_{j}^{i}=\sum_{p,q} z_{pq}^{ij}\xm_p \xm_q+\sum_{p} y_{p}^{ij} \xm_p+r^{ij}$ (Step 1 blow will give the detailed analysis) where $\xm_p$ denotes the $p$-th entry in $\xm$. Note that $z_{pq}^{ij}, y_{p}^{ij}$ and $r^{ij}$ are constants and independent on $\xm$.

We divide $\lam\in\Rs{\sum_{j=1}^{l}\dm_j\dm_{j-1}}$ into $\lam=(\lam_1;\cdots;\lam_l)$ where $\lam_j\in\Rs{\dm_j\dm_{j-1}}$. Let $\lam_j^i$ denote the $i$-th entry in $\lam_j$. Accordingly, we have
\begin{equation*}\label{gradient14211}
\begin{split}
\Em\triangleq\lr\lam,\nabla_{\wm}f(\wm,\xm)-\EE\nabla_{\wm}f(\wm,\xm)\rl =\sum_{j=1}^{l} \left\langle\lam_j,\Qm_j-\EE \Qm_j   \right\rangle=\Em_1+\Em_2+\Em_3,
\end{split}
\end{equation*}
where $\Em_1, \Em_2$, and $\Em_3$ are defined as
\begin{align}
&\Em_{1}\!\!=\!\!\!\sum_{p,q:p\neq q}\!\!\left(
\sum_{j=1}^{l} \sum_{i=1}^{\dm_j\dm_{j-1}} \lam_j^i  z_{pq}^{ij}\right) \left(\xm_p \xm_q- \EE \xm_p \xm_q\right),\ \Em_{2}=\!\sum_{p}\!\!\left(
\sum_{j=1}^{l} \sum_{i=1}^{\dm_j\dm_{j-1}} \lam_j^i  z_{pp}^{ij}\right)\left(\xm_p^2 - \EE \xm_p^2\right),\notag\\
& \Em_{3}=\sum_{p}\left( \sum_{j=1}^{l} \sum_{i=1}^{\dm_j\dm_{j-1}}\lam_j^i y_{p}^{ij}\right)\left( \xm_p-\EE \xm_p \right).\label{asfsdf6wer58}
\end{align}

Thus, we can further separate the event as:
\begin{align*}
&\Pro\left(\Em>t\right)\leq  \Pro\!\left(\frac{1}{n}\sum_{k=1}^n\Em_{1}^k\!>\!\frac{t}{3}\right)\!+\!\Pro\! \left(\frac{1}{n}\sum_{k=1}^n\Em_{2}^k\!>\!\frac{t}{3}\!\right) \!+\!\Pro\left(\frac{1}{n}\sum_{k=1}^n\Em_{3}^k\!> \!\frac{t}{3}\right).
\end{align*}

Thus, to prove our conclusion, we can respectively establish the upper bounds of the three events.
To the end, for each input sample $\xmi{i}$, we divide its corresponding $\Qm_j-\EE \Qm_j$ into $\Em_{1}$, $\Em_{2}$ and $\Em_{3}$. Then we bound the three events separately. Before that, we first give several equalities. Since $ \Bmi{j:s}= \Wmi{j}\Wmi{j-1}\cdots\Wmi{s}\ (j\geq s)$, by Lemma~\ref{asfasasfshdfhfdet} we have
\begin{equation}\label{gr1adafsier3ent1}
\begin{split}
\|\Bmi{j:s}\|_F^2\leq r^{2(j-s+1)} \ \ \text{and}\ \ \left\|\Bmi{l:t+1}\right\|_F^2\|\Bmi{t-1:s+1}\|_F^2\left\|\Bmi{s-1:1}\right\|_F^2 \leq r^{2(l-2)},
\end{split}
\end{equation}
These two inequalities can be obtained by using $\|\Wmi{i}\|_F^2=\|\wmi{i}\|_2^2\leq r^2$.

{\bf\noindent{Step 1. Divide $\Qm_{j}-\EE \Qm_{j}$}}:
Note that $\emm=\vmi{l}-\ym=\Bmi{l:1}\xm-\ym$. Let $\Hm_j=\Bmi{j-1:1}\otimes\Bmi{l:j+1}^T$. Then we can further write Eqn.~\eqref{gradgdsgsient211} as
\begin{equation}\label{gradgdsgssfsafient211}
\Qm_j=\nabla_{\wmi{j}}f(\wm)=\Hm_j\left(\xm\otimes(\Bmi{l:1}\xm) -\xm\otimes\ym\right)= \Hm_j\left(\left(\Im_{\dm_0}\otimes \Bmi{l:1}\right)\left(\xm\otimes\xm\right)-\xm\otimes\ym\right),
\end{equation}
where $\Im_{\dm_0}\in\Rss{\dm_0}{\dm_0}$ is the identity matrix. According to Eqn.~\eqref{gradgdsgssfsafient211}, we can write the $i$-th entry of $\Qm_{j}$ as the form $\Qm_{j}^i=\sum_{p,q} z_{pq}^{ij}\xm_p \xm_q+\sum_{p} y_{p}^{ij} \xm_p+r^{ij}$ where $\xm_p$ denotes the $p$-th entry in $\xm$.  Let $\Zm_j=\Hm_j \left(\Im_{\dm_0}\otimes \Bmi{l:1}\right)\in\Rss{\dm_j\dm_{j-1}}{\dm_0^2}$. Then, we know that the $i$-th entry $Q_j^i=\Zm(i,:)\xm'$, where $\xm'=\xm\otimes\xm= [\xm_1\xm;\xm_2\xm;\cdots,\xm_{\dm_0}\xm]\in\Rs{\dm_0^2}$. In this way, we have $z_{pq}^{ij}=\Zm_j(i,(p-1)\dm_0+q)$ which further implies
\begin{equation}\label{gsrfg11}
\sum_{p,q} (z_{pq}^{ij})^2=\|\Zm_j(i,:)\|_2^2\leq\|\Zm_j\|_F^2\leq  \|\Bmi{j-1:1}\|_{F}^2\|\Bmi{l:j+1}\|_{F}^2\|\Im_{\dm_0}\|_{F}^2 \|\Bmi{l:1}\|_{F}^2
 \leq  z_y,
\end{equation}
where $z_y$ is defined as
\begin{equation*}\label{gsafradieant14211}
z_y\triangleq \dm_0 r^{2(l-1)}r^{2l}=\dm_0 r^{2(2l-1)}.
\end{equation*}
Note that Eqn.~\eqref{gsrfg11} uses the conclusion in Eqn.~\eqref{gr1adafsier3ent1}. We divide the $i$-th row $\Hm_j(i,:)$ into $\Hm_j(i,:)=[\Hm_{ji}^1,\Hm_{ji}^2,\cdots,\Hm_{ji}^{\dm_0}]$ where $\Hm_{ji}^p\in\Rs{1\times\dm_l}$. Then we have $y_p^{ij}=\ym^T\Hm_{ji}^p$. This yields
\begin{equation}\label{gsafrasfadisdsafdent14211}
\sum_{p} (y_{p}^{ij})^2=\sum_{p} (\ym^T\Hm_{ji}^p)^2\leq \sum_{p} \| \ym\|_2^2\|\Hm_{ji}^p \|_2^2= \| \ym\|_2^2\|\Hm_{j}(i,:) \|_2^2\leq \| \ym\|_2^2\|\Hm_{j} \|_F^2 \leq h_y,
\end{equation}
where $h_y$ is defined as
\begin{equation*}\label{gsafrasfadisdent14211}
 h_y\triangleq\|\ym\|_2^2 r^{2(l-1)}.
\end{equation*}

Let $\lam_j^i$ denote the $i$-th entry of $\lam_j$. Then, by Eqn.~\eqref{asfsdf6wer58}, we can obtain
\begin{equation*}\label{gradient14211}
\begin{split}
\sum_{j}\lr \lam_j,(\Qm_{j}-\EE(\Qm_{j}))\rl
&=\!\!\!\sum_{p,q:p\neq q}\!\!\! a_{pq}\left(\xm_p \xm_q- \EE \xm_p \xm_q\right)\!+\! \sum_{p} a_{pp}\left(\xm_p^2- \EE \xm_p^2\right)\!+\!\sum_{p} b_{p}\left(\xm_p- \EE \xm_p\right)\\
&=\Em_{1}+\Em_{2}+\Em_{3},
\end{split}
\end{equation*}
where $a_{pq}$ and $b_p$ are defined as
\begin{align*}
a_{pq}=
\sum_{j=1}^{l} \sum_{i=1}^{\dm_j\dm_{j-1}} \lam_j^i  z_{pq}^{ij}  \quad \text{and}\quad  b_{p}=\sum_{j=1}^{l} \sum_{i=1}^{\dm_j\dm_{j-1}}\lam_j^i y_{p}^{ij}.\label{asf658}
\end{align*}
Before we bound $a_{pq}$ and $b_{pq}$, we first give
\begin{equation}\label{ghtasfdasy34}
\sum_{i=1}^{\dm_j\dm_{j-1}}(z_{pq}^{ij})^2\leq \sum_{i=1}^{\dm_j\dm_{j-1}}\sum_{p,q}(z_{pq}^{ij})^2\led{172} \sum_{i=1}^{\dm_j\dm_{j-1}}z_y\leq z_y \max_j(\dm_j\dm_{j-1})\triangleq \omega.
\end{equation}
Notice \ding{172} uses Eqn.~\eqref{gsrfg11}.
Then we can utilize Eqn.~\eqref{ghtasfdasy34} and $\sum_{j=1}^{l}\left(\sum_{i=1}^{\dm_j\dm_{j-1}} (\lam_j^i)^2\right)=1$ to bound $a_{pq}$  as follows:
\begin{equation*}\label{gradsdfa78ient14211}
\begin{split}
a_{pq}^2\leq l\left(\sum_{j=1}^{l}\left(  \sum_{i=1}^{\dm_j\dm_{j-1}}\lam_j^i z_{pq}^{ij}\right)^2\right)\leq l\sum_{j=1}^{l}\left(\sum_{i=1}^{\dm_j\dm_{j-1}}(\lam_j^i)^2\right)\left(  \sum_{i=1}^{\dm_j\dm_{j-1}}(z_{pq}^{ij})^2\right)\leq l\omega.
\end{split}
\end{equation*}
which further gives
\begin{equation*}\label{gradsdfa78ient14211}
\begin{split}
\sum_{p,q} a_{pq}^2\leq l\sum_{j=1}^{l}\left(\sum_{i=1}^{\dm_j\dm_{j-1}} (\lam_j^i)^2\right)\left(  \sum_{i=1}^{\dm_j\dm_{j-1}}\sum_{p,q}(z_{pq}^{ij})^2\right)\led{172} l\omega.
\end{split}
\end{equation*}
where \ding{172} uses Eqn.~\eqref{ghtasfdasy34}. Similarly, we can bound $b_p$ as
\begin{equation*}\label{gradsdfa78ient14211}
\begin{split}
b_p^2\leq l\sum_{j=1}^{l}\left( \sum_{i=1}^{\dm_j\dm_{j-1}}\lam_j^i y_{p}^{ij}\right)^2\leq l\sum_{j=1}^{l}\left( \sum_{i=1}^{\dm_j\dm_{j-1}}(\lam_j^i)^2 \right)\left( \sum_{i=1}^{\dm_j\dm_{j-1}}(y_{p}^{ij})^2\right)\leq l\omega',
\end{split}
\end{equation*}
where $\omega'=h_y\max_j(\dm_j\dm_{j-1})$.
Accordingly, we can have
\begin{equation*}\label{gradsdfsfa78ient14211}
\begin{split}
\sum_{p} b_p^2\leq   l\sum_{j=1}^{l}\left( \sum_{i=1}^{\dm_j\dm_{j-1}}(\lam_j^i)^2 \right)\left( \sum_{i=1}^{\dm_j\dm_{j-1}}\sum_{p}(y_{p}^{ij})^2\right) \overset{\text{\ding{172}}}{\leq}  l \omega',
\end{split}
\end{equation*}
where \ding{172} uses \eqref{gsafrasfadisdsafdent14211}.

{\bf\noindent{Step 2. Bound  $\Pro(\Em_{1}>t/3)$, $\Pro(\Em_{2}>t/3)$ and $\Pro(\Em_{3}>t/3)$}}: Let $\Em_{h1}^k$ denotes the $\Em_{h1}$ which corresponds to
the $k$-th sample $\xmi{k}$. Therefore, we can bound
\begin{equation*}\label{gradient14211}
\begin{split}
\Pro\left(\frac{1}{n}\sum_{k=1}^n\Em_{1}^k> \frac{t}{3}\right)= &\Pro\left(s\sum_{k=1}^n\left( \sum_{p,q:p\neq q} a_{pq}^k\left(\xm_p^k \xm_q^k- \EE \xm_p^k \xm_q^k\right) \right)>\frac{snt}{3}\right)\\
\led{172} &\exp\left(-\frac{nst}{3}\right) \EE\exp\left(s \sum_{k=1}^n\left( \sum_{p,q:p\neq q} a_{pq}^k\left(\xm_p^k \xm_q^k- \EE \xm_p^k \xm_q^k\right) \right)\right)\\
\led{173} &\exp\left(-\frac{nst}{3}\right)\prod_{k=1}^{n} \EE\exp\left(s \left( \sum_{p,q:p\neq q} a_{pq}^k\left(\xm_p^k \xm_q^k- \EE \xm_p^k \xm_q^k\right) \right)\right)\\
\led{174} &\exp\left(-\frac{nst}{3}\right)\prod_{k=1}^{n} \exp\left(2\tau^2s^2 \sum_{p,q:p\neq q} (a_{pq}^k)^2\right) \quad |s| \leq \frac{1}{2\tau\sqrt{l\omega}}\\
\leq &\exp\left(-\frac{nst}{3}\right)\prod_{j=1}^{n} \exp\left(2\tau^2s^2  l\omega\right)\\
\led{175} &\exp\left(-c'n\min\left(\frac{t^2}{\omega l\tau^2},
\frac{t}{\sqrt{l\omega}\tau}\right)\right),
\end{split}
\end{equation*}
where \ding{172} holds because of Chebyshev's inequality. \ding{173} holds since $\xmi{i}$ are independent. \ding{174} is established by applying Lemma~\ref{sumsafsad678exp}. We have \ding{175} by optimizing $s$. Similarly, by Lemma~\ref{sumsafa34sad678safexp} we can bound $\Pro\left(\frac{1}{n}\sum_{k=1}^n\Em_{2}^k> \frac{t}{3}\right)$ as follows:
\begin{equation*}\label{gradient14211}
\begin{split}
\Pro\left(\frac{1}{n}\sum_{k=1}^n\Em_{2}^k> \frac{t}{3}\right)
\leq &\exp\left(-\frac{nst}{3}\right)\prod_{k=1}^{n} \EE\exp\left(s \left( \sum_{p} a_{pp}^k\left((\xm_p^k)^2- \EE (\xm_p^k)^2\right) \right)\right)\\
\leq &\exp\left(-\frac{nst}{3}\right)\prod_{k=1}^{n} \exp\left(128\tau^4s^2 l\omega\right) \quad |s| \leq \frac{1}{\tau^2\sqrt{l\omega}}\\
\leq &\exp\left(-c''n\min\left(\frac{t^2}{\omega l\tau^4},
\frac{t}{\sqrt{l\omega}\tau^2}\right)\right).
\end{split}
\end{equation*}
Finally, since $\xmi{i}$ are independent sub-Gaussian, we can use Hoeffding inequality and obtain
\begin{equation*}\label{gradient14211}
\begin{split}
\Pro\left(\frac{1}{n}\sum_{k=1}^n\Em_{3}^k> \frac{t}{3}\right)\leq
\Pro\left(\frac{1}{n}\sum_{k=1}^n\left( \sum_{p} b_{p}^k\left(\xm_p^k- \EE \xm_p^k\right) \right)> \frac{t}{3}\right)
 \exp\left(-\frac{c'''nt^2}{\omega'l \tau^2}\right).
\end{split}
\end{equation*}

{\bf\noindent{Step 3. Bound $\Pro\!\left(\Em\!>\!t\right)$}}:   By comparing the values of $\omega$ and $\omega'$, we can obtain
\begin{align*}
\Pro\left(\Em>t\right)\leq&  \Pro\!\left(\frac{1}{n}\sum_{k=1}^n\Em_{1}^j\!>\!\frac{t}{3}\right)\!+\!\Pro\! \left(\frac{1}{n}\sum_{k=1}^n\Em_{2}^j\!>\!\frac{t}{3}\!\right) \!+\!\Pro\left(\frac{1}{n}\sum_{k=1}^n\Em_{3}^j\!> \!\frac{t}{3}\right)\\
\leq & 3\exp\left(-c_{g'}n\min\left(\frac{t^2}{ l \max\left(\omega_g\tau^2,\omega_g\tau^4,\omega_{g'}\tau^2\right)},
\frac{t}{\sqrt{l\omega_g}\max\left(\tau,\tau^2\right)}\right)\right),
\end{align*}
where $\omega_g=\dm_0 r^{2(2l-1)}\max_j(\dm_j\dm_{j-1})$ and $\omega_{g'}= r^{2(l-1)}\max_j(\dm_j\dm_{j-1})$. The proof is completed.
\end{proof}

\subsubsection{Proofs of Lemma~~\ref{thm: hessian}}
\begin{proof}[\hypertarget{lemmahessian}{Proof}]
For brevity, let $\Qm_{ts}$ denote $\nabla_{\wmi{t}}\left(\nabla_{\wmi{s}}f(\wm,\xm)\right)$. Then, by Lemma~\ref{uh2397} we have
\begin{equation*}\label{gradient1}
\Qm_{ts}\!=\!\!
\begin{cases}
\left(\Bmi{l:s+1}^T\emm\xm^T\Bmi{t-1:1}^T\right)\otimes \Bmi{s-1:t+1}+\left(\Bmi{s-1:1}\xm \xm^T \Bmi{t-1:1}^T\right)\otimes\left(\Bmi{l:s+1}^T \Bmi{l:t+1}\right),&\!\!\!\mbox{if } s>t,\\
\left(\Bmi{s-1:1}\xm\xm^T\Bmi{s-1:1}\right)\otimes\left({\Bmi{l:s+1}}^T\Bmi{l:s+1}\right), & \!\!\!\mbox{if } s=t, \\
\left(\Bmi{t-1:s+1}^T\right)\otimes\left(\Bmi{s-1:1}\xm\emm^T \Bmi{l:t+1}^T\right)+\left(\Bmi{s-1:1}\xm \xm^T \Bmi{t-1:1}^T\right)\otimes\left(\Bmi{l:s+1}^T \Bmi{l:t+1}\right),& \!\!\!\mbox{if } s<t.\\
\end{cases}
\end{equation*}
Then we know that the $(i,k)$-th entry $\Qm_{ts}^{ik}$ has the form $\Qm_{ts}^{ik}=\sum_{p,q} z_{pq}^{ik}\xm_p \xm_q+\sum_{p} y_{p}^{ik} \xm_p+r^{ik}$ (explained in the following Step 1.~I) where $\xm_p$ denotes the $p$-th entry in $\xm$. Note that $z_{pq}^{ik}, y_{p}^{ik}$ and $r^{ik}$ are constant and independent on $\xm$. For convenience, we let $\Qm_{ts}=\Hm_{ts}+\Gm_{ts}$, where $\Gm_{ts}=\left(\Bmi{s-1:1}\xm \xm^T \Bmi{t-1:1}^T\right)\otimes\left(\Bmi{l:s+1}^T \Bmi{l:t+1}\right)$ and $\Hm_{ts}$ is defined as
\begin{equation*}\label{gradient1}
\Hm_{ts}=
\begin{cases}
\left(\Bmi{l:s+1}^T\emm\xm^T\Bmi{t-1:1}^T\right)\otimes \Bmi{s-1:t+1},& \mbox{if } s>t, \\
\ \bm{0}, & \mbox{if } s=t, \\
\left(\Bmi{t-1:s+1}^T\right)\otimes\left(\Bmi{s-1:1}\xm\emm^T \Bmi{l:t+1}^T\right), & \mbox{if } s<t.
\end{cases}
\end{equation*}
Let
\begin{align*}
&\Em=\frac{1}{n}\sum_{j=1}^n\lr \lam,\left(\nabla_{\wm}^2f(\wm,\xm)-\EE \nabla_{\wm}^2f(\wm,\xm)\right)\lam\rl,\ \Em_h=\frac{1}{n}\sum_{j=1}^n \!\sum_{t,s}\! \lr \lam_t,\!\left(\Hm_{ts}\!-\!\EE (\Hm_{ts})\right)\lam_s\rl,\\
&\Em_g=\frac{1}{n}\sum_{j=1}^n \!\sum_{t,s}\! \lr \lam_t,\!\left(\Gm_{ts}\!-\!\EE (\Gm_{ts})\right)\lam_s\rl.
\end{align*}
Then we divide the event as two events:
\begin{align*}
&\Pro\left(\Em>t\right)=\Pro\left(\Em_h+\Em_g>t\right)\leq \Pro\left(\Em_h>t/2\right)+\Pro\left(\Em_g>t/2\right).
\end{align*}
Now we look each event separately. Similar to $\Qm_{ts}$, the $(i,k)$-th entry $\Hm_{ts}^{ik}$ has the form $\Hm_{ts}^{ik}=\sum_{p,q} z_{pq}^{ik}\xm_p \xm_q+\sum_{p} y_{p}^{ik} \xm_p+r^{ik}$. We divide the unit vector $\lam\in\Rs{d}$ as $\lam=(\lam_1;\cdots;\lam_l)$ where $\lam_j\in\Rs{\dm_j\dm_{j-1}}$. For input vector $\xm$, let $\sum_{t,s} \lr \lam_t,\left(\Hm_{ts}-\EE (\Hm_{ts})\right)\lam_s\rl=\Em_{h1}+\Em_{h2}+\Em_{h3}$, where
\begin{align}
&\Em_{h1}\!\!=\!\!\!\sum_{p,q:p\neq q}\!\!\left(
\sum_{t,s} \sum_{i,k}(\lam_t^i\lam_s^k) z_{pq}^{ik}\right)\!\! \left(\xm_p \xm_q\!- \!\EE \xm_p \xm_q\right),\ \Em_{h2}=\!\sum_{p}\!\left(
\sum_{t,s} \sum_{i,k}(\lam_t^i\lam_s^k) z_{pq}^{ik}\right)\!\! \left(\xm_p^2 \!- \!\EE \xm_p^2\right),\notag\\
& \Em_{h3}=\sum_{p} \left(\sum_{t,s}\sum_{i,k}(\lam_t^i\lam_s^k) y_{p}^{ik}\right) \left(\xm_p-\EE \xm_p \right),\label{asf658}
\end{align}
where $\xm_p$ denotes the $p$-th entry in $\xm$ and $\lam_j^i$ denotes the $i$-th entry of $\lam_j$. Let $\Em_{h_1}^j$, $\Em_{h_2}^j$, and $\Em_{h_3}^j$ denote the $\Em_{h_1}$, $\Em_{h_2}$, and $\Em_{h_3}^j$ of the $j$-th sample. Thus, considering $n$ samples, we can further separately divide the two events above as:
\begin{align*}
&\Pro\!\left(\!\Em_h\!>\!\frac{t}{2}\!\right)\!\leq \! \Pro\!\left(\frac{1}{n}\sum_{j=1}^n\Em_{h1}^j\!>\!\frac{t}{6}\right)\!+\!\Pro\! \left(\frac{1}{n}\sum_{j=1}^n\Em_{h2}^j\!>\!\frac{t}{6}\!\right) \!+\!\Pro\left(\frac{1}{n}\sum_{j=1}^n\Em_{h3}^j\!> \!\frac{t}{6}\right).
\end{align*}
Similarly, we can define $\Em_{g1}, \Em_{g2}$ and $\Em_{g3}$.
\begin{align*}
&\Pro\!\left(\!\Em_g\!>\!\frac{t}{2}\!\right)\!\leq \! \Pro\!\left(\frac{1}{n}\sum_{j=1}^n\Em_{g1}^j\!>\!\frac{t}{6}\right)\!+\!\Pro\! \left(\frac{1}{n}\sum_{j=1}^n\Em_{g2}^j\!>\!\frac{t}{6}\!\right) \!+\!\Pro\left(\frac{1}{n}\sum_{j=1}^n\Em_{g3}^j\!> \!\frac{t}{6}\right).
\end{align*}
Thus, to prove our conclusion, we can respectively establish the upper bounds of $\Pro\!\left(\!\Em_h\!>\!\frac{t}{2}\!\right)$ and $\Pro\!\left(\!\Em_g\!>\!\frac{t}{2}\!\right)$.

{\bf\noindent{Step 1: Bound $\Pro\!\left(\!\Em_h\!>\!\frac{t}{2}\!\right)$}}

To achieve our goal, for each input sample $\xmi{i}$, we divide its corresponding $\sum_{t,s}(\Hm_{ts}-\EE \Hm_{ts})$ as $\Em_{h1}$, $\Em_{h2}$ and $\Em_{h3}$. Then we bound the three events separately. Before that, we first give two equalities. Since $ \Bmi{j:s}= \Wmi{j}\Wmi{j-1}\cdots\Wmi{s}\ (j\geq s)$, by Lemma~\ref{asfasasfshdfhfdet} we have
\begin{equation}\label{gr1adafsient1}
\begin{split}
\|\Bmi{j:s}\|_F^2\leq r^{2(j-s+1)} \ \ \text{and}\ \ \left\|\Bmi{l:t+1}\right\|_F^2\|\Bmi{t-1:s+1}\|_F^2\left\|\Bmi{s-1:1}\right\|_F^2 \leq r^{2(l-2)},
\end{split}
\end{equation}
These two inequalities can be obtained by using $\|\Wmi{i}\|_F^2=\|\wmi{i}\|_2^2\leq r^2$.

{\bf\noindent{I. Divide $\Hm_{ts}-\EE \Hm_{ts}$}}: For $t\neq s$, we can write the $(i,k)$-th entry $\Hm_{ts}^{ik}$ as the form $\Hm_{ts}^{ik}=\sum_{p,q} z_{pq}^{ik}\xm_p \xm_q+\sum_{p} y_{p}^{ik} \xm_p+r^{ik}$. Now we try to bound $z_{pq}^{ik}$ and $y_{p}^{ik}$. We first consider the case $s<t$. Note that $\emm=\vmi{l}-\ym=\Bmi{l:1}\xm-\ym$.
Specifically, we have
\begin{equation}\label{gradsfa43ient1}
\Hm_{ts}
=\left(\Bmi{t-1:s+1}^T\right)\otimes \left(\Bmi{s-1:1}\xm\xm^T\Bmi{l:1}^T\Bmi{l:t+1}^T-\Bmi{s-1:1}\xm\ym^T \Bmi{l:t+1}^T\right).
\end{equation}
So the $(i',k')$-th entry in the matrix $\Bmi{s-1:1}\xm\xm^T\Bmi{l:1}^T\Bmi{l:t+1}^T$ is $[\Bmi{s-1:1}\xm\xm^T\Bmi{l:1}^T\Bmi{l:t+1}^T]_{i'k'}=(\Bmi{s-1:1})(i',:)\xm (\Bmi{l:1}\Bmi{l:t+1})(k',:)\xm=\xm^T((\Bmi{s-1:1})(i',:))^T (\Bmi{l:1}\Bmi{l:t+1})(k',:)\xm$, where $\Am(i',:)$ denotes the $i'$-th row of $\Am$. Let $i'_i=\mcode{mod}(i,\dm_s)$, $k'_k=\mcode{mod}(k,\dm_{t-1})$,  $i''_i=\lfloor i/\dm_s\rfloor$ and $k''_k=\lfloor k/\dm_{t-1}\rfloor$. In this case, the $(i,k)$-th entry $\Hm_{ts}^{ik}
=[\Bmi{t-1:s+1}]_{k''_ki''_i}\xm^T((\Bmi{s-1:1})(i'_i,:))^T (\Bmi{l:1}\Bmi{l:t+1})(k'_k,:)\xm+ [\Bmi{t-1:s+1}]_{k''_ki''_i}\ym^T (\Bmi{l:t+1})(k'_k,:)^T (\Bmi{s-1:1})(i'_i,:)\xm$. Therefore, we have
\begin{equation}\label{sagd5t6}
\begin{split}
\sum_{p,q} (z_{pq}^{ik})^2\!= \! [\Bmi{t-1:s+1}]_{k''_ki''_i}^2\left\| ((\Bmi{s-1:1})(i'_i,:))^T (\Bmi{l:1}\Bmi{l:t+1})(k'_k,:)\right\|_F^2
\!\overset{\text{\ding{172}}}{\leq} \! r^{4(l-1)}\triangleq z_y,
\end{split}
\end{equation}
where \ding{172} uses Eqn.~\eqref{gr1adafsient1} and these three inequalities: $[\Bmi{t-1:s+1}]_{k''_ki''_i}^2\leq \|\Bmi{t-1:s+1}\|_F^2$, $\left\|(\Bmi{s-1:1})(i'_i,:)\right\|_F^2\leq \left\|\Bmi{s-1:1}\right\|_F^2$, $\left\|(\Bmi{l:1}\Bmi{l:t+1})(k'_k,:)\right\|_F^2\leq \left\|\Bmi{l:1}\Bmi{l:t+1}\right\|_F^2$.

Similarly, we can bound
\begin{equation}\label{saf425}
\begin{split}
\sum_{p} (y_{p}^{ik})^2= [\Bmi{t-1:s+1}]_{k''_ki''_i}^2\left\| \ym^T (\Bmi{l:t+1})(k'_k,:)^T (\Bmi{s-1:1})(i'_i,:)\right\|_F^2
\overset{\text{\ding{172}}}{\leq} \left\|\ym\right\|_2^2 r^{2(l-2)}\triangleq h_y,
\end{split}
\end{equation}
where \ding{172} uses Eqn.~\eqref{gr1adafsient1} and $[\Bmi{t-1:s+1}]_{k''_ki''_i}^2\leq \|\Bmi{t-1:s+1}\|_F^2$.

Note that for the case $s\geq t$, Eqn.~\eqref{sagd5t6} and ~\eqref{saf425} also holds.
Let $\lam_j^i$ denote the $i$-th entry of $\lam_j$. Then, by Eqn.~\eqref{asf658}, we can obtain
\begin{equation*}\label{gradient14211}
\begin{split}
\sum_{t,s}\!\!\left(\lr \lam_t,(\Hm_{ts}\!-\!\EE(\Hm_{ts}))\lam_s\rl\right)
\!&=\!\!\!\sum_{p,q:p\neq q}\!\! a_{pq}\left(\xm_p \xm_q\!- \!\EE \xm_p \xm_q\right)\!+\! \sum_{p} a_{pp}\left(\xm_p^2\!- \!\EE \xm_p^2\right)\!+\!\sum_{p} b_{p}\left(\xm_p\!- \!\EE \xm_p\right)\\
&=\Em_{h1}+\Em_{h2}+\Em_{h3},
\end{split}
\end{equation*}
where $a_{pq}$ and $b_p$ are defined as
\begin{align*}
a_{pq}=
\sum_{t,s} \sum_{i,k}(\lam_t^i\lam_s^k) z_{pq}^{ik}\quad \text{and}\quad  b_{p}=\sum_{t,s}\sum_{i,k}(\lam_t^i\lam_s^k) y_{p}^{ik}.\label{asf658}
\end{align*}
Before we bound $a_{pq}$ and $b_{pq}$, we first give
\begin{equation}\label{ghty34}
\sum_{i,k}(z_{pq}^{ik})^2\leq \sum_{i,k}\sum_{p,q}(z_{pq}^{ik})^2\led{172} \sum_{i,k}z_y\leq z_y \left(\max_j(\dm_j\dm_{j-1})\right)^2\triangleq \omega.
\end{equation}
Note that \ding{172} uses Eqn.~\eqref{sagd5t6}  and $i\in\{1,\cdots,\dm_i\dm_{i-1}\},\ j\in\{1,\cdots,\dm_j\dm_{j-1}\}$.
Besides, we have $\sum_{t,s}\left( \sum_{i,k}(\lam_t^i\lam_s^k)^2 \right)=1$. Therefore we can bound $a_{pq}$ as follows:
\begin{equation*}\label{gradsdfa78ient14211}
\begin{split}
a_{pq}^2 \!\leq\! l^2\sum_{t,s}\!\!\left( \! \sum_{i,k}(\lam_t^i\lam_s^k) z_{pq}^{ik}\!\right)^2 \!\!\!\leq \! l^2\sum_{t,s}\!\!\left( \! \sum_{i,k}(\lam_t^i\lam_s^k)^2 \!\right)\!\!\left( \! \sum_{i,k}(z_{pq}^{ik})^2\!\right)\!\leq\! \omega l^2\sum_{t,s}\!\!\left(\! \sum_{i,k}(\lam_t^i\lam_s^k)^2 \!\right)\!\leq\! \omega l^2.
\end{split}
\end{equation*}
which further yields
\begin{equation*}\label{gradsdfa78ient14211}
\begin{split}
\sum_{p,q} a_{pq}^2 \leq l^2\sum_{t,s}\left( \sum_{i,k}(\lam_t^i\lam_s^k)^2 \right)\left( \sum_{i,k}\sum_{p,q}(z_{pq}^{ik})^2\right)\leq \omega l^2\sum_{t,s}\left( \sum_{i,k}(\lam_t^i\lam_s^k)^2 \right)\leq \omega l^2.
\end{split}
\end{equation*}

Similarly, we have
\begin{equation*}\label{gradsdfa78ient14211}
\begin{split}
b_p^2 \leq  l^2\sum_{t,s}\left(\sum_{i,k}(\lam_t^i\lam_s^k) y_{p}^{ik}\right)^2 \leq  l^2\sum_{t,s}\left(\sum_{i,k}(\lam_t^i\lam_s^k)^2\right)\left( \sum_{i,k} (y_{p}^{ik})^2\right)\led{172} \omega'l^2,
\end{split}
\end{equation*}
where $\omega'=h_y\left(\max_j(\dm_j\dm_{j-1})\right)^2$. Note that \ding{172} uses \eqref{saf425}.
Accordingly, we can have
\begin{equation*}\label{gradsdfa78ient14211}
\begin{split}
\sum_{p}b_p^2 \!\leq\! l^2\sum_{t,s}\left(\sum_{i,k}(\lam_t^i\lam_s^k)^2\right)\left( \sum_{i,k} \sum_{p}(y_{p}^{ik})^2\right) \!\leq\!  \omega'l^2.
\end{split}
\end{equation*}
{\bf\noindent{II. Bound  $\Pro(\Em_{h1}>t/6)$, $\Pro(\Em_{h2}>t/6)$ and $\Pro(\Em_{h3}>t/6)$}}:
Let $E_{h1}^j$ denotes the $\Em_{h1}^j$ which corresponds to
the $j$-th sample $\xmi{i}$. Therefore, we can bound
\begin{equation*}\label{gradient14211}
\begin{split}
\Pro\left(\frac{1}{n}\sum_{j=1}^n\Em_{h1}^j> \frac{t}{6}\right)\leq &\Pro\left(s\sum_{j=1}^n\left( \sum_{p,q:p\neq q} a_{pq}^j\left(\xm_p^j \xm_q^j- \EE \xm_p^j \xm_q^j\right) \right)>\frac{snt}{6}\right)\\
\led{172} &\exp\left(-\frac{nst}{6}\right) \EE\exp\left(s \sum_{j=1}^n\left( \sum_{p,q:p\neq q} a_{pq}^j\left(\xm_p^j \xm_q^j- \EE \xm_p^j \xm_q^j\right) \right)\right)\\
\led{173} &\exp\left(-\frac{nst}{6}\right)\prod_{j=1}^{n} \EE\exp\left(s \left( \sum_{p,q:p\neq q} a_{pq}^j\left(\xm_p^j \xm_q^j- \EE \xm_p^j \xm_q^j\right) \right)\right)\\
\led{174} &\exp\left(-\frac{nst}{6}\right)\prod_{j=1}^{n} \exp\left(2\tau^2s^2 \sum_{p,q:p\neq q} (a_{pq}^j)^2\right) \quad |s| \leq \frac{1}{2\tau l\sqrt{\omega}}\\
\leq &\exp\left(-\frac{nst}{6}\right)\prod_{j=1}^{n} \exp\left(2\tau^2s^2  l^2\omega\right)\\
\led{175} &\exp\left(-c'n\min\left(\frac{t^2}{\omega l^2\tau^2},
\frac{t}{\sqrt{\omega} l\tau}\right)\right),
\end{split}
\end{equation*}
where \ding{172} holds because of Chebyshev's inequality. \ding{173} holds since $\xmi{i}$ are independent. \ding{174} is established because of Lemma~\ref{sumsafsad678exp}. We have \ding{175} by optimizing $s$. Similarly, we can bound $\Pro\left(\frac{1}{n}\sum_{j=1}^n\Em_{h2}^j> \frac{t}{6}\right)$ as follows:
\begin{equation*}\label{gradient14211}
\begin{split}
\Pro\left(\frac{1}{n}\sum_{j=1}^n\Em_{h2}^j> \frac{t}{6}\right)
\leq &\exp\left(-\frac{nst}{6}\right)\prod_{j=1}^{n} \EE\exp\left(s \left( \sum_{p} a_{pp}^j\left((\xm_p^j)^2- \EE (\xm_p^j)^2\right) \right)\right)\\
\leq &\exp\left(-\frac{nst}{6}\right)\prod_{j=1}^{n} \exp\left(128\tau^4s^2 l^2\omega\right) \quad |s| \leq \frac{1}{\tau^2 l\sqrt{\omega}}\\
\leq &\exp\left(-c''n\min\left(\frac{t^2}{\omega l^2\tau^4},
\frac{t}{\sqrt{\omega} l\tau^2}\right)\right).
\end{split}
\end{equation*}
Finally, since $\xmi{i}$ are independent sub-Gaussian, we can use Hoeffding inequality and obtain
\begin{equation*}\label{gradient14211}
\begin{split}
\Pro\left(\frac{1}{n}\sum_{j=1}^n\Em_{h3}^j> \frac{t}{6}\right)=
\Pro\left(\frac{1}{n}\sum_{j=1}^n\left( \sum_{p} b_{p}^j\left(\xm_p^j- \EE \xm_p^j\right) \right)> \frac{t}{6}\right)\leq
 \exp\left(-\frac{c'''nt^2}{\omega'l^2\tau^2}\right).
\end{split}
\end{equation*}
Since for $s=t$, $\Pro\left(\frac{1}{n}\sum_{j=1}^n\Em_{h1}^j> \frac{t}{6}\right)=\Pro\left(\frac{1}{n}\sum_{j=1}^n\Em_{h2}^j> \frac{t}{6}\right)=\Pro\left(\frac{1}{n}\sum_{j=1}^n\Em_{h3}^j> \frac{t}{6}\right)=0$, the above upper bounds also hold.

{\bf\noindent{III: Bound $\Pro\!\left(\!\Em_h\!>\!\frac{t}{2}\!\right)$}}
By comparing the values of $\omega$ and $\omega'$, we can obtain
\begin{align*}
\Pro\!\left(\!\Em_h\!>\!\frac{t}{2}\!\right) \leq&   \Pro\!\left(\frac{1}{n}\sum_{j=1}^n\Em_{h1}^j\!>\!\frac{t}{6}\right)\!+\!\Pro\! \left(\frac{1}{n}\sum_{j=1}^n\Em_{h2}^j\!>\!\frac{t}{6}\!\right) \!+\!\Pro\left(\frac{1}{n}\sum_{j=1}^n\Em_{h3}^j\!> \!\frac{t}{6}\right)\\
\leq & 3\exp\left(-c_2'n\min\left(\frac{t^2}{ l^2\max\left(\omega \tau^2,\omega \tau^4, \omega_q\tau^2\right)},
\frac{t}{\sqrt{\omega} l\max\left(\tau,\tau^2\right)}\right)\right),
\end{align*}
where $\omega_{q}=r^{2(l-2)}\left(\max_j(\dm_j\dm_{j-1})\right)^2$.

{\bf\noindent{Step 2: Bound $\Pro\!\left(\!\Em_g\!>\!\frac{t}{2}\!\right)$}}
To achieve our goal, for each input sample $\xmi{i}$, we also divide its corresponding $\sum_{t,s}(\Gm_{ts}-\EE \Gm_{ts})$ as $\Em_{h1}$, $\Em_{h2}$ and $\Em_{h3}$. Then we bound the three events separately. Before that, we first give several equalities.

{\bf\noindent{I. Divide $\Gm_{ts}-\EE \Gm_{ts}$}}:
Dividing $\Gm_{ts}-\EE \Gm_{ts}$ is more easy than dividing $\Hm_{ts}-\EE \Hm_{ts}$ since the later has more complex form.  Since $\Gm_{ts}=\left(\Bmi{s-1:1}\xm \xm^T \Bmi{t-1:1}^T\right)\otimes\left(\Bmi{l:s+1}^T \Bmi{l:t+1}\right)$. we also can write the $(i,k)$-th entry $\Gm_{ts}^{ik}$ as the form $\Gm_{ts}^{ik}=\sum_{p,q} z_{pq}^{ik}\xm_p \xm_q+\sum_{p} y_{p}^{ik} \xm_p+r^{ik}$. But here $y_{p}^{ik}=0$.

Then similar to the step in dividing $\Hm_{ts}-\EE \Hm_{ts}$, we can bound
\begin{equation*}\label{gradsdfa78ient14211}
\begin{split}
a_{pq}^2 \leq \omega_g l^2\quad \text{and}\quad \sum_{p,q} a_{pq}^2 \leq \omega_g l^2\quad \text{where}\ \omega_g=r^{4(l-1)} \left(\max_j(\dm_j\dm_{j-1})\right)^2.
\end{split}
\end{equation*}

{\bf\noindent{II. Bound  $\Pro(\Em_{g1}>t/6)$, $\Pro(\Em_{g2}>t/6)$ and $\Pro(\Em_{g3}>t/6)$}}: Since $y_{p}^{ik}=0$, $\Pro(\Em_{h3}>t/6)=0$.
Similar to the above methods, we can bound
\begin{equation*}\label{gradient14211}
\begin{split}
\Pro\left(\frac{1}{n}\sum_{j=1}^n\Em_{g1}^j> \frac{t}{6}\right)\leq  &\exp\left(-c_1'n\left(\frac{t^2}{\omega_g l^2\tau^2},
\frac{t}{\sqrt{\omega_g} l\tau}\right)\right),
\end{split}
\end{equation*}
and
\begin{equation*}\label{gradient14211}
\begin{split}
\Pro\left(\frac{1}{n}\sum_{j=1}^n\Em_{g2}^j> \frac{t}{6}\right)
\leq \exp\left(-c_1''n\left(\frac{t^2}{\omega_g l^2\tau^4},
\frac{t}{\sqrt{\omega_g} l\tau^2}\right)\right).
\end{split}
\end{equation*}

{\bf\noindent{III: Bound $\Pro\!\left(\!\Em_h\!>\!\frac{t}{2}\!\right)$}}
We can obtain $\Pro\!\left(\!\Em_g\!>\!\frac{t}{2}\!\right)$ as follows:
\begin{align*}
\Pro\!\left(\!\Em_g\!>\!\frac{t}{2}\!\right) \leq& \Pro\!\left(\frac{1}{n}\sum_{j=1}^n\Em_{g1}^j\!>\!\frac{t}{6}\right)\!+\!\Pro\! \left(\frac{1}{n}\sum_{j=1}^n\Em_{g2}^j\!>\!\frac{t}{6}\!\right) \!+\!\Pro\left(\frac{1}{n}\sum_{j=1}^n\Em_{g3}^j\!> \!\frac{t}{6}\right)\\
\leq & 2\exp\left(-c_2'n\min\left(\frac{t^2}{\omega_g l^2\max\left(\tau^2,\tau^4\right)},
\frac{t}{\sqrt{\omega_g} l\max\left(\tau,\tau^2\right)}\right)\right).
\end{align*}

{\bf\noindent{Step 3: Bound $\Pro\!\left(\!\Em\!>\!t\!\right)$}} %Before giving our final results, we first compare  $\left(\frac{r^2}{l-1}\right)^{2l-2}$ and $\left(\frac{r^2}{l-2}\right)^{l-2} \left(\frac{r^2}{l}\right)^{l}$. We define function $h(l)=\left(\frac{r^2}{l-1}\right)^{2l-2}\left(\frac{l-2}{r^2}\right)^{l-2} \left(\frac{l}{r^2}\right)^{l}$. Then we can know $\ln(h(l))>0$ by computation, which means $\omega_h=\left(\max_j(\dm_j\dm_{j-1})\right)^2 \left(\frac{r^2}{l-1}\right)^{2l-2}$.
%Since $ \left(r^2/(l-2)\right)^{l-2}$, $ \left(r^2/(l-1)\right)^{l-2}$ and $\left(r^2/l\right)^{l}$ have the same order,
Finally, we combine the above results and obtain
\begin{equation*}\label{gradient14211}
\begin{split}
\Pro\left(\Em>t\right) \leq &    \Pro\left(\Em_{h}\!>\!\frac{t}{2}\!\right)\!+\! \Pro\left(\!\Em_{g}\!>\!\frac{t}{2}\!\right)\\
\leq & 5\exp\left(-c_{h'}n\min\left(\frac{t^2}{ \tau^2l^2\max\left(\omega_g ,\omega_g \tau^2, \omega_h \right)},
\frac{t}{\sqrt{\omega_g} l\max\left(\tau,\tau^2\right)}\right)\right),
\end{split}
\end{equation*}
where $\omega_g=\left(\max_j(\dm_j\dm_{j-1})\right)^2 r^{4(l-1)}$ and $\omega_h=\left(\max_j(\dm_j\dm_{j-1})\right)^2 r^{2(l-2)}$.
\end{proof}

\subsubsection{Proof of Lemma~\ref{thmgradisagf5675ent12}}
\begin{proof}[\hypertarget{lemmabound}{Proof}]
Before proving our conclusion, we first give an inequality:
\begin{equation*}\label{gradient1}
\begin{split}
\left\|\emm\right\|_2^2
=\left\|\Bmi{l:1}\xm-\ym\right\|_2^2
\leq \left\|\Bmi{l:1}\xm\right\|_2^2+2\left| \ym^T\Bmi{l:1}\xm\right| +\left\|\ym\right\|_2^2
\overset{\text{\ding{172}}}{\leq}   r_x^2 \omega_f^2+2r_x\omega_f\|\ym\|_2 +\left\|\ym\right\|_2^2,
\end{split}
\end{equation*}
where $\omega_{f}=r^{l}$. Notice, \ding{172} holds since by Lemma~\ref{asfasasfshdfhfdet},  we have $\left\|\Bmi{l:1}\right\|_F^2 \leq r^{2l}$.

Then we consider $\nabla_{\wm}f(\wm,\xm)$. Firstly, by Lemma~\ref{uh2397} we can bound $\|\nabla_{\wmi{j}}f(\wm,\xm)\|_2^2$ as follows:
\begin{equation*}\label{gradient1}
\begin{split}
\|\nabla_{\wmi{j}}f(\wm,\xm)\|_2^2=&\left\|\left((\Bmi{j-1:1}\xm) \otimes\Bmi{l:j+1}^T\right) \emm\right\|_2^2\leq \left\|\Bmi{j-1:1}\right\|_2^2\left\|\xm\right\|_2^2\left\| \Bmi{l:j+1}\right\|_2^2  \left\|\emm\right\|_2^2\\
\overset{\text{\ding{172}}}{\leq}& r_x^2\omega_{f_1}^2\left(r_x^2 \omega_f^2+2r_x\omega_f\|\ym\|_2 +\left\|\ym\right\|_2^2\right),
\end{split}
\end{equation*}
where $\omega_{f_1}=r^{(l-1)}$. \ding{172} holds since we have $\left\|\Bmi{l:j+1}\right\|_F^2\|\Bmi{j-1:1}\|_F^2 \leq r^{2(l-1)}$  by using $\|\Wmi{i}\|_F^2=\|\wmi{i}\|_2^2\leq r^2$.
Therefore, we can further obtain
\begin{equation*}\label{gradient1}
\begin{split}
\|\nabla_{\wm}f(\wm,\xm)\|_2^2=\sum_{i=1}^{l} \|\nabla_{\wmi{i}}f(\wm,\xm)\|_2^2 \leq  lr_x^2\omega_{f_1}^2\left(r_x^2 \omega_f^2+2r_x\omega_f\|\ym\|_2 +\left\|\ym\right\|_2^2\right).%\triangleq \alpha_g.
\end{split}
\end{equation*}
Notice, $\ym$ is the label of sample and the weight magnitude $r$ is usually lager than 1. Then we have $\|\ym\|_2\leq r^l$. Also, the values in input data are usually smaller than $r^l$. Thus, we have
\begin{equation*}\label{gradient1}
\begin{split}
\|\nabla_{\wm}f(\wm,\xm)\|_2^2 \leq  c_{t}lr_x^4 r^{4l-2}\triangleq \alpha_g,
\end{split}
\end{equation*}
where $c_t$ is a constant. Then we use the inequality $\left\|\nabla^2 f(\wm,\x)\right\|_{\op}\leq \left\|\nabla^2 f(\wm,\x)\right\|_{F}$ to bound $\left\|\nabla^2 f(\wm,\x)\right\|_{\op}$. Next we only need to give the upper bound of $\left\|\nabla^2 f(\wm,\x)\right\|_{F}$. Let $\omega_{f_2}=r^{l-2}$. We first consider $\Qm_{st}\triangleq\nabla_{\wmi{s}}\left(\nabla_{\wmi{t}}f(\wm,\xm)\right)$.
By Lemma~\ref{uh2397}, if $s<t$, we have
\begin{equation*}\label{gradient1}
\begin{split}
\left\|\Qm_{st}\right\|_F^2= & \left\|\left(\Bmi{t-1:s+1}^T\right)\!\otimes\!\left(\Bmi{s-1:1}\xm\emm^T \Bmi{l:t+1}^T\right)\!+\!\left(\Bmi{s-1:1}\xm \xm^T \Bmi{t-1:1}^T\right)\!\otimes\!\left(\Bmi{l:s+1}^T \Bmi{l:t+1}\right)\right\|_F^2\\
\leq  & 2\left( \left\|\left(\Bmi{t-1:s+1}^T\right)\!\otimes\!\left(\Bmi{s-1:1}\xm\emm^T \Bmi{l:t+1}^T\right)\right\|_F^2\!+\!\left\|\left(\Bmi{s-1:1}\xm \xm^T \Bmi{t-1:1}^T\right)\!\otimes\!\left(\Bmi{l:s+1}^T \Bmi{l:t+1}\right)\right\|_F^2\right)\\
\leq  & 2\left\|\Bmi{t-1:s+1}\right\|_F^2 \left\|\Bmi{s-1:1}\right\|_F^2\left\|\xm\right\|_2^2\left\|\emm\right\|_2^2  \left\|\Bmi{l:t+1}\right\|_F^2\\
&\qquad +2\left\|\Bmi{s-1:1}\right\|_F^2\left\|\xm\right\|_2^2 \left\|\xm\right\|_2^2  \left\|\Bmi{t-1:1}\right\|_F^2\left\|\Bmi{l:s+1}\right\|_F^2\left\|\Bmi{l:t+1}\right\|_F^2\\
\led{172}  & 2\omega_{f_2}^2 r_x^2\left( r_x^2 \omega_f^2+r_x\omega_f\|\ym\|_2 +\left\|\ym\right\|_2^2\right)+2\omega_{f_1}^4 r_x^4,
\end{split}
\end{equation*}
where \ding{172} holds since we use $\left\|\Bmi{l:t+1}\right\|_F^2\left\|\Bmi{t-1:s+1}\right\|_F^2 \left\|\Bmi{s-1:1}\right\|_F^2\leq \omega_{f_2}^2$ and $\left\|\Bmi{s-1:1}\right\|_F^2 \left\|\Bmi{l:s+1}\right\|_F^2\\
\leq \omega_{f_1}^2$. Note that when $s\geq t$, the above inequality also holds. Similarly, consider the values in input data and the values in label, we have
\begin{equation*}\label{gradient1}
\begin{split}
\left\|\Qm_{st}\right\|_F^2 \leq c_{t'}r_x^4 r^{4l-2}\triangleq\alpha_l,
\end{split}
\end{equation*}
where $c_{t'}$ is a constant. Therefore, we can bound
\begin{equation*}\label{gradient1}
\left\|\nabla^2 f(\wm,\x)\right\|_{\op}\leq \left\|\nabla^2 f(\wm,\x)\right\|_{F}\leq \sqrt{\sum_{s=1}^{l}\sum_{t=1}^{l}\|\Qm_{st}\|_F^2}\leq l\sqrt{\alpha_l}.
\end{equation*}

On the other hand, if the activation functions are linear functions, $f(\wm,\xm)$ is  fourth order differentiable when $l\geq 2$. This means that $\nabla_{\xm}\nabla^3_{\wm}f(\wm,\xm)$ exists. Also since for any input $\xm\in\Bi{\dm_0}{r_x}$ and $\wm\in\Omega$, we can always find a universal constant $\alpha_p$ such that
 $$\|\nabla^3_{\wm}f(\wm,\xm)\|_{\op} = \sup_{\|\lam\|_2\leq 1}\lr\lam^{\otimes^3}, \nabla^3_{\wm}f(\wm,\xm)\rl=\sum_{i,j,k}[\nabla^3_{\wm}f(\wm,\xm)]_{ijk} \lam_i\lam_j\lam_k\leq \alpha_p<+\infty.$$
 We complete the proofs.
\end{proof}

\subsubsection{Proof of Lemma~\ref{thm:uniformconvergence321}}
\begin{proof}[\hypertarget{lemmaunihessian}{Proof}]
Recall that the weight of each layer has magnitude bound separately, \textit{i.e.} $\|\wmi{j}\|_2\leq r$. So here we separately assume $\wm_{\epsilon}^j=\{\wm_1^j,\cdots,\wm_{\nee^j}^j\}$ is the $\epsilon/l$-covering net of the ball $\Bi{\dm_j\dm_{j-1}}{r}$ which corresponds to the weight $\wmi{j}$ of the $j$-th layer. Let $\nee^j$ be the $\epsilon/l$-covering number. By $\epsilon$-covering theory in \cite{VRMT}, we can have $\nee^j\leq (3rl/\epsilon)^{\dm_j\dm_{j-1}}$. Let $\wm\in\Omega$ be an arbitrary vector. Since $\wm=[\wmi{1},\cdots,\wmi{l}]$ where $\wmi{j}$ is the weight of the $j$-th layer, we can always find a vector $\wm^j_{k_j}$ in $\wm_\epsilon^j$ such that $\|\wmi{j}-\wm^j_{k_j}\|_2\leq \epsilon/l$. For brevity, let $j_w\in[\nee^j]$  denote the index of $\wm^j_{k_j}$ in $\epsilon$-net $\wm_\epsilon^j$. Then let $\wm_{\kt}=[\wm^j_{k_1};\cdots;\wm^j_{k_j};\cdots;\wm^j_{k_l}]$. This means that we can always find a vector $\wm_{\kt}$ such that $\|\wm-\wm_{\kt}\|_2\leq \epsilon$. Now we use the decomposition strategy to bound our goal:
\begin{equation*}
\begin{aligned}
&\lv\nabla^2 \Jhn(\wm)-\nabla^2\Jm(\wm)\rv_{\op}\\
=& \lv \frac{1}{n}\sum_{i=1}^n \nabla^2 f(\wm,\xmi{i})-\EE(\nabla^2 f(\wm,\xm))\rv_{\op}\\
=&\Bigg\| \frac{1}{n}\sum_{i=1}^n \left(\nabla^2 f(\wm,\xmi{i})-\nabla f(\wm_{\kt},\xmi{i})\right)+\frac{1}{n}\sum_{i=1}^n \nabla^2 f(\wm_{\kt},\xmi{i})-\EE(\nabla^2 f(\wm_{\kt},\xm)) \\
& \ \ +\EE(\nabla^2 f(\wm_{\kt},\xm))-\EE(\nabla^2 f(\wm,\xm))\Bigg\|_{\op}\\
\leq &\lv \frac{1}{n}\sum_{i=1}^n \left(\nabla^2 f(\wm,\xmi{i})-\nabla^2 f(\wm_{\kt},\xmi{i})\right)\rv_{\op}+\lv\frac{1}{n}\sum_{i=1}^n \nabla^2 f(\wm_{\kt},\xmi{i})-\EE(\nabla^2 f(\wm_{\kt},\xm))\rv_{\op}\\
&\ \ +\Bigg\|\EE(\nabla^2 f(\wm_{\kt},\xm))-\EE(\nabla^2 f(\wm,\xm))\Bigg\|_{\op}.
\end{aligned}
\end{equation*}
Here we also define four events $\Em_0$, $\Em_1$, $\Em_2$ and $\Em_3$ as
\begin{equation*}
\begin{aligned}
&\Em_0=\left\{\sup_{\wm\in\Omega}\lv\nabla^2\Jhn(\wm)-\nabla^2\Jm(\wm)\rv_{\op}\geq t\right\},\\
&\Em_1=\left\{\sup_{\wm\in\Omega}\lv \frac{1}{n}\sum_{i=1}^n \left(\nabla^2 f(\wm,\xmi{i})-\nabla^2 f(\wm_{\kt},\xmi{i})\right)\rv_{\op}\geq \frac{t}{3}\right\},\\
&\Em_2=\left\{\sup_{j_w\in[\nee^j], j=[l]}\lv\frac{1}{n}\sum_{i=1}^n \nabla^2 f(\wm_{\kt},\xmi{i})-\EE(\nabla^2 f(\wm_{\kt},\xm))\rv_{\op}\geq \frac{t}{3}\right\},\\
&\Em_3=\left\{\sup_{\wm\in\Omega}\lv\EE(\nabla^2 f(\wm_{\kt},\xm))-\EE(\nabla^2 f(\wm,\xm))\rv_{\op}\geq \frac{t}{3}\right\}.
\end{aligned}
\end{equation*}

Accordingly, we have
\begin{equation*}
\begin{aligned}
\Pro\left(\Em_0\right)\leq \Pro\left(\Em_1\right)+\Pro\left(\Em_2\right)+\Pro\left(\Em_3\right).
\end{aligned}
\end{equation*}
So we can respectively bound $\Pro\left(\Em_1\right)$, $\Pro\left(\Em_2\right)$ and $\Pro\left(\Em_3\right)$ to bound $\Pro\left(\Em_0\right)$.

{\bf\noindent{Step 1. Bound $\Pro\left(\Em_1\right)$}}: We first bound $\Pro\left(\Em_1\right)$ as follows:
\begin{equation*}
\begin{aligned}
\Pro\left( \Em_1 \right) = & \Pro \left( \sup_{\wm \in \Omega} \lv \frac{1}{n}\sum_{i=1}^n \left(\nabla^2 f(\wm,\xmi{i})-\nabla^2 f(\wm_{\kt},\xmi{i})\right)\rv_2\geq \frac{t}{3} \right) \\
\overset{\text{\ding{172}}}{\leq} &\frac{3}{t}\EE \left(  \sup_{\wm \in \Omega}\lv \frac{1}{n}\sum_{i=1}^n\left(\nabla^2 f(\wm,\xmi{i})-\nabla^2 f(\wm_{\kt},\xmi{i})\right)\rv_2\right) \\
\leq &\frac{3}{t}\EE \left(  \sup_{\wm \in \Omega}\lv \nabla^2 f(\wm,\xm)-\nabla^2 f(\wm_{\kt},\xm)\rv_2\right) \\
\leq &\frac{3}{t}\EE \left(  \sup_{\wm \in \Omega}\frac{\lvs \frac{1}{n}\sum_{i=1}^n\left( \nabla^2 f(\wm,\xmi{i})- \nabla^2 f(\wm_{\kt},\xmi{i})\right)\rvs}{\lv\wm-\wm_{\kt}\rv_2} \sup_{\wm\in\Omega} \lv\wm-\wm_{\kt}\rv_2\right) \\
\overset{\text{\ding{173}}}{\leq} &\frac{3\alpha_p\epsilon}{t},
\end{aligned}
\end{equation*}
where \ding{172} holds since by Markov inequality and \ding{173} holds because of Lemma~\ref{thmgradisagf5675ent12}.

Therefore, we can set
\begin{equation*}
t \geq \frac{6\alpha_p \epsilon}{\varepsilon}.
\end{equation*}
Then we can bound $\Pro(\Em_1)$:
\begin{equation*}
\Pro(\Em_1)\leq \frac{\varepsilon}{2}.
\end{equation*}

{\bf\noindent{Step 2. Bound $\Pro\left(\Em_2\right)$}}: By Lemma~\ref{lem:opnorm}, we know that for any matrix $\Xm\in\Rss{d}{d}$, its operator norm can be computed as
\begin{equation*}
\|\Xm\|_{\op} \leq \frac{1}{1-2\epsilon} \sup_{\lam \in \lam_\epsilon} \left|\lr \lam,\Xm\lam\rl\right|.
\end{equation*}
where $\lam_\epsilon=\{\lam_1,\dots,\lam_{\kt}\}$ be an $\epsilon$-covering net of $\Bi{d}{1}$.

Let $\lam_{1/4}$ be the $\frac{1}{4}$-covering net of $\Bi{d}{1}$.  Recall that we use $j_w$ to denote the index of $\wm^j_{k_j}$ in $\epsilon$-net $\wm_\epsilon^j$ and we have $j_w\in[\nee^j],\ (\nee^j\leq (3rl/\epsilon)^{\dm_j\dm_{j-1}})$. Then we can bound $\Pro\left(\Em_2\right)$ as follows:
\begin{equation*}
\begin{aligned}
\Pro\left( \Em_2 \right) = & \Pro \left( \sup_{j_w \in [n_\epsilon^j]\,j\in[l]} \lv\frac{1}{n}\sum_{i=1}^n \nabla^2 f(\wm_{\kt},\xmi{i})-\EE(\nabla^2 f(\wm_{\kt},\xm))\rv_2\geq \frac{t}{3} \right) \\
\leq & \Pro \left( \sup_{j_w \in [n_\epsilon^j]\,j\in[l], \lam\in\lam_{1/4}} 2\left|\lr\lam, \left(\frac{1}{n}\sum_{i=1}^n \nabla^2 f(\wm_{\kt},\xmi{i})-\EE\left(\nabla^2 f(\wm_{\kt},\xm)\right)\right)\lam\rl\right|\geq \frac{t}{3} \right) \\
\leq & 12^d\left(\frac{3lr}{\epsilon}\right)^{\sum_j\dm_j\dm_{j-1}} \!\!\!\!\!\!\!\!\!\! \!\!\!\!\!\sup_{j_w \in [n_\epsilon^j]\,j\in[l], \lam\in\lam_{1/4}} \!\!\!\! \Pro\! \left(\left| \frac{1}{n} \sum_{i=1}^n\!\!\lr\lam, \left(\nabla^2 f(\wm_{\kt},\xmi{i})\!-\!\EE\left(\nabla^2 f(\wm_{\kt},\xm)\right)\right)\lam\rl\right| \!\geq\! \frac{t}{6} \right)\\
\led{172} & 12^d\left(\frac{3lr}{\epsilon}\right)^d 10\exp\left(-c_{h'}n\min\left(\frac{t^2}{36\tau^2l^2\max\left(\omega_g ,\omega_g \tau^2, \omega_h \right)},
\frac{t}{6\sqrt{\omega_g} l\max\left(\tau,\tau^2\right)}\right)\right),
\end{aligned}
\end{equation*}
where \ding{172} holds since by Lemma~\ref{thm: hessian}, we have
\begin{equation*}\label{gradient14211}
\begin{split}
&\Pro\left(\left|\frac{1}{n}\sum_{i=1}^n \left(\lr \lam,(\nabla_{\wm}^2f(\wm,\xm)-\EE \nabla_{\wm}^2f(\wm,\xm))\lam\rl\right)\right|>t\right)\\
&  \qquad\quad\qquad \qquad\qquad\leq 10\exp\left(-c_{h'}n\min\left(\frac{t^2}{ \tau^2l^2\max\left(\omega_g ,\omega_g \tau^2, \omega_h \right)},
\frac{t}{\sqrt{\omega_g} l\max\left(\tau,\tau^2\right)}\right)\right),
\end{split}
\end{equation*}
where $\omega_g=\left(\max_j(\dm_j\dm_{j-1})\right)^2 r^{4(l-1)}$ and $\omega_h=\left(\max_j(\dm_j\dm_{j-1})\right)^2 r^{2(l-2)}$.

Let $d_\epsilon=d\log(36lr/\epsilon)\!+ \!\log(20/\varepsilon)$. Thus, if we set
\begin{equation*}
\begin{aligned}
t\geq \max\left(\sqrt{\frac{36\tau^2l^2\max\left(\omega_g ,\omega_g \tau^2, \omega_h \right)d_\epsilon}{c_{h'}n}},  \frac{6\sqrt{\omega_g} l\max\!\left(\tau,\tau^2\right)d_\epsilon}{c_{h'}n}\!\right)\!,
\end{aligned}
\end{equation*}
then we have
\begin{equation*}
\Pro\left(\Em_2\right)\leq \frac{\varepsilon}{2}.
\end{equation*}

{\bf\noindent{Step 3. Bound $\Pro\left(\Em_3\right)$}}:
We first bound $\Pro\left(\Em_3\right)$ as follows:
\begin{equation*}
\begin{aligned}
\Pro\left( \Em_3 \right) = & \Pro \left( \sup_{\wm\in\Omega} \lv\EE(\nabla^2 f(\wm_{\kt},\xm))-\EE(\nabla^2 f(\wm,\xm))\rv_2 \geq \frac{t}{3} \right) \\
\leq & \Pro \left(\EE \sup_{\wm\in\Omega} \lv(\nabla^2 f(\wm_{\kt},\xm)- \nabla^2 f(\wm,\xm)\rv_2 \geq \frac{t}{3} \right) \\
\leq & \Pro \left(  \sup_{\wm \in \Omega}\frac{\lvs \frac{1}{n}\sum_{i=1}^n\left( \nabla^2 f(\wm,\xmi{i})- \nabla^2 f(\wm_{\kt},\xmi{i})\right)\rvs}{\lv\wm-\wm_{\kt}\rv_2} \sup_{\wm\in\Omega} \lv\wm-\wm_{\kt}\rv_2  \geq \frac{t}{3}\right) \\
\overset{\text{\ding{172}}}{\leq}&  \Pro\left(\alpha_p\epsilon  \geq \frac{t}{3}\right),
\end{aligned}
\end{equation*}
where \ding{172} holds because of Lemma~\ref{thmgradisagf5675ent12}. We set $\epsilon$ enough small such that $\alpha_p  \epsilon  < t/3$ always holds. Then it yields $\Pro\left( \Em_3 \right)=0$.

{\bf\noindent{Step 4. Final result}}: For brevity, let $\omega_2=36\tau^2l^2\max\left(\omega_g ,\omega_g \tau^2, \omega_h \right)$ and $\omega_3=6\sqrt{\omega_g} l\max\!\left(\tau,\tau^2\right)$. To ensure $\Pro(\Em_0)\leq \varepsilon$, we just set $\epsilon=36r/n$ and
\begin{equation*}
\begin{split}
t&\geq \max\left(\frac{6\alpha_p \epsilon}{\varepsilon},\ 3\alpha_p\epsilon,\ \sqrt{\frac{\omega_2 (d\log(36lr/\epsilon)\!+ \!\log(20/\varepsilon))}{c_{h'}n}},  \frac{\omega_3(d\log(36lr/\epsilon)\!+\!\log(20/\varepsilon))}{c_{h'}n}\right)\\
&=\max\left(\frac{216\alpha_pr}{ n\varepsilon},\ \sqrt{\frac{\omega_2 (d\log(nl)\!+ \!\log(20/\varepsilon))}{c_{h'}n}},  \frac{\omega_3(d\log(nl)\!+\!\log(20/\varepsilon))}{c_{h'}n}\right).
\end{split}
\end{equation*}
%By comparing the values of $\alpha_g$ and $\omega_{f}$, we have if $n\geq c_{g_0}r_x^2\left(\frac{r}{\sqrt{l}}\right)^{l}$, then
Thus, if $n\geq c_{h''}\max(\frac{\alpha_p^2r^2}{\tau^2 l^2\omega_h^2\varepsilon^2(\max_j(\dm_j\dm_{j-1}))^2d\log(l)}, d\log(l))$ where $c_{h''}$ is a constant,  there exists a universal constant $c_h$ such that
\begin{equation*}
\sup_{\wm\in\Omega}\left\| \nabla^2\Jhn(\wm)\!-\!\nabla^2\Jm(\wm)\right\|_{\op}\!\leq\! c_h \tau l \omega_h\max_j(\dm_j\dm_{j-1})  \sqrt{\!\frac{d\log(nl)\!+\!\log(20/\varepsilon)}{n}}
\end{equation*}
holds with probability at least $1-\varepsilon$, where $\omega_h=\max\!\left(\tau r^{2(l-1)},r^{2(l-2)},r^{l-2}\right)$.
The proof is completed.
\end{proof}

\subsection{Proofs of Main Theorems}\label{mainproof}
\subsubsection{Proof of Theorem~\ref{thm:stability}}
\begin{proof}[\hypertarget{lemmauniloss}{Proof}]
Recall that  the weight of each layer has magnitude bound separately, \textit{i.e.} $\|\wmi{j}\|_2\leq r$. So here we separately assume $\wm_{\epsilon}^j=\{\wm_1^j,\cdots,\wm_{\nee^j}^j\}$ is the $\epsilon/l$-covering net of the ball $\Bi{\dm_j\dm_{j-1}}{r}$ which corresponds to the weight $\wmi{j}$ of the $j$-th layer. Let $\nee^j$ be the $\epsilon/l$-covering number. By $\epsilon$-covering theory in \cite{VRMT}, we can have $\nee^j\leq (3rl/\epsilon)^{\dm_j\dm_{j-1}}$. Let $\wm\in\Omega$ be an arbitrary vector. Since $\wm=[\wmi{1},\cdots,\wmi{l}]$ where $\wmi{j}$ is the weight of the $j$-th layer, we can always find a vector $\wm^j_{k_j}$ in $\wm_\epsilon^j$ such that $\|\wmi{j}-\wm^j_{k_j}\|_2\leq \epsilon/l$. For brevity, let $j_w\in[\nee^j]$  denote the index of $\wm^j_{k_j}$ in $\epsilon$-net $\wm_\epsilon^j$. Then let $\wm_{\kt}=[\wm^j_{k_1};\cdots;\wm^j_{k_j};\cdots;\wm^j_{k_l}]$. This means that we can always find a vector $\wm_{\kt}$ such that $\|\wm-\wm_{\kt}\|_2\leq \epsilon$. Now we use the decomposition strategy to bound our goal:
%Suppose that $\wm_{\epsilon}=\{\wm_1,\cdots,\wm_{\nee}\}$ is the $\epsilon$-covering net of the ball $\Bi{d}{r}$, where $\nee$ is the $\epsilon$-covering number. By $\epsilon$-covering theory in \cite{VRMT}, we can have $\nee\leq (3r/\epsilon)^d$. Let $\wm$ be an arbitrary vector in $\Bi{d}{r}$. Then we can always find a $\wm_{\kt}$ in $\wm_{\epsilon}$ such that $\|\wm-\wm_{\kt}\|_2\leq \epsilon$. Therefore, we can obtain
\begin{equation*}
\begin{aligned}
&\lvs\Jhn(\wm)-\Jm(\wm)\rvs\! =\! \lvs \frac{1}{n}\sum_{i=1}^n f(\wm,\xmi{i})-\EE(f(\wm,\xm))\rvs \\
=&\Bigg| \frac{1}{n}\!\sum_{i=1}^n\!\! \left(f(\wm,\xmi{i})\!-\!f(\wm_{\kt},\xmi{i})\right)\!+\!\frac{1}{n}\! \sum_{i=1}^n \!\! f(\wm_{\kt},\xmi{i})\!-\!\EE f(\wm_{\kt},\xm)\!+\! \EE f(\wm_{\kt},\xm)\!-\!\EE f(\wm,\xm)\Bigg|\\
\leq &\!\lvs\frac{1}{n}\!\sum_{i=1}^n \!\! \left(f(\wm,\xmi{i})\!-\!f(\wm_{\kt},\xmi{i})\right)\rvs\!+ \!\lvs\frac{1}{n}\!\!\sum_{i=1}^n \!\! f(\wm_{\kt},\xmi{i})\!-\!\EE f(\wm_{\kt},\xm)\rvs\!+\! \Bigg|\EE f(\wm_{\kt},\xm)\!-\!\EE f(\wm,\xm)\Bigg|.
\end{aligned}
\end{equation*}
Then, we define four events $\Em_0$, $\Em_1$, $\Em_2$ and $\Em_3$ as
%\begin{equation*}
%\begin{aligned}
%&\Em_0=\left\{\lvs\Jhn(\wm)-\Jm(\wm)\rvs\geq t\right\},\quad\qquad \qquad \ \Em_1=\left\{\lvs \frac{1}{n}\sum_{i=1}^n \left(f(\wm,\xmi{i})-f(\wm_{\kt},\xmi{i})\right)\rvs\geq \frac{t}{3}\right\},\\
%&\Em_2=\left\{\lvs\frac{1}{n}\!\sum_{i=1}^n\!\! f(\wm_{\kt},\xmi{i})\!-\!\EE(f(\wm_{\kt},\xm))\rvs\!\geq \! \frac{t}{3}\right\},\ \ \Em_3=\left\{\Bigg|\EE(f(\wm_{\kt},\xm))\!-\!\EE(f(\wm,\xm))\Bigg|\!\geq \! \frac{t}{3}\right\}.
%\end{aligned}
%\end{equation*}
\begin{equation*}
\begin{aligned}
&\Em_0=\left\{\sup_{\wm\in\Omega}\lvs\Jhn(\wm)-\Jm(\wm)\rvs\geq t\right\},\\
& \Em_1=\left\{\sup_{\wm\in\Omega}\lvs \frac{1}{n}\sum_{i=1}^n \left(f(\wm,\xmi{i})-f(\wm_{\kt},\xmi{i})\right)\rvs\geq \frac{t}{3}\right\},\\
&\Em_2=\left\{\sup_{j_w\in[\nee^j], j=[l]}\lvs\frac{1}{n}\!\sum_{i=1}^n\!\! f(\wm_{\kt},\xmi{i})\!-\!\EE(f(\wm_{\kt},\xm))\rvs\!\geq \! \frac{t}{3}\right\},\\ &\Em_3=\left\{\sup_{\wm\in\Omega}\Bigg|\EE(f(\wm_{\kt},\xm))\!-\!\EE(f(\wm,\xm))\Bigg|\!\geq \! \frac{t}{3}\right\}.
\end{aligned}
\end{equation*}
Accordingly, we have
\begin{equation*}
\begin{aligned}
\Pro\left(\Em_0\right)\leq \Pro\left(\Em_1\right)+\Pro\left(\Em_2\right)+\Pro\left(\Em_3\right).
\end{aligned}
\end{equation*}
So we can respectively bound $\Pro\left(\Em_1\right)$, $\Pro\left(\Em_2\right)$ and $\Pro\left(\Em_3\right)$ to bound $\Pro\left(\Em_0\right)$.

{\bf\noindent{Step 1. Bound $\Pro\left(\Em_1\right)$}}: We first bound $\Pro\left(\Em_1\right)$ as follows:
\begin{equation*}
\begin{aligned}
\Pro\left( \Em_1 \right) = & \Pro \left( \sup_{\wm \in \Omega} \lvs \frac{1}{n}\sum_{i=1}^n \left( f(\wm,\xmi{i})- f(\wm_{\kt},\xmi{i})\right)\rvs\geq \frac{t}{3} \right) \\
\overset{\text{\ding{172}}}{\leq} &\frac{3}{t}\EE \left(  \sup_{\wm \in \Omega}\lvs\frac{1}{n}\sum_{i=1}^n\left(f(\wm,\xmi{i})- f(\wm_{\kt},\xmi{i})\right)\rvs\right) \\
\leq &\frac{3}{t}\EE \left(  \sup_{\wm \in \Omega}\frac{\lvs \frac{1}{n}\sum_{i=1}^n\left( f(\wm,\xmi{i})- f(\wm_{\kt},\xmi{i})\right)\rvs}{\lv\wm-\wm_{\kt}\rv_2} \sup_{\wm\in\Omega} \lv\wm-\wm_{\kt}\rv_2\right) \\
\leq &\frac{3\epsilon}{t}\EE \left(  \sup_{\wm \in \Omega} \lv \nabla \Jhn(\wm,\xm) \rv_2\right),
\end{aligned}
\end{equation*}
where \ding{172} holds since by Markov inequality, we have that for an arbitrary nonnegative random variable $x$, then
\begin{equation*}
\begin{aligned}
\Pro(x\geq t)\leq \frac{\EE(x)}{t}.
\end{aligned}
\end{equation*}

Now we only need to bound $\EE \left(  \sup_{\wm \in \Omega} \lv \nabla \Jhn(\wm,\xm) \rv_2\right)$. Therefore, by Lemma~\ref{thmgradisagf5675ent12}, we have
\begin{equation*}
\begin{aligned}
\EE\! \left(\!\sup_{\wm \in \Omega} \lv \nabla \Jhn(\wm,\xm) \rv_2\!\right)\!=\!  \EE\! \left(\!\sup_{\wm \in \Omega}\! \lv\frac{1}{n}\sum_{i=1}^{n}\!\! \nabla f(\wm,\xmi{i})  \rv_2 \!\right)
\!=\! \EE\! \left(\!\sup_{\wm \in \Omega} \!\!\lv\nabla f(\wm,\xm)\rv_2 \right)
\!\leq \! \sqrt{\alpha_g}.
\end{aligned}
\end{equation*}
where $\alpha_g= c_tlr_x^4 r^{4l-2}.$ Therefore, we have
\begin{equation*}
\begin{aligned}
\Pro\left( \Em_1 \right) \leq \frac{3 \epsilon\sqrt{\alpha_g}}{t}.
\end{aligned}
\end{equation*}
 We further let
\begin{equation*}
t \geq \frac{6\epsilon\sqrt{\alpha_g}}{\varepsilon}.
\end{equation*}
Then we can bound $\Pro(\Em_1)$:
\begin{equation*}
\Pro(\Em_1)\leq \frac{\varepsilon}{2}.
\end{equation*}

{\bf\noindent{Step 2. Bound $\Pro\left(\Em_2\right)$}}: Recall that we use $j_w$ to denote the index of $\wm^j_{k_j}$ in $\epsilon$-net $\wm_\epsilon^j$ and we have $j_w\in[\nee^j],\ (\nee^j\leq (3rl/\epsilon)^{\dm_j\dm_{j-1}})$.  We can bound $\Pro\left(\Em_2\right)$ as follows:
\begin{equation*}
\begin{aligned}
\Pro\left( \Em_2 \right) = & \Pro \left( \sup_{j_w \in [n_\epsilon^j]\,j\in[l]} \lvs\frac{1}{n}\sum_{i=1}^n f(\wm_{\kt},\xmi{i})-\EE(f(\wm_{\kt},\xm))\rvs\geq \frac{t}{3} \right) \\
\leq & \left(\frac{3lr}{\epsilon}\right)^{\sum_j \dm_j\dm_{j-1}} \sup_{j_w \in [n_\epsilon^j]\,j\in[l]} \Pro \left(  \lvs\frac{1}{n}\sum_{i=1}^n f(\wm_{\kt},\xmi{i})-\EE(f(\wm_{\kt},\xm))\rvs\geq \frac{t}{3}\right) \\
\led{172} &  4 \left(\frac{3lr}{\epsilon}\right)^d\exp\left(-c_{f'}n\min \left(\frac{t^2}{9\omega_{f}^2 \max\left(\dm_l\omega_{f}^2\tau^4,\tau^2\right)}, \frac{t}{3\omega_{f}^2\tau^2}\right)\!\right),
\end{aligned}
\end{equation*}
where \ding{172} holds because in Lemma~\ref{thmsafobjeasdfctive}, we have
\begin{align*}
\Pro\!\left(\!\frac{1}{n}\!\sum_{i=1}^n\!\left( f(\wm,\xmi{i})\!-\!\EE(f(\wm,\xmi{i}))\right)\!>\!t\!\right)\leq 2\exp\!\left(\!-c_{f'}n\min\!\left(\!\frac{t^2}{\omega_{f}^2 \max\left(\dm_l\omega_{f}^2\tau^4,\tau^2\right)}, \frac{t}{\omega_{f}^2\tau^2}\!\right)\!\right),
\end{align*}
where $c_{f'}$ is a positive constant and $\omega_{f}=r^{l}$.
Thus, if we set
\begin{equation*}
\begin{aligned}
t\geq\max \left(\sqrt{\frac{9\omega_{f}^2(d\log(3rl/\epsilon)+\log(8/\varepsilon)) \max\left(\dm_l\omega_{f}^2\tau^4,\tau^2\right)}{c_{f'}n}}, \ \frac{3\omega_f^2\tau^2(d\log(3rl/\epsilon)+\log(8/\varepsilon))}{c_{f'}n}\right),
\end{aligned}
\end{equation*}
then we have
\begin{equation*}
\Pro\left(\Em_2\right)\leq \frac{\varepsilon}{2}.
\end{equation*}

{\bf\noindent{Step 3. Bound $\Pro\left(\Em_3\right)$}}:
We first bound $\Pro\left(\Em_3\right)$ as follows:
\begin{equation*}
\begin{aligned}
\Pro\left( \Em_3 \right) = & \Pro \left( \sup_{\wm\in\Omega} \lv\EE(f(\wm_{\kt},\xm))-\EE(f(\wm,\xm))\rv_2 \geq \frac{t}{3} \right) \\
=&  \Pro\left(\sup_{\wm\in\Omega} \frac{\lv \EE\left( f(\wm_{\kt},\xm)-f(\wm,\xm) \rv_2\right)}{\lv\wm-\wm_{\kt}\rv_2} \sup_{\wm\in\Omega} \lv\wm-\wm_{\kt}\rv_2 \geq \frac{t}{3}\right) \\
\leq&  \Pro\left(\epsilon\EE \sup_{\wm\in\Omega} \lv \nabla\Jm_{\wm}(\wm,\xm) \rv_2  \geq \frac{t}{3}\right) \\
\overset{\text{\ding{172}}}{\leq}&  \Pro\left(\sqrt{\alpha_g} \epsilon  \geq \frac{t}{3}\right),
\end{aligned}
\end{equation*}
where \ding{172} holds since we utilize Lemma~\ref{thmgradisagf5675ent12}. We set $\epsilon$ enough small such that $\sqrt{\alpha_g}\epsilon  < t/3$ always holds. Then it yields $\Pro\left( \Em_3 \right)=0$.

{\bf\noindent{Step 4. Final result}}: To ensure $\Pro(\Em_0)\leq \varepsilon$, we just set $\epsilon=3r/n$. Note that $\frac{6\sqrt{\alpha_g} \epsilon}{\varepsilon}> 3\sqrt{\alpha_g}\epsilon$. Thus we can obtain
\begin{equation*}
\begin{split}
t&\!\geq \!\max\!\left(\!\!\frac{6\sqrt{\alpha_g} \epsilon}{\varepsilon}, \sqrt{\!\!\frac{9\omega_{f}^2(d\log(3rl/\epsilon)\!+\!\log(8/\varepsilon)) \max\!\left(\!\dm_l\omega_{f}^2\tau^4,\tau^2\!\right)}{c_{f'}n}}, \frac{3\omega_f^2\tau^2(d\log(3rl/\epsilon)\!+\!\log(8/\varepsilon))}{c_{f'}n} \!\!\right)\\
&=\!\max\!\left(\!\!\frac{18\sqrt{\alpha_g} r}{n\varepsilon},%\frac{9\alpha_l r\varepsilon}{n},
\sqrt{\frac{9\omega_{f}^2(d\log(ln)\!+\!\log(8/\varepsilon)) \max\!\left(\!\dm_l\omega_{f}^2\tau^4,\tau^2\!\right)}{c_{f'}n}}, \frac{3\omega_f^2\tau^2(d\log(ln)\!+\!\log(8/\varepsilon))}{c_{f'}n} \!\!\right).
\end{split}
\end{equation*}
%By comparing $\alpha_g= \mathcal{O}(lr_x^4 r^{4l-2})$.
Note that we have $\alpha_g= c_tlr_x^4 r^{4l-2}$ where $c_t$ is a constant. Then if $n\geq c_{f''} \max(\frac{lr_x^4}{\dm_l d\varepsilon^2\tau^4\log(l)},$ $ d\log(l)/\dm_l)$ where $c_{f''}$ is a constant, there exists such a universal constant $c_f$ such that
\begin{equation*}
\sup_{\wm\in\Omega} \left\|  \Jhn(\wm)- \Jm(\wm)\right\|_2\leq c_f \omega_{f}\tau \max\left(\sqrt{\dm_l}\omega_{f}\tau,1\right) \sqrt{\frac{d\log(nl)+\log(8/\varepsilon)}{n}}
\end{equation*}
holds with probability at least $1-\varepsilon$, where $\omega_{f}=r^{l}$.
\end{proof}

\subsubsection{Proof of Corollary~\ref{stability and generalization}}
\begin{proof}[\hypertarget{cor1}{Proof}]
By Lemma~\ref{stab}, we know $\epsilon_s=\epsilon_g$. Thus, the remaining work is to bound $\epsilon_s$. Actually, we can have
\begin{equation*}
\begin{split}
\left|\EE_{\Ss\sim\D,\Am,(\xmi{1}',\cdots,\xmi{n}'\!)\sim\D}\frac{1}{n}\! \sum_{j=1}^{n}\!\!\left(\!f_j(\wms{j},\!\xmi{j}'\!)\!-\!\!f_j(\wmn,\xmi{j}'\!)\! \right)\!\right| \leq &\EE_{\Ss\sim\D}\left( \sup_{\wm\in\Omega} \left|  \Jhn(\wm)- \Jm(\wm)\right|\right)\\
\leq &\sup_{\wm\in\Omega} \left|  \Jhn(\wm)- \Jm(\wm)\right|\\
\leq& \epsilon_l.
\end{split}
\end{equation*}
Thus, we have $\epsilon_g=\epsilon_s\leq \epsilon_l$. The proof is completed.
\end{proof}

\subsubsection{ Proof of Theorem~\ref{thm:uniformconvergence1}}
\begin{proof}[\hypertarget{lemmaunigradient}{Proof}]
%We adopt similar strategy in proofs of Theorem~\ref{thm:stability}. Suppose that $\wm_{\epsilon}=\{\wm_1,\cdots,\wm_{\nee}\}$ is the $\epsilon$-covering net of the ball $\Bi{d}{r}$, where $\nee\leq (3r/\epsilon)^d$ is the $\epsilon$-covering number. Let $\wm$ be an arbitrary vector in $\Bi{d}{r}$. Then we can always find a $\wm_{\kt}$ in $\wm_{\epsilon}$ such that $\|\wm-\wm_{\kt}\|_2\leq \epsilon$.
We adopt similar strategy in proofs of Theorem~\ref{thm:stability}. Recall that  the weight of each layer has magnitude bound separately, \textit{i.e.} $\|\wmi{j}\|_2\leq r$. So here we separately assume $\wm_{\epsilon}^j=\{\wm_1^j,\cdots,\wm_{\nee^j}^j\}$ is the $\epsilon/l$-covering net of the ball $\Bi{\dm_j\dm_{j-1}}{r}$ which corresponds to the weight $\wmi{j}$ of the $j$-th layer. Let $\nee^j$ be the $\epsilon/l$-covering number. By $\epsilon$-covering theory in \cite{VRMT}, we can have $\nee^j\leq (3rl/\epsilon)^{\dm_j\dm_{j-1}}$. Let $\wm\in\Omega$ be an arbitrary vector. Since $\wm=[\wmi{1},\cdots,\wmi{l}]$ where $\wmi{j}$ is the weight of the $j$-th layer, we can always find a vector $\wm^j_{k_j}$ in $\wm_\epsilon^j$ such that $\|\wmi{j}-\wm^j_{k_j}\|_2\leq \epsilon/l$. For brevity, let $j_w\in[\nee^j]$  denote the index of $\wm^j_{k_j}$ in $\epsilon$-net $\wm_\epsilon^j$. Then let $\wm_{\kt}=[\wm^j_{k_1};\cdots;\wm^j_{k_j};\cdots;\wm^j_{k_l}]$. Then we can always find a vector $\wm_{\kt}$ such that $\|\wm-\wm_{\kt}\|_2\leq \epsilon$.
Accordingly, we can decompose $\lv\nabla \Jhn(\wm)-\nabla\Jm(\wm)\rv_2$ as
\begin{equation*}
\begin{aligned}
&\lv\nabla \Jhn(\wm)-\nabla\Jm(\wm)\rv_2\\
=& \lv \frac{1}{n}\sum_{i=1}^n \nabla f(\wm,\xmi{i})-\EE(\nabla f(\wm,\xm))\rv_2\\
=&\Bigg\| \frac{1}{n}\sum_{i=1}^n \left(\nabla f(\wm,\xmi{i})-\nabla f(\wm_{\kt},\xmi{i})\right)+\frac{1}{n}\sum_{i=1}^n \nabla f(\wm_{\kt},\xmi{i})-\EE(\nabla f(\wm_{\kt},\xm)) \\
& \ \ +\EE(\nabla f(\wm_{\kt},\xm))-\EE(\nabla f(\wm,\xm))\Bigg\|_2\\
\leq &\lv \frac{1}{n}\sum_{i=1}^n \left(\nabla f(\wm,\xmi{i})-\nabla f(\wm_{\kt},\xmi{i})\right)\rv_2+\lv\frac{1}{n}\sum_{i=1}^n \nabla f(\wm_{\kt},\xmi{i})-\EE(\nabla f(\wm_{\kt},\xm))\rv_2\\
&\ \ +\Bigg\|\EE(\nabla f(\wm_{\kt},\xm))-\EE(\nabla f(\wm,\xm))\Bigg\|_2.
\end{aligned}
\end{equation*}
Here we also define four events $\Em_0$, $\Em_1$, $\Em_2$ and $\Em_3$ as
\begin{equation*}
\begin{aligned}
&\Em_0=\left\{\sup_{\wm\in\Omega}\lv\nabla\Jhn(\wm)-\nabla\Jm(\wm)\rv_2\geq t\right\},\\
&\Em_1=\left\{\sup_{\wm\in\Omega}\lv \frac{1}{n}\sum_{i=1}^n \left(\nabla f(\wm,\xmi{i})-\nabla f(\wm_{\kt},\xmi{i})\right)\rv_2\geq \frac{t}{3}\right\},\\
&\Em_2=\left\{\sup_{j_w\in[\nee^j], j=[l]}\lv\frac{1}{n}\sum_{i=1}^n \nabla f(\wm_{\kt},\xmi{i})-\EE(\nabla f(\wm_{\kt},\xm))\rv_2\geq \frac{t}{3}\right\},\\
&\Em_3=\left\{\sup_{\wm\in\Omega}\Bigg\|\EE(\nabla f(\wm_{\kt},\xm))-\EE(\nabla f(\wm,\xm))\Bigg\|_2\geq \frac{t}{3}\right\}.
\end{aligned}
\end{equation*}
Accordingly, we have
\begin{equation*}
\begin{aligned}
\Pro\left(\Em_0\right)\leq \Pro\left(\Em_1\right)+\Pro\left(\Em_2\right)+\Pro\left(\Em_3\right).
\end{aligned}
\end{equation*}
So we can respectively bound $\Pro\left(\Em_1\right)$, $\Pro\left(\Em_2\right)$ and $\Pro\left(\Em_3\right)$ to bound $\Pro\left(\Em_0\right)$.

{\bf\noindent{Step 1. Bound $\Pro\left(\Em_1\right)$}}: We first bound $\Pro\left(\Em_1\right)$ as follows:
\begin{equation*}
\begin{aligned}
\Pro\left( \Em_1 \right) = & \Pro \left( \sup_{\wm \in \Omega} \lv \frac{1}{n}\sum_{i=1}^n \left(\nabla f(\wm,\xmi{i})-\nabla f(\wm_{\kt},\xmi{i})\right)\rv_2\geq \frac{t}{3} \right) \\
\overset{\text{\ding{172}}}{\leq} &\frac{3}{t}\EE \left(  \sup_{\wm \in \Omega}\lv \frac{1}{n}\sum_{i=1}^n\left(\nabla f(\wm,\xmi{i})-\nabla f(\wm_{\kt},\xmi{i})\right)\rv_2\right) \\
\leq &\frac{3}{t}\EE \left(  \sup_{\wm \in \Omega}\frac{\lv \frac{1}{n}\sum_{i=1}^n\left(\nabla f(\wm,\xmi{i})-\nabla f(\wm_{\kt},\xmi{i})\right)\rv_2}{\lv\wm-\wm_{\kt}\rv_2} \sup_{\wm\in\Omega} \lv\wm-\wm_{\kt}\rv_2\right) \\
\leq &\frac{3\epsilon}{t}\EE \left(  \sup_{\wm \in \Omega} \lv \nabla^2 \Jhn(\wm,\xm) \rv_2\right),
\end{aligned}
\end{equation*}
where \ding{172} holds since by Markov inequality, we have that for an arbitrary nonnegative random variable $x$, then $\Pro(x\geq t)\leq \frac{\EE(x)}{t}$.

Now we only need to bound $\EE \left(  \sup_{\wm \in \Omega} \lv \nabla^2 \Jhn(\wm,\xm) \rv_2\right)$. Now we utilize Lemma~\ref{thmgradisagf5675ent12} to achieve this goal:
\begin{equation*}
\begin{aligned}
\EE \left(\sup_{\wm \in \Omega} \lv \nabla^2 \Jhn(\wm,\xm) \rv_2\right)\leq= \EE \left(\sup_{\wm \in \Omega} \lv\nabla^2 f(\wm,\xm) -\nabla^2 f(\wm^*,\xm)\rv_2 \right)
\leq  l\sqrt{\alpha_l}.
\end{aligned}
\end{equation*}
where $\alpha_l=c_{t'}r_x^4 r^{4l-2}$. Therefore, we have
\begin{equation*}
\begin{aligned}
\Pro\left( \Em_1 \right) \leq \frac{3l\sqrt{\alpha_l}\epsilon}{t}.
\end{aligned}
\end{equation*}
 We further let
\begin{equation*}
t \geq \frac{6l\sqrt{\alpha_l}\epsilon}{\varepsilon}.
\end{equation*}
Then we can bound $\Pro(\Em_1)$:
\begin{equation*}
\Pro(\Em_1)\leq \frac{\varepsilon}{2}.
\end{equation*}

{\bf\noindent{Step 2. Bound $\Pro\left(\Em_2\right)$}}: By Lemma~\ref{lem:2norm}, we know that for any vector $\xm\in\Rs{d}$, its $\ell_2$-norm can be computed as
\begin{equation*}
\|\xm\|_2 \leq \frac{1}{1-\epsilon} \sup_{\lam \in \lam_\epsilon} \lr \lam,\xm\rl.
\end{equation*}
where $\lam_\epsilon=\{\lam_1,\dots,\lam_{\kt}\}$ be an $\epsilon$-covering net of $\Bi{d}{1}$.

Let $\lam_{1/2}$ be the $\frac{1}{2}$-covering net of $\Bi{d}{1}$. Recall that we use $j_w$ to denote the index of $\wm^j_{k_j}$ in $\epsilon$-net $\wm_\epsilon^j$ and we have $j_w\in[\nee^j],\ (\nee^j\leq (3rl/\epsilon)^{\dm_j\dm_{j-1}})$.  Then we can bound $\Pro\left(\Em_2\right)$ as follows:
\begin{equation*}
\begin{aligned}
\Pro\left( \Em_2 \right) = & \Pro \left( \sup_{j_w\in[\nee^j], j=[l]} \lv\frac{1}{n}\sum_{i=1}^n \nabla f(\wm_{\kt},\xmi{i})-\EE(\nabla f(\wm_{\kt},\xm))\rv_2\geq \frac{t}{3} \right) \\
= & \Pro \left( \sup_{j_w\in[\nee^j], j=[l], \lam\in\lam_{1/2}} 2\lr\lam, \frac{1}{n}\sum_{i=1}^n \nabla f(\wm_{\kt},\xmi{i})-\EE\left(\nabla f(\wm_{\kt},\xm)\right)\rl\geq \frac{t}{3} \right) \\
\leq & 6^d\left(\frac{3lr}{\epsilon}\right)^{\sum_j\dm_j\dm_{j-1}}\!\!\!\!\!\!\! \!\!\! \!\!\! \sup_{j_w\in[\nee^j], j=[l], \lam\in\lam_{1/2}}\!\!\!  \Pro \left(\frac{1}{n} \sum_{i=1}^n\lr\lam, \nabla f(\wm_{\kt},\xmi{i})-\EE\left(\nabla f(\wm_{\kt},\xm)\right)\rl\geq \frac{t}{6} \right)\\
\led{172} & 6^d\left(\frac{3r}{\epsilon}\right)^d 6\exp\!\left(\!-c_{g'}n\min\left(\!\frac{t^2}{36l\max \left(\omega_g\tau^2,\omega_g\tau^4,\omega_{g'}\tau^2\right)},
\frac{t}{6\sqrt{l\omega_g}\max\left(\tau,\tau^2\right)}\!\right)\!\right),
\end{aligned}
\end{equation*}
where \ding{172} holds since by Lemma~\ref{thm:gradient12}, we have
\begin{equation*}\label{gradient14211}
\begin{split}
&\Pro\!\left(\!\frac{1}{n}\sum_{i=1}^n\left( \lr\lam,\nabla_{\wm}f(\wm,\xmi{i})-\!\EE\nabla_{\wm}f(\wm,\xmi{i})\rl\right) \!>\!t\right)\\
&\qquad \quad\qquad\qquad\qquad\leq 3\exp\left(-c_{g'}n\min\left(\frac{t^2}{ l \max\left(\omega_g\tau^2,\omega_g\tau^4,\omega_{g'}\tau^2\right)},
\frac{t}{\sqrt{l\omega_g}\max\left(\tau,\tau^2\right)}\right)\right),
\end{split}
\end{equation*}
where $c_{g'}$ is a constant; $\omega_g=\dm_0 r^{2(2l-1)}\max_j(\dm_j\dm_{j-1})$ and $\omega_{g'}= r^{2(l-1)}\max_j(\dm_j\dm_{j-1})$.

Let $\omega_2=36l\max \left(\omega_g\tau^2,\omega_g\tau^4,\omega_{g'}\tau^2\right)$ and $\omega_3=6\sqrt{l\omega_g} \max\!\left(\tau,\tau^2\right)$.  Thus, if we set
\begin{equation*}
\begin{aligned}
t\geq \max\left(\sqrt{\frac{\omega_2(d\log(18lr/\epsilon)\!+ \!\log(12/\varepsilon))}{c_{g'}n}}, \frac{\omega_3(d\log(18lr/\epsilon)\!+\!\log(12/\varepsilon))}{c_{g'}n}\right),
\end{aligned}
\end{equation*}
then we have
\begin{equation*}
\Pro\left(\Em_2\right)\leq \frac{\varepsilon}{2}.
\end{equation*}

{\bf\noindent{Step 3. Bound $\Pro\left(\Em_3\right)$}}:
We first bound $\Pro\left(\Em_3\right)$ as follows:
\begin{equation*}
\begin{aligned}
\Pro\left( \Em_3 \right) = & \Pro \left( \sup_{\wm\in\Omega} \lv\EE(f(\wm_{\kt},\xm))-\EE(f(\wm,\xm))\rv_2 \geq \frac{t}{3} \right) \\
=&  \Pro\left(\sup_{\wm\in\Omega} \frac{\lv \EE\left( f(\wm_{\kt},\xm)-f(\wm,\xm) \rv_2\right)}{\lv\wm-\wm_{\kt}\rv_2} \sup_{\wm\in\Omega} \lv\wm-\wm_{\kt}\rv_2 \geq \frac{t}{3}\right) \\
\leq&  \Pro\left(\epsilon\EE \sup_{\wm\in\Omega} \lv \nabla^2 \Jhn(\wm,\xm) \rv_2  \geq \frac{t}{3}\right) \\
\leq&  \Pro\left(l\sqrt{\alpha_l}\epsilon  \geq \frac{t}{3}\right).
\end{aligned}
\end{equation*}
We set $\epsilon$ enough small such that $l\sqrt{\alpha_l}\epsilon  < t/3$ always holds. Then it yields $\Pro\left( \Em_3 \right)=0$.

{\bf\noindent{Step 4. Final result}}: Finally, to ensure $\Pro(\Em_0)\leq \varepsilon$, we just set $\epsilon=18r/n$ and
\begin{equation*}
\begin{split}
t&\geq \max\left(\frac{6l\sqrt{\alpha_l} \epsilon}{\varepsilon},\ 3l\sqrt{\alpha_l}\epsilon,\ \sqrt{\frac{\omega_2 (d\log(18lr/\epsilon)\!+ \!\log(12/\varepsilon))}{c_{g'}n}},  \frac{\omega_3(d\log(18lr/\epsilon)\!+\!\log(12/\varepsilon))}{c_{g'}n}\right)\\
&=\max\left(\frac{108l\sqrt{\alpha_l} r}{n\varepsilon},\ \sqrt{\frac{\omega_2 (d\log(nl)\!+ \!\log(12/\varepsilon))}{c_{g'}n}},  \frac{\omega_3(d\log(nl)\!+\!\log(12/\varepsilon))}{c_{g'}n}\right).
\end{split}
\end{equation*}
%By comparing the values of $\alpha_g$ and $\omega_{f}$, we have if $n\geq c_{g_0}r_x^2\left(\frac{r}{\sqrt{l}}\right)^{l}$, then
%$\alpha_l= r_x^2r^{2(l-2)}\left(r_x^2 r^{2l}+cr_x  r^l +c^2\right)+ r_x^4 r^{4(l-1)}.$
Notice, we have $\alpha_l= c_{t'} r_x^4r^{4l-2}$ where $c_{t'}$ is a constant. Therefore, there exists two universal constants $c_{g'}$ and $c_g$ such that if $n\geq c_{g'}\max(\frac{l^2r^2 r_x^4}{\dm_0d^2\varepsilon^2\tau^4 \log(l)}$, $d\log(l))$, then
\begin{equation*}
\sup_{\wm\in\Omega}\left\| \nabla\Jhn(\wm)\!-\!\nabla\Jm(\wm)\right\|_2\!\leq\! c_g \tau\omega_g \sqrt{l\max_j(\dm_j\dm_{j-1})}\sqrt{ \frac{d\log(nl)\!+\!\log(12/\varepsilon)}{n}}
\end{equation*}
holds with probability at least $1-\varepsilon$, where
$\omega_g=\max\left(\tau \sqrt{\dm_0} r^{2l-1}, \sqrt{\dm_0}  r^{2l-1}, r^{l-1} \right)$.%$\omega_g=\dm_0 \left(\frac{r^2}{l-1}\right)^{l-1}\left(\frac{r^2}{l}\right)^{l}$.
\end{proof}

\subsubsection{ Proof of Theorem~\ref{thm:localminimal}}
\begin{proof}[\hypertarget{lemmaunilocalminima}{Proof}]
Suppose that $\{\wmii{1},\wmii{2},\cdots,\wmii{m}\}$ are the non-degenerate critical points of $\Jm(\wm)$. So for any $\wmii{k}$, it obeys
\begin{align*}
\inf_i \left|\lambda_i^k\left(\nabla^2 \Jm(\wmii{k})\right)\right| \geq \zeta,
\end{align*}
where $\lambda_i^k\left(\nabla^2 \Jm(\wmii{k})\right)$ denotes the $i$-th eigenvalue of the Hessian $\nabla^2 \Jm(\wmii{k})$ and $\zeta$ is a constant. We further define a set $D=\{\wm\in\Rs{d}\,|\,\|\nabla\Jm(\wm)\|_{2}\leq \epsilon\ \text{and}\ \inf_i |\lambda_i\left(\nabla^2 \Jm(\wmii{k})\right)| \geq \zeta \}$. According to Lemma~\ref{lemma:Decomposition}, $D = \cup_{k=1}^{\infty} D_k$ where each $D_k$ is a disjoint component with $\wmii{k}\in D_k$ for $k\leq m$  and $D_k$ does not contain any critical point of $\Jm(\wm)$ for $k\geq m+1$. On the other hand, by the continuity of $\nabla\Jm(\wm)$, it yields $\|\nabla\Jm(\wm)\|_2=\epsilon$ for $\wm\in\partial D_k$. Notice, we set the value of $\epsilon$ blow which is actually a function related to $n$.

Then by utilizing Theorem~\ref{thm:uniformconvergence1}, we let sample number $n$ sufficient large such that
\begin{equation*}
\sup_{\wm\in\Omega} \left\| \nabla\Jhn(\wm)-\nabla\Jm(\wm)\right\|_2\leq z_g\sqrt{\frac{ d\log(nl)+\log(12/\varepsilon) }{n}}\triangleq \frac{\epsilon}{2}
\end{equation*}
holds with probability at least $1-\varepsilon$, where $z_g=c_g \tau \omega_g \sqrt{l\max_j(\dm_j\dm_{j-1})}$ in which $\omega_g=\max\left(\tau \sqrt{\dm_0} r^{2l-1}\right.$, $\left.\sqrt{\dm_0}  r^{2l-1}, r^{l-1} \right)$.  This further gives that for arbitrary $\wm\in D_k$, we have
\begin{align}
\inf_{\wm \in D_k}\left\| t \nabla\Jhn(\wm)+(1-t)\nabla\Jm(\wm)\right\|_2=&\inf_{\wm \in D_k}\left\| t \left(\nabla\Jhn(\wm)-\nabla\Jm(\wm)\right)+\nabla\Jm(\wm)\right\|_2 \notag\\
\geq &\inf_{\wm \in D_k} \left\|\nabla\Jm(\wm)\right\|_2- \sup_{\wm \in D_k}t\left\|\nabla\Jhn(\wm)-\nabla\Jm(\wm)\right\|_2\notag\\
\geq &\frac{\epsilon}{2}.\label{gradienthhhadfaf}
\end{align}
Similarly, by utilizing Lemma~\ref{thm:uniformconvergence321}, let $n$ be sufficient large such that
\begin{equation*}
\sup_{\wm\in\Omega} \left\| \nabla^2\Jhn(\wm)-\nabla^2\Jm(\wm)\right\|_{\op}\leq z_s\sqrt{\frac{d\log(nl)+\log(20/\varepsilon)}{n}}\leq \frac{\zeta}{2}
\end{equation*}
where $z_s=c_h \tau l \omega_h\max_j(\dm_j\dm_{j-1}) $ in which $\omega_h=\max\!\left(\tau r^{2(l-1)},r^{2(l-1)},r^{l-2}\right)$, holds with probability at least $1-\varepsilon$.  Assume that $\bmm\in\Rs{d}$ is a vector and satisfies $\bmm^T\bmm=1$. In this case, we can bound $\lambda_i^k\left(\nabla^2 \Jhn(\wm)\right)$ for arbitrary $\wm\in D_k$ as follows:
\begin{equation*}
\begin{split}
\inf_{\wm \in D_k}\left|\lambda_i^k\left(\nabla^2 \Jhn(\wm)\right)\right|=&\inf_{\wm \in D_k}\min_{\bmm^T\bmm=1} \left|\bmm^T \nabla^2 \Jhn(\wm)\bmm\right|\\
=&\inf_{\wm \in D_k}\min_{\bmm^T\bmm=1} \left|\bmm^T \left(\nabla^2 \Jhn(\wm)-\nabla^2 \Jm(\wm)\right)\bmm+\bmm^T \nabla^2 \Jm(\wm)\bmm\right|\\
%=&\inf_{\wm \in D_k}\min_{\ym^T\ym=1} \left|\ym^T \left(\nabla^2 \Jhn(\wm)-\nabla^2 \Jm(\wm)\right)\ym\right|-\min_{\ym^T\ym=1}\left|\ym^T \nabla^2 \Jm(\wm)\ym\right|\\
\geq&\inf_{\wm \in D_k}\min_{\bmm^T\bmm=1}\left|\bmm^T \nabla^2 \Jm(\wm)\bmm\right|-\min_{\bmm^T\bmm=1} \left|\bmm^T \left(\nabla^2 \Jhn(\wm)-\nabla^2 \Jm(\wm)\right)\bmm\right|\\
\geq&\inf_{\wm \in D_k}\min_{\bmm^T\bmm=1}\left|\bmm^T \nabla^2 \Jm(\wm)\bmm\right|-\max_{\bmm^T\bmm=1} \left|\bmm^T \left(\nabla^2 \Jhn(\wm)-\nabla^2 \Jm(\wm)\right)\bmm\right|\\
=& \inf_{\wm \in D_k} \inf_i |\lambda_i^k\left(\nabla^2 f(\wm_{(k)},\xm)\right)|-\left\|\nabla^2 \Jhn(\wm)-\nabla^2 \Jm(\wm)\right\|_{\op}\\
\geq &\frac{\zeta}{2}.
\end{split}
\end{equation*}
This means that in each set $D_k$, $\nabla^2\Jhn(\wm)$ has no zero eigenvalues. Then, combine this and Eqn.~\eqref{gradienthhhadfaf}, by Lemma~\ref{lemma:Stability2} we know that if the population risk $\Jm(\wm)$ has no critical point in $D_k$, then the empirical risk $\Jhn(\wm)$ has also no critical point in $D_k$; otherwise it also holds. By Lemma~\ref{lemma:Stability2}, we can also obtain that in $D_k$, if $\Jm(\wm)$ has a unique critical point $\wm_{(k)}$ with non-degenerate index $s_{k}$, then $\Jhn(\wm)$ also has a unique critical point $\wm_{(k)}^n$ in $D_k$ with the same non-degenerate index $s_k$. The first conclusion is proved.

Now we bound the distance between the corresponding critical points of $\Jm(\wm)$ and $\Jhn(\wm)$. Assume that in $D_k$, $\Jm(\wm)$ has a unique critical point $\wmii{k}$ and $\Jhn(\wm)$ also has a unique critical point $\wmin{k}$.
%In this case, we only need to bound the radius of $D_k$.
Then, there exists $t\in[0,1]$ such that for any $\zm\in\partial \Bi{d}{1}$, we have
\begin{align*}
\epsilon \geq& \|\nabla \Jm(\wmin{k})\|_2\\
=&\max_{\zm^T\zm=1}
\langle \nabla \Jm(\wmin{k}), \zm \rangle\\
 = &\max_{\zm^T\zm=1} \langle \nabla \Jm(\wmii{k}), \zm \rangle +\langle \nabla^2 \Jm(\wmii{k}+t(\wmin{k}-\wmii{k}))(\wmin{k}-\wmii{k}), \zm \rangle\\
\overset{\text{\ding{172}}}{\geq} & \lr \left(\nabla^2 \Jm(\wmii{k})\right)^2(\wmin{k}-\wmii{k}), (\wmin{k}-\wmii{k})\rl^{1/2}\\
\overset{\text{\ding{173}}}{\geq} & \zeta \|\wmin{k}-\wmii{k}\|_2,
\end{align*}
where \ding{172} holds since $\nabla \Jm(\wmii{k})=\bm{0}$ and \ding{173} holds since $\wmii{k}+t(\wmin{k}-\wmii{k})$ is in $D_k$ and for any $\wm \in D_k$ we have $\inf_i |\lambda_i\left(\nabla^2 \Jm(\wm)\right)| \geq \zeta$. Consider the conditions in Lemma~\ref{thm:uniformconvergence321} and Theorem~\ref{thm:uniformconvergence1}, we can obtain that if
$n\geq c_h\max(l^2r^2 r_x^4/(\dm_0d^2\varepsilon^2\tau^4 \log(l)),d\log(l)/\zeta^2)$ where $c_h$ is a constant, then
\begin{align*}
\|\wmin{k}-\wmii{k}\|_2\leq \frac{2c_g \tau \omega_g}{\zeta}\sqrt{l\max_j(\dm_j\dm_{j-1})} \sqrt{\!\frac{d\log(nl)\!+\!\log(12/\varepsilon)}{n}}
\end{align*}
holds with probability at least $1-\varepsilon$, where $\omega_g=\max\left(\tau \sqrt{\dm_0} r^{2l-1}, \sqrt{\dm_0}  r^{2l-1}, r^{l-1} \right)$.% The proof is completed.
\end{proof}

\subsection{Proof of Other Lemmas}\label{otherlemmas}

\subsubsection{Proof of Lemma~\ref{sumsafa34sad678safexp}}
\begin{lem}\cite{RigolletSnote}\label{asfkjdr45}
Suppose a random variable $x$ is $\tau^2$-sub-Gaussian, then the random variable $x^2-\EE x^2$ is sub-exponential and obeys:
\begin{equation}\label{EafasFAFQ}
\begin{split}
\EE \left(\exp \lambda\left(x^2-\EE x^2\right)\right)\leq
 &\exp \left(\frac{256\lambda^2\tau^4}{2}\right),\quad |\lambda|\leq \frac{1}{16\tau^2}.
\end{split}
\end{equation}
\end{lem}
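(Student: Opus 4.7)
\textbf{Proof plan for Lemma~\ref{asfkjdr45}.} The plan is to prove the sub-exponential MGF bound through a moment-method / Taylor expansion argument, in the standard style used in concentration inequality references. Since sub-Gaussianity controls all polynomial moments of $x$, it controls all polynomial moments of $x^2-\Eop x^2$, and hence the series expansion of its MGF.

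First I would convert the sub-Gaussian hypothesis into a uniform polynomial moment bound on $x$. The Chernoff bound applied to $\Eop e^{\lambda x}\le e^{\lambda^2\tau^2/2}$ gives the two-sided tail estimate $\Pro(|x|\ge t)\le 2e^{-t^2/(2\tau^2)}$, and the layer-cake identity $\Eop|x|^{2k}=2k\int_0^\infty t^{2k-1}\Pro(|x|\ge t)\,dt$ together with the substitution $s=t^2/(2\tau^2)$ reduces the tail integral to a Gamma function, yielding a bound of the form
\begin{equation*}
\Eop\, x^{2k}\le 2\,k!\,(2\tau^2)^k, \qquad k=1,2,\dots
\end{equation*}

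Second, I would transfer this bound to the centered variable $x^2-\Eop x^2$. Using $|a-b|^k\le 2^{k-1}(|a|^k+|b|^k)$ and Jensen's inequality $(\Eop x^2)^k\le \Eop x^{2k}$, I get $\Eop|x^2-\Eop x^2|^k\le 2^k\Eop x^{2k}\le 2\,k!\,(4\tau^2)^k$. Then I expand the MGF in a power series,
\begin{equation*}
\Eop\exp\bigl(\lambda(x^2-\Eop x^2)\bigr)=1+\sum_{k=2}^{\infty}\frac{\lambda^k}{k!}\Eop(x^2-\Eop x^2)^k,
\end{equation*}
noting that the $k=1$ term vanishes by centering. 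Substituting the moment bound kills the $k!$ and leaves a geometric tail $\sum_{k\ge 2}2(4|\lambda|\tau^2)^k$; for $|\lambda|\le 1/(16\tau^2)$ the common ratio $4|\lambda|\tau^2\le 1/4$, so the series converges and is bounded by a constant multiple of $(4\lambda\tau^2)^2=16\lambda^2\tau^4$. Choosing constants loosely (the statement allows any constant up to $128$), applying $1+u\le e^u$ gives $\Eop\exp(\lambda(x^2-\Eop x^2))\le \exp(128\lambda^2\tau^4)=\exp(256\lambda^2\tau^4/2)$ on the stated range of $\lambda$.

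The only delicate point is constant bookkeeping: the factor $256$ in the exponent and the radius $1/(16\tau^2)$ must agree with the constants produced in the moment bound and the geometric sum, so I would track them carefully (in particular the factors of $2$ coming from Jensen and from the two-sided tail), but there is slack in the statement, so the argument is robust. No deeper estimate than the Gamma-integral evaluation of sub-Gaussian moments is required, and the proof is self-contained given the definition of $\tau^2$-sub-Gaussianity.
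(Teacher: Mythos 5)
Your proposal is correct, but note that the paper itself gives no proof of this lemma at all: it is stated with a citation to the lecture notes \cite{RigolletSnote} and simply imported as a known fact (the proof environment that follows it in the appendix is the proof of the \emph{next} lemma, which merely uses this one). So your blind attempt is, in effect, the only proof on the table, and it is the standard moment-method derivation one would find in the cited reference. The constant bookkeeping you flag as the delicate point does go through with slack: Chernoff gives $\Pro(|x|\ge t)\le 2e^{-t^2/(2\tau^2)}$, the Gamma-integral yields $\EE x^{2k}\le 2\,k!\,(2\tau^2)^k$, the triangle/Jensen step gives $\EE|x^2-\EE x^2|^k\le 2\,k!\,(4\tau^2)^k$, and for $|\lambda|\le 1/(16\tau^2)$ the ratio $4|\lambda|\tau^2\le 1/4$ makes the tail sum at most $2\cdot\frac{(4|\lambda|\tau^2)^2}{1-1/4}=\frac{128}{3}\lambda^2\tau^4$, whence $\EE\exp(\lambda(x^2-\EE x^2))\le 1+\frac{128}{3}\lambda^2\tau^4\le\exp(128\lambda^2\tau^4)=\exp(256\lambda^2\tau^4/2)$ as claimed. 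The only steps worth writing out explicitly in a final version are the justification of term-by-term expectation of the power series (dominated convergence, covered by your absolute-moment bound) and the observation that the one-dimensional MGF condition forces $\EE x=0$ so the $k=1$ term indeed vanishes; with those remarks the argument is complete and self-contained.
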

\begin{proof}[\hypertarget{lemmasumofsubga}{Proof}]
Here we utilize Lemma~\ref{asfkjdr45} to prove our conclusion. We have
\begin{align*}
\EE  \exp \left(\lambda\left(\sum_{i=1}^k\am_{i} \xm_i^2-\EE\left(\sum_{i=1}^k\am_{i} \xm_i^2\right)\right)\right)
\overset{\text{\ding{172}}}{=}&\prod_{i=1}^{k}\EE \exp\left(\lambda\am_{i}\left( \xm_i^2-\EE \xm_i^2\right)\right)\\
\overset{\text{\ding{173}}}{\leq} &\prod_{i=1}^{k}\EE \exp\left(128\lambda^2\am_{i}^2\tau_i^4\right), \quad |\lambda|\leq \frac{1}{\max_i \am_i\tau^2}\\
\leq &\EE \exp\left(128\lambda^2\tau^4\left(\sum_{i=1}^{k}\am_{i}^2 \right)\right),
\end{align*}
where \ding{172} holds since $\xm_i$ are independent and \ding{173} holds because of Lemma~\ref{asfkjdr45}.
\end{proof}

\subsubsection{Proof of Lemma~\ref{asfasasfshdfhfdet}}\label{12rs43}
\begin{proof}[\hypertarget{lemma6}{Proof}]
Since the $\ell_2$-norm of each $\wmi{j}$ is bounded, \emph{i.e.} $\|\wmi{j}\|_2\leq r\, (1\leq j\leq l)$, we can obtain
\begin{equation*}\label{gradient1}
\begin{split}
\left\|\Bm_{s:t}\right\|_F^2\leq \left\|\Wmi{s}\right\|_F^2 \left\|\Wmi{s-1}\right\|_F^2 \cdots \left\|\Wmi{t}\right\|_F^2\leq r^{2(t-s+1)}\!\triangleq \!\omega_{r}^2\! \overset{\text{\ding{172}}}{\leq} \! \max\left(r^2,r^{2l}\right),
\end{split}
\end{equation*}
where \ding{172} holds since the function $r^{2x}$ obtains its maximum at two endpoints $x=1$ and $x=l$ for case $r<1$ and $r\geq 1$, respectively. On the other hand, we have $\left\|\Bm_{s:t}\right\|_{\op}\leq \left\|\Bm_{s:t}\right\|_F\leq \omega_{r}$. Specifically, we have $\left\|\Bm_{l:1}\right\|_F^2\leq r^{2l}\triangleq \omega_{f}^2.$
\end{proof}

\section{Proofs for Deep nonlinear Neural Networks}\label{deepnonlinear}
In this section, we first present the technical  lemmas in Sec.~\ref{34534dfs4}. Then in Sec.~\ref{safaswerq234} we give the proofs of these lemmas.  Next, we utilize these technical lemmas to prove the results in~Theorems~\ref{thm:stability_sig} $\sim$
\ref{thm:localminimal_sig} and Corollary~\ref{stabiliaftysig} in Sec.~\ref{saf3we543ydf}. Finally, we give the proofs of other lemmas in Sec.~\ref{definitiondsfadf}.

\subsection{Technical Lemmas}\label{34534dfs4}
Here we present the key lemmas and theorems for proving our desired results. For brevity, we define an operation $\mcode{G}$ which maps an arbitrary vector $\zm\in\Rs{k}$ into a diagonal matrix $\mcode{G}(\zm)\in\Rss{k}{k}$ with its $i$-th diagonal entry equal to $\sigma(\zm_i)(1-\sigma(\zm_i))$ in which $\zm_i$ denotes the $i$-th entry of $\zm$. We further define $\Am_i\in\Rss{\dm_{i-1}}{\dm_i}$ as follows:
\begin{equation}\label{gradidsfa345ent1}
\begin{split}
\Am_i=(\Wmi{i})^T\mcode{G}(\umi{i})\in\Rss{\dm_{i-1}}{\dm_i}\quad (i=1,\cdots,l),
\end{split}
\end{equation}
where $\Wmi{i}$ is the weight matrix in the $i$-th layer and $\umi{i}$ is the linear output of the $i$-th layer. In this section, we define
\begin{equation}\label{gradsfa2343543ient1}
\begin{split}
\Bm_{s:t}=\Am_s\Am_{s+1}\cdots\Am_{t}\in\Rss{\dm_{s-1}}{\dm_t},\ (s\leq t)\quad \text{and}\quad \Bm_{s:t}=\Im ,\ (s> t).
\end{split}
\end{equation}
\begin{lem}\label{sig_loss}
%\hyperlink{lemma32}{Here}
Suppose that the activation function in deep neural network are sigmoid functions. Then the gradient of $f(\wm,\xm)$ with respect to $\wmi{j}$ can be formulated as
\begin{equation*}\label{gradient1}
%\nabla_{\wmi{j}}f(\wm,\xm)=\vect{\vmi{j-1}\left(\mcode{G}(\umi{j}) \Am^{j+1}\Am^{j+2}\cdots \Am^{l}(\vmi{l}-\ym)\right)^T},\ (j=1,\cdots,l-1),
\nabla_{\wmi{j}}f(\wm,\xm)=\vect{\left(\mcode{G}(\umi{j}) \Bm_{j+1:l} (\vmi{l}-\ym)\right) (\vmi{j-1})^T},\ (j=1,\cdots,l-1),
\end{equation*}
and
\begin{equation*}
%\nabla_{\wmi{l}}f(\wm,\xm) =\vect{\vmi{l-1}\left(\mcode{G}(\umi{l})(\vmi{l}-\ym)\right)^T}.
\nabla_{\wmi{l}}f(\wm,\xm) =\vect{\left(\mcode{G}(\umi{l})(\vmi{l}-\ym)\right) (\vmi{l-1})^T}.
\end{equation*}
Besides, the loss $f(\wm,\xm)$ is $\alpha$-Lipschitz, $$\|\nabla_{\wm}f(\wm,\xm)\|_2\leq \alpha,$$
where $\alpha=\sqrt{\frac{1}{16}c_yc_d\left(1 + c_r (l-1)\right)}$ in which $c_y$, $c_d$ and $c_r$ are defined as
\begin{equation*}
\begin{split}
\|\vmi{l}-\ym\|_2^2\leq c_y<+\infty,  \quad c_{d}=\max(\dm_0,\dm_1,\cdots,\dm_l)  \quad \text{and}\quad c_r=\max\left(\frac{r^2}{16},\left(\frac{r^2}{16}\right)^{l-1}\right).
\end{split}
\end{equation*}
\end{lem}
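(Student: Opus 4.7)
The plan is to treat the two assertions separately: first derive the gradient expressions by a straightforward backpropagation computation, then bound $\|\nabla_{\wm}f(\wm,\xm)\|_2$ by exploiting the operator-norm decay provided by the sigmoid derivative together with the radius constraint $\wm\in\Omega$.

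For the gradient formula I would apply the chain rule layer-by-layer. Since $f=\tfrac12\|\vmi{l}-\ym\|_2^2$, $\partial f/\partial\vmi{l}=\emm$. Because $\sigma'(a)=\sigma(a)(1-\sigma(a))$, the Jacobian $\partial\vmi{j}/\partial\umi{j}=\mcode{G}(\umi{j})$, and $\partial\umi{j+1}/\partial\vmi{j}=\Wmi{j+1}$. Introducing the backpropagated signal $\delta^{(j)}\triangleq\partial f/\partial\umi{j}$, one obtains $\delta^{(l)}=\mcode{G}(\umi{l})\emm$ and the recursion $\delta^{(j)}=\mcode{G}(\umi{j})(\Wmi{j+1})^T\delta^{(j+1)}$. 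Unrolling this recursion and recognizing $(\Wmi{i})^T\mcode{G}(\umi{i})=\Am_i$ gives $\delta^{(j)}=\mcode{G}(\umi{j})\Bm_{j+1:l}\emm$ for $j<l$, with $\Bm_{s:t}$ defined in~\eqref{gradsfa2343543ient1}. Finally $\partial f/\partial\Wmi{j}=\delta^{(j)}(\vmi{j-1})^T$, and vectorizing yields the two claimed expressions; the $j=l$ case falls out of the same formula under the convention $\Bm_{l+1:l}=\Im$.

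For the Lipschitz bound I would chain three elementary estimates. Since $\sigma'\le 1/4$ pointwise, $\|\mcode{G}(\umi{j})\|_{\op}\le 1/4$; combined with $\|\Wmi{j}\|_{\op}\le\|\wmi{j}\|_2\le r$ from $\wm\in\Omega$, this gives $\|\Am_j\|_{\op}\le r/4$, hence $\|\Bm_{j+1:l}\|_{\op}\le (r/4)^{l-j}$ by submultiplicativity. The output $\vmi{j-1}$ has entries in $[0,1]$ for $j\ge 2$, so $\|\vmi{j-1}\|_2\le\sqrt{\dm_{j-1}}\le\sqrt{c_d}$, and by hypothesis $\|\emm\|_2\le\sqrt{c_y}$. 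Using $\|\nabla_{\wmi{j}}f\|_2=\|\mcode{G}(\umi{j})\Bm_{j+1:l}\emm\|_2\,\|\vmi{j-1}\|_2$ I obtain $\|\nabla_{\wmi{j}}f\|_2^2\le\tfrac{1}{16}(r^2/16)^{l-j}c_yc_d$ for $j<l$ and $\tfrac{1}{16}c_yc_d$ for $j=l$. Summing $\|\nabla_{\wm}f\|_2^2=\sum_{j=1}^{l}\|\nabla_{\wmi{j}}f\|_2^2$ and bounding the geometric tail by $(l-1)\max_k(r^2/16)^k=(l-1)c_r$ then yields $\alpha^2=\tfrac{1}{16}c_yc_d(1+(l-1)c_r)$.

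The main obstacle is the $j=1$ contribution, where $\vmi{0}=\xm$ is not bounded in $[0,1]^{\dm_0}$ and, under the Gaussian Assumption~\ref{assumption12sg}, $\|\xm\|_2^2$ is a $\chi^2$-type quantity that only concentrates around $\tau^2\dm_0$. The clean way to absorb this into the stated $c_d$ is to treat the $j=1$ term via the bound $\|\vmi{0}\|_2^2\le\dm_0$ that follows from the standard deep-learning preprocessing convention used in~\cite{xu2012robustness}, or equivalently to restrict the statement to the high-probability event $\|\xm\|_2\le\sqrt{c_d}$ which is the same regime the later probabilistic theorems operate in. Everything else in the argument is purely algebraic and requires no further assumption.
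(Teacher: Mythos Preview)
Your proposal is correct and follows essentially the same route as the paper: derive the gradient via the chain rule (backpropagation), then bound each block $\|\nabla_{\wmi{j}}f\|_2^2$ using $\sigma'\le 1/4$, the constraint $\|\wmi{j}\|_2\le r$, and $\|\vmi{j-1}\|_2^2\le\dm_{j-1}$, and sum. The only cosmetic difference is that you phrase the intermediate bounds via operator norms of $\Am_j$ and $\Bm_{j+1:l}$, whereas the paper packages the same inequalities as Frobenius-norm estimates in Lemma~\ref{asfashdfhfdet}; the resulting numerical bound is identical.

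Your concern about the $j=1$ term is legitimate and, in fact, the paper's proof does not treat it separately either: it simply writes $\|\vmi{j-1}\|_2^2\le\dm_{j-1}$ for all $j$, implicitly applying the same entrywise $[0,1]$ bound to $\vmi{0}=\xm$ as to the sigmoid outputs. So the ``obstacle'' you flag is not something the paper resolves by a different argument; it is absorbed by the same (unstated) normalization convention you identify.
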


\begin{lem}\label{sig_gradient}
%\hyperlink{lemma33}{Here}
Suppose that the activation functions in deep neural network are sigmoid functions. Then there exists two universal constants $c_{s_1}$ and $c_{s_2}$ such that
\begin{equation*}\label{gradient1}
\begin{split}
\left\|\nabla^2_{\wm}f(\wm,\xm)\right\|_{\op}\leq \left\|\nabla^2_{\wm}f(\wm,\xm)\right\|_{F}
\leq \varsigma,
\end{split}
\end{equation*}
where $\varsigma=\sqrt{c_{s_1}c_rc_d^2 l^2\left(c_{s_2}c_d^2+l^2c_r\right)}$ in which $c_{d}=\max_i \dm_i$ and $c_r=\max\left(\frac{r^2}{16},\left(\frac{r^2}{16}\right)^{l-1}\right)$. Moreover,
the gradient $\nabla_{\wm} f(\wm,\xm)$ is $\varsigma$-Lipschitz, \textit{i.e.}
\begin{equation*}\label{gradient1}
\begin{split}
\left\|\nabla_{\wm}f(\wm_1,\xm)-\nabla_{\wm}f(\wm_2,\xm)\right\|_2
\leq& \varsigma\|\wm_1-\wm_2\|_2.
\end{split}
\end{equation*}
Similarly, there also exist a universal constant $\xi$ such that
\begin{equation*}\label{gradient1}
\begin{split}
\left\|\nabla^3_{\wm}f(\wm,\xm)\right\|_{\op}\leq \left\|\nabla^3_{\wm}f(\wm,\xm)\right\|_{F}
\leq \xi.
\end{split}
\end{equation*}
\end{lem}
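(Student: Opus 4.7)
The plan is to push through the chain-rule calculation that was begun in Lemma~\ref{sig_loss}, now carried to one additional order for the Hessian and two additional orders for the third derivative, and then control every resulting block by the elementary sigmoid bounds $0\le\sigma(a)\le 1$, $0\le\sigma'(a)\le 1/4$, $|\sigma''(a)|\le 1/(6\sqrt{3})$, $|\sigma'''(a)|\le 1/8$, together with $\|\Wmi{j}\|_F\le r$ and the product bound $\|\Bm_{s:t}\|_{\op}\le \prod_{i=s}^{t}\|\Am_i\|_{\op}\le (r/4)^{t-s+1}$ that is already implicit in the definition of $c_r$.

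First I will fix $s\le t$ and differentiate the closed-form gradient
\(
\nabla_{\wmi{t}}f(\wm,\xm)=\vect{\mcode{G}(\umi{t})\Bm_{t+1:l}(\vmi{l}-\ym)(\vmi{t-1})^T}
\)
with respect to $\wmi{s}$. The product rule produces three families of Hessian blocks: (i) a ``diagonal-Jacobian'' term in which the derivative falls on $\mcode{G}(\umi{t})$ through the path $\umi{t}=\Wmi{t}\vmi{t-1}$ and reaches $\wmi{s}$ via the backward product $\Bm_{s+1:t-1}$; (ii) a ``residual'' term in which the derivative falls on $\Bm_{t+1:l}$ or on $(\vmi{l}-\ym)$ and thus involves a second diagonal of sigmoid derivatives plus one additional copy of $\Bm_{s+1:l}$; and (iii) a ``feature'' term where the derivative hits $\vmi{t-1}$ and propagates back through $\Bm_{s+1:t-1}$. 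Each of these pieces is a Kronecker/outer-product expression analogous to the linear case in Lemma~\ref{uh2397}, only with an extra diagonal factor $\mcode{G}(\cdot)$ or $\mcode{G}'(\cdot)$ per layer.

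Next I will bound each block in Frobenius norm. Using $\|\vmi{j}\|_2\le\sqrt{\dm_j}\le\sqrt{c_d}$ (sigmoid outputs lie in $[0,1]$), $\|\mcode{G}(\umi{j})\|_{\op}\le 1/4$, and the product bound on $\Bm_{s:t}$, every term is bounded by some power of $r/4$ (giving the factor $c_r$), times a small power of $c_d$, times at most a constant depending on the number of paths being summed. Since $s,t\in\{1,\dots,l\}$, there are $l^2$ blocks and inside each block the residual/feature splits contribute at most $O(l)$ path sums, which accounts for the combinatorial factor $l^2(c_{s_2}c_d^2+l^2c_r)$ in $\varsigma$. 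Summing $\|\nabla_{\wmi{s}}\nabla_{\wmi{t}}f\|_F^2$ over $s,t$ and taking square roots yields $\|\nabla^2_\wm f(\wm,\xm)\|_F\le\varsigma$, and $\|\nabla^2_\wm f\|_{\op}\le\|\nabla^2_\wm f\|_F$ is immediate. The Lipschitz property $\|\nabla f(\wm_1,\xm)-\nabla f(\wm_2,\xm)\|_2\le\varsigma\|\wm_1-\wm_2\|_2$ then follows from the fundamental theorem of calculus along the segment $\wm_1+t(\wm_2-\wm_1)$ since $f(\cdot,\xm)$ is $C^2$ (sigmoids are smooth) and $\Omega$ is convex.

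The same scheme handles $\nabla^3_\wm f(\wm,\xm)$: each third-order block $\nabla_{\wmi{r}}\nabla_{\wmi{s}}\nabla_{\wmi{t}}f$ is obtained by differentiating a Hessian block and produces finitely many terms, each a product of at most $l$ diagonal matrices of the form $\mcode{G},\mcode{G}',\mcode{G}''$, at most $l$ weight matrices $\Wmi{j}$, and the residual $\vmi{l}-\ym$. All diagonal factors are uniformly bounded by absolute constants, and the weight product is $O(c_r)$, so the same reasoning gives a finite constant $\xi=\xi(l,r,c_d)$ with $\|\nabla^3_\wm f\|_F\le\xi$. The main obstacle throughout is bookkeeping: expanding the Hessian blocks using Kronecker identities (as in the proof of Lemma~\ref{uh2397}) without missing a term, and then matching the resulting $O(l^2)$ pieces against the compact form of $\varsigma$; the analytic bounds themselves are routine once the correct expansion is written down.
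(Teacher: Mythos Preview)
Your proposal is correct and follows essentially the same approach as the paper: both compute the Hessian blocks $\nabla_{\wmi{s}}\nabla_{\wmi{t}}f$ by applying the product rule to the closed-form gradient (the paper splits into five terms $\Qm_1^{ij},\dots,\Qm_5^{ij}$, which are exactly your ``diagonal-Jacobian/residual/feature'' families regrouped), bound each piece in Frobenius norm via the sigmoid-derivative bounds and $\|\Bm_{s:t}\|_F\le(r/4)^{t-s+1}$, and then sum the $l^2$ blocks. The paper likewise treats $\nabla^3_{\wm}f$ by the same soft argument you sketch---all factors are uniformly bounded so some finite $\xi$ exists---so your handling of that part matches as well.
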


\begin{lem}\label{exp fsd_gsfasradient}
%\hyperlink{lemma34}{Here}
Suppose that the activation function in deep neural network are sigmoid functions. Then we have
\begin{equation*}\label{gradient1}
\begin{split}
\left\|\nabla_{\wm}\nabla_{\xm}f(\wm,\xm)\right\|_{\op}\leq \left\|\nabla_{\wm}\nabla_{\xm}f(\wm,\xm)\right\|_{F}
\leq& \beta,
\end{split}
\end{equation*}
where $\beta=\sqrt{\frac{2^6}{3^8}c_yc_r(l+2)\left(dc_r+(l-1)lc_dc_r+lc_d\right)}$ in which $c_y$, $c_d$ and $c_r$ are defined in Lemma~\ref{sig_loss}.
\end{lem}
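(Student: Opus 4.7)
The plan is to compute $\nabla_{\xm}\nabla_{\wmi{j}}f(\wm,\xm)$ block by block for each $j \in \{1,\ldots,l\}$ using the gradient formula from Lemma~\ref{sig_loss}, bound each block in Frobenius norm, and then assemble the bound by summing over $j$. Since the full object $\nabla_{\wm}\nabla_{\xm}f \in \Rss{d}{\dm_0}$ has a natural block structure with the $j$-th block of size $\dm_j\dm_{j-1}\times\dm_0$ equal to $\nabla_{\xm}\nabla_{\wmi{j}}f$, this decomposition is the cleanest way to apply the chain rule.

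First, I would introduce the forward-pass Jacobian $\Jm_i \triangleq \partial\vmi{i}/\partial\xm \in \Rss{\dm_i}{\dm_0}$, which satisfies $\Jm_i=\mcode{G}(\umi{i})\Wmi{i}\Jm_{i-1}$ with $\Jm_0=\Im_{\dm_0}$. Because $\|\mcode{G}(\umi{i})\|_{\op}\leq 1/4$ (sigmoid derivative) and $\|\Wmi{i}\|_{\op}\leq r$, the partial products $\Bm_{s:t}$ from~\eqref{gradsfa2343543ient1} and the Jacobians $\Jm_i$ can all be bounded in operator norm by factors of the form $(r/4)^k$, which are absorbed into the constant $c_r$ defined in Lemma~\ref{sig_loss}.

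Second, differentiating the formula
\[
\nabla_{\wmi{j}}f=\vect{\mcode{G}(\umi{j})\,\Bm_{j+1:l}(\vmi{l}-\ym)\,(\vmi{j-1})^{T}}
\]
with respect to $\xm$ produces, via the product/chain rule, four families of terms: (i) differentiation of the diagonal matrix $\mcode{G}(\umi{j})$, which brings in $\sigma''$ and hence a universal bounded factor; (ii) differentiation of each of the $l-j$ factors hidden in $\Bm_{j+1:l}$, each producing a chain of sigmoid derivatives, weight matrices, and a Jacobian $\Jm_{i-1}$; (iii) differentiation of $\vmi{l}-\ym=\Jm_l\,\text{(w.r.t.\ }\xm\text{)}$, which contributes a factor involving $\|\vmi{l}-\ym\|_2^2\leq c_y$; and (iv) differentiation of $\vmi{j-1}$ via $\Jm_{j-1}$. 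I would bound each family's Frobenius norm by combining $\|\mcode{G}(\umi{i})\|_{\op}\leq 1/4$, the uniform bound on $\sigma''$, the weight bound $r$, the entrywise bound $|\vmi{i}|\leq 1$, and $\|\ym-\vmi{l}\|^2\leq c_y$, with factors of $c_r$ collecting all the $(r/4)$-type products. The combinatorial factors $dc_r$, $(l-1)lc_dc_r$, and $lc_d$ in $\beta$ should arise respectively from: type (iii) summed across the $\dm_0$ columns of $\Jm_l$ and the layer widths; type (ii) summed over the $O(l)$ internal sites and the $j$-th-block dimensions bounded by $c_d$; and types (i),(iv) bounded by single layer widths. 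Finally, summing the $l$ blocks using $\|\nabla_{\wm}\nabla_{\xm}f\|_F^2=\sum_j\|\nabla_{\xm}\nabla_{\wmi{j}}f\|_F^2$ gives the overall $(l+2)$ prefactor and yields $\beta^2$ as stated; the operator-norm bound follows from $\|\cdot\|_{\op}\leq\|\cdot\|_F$.

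The main obstacle will be the bookkeeping: the chain rule applied to a product of $O(l)$ matrices each depending on $\xm$ generates many cross terms, and matching the explicit constants $\tfrac{2^6}{3^8}$, $c_r$, and the decomposition $dc_r+(l-1)lc_dc_r+lc_d$ requires carefully identifying which $\sigma'$/$\sigma''$ values attain their maxima, which factor of $c_d$ comes from which layer index, and how the outer $\vect{\cdot}$ of an outer product interacts with differentiation of its two factors. I would handle this by treating the $j=l$ case separately (since $\Bm_{l+1:l}=\Im$ eliminates family (ii) and family (iii) simplifies), then for $j<l$ organize the four families in a table and apply $(\sum_{k=1}^m x_k)^2\leq m\sum_k x_k^2$ to collapse them into the three summands appearing in $\beta^2$.
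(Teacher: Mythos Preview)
Your approach matches the paper's exactly: it decomposes $\partial^2 f/(\partial\xm^T\partial\wmi{j})$ into your four families (labeled $\Qm_1^j,\ldots,\Qm_4^j$ there), bounds each via Lemma~\ref{asfashdfhfdet}, applies $(\sum_k x_k)^2\leq m\sum_k x_k^2$ with $m=l-j+3\leq l+2$ \emph{within each block} (this, not the sum over the $l$ blocks, is the source of the $(l+2)$ factor), and then sums over $j$. One small correction to your attribution of the three summands: in the paper the $dc_r$ term comes from your type~(i) via $\sum_j\dm_{j-1}\dm_j=d$, while types~(iii) and~(iv) together produce the $lc_d$ term.
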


\begin{lem}\label{thm:gradsgfdsgdient12}
%\hyperlink{lemma36}{Here}
Suppose that the input sample  $\xm$ obeys Assumption~\ref{assumption12sg} and the activation functions in deep neural network are sigmoid functions. The gradient of the loss is $8\beta^2\tau^2$-sub-Gaussian. Specifically, for any $\lam\in\Rs{d}$, we have
\begin{equation*}
\EE\left(\lr\lam ,\nabla_{\wm}f(\wm,\xm)-\EE \nabla_{\wm}f(\wm,\xm)\rl \right)\leq \exp\left(\frac{8\beta^2\tau^2\|\lam \|_2^2}{2}\right),
\end{equation*}
where $\beta=\sqrt{\frac{2^6}{3^8}c_yc_r(l+2)\left(dc_r+(l-1)lc_dc_r+lc_d\right)}$ in which $c_y$, $c_d$ and $c_r$ are defined in Lemma~\ref{sig_loss}.
\end{lem}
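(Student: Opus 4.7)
The plan is to view the scalar
\[
g(\xm) \;\triangleq\; \lr\lam,\nabla_{\wm}f(\wm,\xm)\rl
\]
as a function of the Gaussian vector $\xm\sim\mathcal{N}(0,\tau^2\Im_{\dm_0})$, establish that it is Lipschitz in $\xm$ with constant proportional to $\beta\|\lam\|_2$, and then invoke Gaussian concentration (Tsirelson--Ibragimov--Sudakov / Herbst) to obtain the sub-Gaussian moment generating function bound. Note that the displayed inequality should be read as a bound on $\EE\exp(\cdot)$; that is the standard characterization of a sub-Gaussian random variable, and it is exactly what the Gaussian concentration inequality for Lipschitz functions delivers.

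The first step is the Lipschitz estimate. By Cauchy--Schwarz, for any $\xm_1,\xm_2\in\Rs{\dm_0}$,
\[
|g(\xm_1)-g(\xm_2)| \;\leq\; \|\lam\|_2\cdot\|\nabla_{\wm}f(\wm,\xm_1)-\nabla_{\wm}f(\wm,\xm_2)\|_2.
\]
Applying the fundamental theorem of calculus along the segment joining $\xm_1$ and $\xm_2$, together with Lemma~\ref{exp fsd_gsfasradient}, which already gives $\|\nabla_{\wm}\nabla_{\xm}f(\wm,\xm)\|_{\op}\leq \beta$ uniformly over $\wm\in\Omega$ and all $\xm$, yields
\[
\|\nabla_{\wm}f(\wm,\xm_1)-\nabla_{\wm}f(\wm,\xm_2)\|_2 \;\leq\; \beta\,\|\xm_1-\xm_2\|_2,
\]
so $g$ is $(\beta\|\lam\|_2)$-Lipschitz in $\xm$. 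This is the main analytic content and leans entirely on Lemma~\ref{exp fsd_gsfasradient}, so no new bounds on the network architecture are needed here.

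The second step is to feed this Lipschitz estimate into Gaussian concentration. Since $\xm$ has i.i.d.\ $\mathcal{N}(0,\tau^2)$ entries, the classical concentration inequality for Lipschitz functions of a Gaussian vector gives, for every $s\in\mathbb{R}$,
\[
\EE\exp\!\bigl(s\,(g(\xm)-\EE g(\xm))\bigr) \;\leq\; \exp\!\Bigl(\tfrac{c\,s^2\beta^2\tau^2\|\lam\|_2^2}{2}\Bigr),
\]
where $c$ is the absolute constant appearing in the version of Gaussian concentration employed (using Herbst's argument from the log-Sobolev inequality one may take $c=1$; the looser constant $c=8$ in the statement allows for a direct Gaussian tail bound without invoking LSI). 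Taking $s=1$ and absorbing the constant yields exactly the claimed bound
\[
\EE\exp\!\bigl(\lr\lam,\nabla_{\wm}f(\wm,\xm)-\EE\nabla_{\wm}f(\wm,\xm)\rl\bigr)\;\leq\;\exp\!\Bigl(\tfrac{8\beta^2\tau^2\|\lam\|_2^2}{2}\Bigr).
\]
The only subtle point---and the closest thing to an obstacle---is bookkeeping the constant: one must choose a version of Gaussian concentration consistent with the factor $8$ in the statement. Since Lemma~\ref{exp fsd_gsfasradient} already does the heavy lifting of bounding the cross-Hessian across all layers, the remainder of the argument is short and requires no further network-specific computation.
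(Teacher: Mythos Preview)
Your proposal is correct and follows essentially the same argument as the paper: define $g(\xm)=\lr\lam,\nabla_{\wm}f(\wm,\xm)\rl$, use Lemma~\ref{exp fsd_gsfasradient} to bound the Lipschitz constant of $g$ by $\beta\|\lam\|_2$, and apply Gaussian concentration for Lipschitz functions. The only cosmetic difference is that the paper bounds $\|\nabla_{\xm}g(\xm)\|_2$ directly rather than going through the fundamental theorem of calculus, and it cites its own Lemma (Gaussian Lipschitz concentration with MGF bound $\exp(4\lambda^2 L^2\tau^2)$), which is exactly where the constant $8$ originates.
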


\begin{lem}\label{thm:garadifghtent12}
%\hyperlink{lemma37}{Here}
Suppose that the input sample $\xm$ obeys Assumption~\ref{assumption12sg} and the activation functions in deep neural network are sigmoid functions. The Hessian of the loss, evaluated on a unit vector, is sub-Gaussian. Specifically, for any unit $\lam\in\SS^{d-1}$ (\textit{i.e.} $\|\lam\|_2=1$), there exist universal constant $\gamma$ such that
\begin{equation*}
\EE\left(s \lr\lam,\left(\nabla^2_{\wm}f(\wm,\xm)-\EE \nabla^2_{\wm}f(\wm,\xm)\right)\lam\rl \right)\leq \exp\left(\frac{8s^2\gamma^2 \tau^2}{2}\right).
\end{equation*}
Notice, $\gamma$ obeys $\gamma\geq\|\nabla_{\xm}\nabla^2_{\wm}f(\wm,\xm)\|_{\op}$.
\end{lem}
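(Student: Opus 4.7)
The plan is to exploit Gaussian concentration for Lipschitz functions, using the input assumption $\xm\sim\mathcal{N}(0,\tau^2\Im_{\dm_0})$. Fix a unit vector $\lam\in\SS^{d-1}$ and define the scalar function
\begin{equation*}
g(\xm)\triangleq \lr\lam,\nabla^2_{\wm}f(\wm,\xm)\lam\rl.
\end{equation*}
Viewed as a function of $\xm$ with $\wm$ held fixed, $g$ is smooth because the sigmoid network is smooth in $\xm$. The goal is to show that $g(\xm)-\EE g(\xm)$ has a sub-Gaussian moment generating function.

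First I would compute the gradient of $g$ with respect to $\xm$, namely $\nabla_{\xm}g(\xm) = \lr\lam,\nabla_{\xm}\nabla^2_{\wm}f(\wm,\xm)\lam\rl\in\Rs{\dm_0}$, and bound its Euclidean norm by the operator norm of the third-order mixed tensor:
\begin{equation*}
\|\nabla_{\xm}g(\xm)\|_2 \leq \|\nabla_{\xm}\nabla^2_{\wm}f(\wm,\xm)\|_{\op}\|\lam\|_2^2\leq \gamma,
\end{equation*}
using the hypothesis $\gamma\geq\|\nabla_{\xm}\nabla^2_{\wm}f(\wm,\xm)\|_{\op}$. Hence $g$ is $\gamma$-Lipschitz on $\Rs{\dm_0}$, uniformly in $\wm\in\Omega$. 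Crucially, establishing this Lipschitz bound (and the underlying finite value of $\gamma$) is the main obstacle: one needs to differentiate the network output with respect to $\xm$ through all $l$ layers while already having taken two derivatives in $\wm$, so the argument resembles the chain-rule calculation used in Lemma~\ref{sig_gradient} (involving the $\Am_i$ matrices and the diagonal operator $\mcode{G}(\umi{i})$), combined with the uniform bounds $\|\sigma'\|_\infty\leq 1/4$, $\|\sigma''\|_\infty\leq 1/(6\sqrt{3})$ and $\wm\in\Omega$. This step parallels Lemma~\ref{exp fsd_gsfasradient} but with an extra derivative in $\xm$; the resulting $\gamma$ will depend polynomially on $l$, $c_d$ and $c_r$.

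Next I would invoke the standard Gaussian concentration inequality (via the Gaussian log-Sobolev inequality, or equivalently Borell--Tsirelson--Ibragimov--Sudakov): for a $\gamma$-Lipschitz function $g$ of $\xm\sim\mathcal{N}(0,\tau^2\Im_{\dm_0})$ and every $s\in\Rs{}$,
\begin{equation*}
\EE\exp\!\left(s\left(g(\xm)-\EE g(\xm)\right)\right)\leq \exp\!\left(\tfrac{1}{2}s^2\gamma^2\tau^2\right).
\end{equation*}
Applying this to our $g$ yields the sub-Gaussian tail bound claimed in the lemma (the constant 8 in the statement is absorbed conservatively to leave room for absolute constants produced by the crude Lipschitz estimate of Step~1). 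Since the bound holds uniformly over unit $\lam$, we obtain the stated result. The remainder of the argument is mechanical: a union bound over an $\epsilon$-net of $\SS^{d-1}$ will later convert this pointwise sub-Gaussian control into a uniform operator-norm control on $\nabla^2\Jhn(\wm)-\nabla^2\Jm(\wm)$, which is how the lemma is used to prove Theorem~\ref{thm:localminimal_sig}.
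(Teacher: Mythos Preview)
Your proposal is correct and essentially identical to the paper's proof: define $h(\xm)=\lam^T\nabla^2_{\wm}f(\wm,\xm)\lam$, bound its Lipschitz constant by $\gamma\geq\|\nabla_{\xm}\nabla^2_{\wm}f(\wm,\xm)\|_{\op}$, and then invoke Gaussian concentration for Lipschitz functions (the paper's Lemma~\ref{Lip_gaus}) to get the $\exp(4s^2\gamma^2\tau^2)$ bound. The paper is in fact more cursory than your outline about justifying the finiteness of $\gamma$\textemdash it simply notes that the sigmoid is bounded so all tensor entries are bounded\textemdash whereas you sketch a more explicit chain-rule argument, but the structure is the same.
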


\begin{lem} \label{thm:uniformconvergence1fds_sig}
%\hyperlink{uniforadfmlocasig}{Here}
Assume that the input sample $\xm$ obeys Assumption~\ref{assumption12sg} and the activation functions in deep neural network are sigmoid functions. Then the sample Hessian uniformly converges to the population Hessian in operator norm. That is, there exists such two universal constants $c_{m'}$ and $c_m$ such that if $n\geq \frac{c_{m'}\xi^2 r^2}{\gamma^2\tau^2d\log(l)}$, then
\begin{equation*}
\sup_{\wm\in\Omega}\!\left\| \nabla^2\Jhn(\wm)\!-\!\nabla^2\Jm(\wm)\right\|_{\op}\!\leq\! c_m\gamma\tau\sqrt{\frac{d\log(nl)\!+\!\log(4/\varepsilon)}{n}}
\end{equation*}
holds with probability at least $1-\varepsilon$. Here $\gamma$ is the same parameter in Lemma~\ref{thm:garadifghtent12}.
\end{lem}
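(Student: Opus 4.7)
The plan is to mirror the $\epsilon$-net decomposition strategy used for the linear case in Lemma~\ref{thm:uniformconvergence321}, adapted to the sigmoid setting via the Lipschitz and sub-Gaussian inputs provided by Lemmas~\ref{sig_gradient} and~\ref{thm:garadifghtent12}. Specifically, for each layer $j$ I would build an $\epsilon/l$-covering net $\wm_\epsilon^j$ of the ball $\Bi{\dm_j\dm_{j-1}}{r}$ of cardinality at most $(3rl/\epsilon)^{\dm_j\dm_{j-1}}$, and concatenate nearest net elements into $\wm_{\kt}$ so that $\|\wm-\wm_{\kt}\|_2\leq\epsilon$ for any $\wm\in\Omega$. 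Then I would split the target quantity via the triangle inequality into three events
\[
\Em_1\!:\!\sup_{\wm}\!\left\|\tfrac{1}{n}\!\sum_i\!\left(\nabla^2 f(\wm,\xmi{i})\!-\!\nabla^2 f(\wm_{\kt},\xmi{i})\right)\right\|_{\op}\!\!\!>\!\tfrac{t}{3},
\]
\[
\Em_2\!:\!\sup_{\wm_{\kt}}\!\left\|\tfrac{1}{n}\!\sum_i\!\nabla^2 f(\wm_{\kt},\xmi{i})-\EE\nabla^2 f(\wm_{\kt},\xm)\right\|_{\op}\!\!\!>\!\tfrac{t}{3},
\]
\[
\Em_3\!:\!\sup_{\wm}\!\left\|\EE\nabla^2 f(\wm_{\kt},\xm)-\EE\nabla^2 f(\wm,\xm)\right\|_{\op}\!>\!\tfrac{t}{3},
\]
so that $\Pro(\Em)\leq\Pro(\Em_1)+\Pro(\Em_2)+\Pro(\Em_3)$.

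For $\Em_1$ and $\Em_3$, the third-order bound $\|\nabla^3_{\wm} f(\wm,\xm)\|_{\op}\leq\xi$ from Lemma~\ref{sig_gradient} implies that the mapping $\wm\mapsto\nabla^2 f(\wm,\xm)$ is $\xi$-Lipschitz in operator norm, so both quantities are bounded by $\xi\epsilon$. Applying Markov's inequality to $\Em_1$ yields $\Pro(\Em_1)\leq 3\xi\epsilon/t$, which is at most $\varepsilon/2$ once $t\geq 6\xi\epsilon/\varepsilon$, and choosing $\epsilon$ small enough (say $\epsilon\asymp r/n$) makes $\xi\epsilon<t/3$ so that $\Pro(\Em_3)=0$.

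For $\Em_2$, which is the main obstacle, I would reduce the operator norm to a supremum over a $1/4$-covering net $\lam_{1/4}$ of $\Bi{d}{1}$ via Lemma~\ref{lem:opnorm}, giving a factor of at most $12^d$ points on the sphere. Combined with $(3rl/\epsilon)^{\sum_j\dm_j\dm_{j-1}}=(3rl/\epsilon)^d$ parameter net points, a union bound yields
\[
\Pro(\Em_2)\leq 12^d\!\left(\tfrac{3rl}{\epsilon}\right)^{\!d}\!\!\!\!\!\sup_{\wm_{\kt},\lam\in\lam_{1/4}}\!\!\!\Pro\!\left(\!\left|\tfrac{1}{n}\!\sum_i\!\lr\lam,(\nabla^2 f(\wm_{\kt},\xmi{i})\!-\!\EE\nabla^2 f(\wm_{\kt},\xm))\lam\rl\right|\!>\!\tfrac{t}{6}\!\right)\!.
\]
The inner probability is controlled by Lemma~\ref{thm:garadifghtent12}: since $\lr\lam,\nabla^2 f(\wm_{\kt},\xm)\lam\rl$ is $8\gamma^2\tau^2$-sub-Gaussian, the empirical mean concentrates at the rate $\exp(-c n t^2/(\gamma^2\tau^2))$ by a Chernoff bound. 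Setting $t=c_m\gamma\tau\sqrt{(d\log(nl)+\log(4/\varepsilon))/n}$ for a sufficiently large universal constant $c_m$ absorbs the union-bound factor $(36 lr/\epsilon)^d\cdot c$ and gives $\Pro(\Em_2)\leq\varepsilon/2$. The hard part is aligning the $\xi\epsilon$-type Lipschitz bound on $\Em_1,\Em_3$ with the concentration rate on $\Em_2$: the choice $\epsilon\asymp r/n$ shows that the Lipschitz terms are dominated by the concentration term precisely when $n\geq c_{m'}\xi^2 r^2/(\gamma^2\tau^2 d\log(l))$, which is exactly the sample complexity stated. Combining the three bounds then gives the claimed inequality with probability at least $1-\varepsilon$.
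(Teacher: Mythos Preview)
Your proposal is correct and follows essentially the same approach as the paper's proof: the same per-layer $\epsilon/l$-covering net construction, the same three-event decomposition, Markov's inequality plus the $\xi$-bound from Lemma~\ref{sig_gradient} for $\Em_1$ and $\Em_3$, and the $1/4$-net reduction via Lemma~\ref{lem:opnorm} combined with the sub-Gaussian concentration from Lemma~\ref{thm:garadifghtent12} for $\Em_2$. The paper makes the identical parameter choice $\epsilon=36r/n$ and arrives at the same sample-complexity threshold $n\geq c_{m'}\xi^2 r^2/(\gamma^2\tau^2 d\log(l))$.
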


\subsection{Proofs of Technical Lemmas}\label{safaswerq234}
For brevity, we also define
$$\Dm_{s:t}=\|\Wmi{s}\|_F^2\cdots\|\Wmi{t}\|_F^2\ (s\leq t)\quad \text{and}\quad \Dm_{s:t}=1 ,\ (s> t).$$
We define a matrix $\Pm_k\in\Rss{\dm_k^2}{\dm_k}$ whose $((s-1)\dm_k+s,s)\ (s=1,\cdots,\dm_k)$ entry equal to $\sigma(\umi{k}_{s})(1-\sigma(\umi{k}_{s}))(1-2\sigma(\umi{k}_{s}))$ and rest entries are all $0$. On the other hand, since the values in $\vmi{l}$ belong to the range $[0,1]$ and $\ym$ is the label, $\|\vmi{l}-\ym\|_2^2$ can be bounded:
$$\|\vmi{l}-\ym\|_2^2\leq c_y<+\infty,$$
where $c_y$ is a universal constant. We further define $c_d=\max(\dm_0,\dm_1,\cdots,\dm_l)$.

Then we give a lemma to summarize the properties of $\mcode{G}(\umi{i})$ defined in Eqn.~\eqref{gradidsfa345ent1}, $\Bm_{s:t}$ defined in Eqn.~\eqref{gradsfa2343543ient1}, $\Dm_{s:t}$ and $\Pm_k$.
\begin{lem}\label{asfashdfhfdet}
%\hyperlink{lemma21}{Here}
For $\mcode{G}(\umi{i})$ defined in Eqn.~\eqref{gradidsfa345ent1}, $\Bm_{s:t}$ defined in Eqn.~\eqref{gradsfa2343543ient1}, $\Dm_{s:t}$ and $\Pm_k$, we have the following properties:
\begin{myitemize}
\item[(1)] For arbitrary matrices $\Mm$ and $\Nm$ of proper sizes, we have
    \begin{equation*}\label{gradient1}
\begin{split}
\|\mcode{G}(\umi{i})\Mm\|_F^2\leq \frac{1}{16}\|\Mm\|_F^2\quad \text{and} \quad \|\Nm\mcode{G}(\umi{i})\|_F^2\leq \frac{1}{16}\|\Nm\|_F^2.
\end{split}
\end{equation*}
\item[(2)] For arbitrary matrices $\Mm$ and $\Nm$ of proper sizes, we have
\begin{equation*}\label{gradient1}
\begin{split}
\|\Pm_k\Mm\|_F^2\leq \frac{2^6}{3^8}\|\Mm\|_F^2\quad \text{and}\quad \|\Nm\Pm_k\|_F^2\leq \frac{2^6}{3^8}\|\Nm\|_F^2.
\end{split}
\end{equation*}
\item[(3)] For $\Bm_{s:t}$ and $\Dm_{s:t}$, we have
\begin{equation*}\label{gradient1}
\begin{split}
\left\|\Bm_{s:t}\right\|_F^2\leq \frac{1}{16^{t-s+1}}\Dm_{s:t}\quad \text{and} \quad \frac{1}{16^{t-s+1}}\Dm_{s:t} \leq  c_{st}\leq c_r,
\end{split}
\end{equation*}
where $c_{st}=\left(\frac{r}{4}\right)^{2(t-s+1)}$ and $c_r=\max\left(\frac{r^2}{16},\left(\frac{r^2}{16}\right)^{l-1}\right)$.
\end{myitemize}
\end{lem}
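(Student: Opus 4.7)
My plan is to dispatch the three claims one after the other, each as a short calculation that combines a pointwise bound on the relevant sigmoid expression with submultiplicativity of the Frobenius norm.

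For (1), I would start by noting that $\mcode{G}(\umi{i})$ is diagonal with entries $\sigma(\umi{i}_j)(1-\sigma(\umi{i}_j))$. Since $\sigma\in(0,1)$, the AM--GM inequality gives $\sigma(1-\sigma)\le 1/4$, so $\|\mcode{G}(\umi{i})\|_{\op}\le 1/4$. The submultiplicative inequality $\|AB\|_F\le\|A\|_{\op}\|B\|_F$ (applied on either side) then yields both parts of (1) directly.

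For (2), I would observe that $\Pm_k$ has exactly one nonzero entry in each column (and in each row, since the rows $(s-1)\dm_k+s$ are all distinct), namely $\sigma(\umi{k}_s)(1-\sigma(\umi{k}_s))(1-2\sigma(\umi{k}_s))$ in column $s$. Consequently $\|\Pm_k\|_{\op}$ equals the largest absolute value of a nonzero entry. To bound $|p(1-p)(1-2p)|$ for $p\in[0,1]$, I would differentiate and solve $1-6p+6p^2=0$, which gives critical points $p=(3\pm\sqrt{3})/6$; these can be checked to yield $|p(1-p)(1-2p)|\le 8/81=\sqrt{2^{6}/3^{8}}$. Hence $\|\Pm_k\|_{\op}^2\le 2^6/3^8$, and submultiplicativity again gives both inequalities of (2).

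For (3), I would write $\Bm_{s:t}=(\Wmi{s})^T\mcode{G}(\umi{s})(\Wmi{s+1})^T\mcode{G}(\umi{s+1})\cdots(\Wmi{t})^T\mcode{G}(\umi{t})$ and apply $\|AB\|_F\le\|A\|_F\|B\|_{\op}$ iteratively, using $\|\mcode{G}(\umi{i})\|_{\op}\le 1/4$ from (1) to peel off one factor of $1/16$ per layer. This gives $\|\Bm_{s:t}\|_F^2\le (1/16^{t-s+1})\prod_{i=s}^{t}\|\Wmi{i}\|_F^2=\Dm_{s:t}/16^{t-s+1}$. Since $\|\Wmi{i}\|_F=\|\wmi{i}\|_2\le r$, I then get $\Dm_{s:t}/16^{t-s+1}\le (r^2/16)^{t-s+1}=c_{st}$, and because $t-s+1$ is a positive integer at most $l-1$ in all invocations of this bound in the subsequent proofs, the map $k\mapsto (r^2/16)^k$ attains its maximum at one of the endpoints $k=1$ or $k=l-1$, yielding the final bound $c_{st}\le c_r$.

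None of the three parts requires a substantive technical idea; the only minor obstacle is the calculus optimization in (2), and even there the slack between the actual maximum $\sqrt{3}/18$ and the quoted bound $8/81$ is harmless. The rest is careful bookkeeping of operator versus Frobenius norms so that each $\mcode{G}$ factor is absorbed via its operator-norm bound while the weight matrices contribute their full Frobenius norms.
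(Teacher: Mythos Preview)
Your proposal is correct and follows essentially the same approach as the paper's proof: bound the diagonal entries of $\mcode{G}(\umi{i})$ by $1/4$, bound the nonzero entries of $\Pm_k$ by $2^3/3^4$, and then peel off these factors via submultiplicativity. The only cosmetic difference is in part~(2): the paper obtains the bound $2^3/3^4$ directly via AM--GM on $(3\sigma)(1-\sigma)(1-2\sigma)$ (for $\sigma\in(0,1/2]$, extended by symmetry), whereas you compute the exact maximum $\sqrt{3}/18$ by calculus and then relax to $8/81$; both arrive at the same constant and the rest is identical.
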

It should be pointed out that we defer the proof of Lemma~\ref{asfashdfhfdet} to Sec.~\ref{definitiondsfadf}.

\subsubsection{Proof of Lemma~\ref{sig_loss}}\label{afsafsaggjx}
\begin{proof}[\hypertarget{lemma32}{Proof}]
We use chain rule to compute the gradient of $f(\wm,\xm)$ with respect to $\wmi{j}$. We first compute several basis gradient. According to the relationship between $\umi{j}, \vmi{j}, \Wmi{j}$ and $\f(\wm,\xm)$, we have
\begin{equation}\label{chain_rule}
\begin{split}
&\nabla_{\vmi{l}}f(\wm,\xm)= \vmi{l}-\ym,\\ &\nabla_{\vmi{i}}f(\wm,\xm)=\frac{\partial \umi{i+1}}{\partial \vmi{i}}\frac{\partial f(\wm,\xm)}{\partial \umi{i+1}}=(\Wmi{i+1})^T\frac{\partial f(\wm,\xm)}{\partial \umi{i+1}},\quad\ \qquad\quad(i=1,\cdots,l-1),\\
&\nabla_{\umi{i}}f(\wm,\xm)=\frac{\partial \vmi{i}}{\partial \umi{i}} \frac{\partial f(\wm,\xm)}{\partial \vmi{i}}=\mcode{G}(\umi{i})\frac{\partial f(\wm,\xm)}{\partial \vmi{i}},\quad \qquad\qquad\quad\ \ \ (i=1,\cdots,l),\\
&\nabla_{\Wmi{i}}f(\wm,\xm)=\frac{\partial \umi{i}}{\partial \wmi{i}} \left(\frac{\partial f(\wm,\xm)}{\partial \umi{i}}\right)^T =\vmi{i-1}\left(\frac{\partial f(\wm,\xm)}{\partial \umi{i}}\right)^T, \qquad (i=1,\cdots,l).
\end{split}
\end{equation}

Then by chain rule, we can easily compute the gradient of $f(\wm,\xm)$ with respect to $\wmi{j}$ which is formulated as
\begin{equation*}\label{gradient1}
\nabla_{\wmi{j}}f(\wm,\xm)=\vect{\vmi{j-1}\left(\mcode{G}(\umi{j}) \Am_{j+1}\Am_{j+2}\cdots \Am_{l}(\vmi{l}-\ym)\right)^T},\ (j=1,\cdots,l-1),
\end{equation*}
and
\begin{equation*}
\nabla_{\wmi{l}}f(\wm,\xm) =\vect{\vmi{l-1}\left(\mcode{G}(\umi{l})(\vmi{l}-\ym)\right)^T}.
\end{equation*}
Besides, since the values in $\vmi{l}$ belong to the range $[0,1]$. Combine with Lemma~\ref{asfashdfhfdet}, we can bound $\|\nabla_{\wm} f(w,x)\|_2$ as follows:
\begin{equation*}
\begin{split}
\left\|\nabla_{\wm} f(\wm,\xm)\right\|_2^2=&\sum_{j=1}^l \left\|\nabla_{\wmi{j}}f(\wm,\xm)\right\|_2^2\\
=&\left\|\vmi{l-1}\left(\mcode{G}(\umi{l})(\vmi{l}-\ym)\right)^T \right\|_F^2 + \sum_{j=1}^{l-1}\left\| \vmi{j-1}\left(\mcode{G}(\umi{j}) \Bm_{j+1:l}(\vmi{l}-\ym)\right)^T\right\|_F^2\\
\leq &\frac{1}{16}\dm_{l-1}\left\|\vmi{l}-\ym \right\|_2^2 + \frac{1}{16}\left\|\vmi{l}-\ym\right\|_2^2\sum_{j=1}^{l-1} \dm_{j-1}\left\| \Bm_{j+1:l}\right\|_F^2\\
\led{172} &\frac{1}{16}c_yc_d + \frac{1}{16}c_y c_d c_r (l-1),
\end{split}
\end{equation*}
where $c_y, c_d, c_r$ are defined as
\begin{equation*}
\begin{split}
\|\vmi{l}-\ym\|_2^2\leq c_y,  \quad c_{d}=\max(\dm_0,\dm_1,\cdots,\dm_l)  \quad \text{and}\quad c_r=\max\left(\frac{r^2}{16},\left(\frac{r^2}{16}\right)^{l-1}\right).
\end{split}
\end{equation*}
Notice, \ding{172} holds since in Lemma~\ref{asfashdfhfdet}, we have
\begin{equation*}\label{gradient1}
\begin{split}
\left\|\Bm_{s:t}\right\|_F^2\leq  \left(\frac{r}{4}\right)^{2(t-s+1)}\leq \max\left(\frac{r^2}{16},\left(\frac{r^2}{16}\right)^{l-1}\right).
\end{split}
\end{equation*}

Thus, we can obtain
\begin{equation*}
\begin{split}
\|\nabla_{\wm} f(w,x)\|_2
\leq \sqrt{\frac{1}{16}c_yc_d\left(1 + c_r (l-1)\right)}\ \ \triangleq \alpha.
\end{split}
\end{equation*}
The proof is completed.
\end{proof}

\subsubsection{Proof of Lemma~\ref{sig_gradient}}
%\hyperlink{lemma22}{Here}
For convenience, we first give the computation of some gradients.
\begin{lem}\label{sig_gdgsaghhradsdfsagfasgient}
Assume the activation functions in deep neural network are sigmoid functions. Then the following properties hold:
\begin{myitemize}
\item[(1)] We can compute the gradients $\frac{\partial  f(\wm,\xm)}{\partial \umi{i}}$ and $\frac{\partial  f(\wm,\xm)}{\partial \vmi{i}}$ as
\begin{equation*}\label{gsfasgdasgaradient1}
\begin{split}
\frac{\partial  f(\wm,\xm)}{\partial \umi{i}}=\mcode{G}(\umi{i})\Bm_{i+1:l} (\vmi{l}-\ym)\quad \text{and}\quad \frac{\partial  f(\wm,\xm)}{\partial \vmi{i}}=\Bm_{i+1:l}(\vmi{l}-\ym).
\end{split}
\end{equation*}
\item[(2)] We can compute the gradient $\frac{\partial \umi{i}}{\partial \wmi{j}}$ as
    \begin{equation*}\label{gsfasgsdgsadasgaradient1}
\begin{split}
&\frac{\partial \umi{i}}{\partial \wmi{j}}
=(\vmi{j-1})^T\otimes\left(\mcode{G}(\umi{j}) \Bm_{j+1:i-1}(\Wmi{i})^T\right)^T\in\Rss{\dm_i}{\dm_j\dm_{j-1}}, \ (i>j).\\
&\frac{\partial \umi{i}}{\partial \wmi{i}}
= (\vmi{i-1})^T\otimes \Im_{\dm_i}\in\Rss{\dm_i}{\dm_i\dm_{i-1}}, \ (i=j).
\end{split}
\end{equation*}
\item[(3)]  We can compute the gradient $\frac{\partial \vmi{i}}{\partial \wmi{j}}$ as
\begin{equation*}\label{gsfasgsdgsadasgaradient1}
\begin{split}
\frac{\partial \vmi{i}}{\partial \wmi{j}}
=(\vmi{j-1})^T \otimes \left(\mcode{G}(\umi{j}) \Bm_{j+1:i}\right)^T\in\Rss{\dm_i}{\dm_j\dm_{j-1}}, \ (i\geq j).
\end{split}
\end{equation*}
\end{myitemize}
\end{lem}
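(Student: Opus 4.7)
The lemma is a pure chain-rule/calculus computation for the feedforward map $\umi{j}=\Wmi{j}\vmi{j-1}$, $\vmi{j}=\sigma(\umi{j})$, so the plan is to unroll the chain rule and repeatedly invoke the standard vectorization identity $\vect{\Am\Xm\Bm}=(\Bm^T\otimes\Am)\vect{\Xm}$ together with the product rule $(\Am\otimes\Bm)(\Cm\otimes\Dm)=(\Am\Cm)\otimes(\Bm\Dm)$. Throughout I will use the definition $\Am_k=(\Wmi{k})^T\mcode{G}(\umi{k})$, so that $\Bm_{s:t}=\Am_s\Am_{s+1}\cdots\Am_t$ has transpose $\Bm_{s:t}^T=\mcode{G}(\umi{t})\Wmi{t}\,\mcode{G}(\umi{t-1})\Wmi{t-1}\cdots\mcode{G}(\umi{s})\Wmi{s}$, which is precisely the matrix one obtains by differentiating forward through sigmoids and weight multiplications.

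For Part (1), I will argue by downward induction on $i$. The base case $i=l$ uses $\partial f/\partial \vmi{l}=\vmi{l}-\ym$, which is immediate from the definition of the squared loss. For the inductive step, the relations recorded in Eqn.~\eqref{chain_rule} give
$$\frac{\partial f}{\partial \vmi{i}}=(\Wmi{i+1})^T\frac{\partial f}{\partial \umi{i+1}}=(\Wmi{i+1})^T\mcode{G}(\umi{i+1})\frac{\partial f}{\partial \vmi{i+1}}=\Am_{i+1}\frac{\partial f}{\partial \vmi{i+1}},$$
and iterating yields $\partial f/\partial \vmi{i}=\Bm_{i+1:l}(\vmi{l}-\ym)$. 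Multiplying by $\mcode{G}(\umi{i})$ then delivers the claimed formula for $\partial f/\partial \umi{i}$.

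For Part (2), the case $i=j$ follows directly by writing $\umi{i}=\Wmi{i}\vmi{i-1}$ and vectorizing: $\umi{i}=\Im_{\dm_i}\Wmi{i}\vmi{i-1}=((\vmi{i-1})^T\otimes\Im_{\dm_i})\wmi{i}$, so $\partial \umi{i}/\partial \wmi{i}=(\vmi{i-1})^T\otimes\Im_{\dm_i}$. For $i>j$, I will apply the chain rule $\partial \umi{i}/\partial \wmi{j}=(\partial \umi{i}/\partial \umi{j})\cdot(\partial \umi{j}/\partial \wmi{j})$ and evaluate the first factor by telescoping through the intermediate layers $\umi{j}\to\vmi{j}\to\umi{j+1}\to\cdots\to\umi{i}$, which gives
$$\frac{\partial \umi{i}}{\partial \umi{j}}=\Wmi{i}\mcode{G}(\umi{i-1})\Wmi{i-1}\cdots\Wmi{j+1}\mcode{G}(\umi{j})=\left(\mcode{G}(\umi{j})\Bm_{j+1:i-1}(\Wmi{i})^T\right)^T.$$
To finish, I will use the rank-one Kronecker identity $\Mm\cdot((\vmi{j-1})^T\otimes\Im_{\dm_j})=(\vmi{j-1})^T\otimes\Mm$ (verified entrywise: both sides have $(r,(p,q))$-entry equal to $\vmi{j-1}_p\Mm_{rq}$), which is exactly what is needed to combine the two factors into the stated form.

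Part (3) follows from Part (2) by observing that $\vmi{i}=\sigma(\umi{i})$ gives $\partial \vmi{i}/\partial \wmi{j}=\mcode{G}(\umi{i})\cdot \partial \umi{i}/\partial \wmi{j}$, and then the Kronecker product rule $\mcode{G}(\umi{i})\cdot((\vmi{j-1})^T\otimes\Nm)=(\vmi{j-1})^T\otimes(\mcode{G}(\umi{i})\Nm)$ absorbs the extra $\mcode{G}(\umi{i})$ factor; the final algebraic check is that $\mcode{G}(\umi{i})\left(\mcode{G}(\umi{j})\Bm_{j+1:i-1}(\Wmi{i})^T\right)^T=\mcode{G}(\umi{i})\Wmi{i}\Bm_{j+1:i-1}^T\mcode{G}(\umi{j})$ equals $\Bm_{j+1:i}^T\mcode{G}(\umi{j})=(\mcode{G}(\umi{j})\Bm_{j+1:i})^T$, which is true because $\Bm_{j+1:i}=\Bm_{j+1:i-1}\Am_i=\Bm_{j+1:i-1}(\Wmi{i})^T\mcode{G}(\umi{i})$. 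The only real obstacle is bookkeeping: keeping the transpose conventions, the ordering of $\Am_k$ versus $\Am_k^T$, and the Kronecker-product row/column indexing straight so that the telescoped chain-rule product is correctly identified with $\Bm_{j+1:i-1}$; no probabilistic or analytic machinery is required.
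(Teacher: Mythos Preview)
Your proof is correct. For Part~(1) you and the paper proceed identically, unrolling the chain rule relations in Eqn.~\eqref{chain_rule}. For Parts~(2) and~(3) your route is slightly different from the paper's: the paper identifies $\partial\umi{i}/\partial\wmi{j}$ indirectly, by rewriting the already-known expression for $\partial f/\partial\wmi{j}$ (from Lemma~\ref{sig_loss}) in the form $(\text{matrix})\cdot(\partial f/\partial\umi{i})$ and then reading off the matrix factor as the desired Jacobian. You instead compute $\partial\umi{i}/\partial\umi{j}$ directly by composing the per-layer Jacobians $\Wmi{k}\mcode{G}(\umi{k-1})$ and then post-multiply by $\partial\umi{j}/\partial\wmi{j}=(\vmi{j-1})^T\otimes\Im_{\dm_j}$, collapsing via the mixed-product rule. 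Your approach is a bit more self-contained (it does not rely on the formula for $\nabla_{\wmi{j}}f$ having been established, and it avoids the implicit ``since this holds for all $\partial f/\partial\umi{i}$'' step in the comparison argument), while the paper's trick has the advantage of reusing a computation already done. Either way the bookkeeping is the only content, and yours is sound.
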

It should be pointed out that the proof of Lemma~\ref{sig_gdgsaghhradsdfsagfasgient} can be founded  Sec.~\ref{definitiondsfadf}.

\begin{proof}[\hypertarget{lemma33}{Proof}]
To prove our conclusion, we have two steps: computing the Hessian and bounding its operation norm.

{\bf\noindent{Step 1. Compute the Hessian}}:
We first consider the computation of $\frac{\partial^2 f(\wm,\xm)}{\partial \wmi{i}^T\partial \wmi{j} }$:
\begin{equation*}\label{gradient1}
\begin{split}
\frac{\partial^2 f(\wm,\xm)}{\partial \wmi{i}^T\partial \wmi{j}}=&\frac{\partial\left(\vect{\left(\mcode{G}(\umi{j}) \Am_{j+1}\Am_{j+2}\cdots \Am_{l}(\vmi{l}-\ym)\right) (\vmi{j-1})^T}\right)}{\partial \wmi{i}^T}.
\end{split}
\end{equation*}
Recall that we define
\begin{equation*}\label{gradient1}
\begin{split}
&\Bm_{s:t}=\Am_s\Am_{s+1}\cdots\Am_{t}\in\Rss{\dm_{s-1}}{\dm_t},\ (s\leq t)\quad \text{and}\quad \Bm_{s:t}=\Im ,\ (s> t).
\end{split}
\end{equation*}
Then we have
\begin{equation*}\label{gradient1}
\begin{split}
\frac{\partial^2 f(\wm,\xm)}{\partial \wmi{i}^T\partial \wmi{j}}=& \left(\vmi{j-1}(\vmi{l}-\ym)^T\Bm_{j+1:l}^T\right)\otimes \left(\Im_{\dm_{j}}\right)\frac{\partial \vect{\mcode{G}(\umi{j})}}{\partial \wmi{i}^T} (\triangleq \Qm_1^{ij})\\
&+\!\!\sum_{k=j+1}^{l}\!\! \left(\vmi{j-1}(\vmi{l}-\ym)^T\Bm_{k+1:l}^T\right)\!\otimes\! \left(\mcode{G}(\umi{j})\Bm_{j+1:k-1}\Wm_{k}^T\right) \frac{\partial \vect{\mcode{G}(\umi{k})}}{\partial \wmi{i}^T}(\triangleq \Qm_2^{ij})\\
&+ \left(\vmi{j-1}(\vmi{l}-\ym)^T\Bm_{i+1:l}^T\mcode{G}(\umi{i})\right)\otimes \left(\mcode{G}(\umi{j})\Bm_{j+1:i-1}\right) \frac{\partial \vect{\Wm_{i}^T}}{\partial \wmi{i}^T}(\triangleq \Qm_3^{ij})\\
&+\vmi{j-1} \otimes \left(\mcode{G}(\umi{j})\Bm_{j+1:l}\right) \frac{\partial (\vmi{l}-\ym) }{\partial \wmi{i}^T} (\triangleq \Qm_4^{ij})\\
&+\Im_{\dm_{j-1}}\otimes \left(\mcode{G}(\umi{j})\Bm_{j+1:l}(\vmi{l}-\ym)\right) \frac{\partial \vmi{j-1}}{\partial \wmi{i}^T}(\triangleq \Qm_5^{ij})\\
\end{split}
\end{equation*}

{\bf\noindent{Case I: $i>j$}}.
We first consider the case that $i\!>\!j$. In this is case, $\Qm_1^{ij}\!=\!\bm{0}$ since $\frac{\partial \vect{\mcode{G}(\umi{j})}}{\partial \wmi{i}^T}=\bm{0}$.
Computing $\Qm_2^{ij}$ needs more efforts. By utilizing the computation of $\frac{\partial \umi{k}}{\partial \wmi{i}}$ in Lemma~\ref{sig_gdgsaghhradsdfsagfasgient}, we have
\begin{equation*}\label{gsfasgsdgsadasgaradient1}
\begin{split}
\frac{\partial \vect{\mcode{G}(\umi{k})}}{\partial \wmi{i}}\!=\!\frac{\partial\vect{\mcode{G}(\umi{k})}}{\partial \umi{k}}\frac{\partial \umi{k}}{\partial \wmi{i}}\!=\! \Pm_k\! \left(\!\vmi{i-1})^T\!\otimes\!\left(\mcode{G}(\umi{i}) \Bm_{i+1:k-1}(\Wmi{k})^T\!\right)^T\!\right),  (k\!>\!i)
\end{split}
\end{equation*}
where $\Pm_k$ is a matrix of size $\dm_k^2\times \dm_k$ whose $((s-1)\dm_k+s,s)\ (s=1,\cdots,\dm_k)$ entry equal to $\sigma(\umi{k}_{s})(1-\sigma(\umi{k}_{s}))(1-2\sigma(\umi{k}_{s}))$ and rest entries are all $0$. When $k=i$,
\begin{equation*}\label{gsfasgsdgsadasgaradient1}
\begin{split}
\frac{\partial \vect{\mcode{G}(\umi{k})}}{\partial \wmi{k}}=\frac{\partial\vect{\mcode{G}(\umi{k})}}{\partial \umi{k}}\frac{\partial \umi{k}}{\partial \wmi{k}}= \Pm_k \left((\vmi{k-1})^T\otimes \Im_{\dm_k} \right)\in\Rss{\dm_k^2}{\dm_{k}\dm_{k-1}}.
\end{split}
\end{equation*}
Note that for $k<i$, we have $\frac{\partial \mcode{G}(\umi{k})}{\partial \wmi{i}}=\bm{0}$. For brevity, let
\begin{equation}\label{g35345ent1}
\begin{split}
\Dm_k\triangleq\left(\left(\vmi{j-1}(\vmi{l}-\ym)^T\Bm_{k+1:l}^T \right)\otimes \left(\mcode{G}(\umi{j})\Bm_{j+1:k-1}\Wm_{k}^T\right)\right)\, (k=i,\cdots,l).
\end{split}
\end{equation}
Therefore, we have
\begin{equation*}\label{gsfasgsdgsadasgaradient1}
\begin{split}
\Qm_2^{ij}=\Dm_i\Pm_i \left((\vmi{i-1})^T\otimes \Im_{\dm_i} \right)+\sum_{k=i+1}^{l} \Dm_k\Pm_k \left((\vmi{i-1})^T\otimes\left(\mcode{G}(\umi{i}) \Bm_{i+1:k-1}(\Wmi{k})^T\right)^T\right).
\end{split}
\end{equation*}
Then we consider $\Qm_3^{ij}$.
\begin{equation*}\label{gsfasgsdgsadasgaradient1}
\begin{split}
\Qm_3^{ij}=\left(\vmi{j-1}(\vmi{l}-\ym)^T\Bm_{i+1:l}^T\mcode{G}(\umi{i})\right)\otimes \left(\mcode{G}(\umi{j})\Bm_{j+1:i-1}\right).
\end{split}
\end{equation*}
Also we can use the computation of $\frac{\partial \vmi{l}}{\partial \wmi{i}}$ in Lemma~\ref{sig_gdgsaghhradsdfsagfasgient} and compute $\Qm_4^{ij}$ as follows:
\begin{equation*}\label{gsfasgsdgsadasgaradient1}
\begin{split}
\Qm_4^{ij}=&\vmi{j-1} \!\otimes\! \left(\mcode{G}(\umi{j})\Bm_{j+1:l}\right) \!\frac{\partial (\vmi{l}-\ym) }{\partial \wmi{i}^T}\\
=&\left(\vmi{j-1} \!\otimes\! \left(\mcode{G}(\umi{j})\Bm_{j+1:l}\right)\right) \! \left(\!(\vmi{i-1})^T \!\otimes\! \left(\!\mcode{G}(\umi{i}) \Bm_{i+1:l}\right)^T \!\right).
\end{split}
\end{equation*}
Finally, since $i>j$, we can compute $\Qm_5^{ij}=\bm{0}$.

{\bf\noindent{Case II: $i=j$}}.
We first consider $\frac{\partial \mcode{G}(\umi{k})}{\partial \wmi{k}}$:
\begin{equation*}\label{gsfasgsdgsadasgaradient1}
\begin{split}
\frac{\partial \vect{\mcode{G}(\umi{k})}}{\partial \wmi{k}^T}=\frac{\partial\vect{\mcode{G}(\umi{k})}}{\partial \umi{k}}\frac{\partial \umi{k}}{\partial \wmi{k}^T}= \Pm_k \left((\vmi{k-1})^T\otimes \Im_{\dm_k} \right)\in\Rss{\dm_k^2}{\dm_{k}\dm_{k-1}},
\end{split}
\end{equation*}
where $\Pm_k$ is a matrix of size $\dm_k^2\times \dm_k$ whose $(s,(s-1)\dm_k+s)$ entry equal to $\sigma(\umi{k}_{s})(1-\sigma(\umi{k}_{s}))(1-2\sigma(\umi{k}_{s}))$ and rest entries are all $0$. $\Qm_1^{jj}$ can be computed as
\begin{equation*}\label{gsfasgsdgsadasgaradient1}
\begin{split}
\Qm_1^{jj}=&\left(\vmi{j-1}(\vmi{l}-\ym)^T\Bm_{j+1:l}^T\right)\otimes \left(\Im_{\dm_{j}}\right)\frac{\partial \vect{\mcode{G}(\umi{j})}}{\partial \wmi{j}^T}\\
=&\left(\left(\vmi{j-1}(\vmi{l}-\ym)^T\Bm_{j+1:l}^T\right)\otimes \left(\Im_{\dm_{j}}\right)\right)\left(\Pm_j \left((\vmi{j-1})^T\otimes \Im_{\dm_j} \right)\right).
\end{split}
\end{equation*}
As for $\Qm_2^{jj}$, by Eqn.~\eqref{g35345ent1} we have
\begin{equation*}\label{gsfasgsdgsadasgaradient1}
\begin{split}
\Qm_2^{jj}=&\sum_{k=j+1}^{l} \Dm_k\Pm_k \left(\vmi{j-1})^T\otimes\left(\mcode{G}(\umi{j}) \Bm_{j+1:k-1}(\Wmi{k})^T\right)^T\right).
\end{split}
\end{equation*}
Since $i=j$, $\Qm_3^{jj}$ does not exist. For convenience, we just set $\Qm_3^{jj}=\bm{0}$.

Now we consider $\Qm_4^{jj}$ which can be computed as follows:
\begin{equation*}\label{gsfasgsdgsadasgaradient1}
\begin{split}
\Qm_4^{jj} =& \vmi{j-1}\! \otimes\! \left(\mcode{G}(\umi{j})\Bm_{j+1:l}\right)\! \frac{\partial (\vmi{l}\!-\!\ym) }{\partial \wmi{j}^T}\\
=&\left(\vmi{j-1} \!\otimes\! \left(\mcode{G}(\umi{j})\Bm_{j+1:l}\right)\right)\! \left((\vmi{j-1})^T\! \otimes\! \left(\mcode{G}(\umi{j}) \Bm_{j+1:l}\right)^T\! \right).
\end{split}
\end{equation*}
Finally, since $i=j$, we can compute $\Qm_5^{jj}=\bm{0}$.

{\bf\noindent{Case III: $i<j$}}. Since $\frac{\partial^2 f(\wm,\xm)}{\partial \wm\partial \wm^T}$ is symmetrical, we have $\Qm_k^{ij}=\Qm_k^{ji}\ (k=1,\cdots,5)$.

{\bf\noindent{Step 2. Bound the operation norm of Hessian}}: We mainly use Lemma~\ref{asfashdfhfdet} to achieve this goal. From Lemma~\ref{asfashdfhfdet}, we have
\begin{myitemize}
\item[(1)] For arbitrary matrices $\Mm$ and $\Nm$ of proper size, we have
    \begin{equation*}\label{gradient1}
\begin{split}
\|\mcode{G}(\umi{i})\Mm\|_F^2\leq \frac{1}{16}\|\Mm\|_F^2\quad \text{and} \quad \|\Nm\mcode{G}(\umi{i})\|_F^2\leq \frac{1}{16}\|\Nm\|_F^2.
\end{split}
\end{equation*}
\item[(2)] For arbitrary matrices $\Mm$ and $\Nm$ of proper size, we have
\begin{equation*}\label{gradient1}
\begin{split}
\|\Pm_k\Mm\|_F^2\leq \frac{2^6}{3^8}\|\Mm\|_F^2\quad \text{and}\quad \|\Nm\Pm_k\|_F^2\leq \frac{2^6}{3^8}\|\Nm\|_F^2.
\end{split}
\end{equation*}
\item[(3)] For $\Bm_{s:t}$ and $\Dm_{s:t}$, we have
\begin{equation*}\label{gradient1}
\begin{split}
\left\|\Bm_{s:t}\right\|_F^2\leq \frac{1}{16^{t-s+1}}\Dm_{s:t}\quad \text{and} \quad \frac{1}{16^{t-s+1}}\Dm_{s:t} \leq c_r,
\end{split}
\end{equation*}
where $c_r=\max\left(\frac{r^2}{16},\left(\frac{r^2}{16}\right)^{l}\right)$.
\end{myitemize}
The values of entries in $\vmi{h}$ are bounded by $0\leq \sigma(\umi{i}_h)\leq 1$ which leads to $\left\|\vmi{h}\right\|_F^2 \leq \dm_h\leq c_d$, where $c_d=\max_i \dm_i$. On the other hand, since the values in $\vmi{l}$ belong to the range $[0,1]$ and $\ym$ is the label, $\|\vmi{l}-\ym\|_2^2$ can be bounded:
$$\|\vmi{l}-\ym\|_2^2\leq c_y<+\infty,$$
where $c_y$ is a universal constant.

We first define
$$\Cm_k^{ij}=\Dm_k\Pm_k \left(\vmi{i-1})^T\otimes\left(\mcode{G}(\umi{i}) \Bm_{i+1:k-1}(\Wmi{k})^T\right)^T\right)\ \text{and}\ \Cm^{ij}=\Dm_i \Pm_i \left((\vmi{i-1})^T\otimes \Im_{\dm_i} \right),$$
where $\Dm_k$ is defined in Eqn.~\eqref{g35345ent1}.

{\bf\noindent{Case I: $i>j$}}. According to the definition of $\Cm^{ij}$ and $\Cm_k^{ij}$, we have $\Qm_2^{ij}= \Cm^{ij}+\sum_{k=i+1}^{l} \Cm_k^{ij}$. So we have
\begin{equation*}\label{gradient1}
\begin{split}
\left\|\frac{\partial^2 f(\wm,\xm)}{\partial \wmi{i}^T\partial \wmi{j}}\right\|_F^2=&\left\|\Qm_1^{ij}+ \Qm_2^{ij}+ \Qm_3^{ij}+ \Qm_4^{ij} +\Qm_5^{ij}\right\|_F^2\\
=&\left\|\Cm^{ij}+\sum_{k=i+1}^{l} \Cm_k^{ij}+ \Qm_3^{ij}+ \Qm_4^{ij}\right\|_F^2\\
=&(l-i+3)\left(\left\|\Cm^{ij}\right\|_F^2+\sum_{k=i+1}^{l} \left\|\Cm_k^{ij}\right\|_F^2+ \left\|\Qm_3^{ij}\right\|_F^2+ \left\|\Qm_4^{ij}\right\|_F^2\right).
\end{split}
\end{equation*}
Here we bound each term separately:
\begin{equation*}\label{gradient1}
\begin{split}
\left\|\Cm^{ij}\right\|_F^2
%=&\left\|\left(\vmi{j-1}(\vmi{l}-\ym)^T\Bm_{i+1:l}^T\right)\otimes \left(\mcode{G}(\umi{j})\Bm_{j+1:i-1}\Wm_{i}^T\right) \Pm_i \left((\vmi{i-1})^T\otimes \Im_{\dm_i} \right)\right\|_F^2\\
\leq&\left\|\vmi{j-1}\right\|_F^2\left\| \vmi{l}-\ym \right\|_F^2 \left\|\Bm_{i+1:l}\right\|_F^2  \frac{1}{16}\left\|\Bm_{j+1:i-1}\Wm_{i}^T\right\|_F^2 \frac{2^6}{3^8}\left\|\vmi{i-1}\right\|_F^2 \left\|\Im_{\dm_i}\right\|_F^2 \\
\leq&\frac{2^6}{3^8}c_y \dm_{j-1}\dm_{i-1}\dm_{i} \frac{1}{16^{l-i}}\Dm_{i+1:l}  \frac{1}{16^{i-j}}\Dm_{j+1:i}\\
\leq&\frac{2^6}{3^8}c_y\dm_{j-1}\dm_{i-1}\dm_{i}  \frac{1}{16^{l-j}}\Dm_{j+1:l}\\
\leq &\frac{2^6}{3^8}c_y\dm_{j-1}\dm_{i-1}\dm_{i}  c_r.
\end{split}
\end{equation*}
Similarly, we can bound $\|\Cm_k^{ij}\|_F^2$ as follows:
\begin{equation*}\label{gradient1}
\begin{split}
&\left\|\Cm_k^{ij}\right\|_F^2\\
%=&\left\|\left(\vmi{j-1}(\vmi{l}-\ym)^T\Bm_{k+1:l}^T\right)\otimes \left(\mcode{G}(\umi{j})\Bm_{j+1:k-1}\Wm_{k}^T\right)\Pm_k \left(\vmi{i-1})^T\otimes\left(\mcode{G}(\umi{i}) \Bm_{i+1:k-1}(\Wmi{k})^T\right)^T\right)\right\|_F^2\\
\leq&\left\|\vmi{j-1}\right\|_F^2\! \left\|\vmi{l}\!-\!\ym\right\|_F^2 \! \!\left\|\Bm_{k+1:l}\right\|_F^2\!  \frac{1}{16}\left\|\Bm_{j+1:k-1}\Wm_{k}^T\right\|_F^2 \! \frac{2^6}{3^8}\left\|\vmi{i-1}\right\|_F^2 \! \frac{1}{16}\left\|\Bm_{i+1:k-1}(\Wmi{k})^T\right\|_F^2 \\
\leq &\frac{2^6}{3^8} c_y \dm_{j-1}\dm_{i-1}  \frac{1}{16^{l-k}}\Dm_{k+1:l} \frac{1}{16^{k-j-1}}\Dm_{j+1:k} \frac{1}{16^{k-i-1}}\Dm_{i+1:k}
\\
=&\frac{2^6}{3^8} c_y \dm_{j-1}\dm_{i-1} \frac{1}{16^{l-j-1}}\Dm_{j+1:l} \frac{1}{16^{k-i-1}}\Dm_{i+1:k}
\\
\leq &\frac{2^{14}}{3^8} c_y \dm_{j-1}\dm_{i-1} c_r^2.
\end{split}
\end{equation*}

We also bound $\left\|\Qm_3^{ij}\right\|_F^2$ as
\begin{equation*}\label{gradient1}
\begin{split}
\left\|\Qm_3^{ij}\right\|_F^2
%=&\left\|\left(\vmi{j-1}(\vmi{l}-\ym)^T\Bm_{i+1:l}^T\mcode{G}(\umi{i})\right)\otimes \left(\mcode{G}(\umi{j})\Bm_{j+1:i-1}\right)\right\|_F^2\\
\leq \left\|\vmi{j-1}\right\|_F^2 \left\|\vmi{l}-\ym\right\|_F^2  \frac{1}{16}\left\|\Bm_{i+1:l}\right\|_F^2  \frac{1}{16}\left\|\Bm_{j+1:i-1}\right\|_F^2
\leq  \frac{1}{2^8}c_y\dm_{j-1}c_r.
%=&\frac{1}{16}\left\|\vmi{j-1}(\vmi{l}-\ym)^T\right\|_F^2 \frac{1}{4^{l-i}}\Dm_{i+1:l} \frac{1}{4^{i-j-3}}\Dm_{j+1:i-1}
%\\
%\leq &\frac{c_v}{4^{l-j-1}}\Dm_{j+1:i-1}\Dm_{i+1:l}\\
%\leq &c_vc_0.
\end{split}
\end{equation*}
Finally, we bound $\left\|\Qm_4^{ij}\right\|_F^2$ as follows:
\begin{equation*}\label{gradient1}
\begin{split}
\left\|\Qm_4^{ij}\right\|_F^2
%=&\left\|\left(\vmi{j-1} \otimes \left(\mcode{G}(\umi{j})\Bm_{j+1:l}\right)\right) \left((\vmi{i-1})^T \otimes \left(\mcode{G}(\umi{i}) \Bm_{i+1:l}\right)^T \right)\right\|_F^2\\
\leq \left\|\vmi{j-1} \right\|_F^2 \frac{1}{16} \|\Bm_{j+1:l}\|_F^2 \left\|\vmi{i-1} \right\|_F^2 \frac{1}{16}\|\Bm_{i+1:l}\|_F^2
\leq \frac{ 1}{2^8}\dm_{j-1}\dm_{i-1}c_r^2.
\end{split}
\end{equation*}
Note that $\dm_i\leq c_d$. Thus, we can bound $\left\|\frac{\partial^2 f(\wm,\xm)}{\partial \wmi{j} \partial \wmi{i}^T}\right\|_F^2$ as
\begin{equation*}\label{gradient1}
\begin{split}
&\left\|\frac{\partial^2 f(\wm,\xm)}{\partial \wmi{i}^T\partial \wmi{j}}\right\|_F^2\\
\leq &(l-i+3)\left(\frac{2^6}{3^8}c_y\dm_{j-1}\dm_{i-1}\dm_{i}  c_r+\sum_{k=i+1}^{l} \frac{2^{14}}{3^8} c_y \dm_{j-1}\dm_{i-1} c_r^2+ \frac{1}{2^8}c_y\dm_{j-1}c_r+ \frac{ 1}{2^8}\dm_{j-1}\dm_{i-1}c_r^2\right)\\
\leq &(l+1)\left(\frac{64}{6561}c_y c_d^3 c_r+ \frac{4096}{6561} c_y (l-2)c_d^2 c_r^2+ \frac{1}{256}c_yc_dc_r+ \frac{ 1}{256}c_dc_r^2\right).
\end{split}
\end{equation*}

{\bf\noindent{Case II: $i=j$}}.  According to the definition of $\Cm^{ij}$ and $\Cm_k^{ij}$, we have $\Qm_2^{jj}= \sum_{k=j+1}^{l} \Cm_k^{jj}$.

Similarly, we have
\begin{equation*}\label{gradient1}
\begin{split}
\left\|\frac{\partial^2 f(\wm,\xm)}{\partial \wmi{i}^T\partial \wmi{j}}\right\|_F^2=&\left\|\Qm_1^{jj}+ \Qm_2^{jj}+ \Qm_3^{jj}+ \Qm_4^{jj} +\Qm_5^{jj}\right\|_F^2 =\left\| \Qm_1^{jj}+ \sum_{k=j+1}^{l} \Cm_k^{jj}+  \Qm_4^{ij}\right\|_F^2\\
\leq&(l-j+2)\left(\left\|\Qm_1^{jj}\right\|_F^2 +\sum_{k=j+1}^{l} \left\|\Cm_k^{jj}\right\|_F^2+ \left\|\Qm_4^{jj}\right\|_F^2 \right).
\end{split}
\end{equation*}

Thus, we can bound $\left\|\Qm_1^{jj}\right\|_F^2 $ first:
\begin{equation*}\label{gradient1}
\begin{split}
\left\|\Qm_1^{jj}\right\|_F^2
%=&\left\| \left(\left(\vmi{j-1}(\vmi{l}-\ym)^T\Bm_{j+1:l}^T\right)\otimes \left(\Im_{\dm_{j}}\right)\right)\left(\Pm_j \left((\vmi{j-1})^T\otimes \Im_{\dm_j} \right)\right)\right\|_F^2\\
\leq \left\|\vmi{j-1}\right\|_F^2 \left\|\vmi{l}-\ym\right\|_F^2 \left\|\Bm_{j+1:l}\right\|_F^2 \left\|\Im_{\dm_j}\right\|_F^2  \frac{2^6}{3^8}\left\|\vmi{j-1}\right\|_F^2 \left\|\Im_{\dm_j}\right\|_F^2
%=&\frac{8}{81}c_v\dm_{j}^2\dm_{j-1} \frac{1}{4^{l-j}}\Dm_{j+1:l}
\leq \frac{2^6}{3^8}c_y \dm_{j-1}^2\dm_j^2 c_r.
\end{split}
\end{equation*}

As for $\Qm_2^{jj}$, we have
\begin{equation*}\label{gradient1}
\begin{split}
&\left\|\Cm_k^{ij}\right\|_F^2\\
%=&\left\| \left(\vmi{j-1}(\vmi{l}-\ym)^T\Bm_{k+1:l}^T\right)\otimes \left(\mcode{G}(\umi{j})\Bm_{j+1:k-1}\Wm_{k}^T\right)\Pm_k \left(\vmi{j-1})^T\otimes\left(\mcode{G}(\umi{j}) \Bm_{j+1:k-1}(\Wmi{k})^T\right)^T\right)\right\|_F^2\\
\leq &\left\|\vmi{j-1}\right\|_F^2 \!\left\|\vmi{l}-\ym\right\|_F^2\! \left\|\Bm_{k+1:l}\right\|_F^2 \! \frac{1}{16}\!\left\|\Bm_{j+1:k-1}\Wm_{k}^T\right\|_F^2 \! \frac{2^6}{3^8}\!\left\|\vmi{j-1}\right\|_F^2 \! \frac{1}{16}\!\left\|\Bm_{j+1:k-1}(\Wmi{k})^T\right\|_F^2\\
=&\frac{2^6}{3^8} c_y \dm_{j-1}^2  \frac{1}{16^{l-k}}\Dm_{k+1:l} \frac{1}{16^{k-j-1}}\Dm_{j+1:k} \frac{1}{16^{k-j-1}}\Dm_{j+1:k}
\\
\leq &\frac{2^{14}}{3^8} c_y \dm_{j-1}^2 c_r^2.
\end{split}
\end{equation*}
Then we bound $\|\Qm_4^{jj}\|_F^2$:
\begin{equation*}\label{gradient1}
\begin{split}
\left\|\Qm_4^{jj}\right\|_F^2
%=&\left\|\left(\vmi{j-1} \otimes \left(\mcode{G}(\umi{j})\Bm_{j+1:l}\right)\right) \left((\vmi{j-1})^T \otimes \left(\mcode{G}(\umi{j}) \Bm_{j+1:l}\right)^T \right)\right\|_F^2\\
\leq \left\|\vmi{j-1} \right\|_F^2 \frac{1}{16} \|\Bm_{j+1:l}\|_F^2 \left\|\vmi{j-1} \right\|_F^2 \frac{1}{16} \|\Bm_{j+1:l}\|_F^2
\leq \frac{1}{2^8} \dm_{j-1}^2 c_r^2.
\end{split}
\end{equation*}

Note that for any input, we have $c_v=\max_{j}\left\|\vmi{j-1}(\vmi{l}-\ym)^T\right\|_F^2\leq \max_{j}\|\vmi{j-1}\|_{F}^2$ $\left\|(\vmi{l}-\ym \right\|_F^2\leq c_yc_d$, where $\left\|\vmi{l}-\ym \right\|_F^2$ can be bounded by a constant $c_y$.
Thus, we can bound $\left\|\frac{\partial^2 f(\wm,\xm)}{\partial \wmi{i}^T\partial \wmi{j}}\right\|_F^2$ as

\begin{equation*}\label{gradient1}
\begin{split}
\left\|\frac{\partial^2 f(\wm,\xm)}{\partial \wmi{i}^T\partial \wmi{j}}\right\|_F^2
\leq &(l-i+3)\left(\frac{2^6}{3^8}c_y \dm_{j-1}^2\dm_j^2 c_r+\sum_{k=i+1}^{l} \frac{2^{14}}{3^8} c_y \dm_{j-1}^2 c_r^2+ \frac{1}{2^8} \dm_{j-1}^2 c_r^2\right)\\
\leq &(l+2)\left(\frac{64}{6561}c_y c_d^2\dm_{j-1} \dm_j  c_r+\frac{4096}{6561} c_y (l-1)c_d^2 c_r^2+ \frac{1}{256} c_d^2 c_r^2\right).
\end{split}
\end{equation*}

{\bf\noindent{Case III: $i<j$}}. Since $\frac{\partial^2 f(\wm,\xm)}{\partial \wm\partial \wm^T}$ is symmetrical, we have $\Qm_k^{ij}=\Qm_k^{ji}\ (k=1,\cdots,5)$. Thus, it yields
\begin{equation*}\label{gradient1}
\begin{split}
\left\|\frac{\partial^2 f(\wm,\xm)}{ \partial \wmi{i}^T\partial \wmi{j}}\right\|_F^2
\leq &(l+1)\left(\frac{64}{6561}c_y c_d^3 c_r+ \frac{4096}{6561} c_y (l-2)c_d^2 c_r^2+ \frac{1}{256}c_yc_dc_r+ \frac{ 1}{256}c_dc_r^2\right).
\end{split}
\end{equation*}

{\bf\noindent{Final result}}: Thus we can bound
\begin{equation*}\label{gradient1}
\begin{split}
\left\|\nabla^2_{\wm}f(\wm,\xm)\right\|_{\op}\leq &\left\|\nabla^2_{\wm}f(\wm,\xm)\right\|_F\\
\leq& \sqrt{(l-1)l\max_{i,j:i\neq j}\left\|\frac{\partial^2 f(\wm,\xm)}{\partial \wmi{j} \partial \wmi{i}^T}\right\|_F^2+ \sum_{j=1}^{l} \left\|\frac{\partial^2 f(\wm,\xm)}{\partial \wmi{j} \partial \wmi{i}^T}\right\|_F^2}\\
\leq& \left(\!(l-1)l (l\!+\!1)\left(\frac{64}{6561}c_y c_d^3 c_r\!+\! \frac{4096}{6561} c_y (l-2)c_d^2 c_r^2\!+\! \frac{1}{256}c_yc_dc_r+ \frac{ 1}{256}c_dc_r^2\!\right)\right.\\
&\quad \left.+(l+2)\left(\frac{64}{6561}c_y c_d^2 d c_r+\frac{4096}{6561} c_y (l-1)lc_d^2 c_r^2+ \frac{1}{256} lc_d^2 c_r^2\right) \right)^{\frac{1}{2}}\\
\leq& \sqrt{c_{s_1}c_rc_d^2 l^2\left(c_{s_2}c_d^2+l^2c_r\right)},
\end{split}
\end{equation*}
where $c_{s_1}$ and $c_{s_2}$ are two constants.

Since $\left\|\nabla^2_{\wm}f(\wm,\xm)\right\|_{\op}\leq \left\|\nabla^2_{\wm}f(\wm,\xm)\right\|_{F}$, we know that the gradient $\nabla_{\wm} f(\wm,\xm)$ is $\varsigma$-Lipschitz, where $\varsigma=\sqrt{c_{s_1}c_rc_d^2 l^2\left(c_{s_2}c_d^2+l^2c_r\right)}$.

On the other hand, %since the sigmoid function is $C^{\infty}$, $\nabla_{\xm}\nabla^3_{\wm}f(\wm,\xm)$ exists. Also
since for any input $\xm$, $\sigma(\xm)$ belongs to $[0,1]$, the values of the entries of $\nabla^3_{\wm}f(\wm,\xm)$ can be bounded. Thus, we can bound
 $$\|\nabla^3_{\wm}f(\wm,\xm)\|_{\op} = \sup_{\|\lam\|_2\leq 1}\lr\lam^{\otimes^3}, \nabla^3_{\wm}f(\wm,\xm)\rl= [\nabla^3_{\wm}f(\wm,\xm)]_{ijk}\lam_i\lam_j\lam_k\leq \xi<+\infty.$$
 We complete the proof.
\end{proof}

\subsubsection{Proof of Lemma~\ref{exp fsd_gsfasradient}}
For convenience, we first give the computation of some gradients.
\begin{lem}\label{sig_gdgsasdfkjl45asgient}
Assume the activation functions in deep neural network are sigmoid functions. Then we can compute the gradients $\frac{\partial \umi{j}}{\partial \umi{1}}$ and $\frac{\partial \vmi{j}}{\partial \umi{1}}$ as
\begin{equation*}\label{gsfasgsdgsadasgaradient1}
\begin{split}
&\frac{\partial \umi{j}}{\partial \umi{1}}
=\left(\mcode{G}(\umi{1})\Am_{2}\cdots\Am_{j-1}(\Wm^j)^T \right)^T \in\Rss{\dm_j}{\dm_1}, \ (j>1).\\
&\frac{\partial \vmi{j}}{\partial \umi{1}}
= \left(\mcode{G}(\umi{1})\Am_{2}\cdots\Am_{j} \right)^T\in\Rss{\dm_j}{\dm_{1}}, \ (j>1).
\end{split}
\end{equation*}
\end{lem}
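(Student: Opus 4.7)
The plan is to differentiate through the network layer by layer and then match the resulting product to the stated formula by a transpose. Recall the forward-pass relations $\umi{i}=\Wmi{i}\vmi{i-1}$ and $\vmi{i}=\sigma(\umi{i})$ (componentwise), whose elementary Jacobians are $\partial\umi{i}/\partial\vmi{i-1}=\Wmi{i}$ and $\partial\vmi{i}/\partial\umi{i}=\mcode{G}(\umi{i})$ (the diagonal matrix of $\sigma'(\umi{i}_s)=\sigma(\umi{i}_s)(1-\sigma(\umi{i}_s))$).

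First I would apply the chain rule iteratively along the path $\umi{1}\mapsto\vmi{1}\mapsto\umi{2}\mapsto\cdots\mapsto\umi{j}$. A one-line induction on $j$ gives the telescoping product
\begin{equation*}
\frac{\partial \umi{j}}{\partial \umi{1}}=\Wmi{j}\mcode{G}(\umi{j-1})\Wmi{j-1}\mcode{G}(\umi{j-2})\cdots\Wmi{2}\mcode{G}(\umi{1}),
\end{equation*}
which is a $\dm_j\times\dm_1$ matrix. Then I would verify this equals the claimed expression by taking transposes: since $\mcode{G}(\umi{i})$ is diagonal (hence symmetric) and, by the definition of $\Am_i=(\Wmi{i})^T\mcode{G}(\umi{i})$, we have $\Am_i^T=\mcode{G}(\umi{i})\Wmi{i}$. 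Therefore
\begin{equation*}
\left(\mcode{G}(\umi{1})\Am_{2}\cdots\Am_{j-1}(\Wmi{j})^T\right)^T=\Wmi{j}\Am_{j-1}^T\cdots\Am_2^T\mcode{G}(\umi{1})=\Wmi{j}\mcode{G}(\umi{j-1})\Wmi{j-1}\cdots\mcode{G}(\umi{2})\Wmi{2}\mcode{G}(\umi{1}),
\end{equation*}
which coincides with the chain-rule product above.

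For the second identity, I would simply post-compose with $\partial\vmi{j}/\partial\umi{j}=\mcode{G}(\umi{j})$, yielding
\begin{equation*}
\frac{\partial \vmi{j}}{\partial \umi{1}}=\mcode{G}(\umi{j})\frac{\partial \umi{j}}{\partial \umi{1}}=\mcode{G}(\umi{j})\Wmi{j}\mcode{G}(\umi{j-1})\cdots\Wmi{2}\mcode{G}(\umi{1}),
\end{equation*}
and then transpose again. The extra left factor $\mcode{G}(\umi{j})$ combines with $\Wmi{j}$ under transposition into $\Am_j=(\Wmi{j})^T\mcode{G}(\umi{j})$, producing $\left(\mcode{G}(\umi{1})\Am_2\cdots\Am_j\right)^T$ as claimed.

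There is no substantive obstacle: the argument is pure bookkeeping of chain-rule factors and their transposes. The only place requiring care is ordering the $\Wmi{i}$ and $\mcode{G}(\umi{i})$ correctly so that consecutive pairs $\mcode{G}(\umi{i})\Wmi{i}$ on one side collapse to $\Am_i^T$ on the other, and keeping track that the ``endpoint'' factors $\mcode{G}(\umi{1})$ and $(\Wmi{j})^T$ (or $\Am_j$) remain unpaired.
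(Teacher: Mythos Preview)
Your argument is correct. It differs from the paper's route: the paper does not compute the forward-pass Jacobian directly but instead invokes Lemma~\ref{sig_gdgsaghhradsdfsagfasgient} to write $\frac{\partial f(\wm,\xm)}{\partial \umi{1}}=\mcode{G}(\umi{1})\Am_2\cdots\Am_l(\vmi{l}-\ym)$, factors this as $\big(\mcode{G}(\umi{1})\Am_2\cdots\Am_{j-1}(\Wmi{j})^T\big)\frac{\partial f(\wm,\xm)}{\partial \umi{j}}$, and then reads off $\frac{\partial\umi{j}}{\partial\umi{1}}$ by matching with the chain-rule identity $\frac{\partial f}{\partial\umi{1}}=\big(\frac{\partial\umi{j}}{\partial\umi{1}}\big)^T\frac{\partial f}{\partial\umi{j}}$. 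Your approach is more elementary and self-contained: you just multiply the layerwise Jacobians $\Wmi{i}$ and $\mcode{G}(\umi{i})$ in order and then rewrite the product in terms of the $\Am_i$ via $\Am_i^T=\mcode{G}(\umi{i})\Wmi{i}$. The paper's version buys reuse of earlier lemmas; yours avoids that dependency and makes the structure of the Jacobian transparent, at the cost of one extra transpose verification.
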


It should be pointed out that the proof of Lemma~\ref{sig_gdgsasdfkjl45asgient} can be founded  Sec.~\ref{definitiondsfadf}.

\begin{proof}[\hypertarget{lemma34}{Proof}]
To prove our conclusion, we have two steps: computing $\nabla_{\xm}\nabla_{\wm} f(\wm,\xm)$ and bounding its operation norm.

{\bf\noindent{Step 1. Compute $\nabla_{\xm} \nabla_{\wm}f(\wm,\xm)$}}:

We first consider the computation of $\frac{\partial^2 f(\wm,\xm)}{\partial \xm^T\partial \wmi{j} }$:
\begin{equation*}\label{gradient1}
\begin{split}
\frac{\partial^2 f(\wm,\xm)}{ \partial \xm^T\partial \wmi{j}}=&\frac{\partial\left(\vect{\left(\mcode{G}(\umi{j}) \Am_{j+1}\Am_{j+2}\cdots \Am_{l}(\vmi{l}-\ym)\right) (\vmi{j-1})^T}\right)}{\partial \xm^T}.
\end{split}
\end{equation*}
Recall that we define
\begin{equation*}\label{gradient1}
\begin{split}
&\Am_i=(\Wmi{i})^T\mcode{G}(\umi{i})\in\Rss{\dm_{i-1}}{\dm_i}.\\
&\Bm_{s:t}=\Am_s\Am_{s+1}\cdots\Am_{t}\in\Rss{\dm_{s-1}}{\dm_t},\ (s\leq t)\quad \text{and}\quad \Bm_{s:t}=\Im ,\ (s> t).\\
\end{split}
\end{equation*}
Then we have
\begin{equation*}\label{gradient1}
\begin{split}
\frac{\partial^2 f(\wm,\xm)}{ \partial \xm^T\partial \wmi{j}}=& \left(\vmi{j-1}(\vmi{l}-\ym)^T\Bm_{j+1:l}^T\right)\otimes \left(\Im_{\dm_{j}}\right)\frac{\partial \vect{\mcode{G}(\umi{j})}}{\partial \xm^T} (\triangleq \Qm_1^{j})\\
&+\!\!\sum_{k=j+1}^{l}\!\! \left(\vmi{j-1}(\vmi{l}-\ym)^T\Bm_{k+1:l}^T\right)\!\otimes\! \left(\mcode{G}(\umi{j})\Bm_{j+1:k-1}\Wm_{k}^T\right) \frac{\partial \vect{\mcode{G}(\umi{k})}}{\partial \xm^T}(\triangleq \Qm_2^{j})\\
&+\vmi{j-1} \otimes \left(\mcode{G}(\umi{j})\Bm_{j+1:l}\right) \frac{\partial (\vmi{l}-\ym) }{\partial \xm^T} (\triangleq \Qm_3^{j})\\
&+\Im_{\dm_{j-1}}\otimes \left(\mcode{G}(\umi{j})\Bm_{j+1:l}(\vmi{l}-\ym)\right) \frac{\partial \vmi{j-1}}{\partial \xm^T}(\triangleq \Qm_4^{j})\\
\end{split}
\end{equation*}
By using Lemma~\ref{sig_gdgsasdfkjl45asgient}, we can compute $\Qm_1^{ij}$ as
\begin{equation*}\label{gsfasgsdgsadasgaradient1}
\begin{split}
\frac{\partial \vect{\mcode{G}(\umi{k})}}{\partial \xm^T}=\frac{\partial\vect{\mcode{G}(\umi{k})}}{\partial \umi{k}}\frac{\partial \umi{k}}{\partial \xm^T}= \Pm_k \left(\mcode{G}(\umi{1})\Bm_{2:k-1} (\Wm^k)^T \right)^T.
\end{split}
\end{equation*}
Thus, we have
\begin{equation*}\label{gsfasgsdgsadasgaradient1}
\begin{split}
\Qm_1^{j}=&\left(\vmi{j-1}(\vmi{l}-\ym)^T\Bm_{j+1:l}^T\right)\otimes \Im_{\dm_{j}} \frac{\partial \vect{\mcode{G}(\umi{j})}}{\partial \xm^T}\\
=&\left(\left(\vmi{j-1}(\vmi{l}-\ym)^T\Bm_{j+1:l}^T\right)\otimes \Im_{\dm_{j}}\right) \Pm_k \left(\mcode{G}(\umi{1})\Bm_{2:k-1}(\Wm^k)^T \right)^T.
\end{split}
\end{equation*}

As for $\Qm_2^j$, we also can utilize Lemma~\ref{sig_gdgsasdfkjl45asgient} to compute it:
\begin{equation*}\label{gsfasgsdgsadasgaradient1}
\begin{split}
\Qm_2^{j}=&\sum_{k=j+1}^{l} \left(\vmi{j-1}(\vmi{l}-\ym)^T\Bm_{k+1:l}^T\right)\otimes \left(\mcode{G}(\umi{j})\Bm_{j+1:k-1}\Wm_{k}^T\right) \frac{\partial \vect{\mcode{G}(\umi{k})}}{\partial \xm^T}\\
=&\sum_{k=i+1}^{l}\!\! \left(\left(\vmi{j-1}(\vmi{l}-\ym)^T\Bm_{k+1:l}^T\right)\!\otimes\! \left(\mcode{G}(\umi{j})\Bm_{j+1:k-1}\Wm_{k}^T\right) \right)\Pm_k \left(\mcode{G}(\umi{1})\Bm_{2:k-1}(\Wm^k)^T \right)^T.
\end{split}
\end{equation*}
Then we consider $\Qm_3^{ij}$.
\begin{equation*}\label{gsfasgsdgsadasgaradient1}
\begin{split}
\Qm_3^{j}=\vmi{j-1} \otimes \left(\mcode{G}(\umi{j})\Bm_{j+1:l}\right) \frac{\partial (\vmi{l}-\ym) }{\partial \xm^T}=\left( \vmi{j-1} \otimes \left(\mcode{G}(\umi{j})\Bm_{j+1:l}\right) \right) \left(\mcode{G}(\umi{1})\Bm_{2:l} \right)^T.
\end{split}
\end{equation*}
$\Qm_4^{j}$ can be computed as follows:
\begin{equation*}\label{gsfasgsdgsadasgaradient1}
\begin{split}
\Qm_4^{j}\!=\!\Im_{\dm_{j-1}}\!\otimes\! \left(\!\mcode{G}(\umi{j})\Bm_{j+1:l}(\vmi{l}\!-\!\ym)\!\right) \frac{\partial \vmi{j-1}}{\partial \xm^T} \!=\! \left(\! \Im_{\dm_{j-1}}\!\otimes\! \left(\!\mcode{G}(\umi{j})\Bm_{j+1:l}(\vmi{l}\!-\!\ym)\!\right)\! \right) \!\left(\!\mcode{G}(\umi{1})\Bm_{2:j} \!\right)^T\!\!.
\end{split}
\end{equation*}

{\bf\noindent{Step 2. Bound the operation norm of Hessian}}: We mainly use Lemma~\ref{asfashdfhfdet} to achieve this goal. From Lemma~\ref{asfashdfhfdet}, we have
\begin{myitemize}
\item[(1)] For arbitrary matrices $\Mm$ and $\Nm$ of proper size, we have
    \begin{equation*}\label{gradient1}
\begin{split}
\|\mcode{G}(\umi{i})\Mm\|_F^2\leq \frac{1}{16}\|\Mm\|_F^2\quad \text{and} \quad \|\Nm\mcode{G}(\umi{i})\|_F^2\leq \frac{1}{16}\|\Nm\|_F^2.
\end{split}
\end{equation*}
\item[(2)] For arbitrary matrices $\Mm$ and $\Nm$ of proper size, we have
\begin{equation*}\label{gradient1}
\begin{split}
\|\Pm_k\Mm\|_F^2\leq \frac{2^6}{3^8}\|\Mm\|_F^2\quad \text{and}\quad \|\Nm\Pm_k\|_F^2\leq \frac{2^6}{3^8}\|\Nm\|_F^2.
\end{split}
\end{equation*}
\item[(3)] For $\Bm_{s:t}$ and $\Dm_{s:t}$, we have
\begin{equation*}\label{gradient1}
\begin{split}
\left\|\Bm_{s:t}\right\|_F^2\leq \frac{1}{16^{t-s+1}}\Dm_{s:t}\quad \text{and} \quad \frac{1}{16^{t-s+1}}\Dm_{s:t} \leq c_r,
\end{split}
\end{equation*}
where $c_r=\max\left(\frac{r^2}{4},\left(\frac{r^2}{16}\right)^{l-1}\right)$.
\end{myitemize}
The values of entries in $\vmi{h}$ are bounded by $0\leq \sigma(\umi{i}_h)\leq 1$ which leads to $\left\|\vmi{h}\right\|_F^2 \leq \dm_h\leq c_d$, where $c_d=\max_i \dm_i$. On the other hand, since the values in $\vmi{l}$ belong to the range $[0,1]$ and $\ym$ is the label, $\|\vmi{l}-\ym\|_2^2$ can be bounded:
$$\|\vmi{l}-\ym\|_2^2\leq c_y<+\infty,$$
where $c_y$ is a universal constant.

We first define
$$\Cm_k^{j}=\left(\left(\vmi{j-1}(\vmi{l}-\ym)^T\Bm_{k+1:l}^T\right)\otimes \left(\mcode{G}(\umi{j})\Bm_{j+1:k-1}\Wm_{k}^T\right)\right) \Pm_k \left(\mcode{G}(\umi{1})\Bm_{2:k-1}(\Wm^k)^T \right)^T.$$
Then we have $\Qm_2^{j}= \sum_{k=j+1}^{l} \Cm_k^{j}$. So we have
\begin{equation*}\label{gradient1}
\begin{split}
\left\|\frac{\partial^2 f(\wm,\xm)}{ \partial \xm^T\partial \wmi{j}}\right\|_F^2=&\left\|\Qm_1^{j}+ \Qm_2^{j}+ \Qm_3^{j}+ \Qm_4^{j}\right\|_F^2=\left\|\Qm_1^{j}+\sum_{k=j+1}^{l} \Cm_k^{j}+ \Qm_3^{j}+ \Qm_4^{j}\right\|_F^2\\
=&(l-j+3)\left(\left\|\Qm_1^{j}\right\|_F^2+\sum_{k=j+1}^{l} \left\|\Cm_k^{j}\right\|_F^2+ \left\|\Qm_3^{j}\right\|_F^2+ \left\|\Qm_4^{j}\right\|_F^2\right).
\end{split}
\end{equation*}
Then we bound each term separately:
\begin{equation*}\label{gradient1}
\begin{split}
\left\|\Qm_1^{j}\right\|_F^2
\leq \left\|\vmi{j-1}\right\|_F^2\left\|\vmi{l}-\ym\right\|_F^2 \left\|\Bm_{j+1:l}\right\|_F^2  \left\|\Im_{\dm_{j}} \right\|_F^2 \frac{2^6}{3^8}\frac{1}{16}\left\|\Bm_{2:k-1}(\Wm^k)^T\right\|_F^2
\leq \frac{2^6}{3^8} c_y \dm_{j-1}\dm_j c_r^2.
\end{split}
\end{equation*}
Similarly, we bound $\left\|\Cm_k^{j}\right\|_F^2$:
\begin{equation*}\label{gradient1}
\begin{split}
\left\|\Cm_k^{j}\right\|_F^2
=&\left\|\vmi{j-1}\right\|_F^2\left\|\vmi{l}-\ym\right\|_F^2 \left\|\Bm_{k+1:l}\right\|_F^2  \frac{1}{16}\left\|\Bm_{j+1:k-1}\Wm_{k}^T\right\|_F^2 \frac{2^6}{3^8} \frac{1}{16}\left\|\Bm_{2:k-1}(\Wmi{k})^T\right\|_F^2 \\
=&\frac{2^6}{3^8} c_y \dm_{j-1} \frac{1}{16^{l-k}}\Dm_{k+1:l} \frac{1}{16^{k-j-1}}\Dm_{j+1:k} \frac{1}{16^{k-1}}\Dm_{2:k}\\
\leq &\frac{2^6}{3^8} c_y \dm_{j-1} c_r^2.
\end{split}
\end{equation*}

We also bound $\left\|\Qm_3^{ij}\right\|_F^2$ as
\begin{equation*}\label{gradient1}
\begin{split}
\left\|\Qm_3^{ij}\right\|_F^2
\leq \left\|\vmi{j-1}\right\|_2^2 \frac{1}{16}\left\|\Bm_{j+1:l}\right\|_F^2  \frac{1}{16}\left\|\Bm_{2:l}\right\|_F^2
\leq \frac{1}{2^8}\dm_{j-1}c_r^2.
\end{split}
\end{equation*}
Finally, we bound $\left\|\Qm_4^{j}\right\|_F^2$ as follows:
\begin{equation*}\label{gradient1}
\begin{split}
\left\|\Qm_4^{j}\right\|_F^2
=\left\|\Im_{\dm_{j-1}} \right\|_F^2 \frac{1}{16} \|\Bm_{j+1:l}\|_F^2 \left\|\vmi{l}-\ym\right\|_F^2 \frac{1}{16}\|\Bm_{2:j}\|_F^2
\leq \frac{1}{2^8}c_y \dm_{j-1} c_r.
\end{split}
\end{equation*}
Since $c_d=\max_i \dm_{i}$, we can bound $\left\|\frac{\partial^2 f(\wm,\xm)}{\partial \wmi{j} \partial \xm^T}\right\|_F^2$ as
\begin{equation*}\label{gradient1}
\begin{split}
\left\|\frac{\partial^2 f(\wm,\xm)}{ \partial \xm^T\partial \wmi{j}}\right\|_F^2
\!\leq &(l-j+3)\!\left(\!\frac{2^6}{3^8} c_y \dm_{j-1}\dm_j c_r^2+\!\!\sum_{k=j+1}^{l} \frac{2^6}{3^8} c_y \dm_{j-1} c_r^2+ \frac{1}{2^8}c_y \dm_{j-1} c_r +\frac{1}{2^8}c_y \dm_{j-1} c_r\!\right)\\
\leq &(l+2)\left(\frac{2^6}{3^8} c_y \dm_{j-1}\dm_j c_r^2+ \frac{2^6}{3^8} c_y (l-1)c_d c_r^2+ \frac{1}{2^7}c_y c_d c_r\right).
\end{split}
\end{equation*}

{\bf\noindent{Final result}}: Thus we can bound
\begin{equation*}\label{gradient1}
\begin{split}
\left\|\nabla_{\wm}\nabla_{\xm}f(\wm,\xm)\right\|_{\op}\leq &\left\|\nabla_{\wm}\nabla_{\xm}f(\wm,\xm)\right\|_F\\
\leq& \sqrt{\sum_{j=1}^{l}\left\|\frac{\partial^2 f(\wm,\xm)}{\partial \wmi{j} \partial \xm^T}\right\|_F^2}\\
\leq& \sqrt{\sum_{j=1}^{l} (l+2)\left(\frac{2^6}{3^8} c_y \dm_{j-1}\dm_j c_r^2+ \frac{2^6}{3^8} c_y (l-1)c_d c_r^2+ \frac{1}{2^7}c_y c_d c_r\right)}\\
\overset{\text{\ding{172}}}{\leq} & \sqrt{\frac{2^6}{3^8}c_yc_r(l+2)\left(dc_r+(l-1)lc_dc_r+lc_d\right)},
\end{split}
\end{equation*}
where \ding{172} holds since we have $d=\sum_{j=1}^{l}\dm_{j-1} \dm_{j}$. The proof is completed.
\end{proof}

\subsubsection{Proof of Lemmas~\ref{thm:gradsgfdsgdient12}~and~ \ref{thm:garadifghtent12}}

\begin{lem}~\cite{Alessandro,RigolletSnote}\label{Lip_gaus}
Let $(\xm_1,\cdots,\xm_k)$ be a vector of \textit{i.i.d.} Gaussian variables from $\mathcal{N}(0, \tau^2)$ and let $f\,:\,\mathbb{R}^{\dm_0}\to\mathbb{R}$ be $L$-Lipschitz. Then the variable $f(\xm)-\EE f(\xm)$ is sub-Gaussian. That is, we have
\begin{equation*}
\Pro\left(f(\xm)-\EE f(\xm)>t\right)\leq  \exp\left(-\frac{t^2}{2L^2\tau^2}\right),\quad (\forall t\geq 0),
\end{equation*}
or
\begin{equation*}
\EE\left( \lambda(f(\xm)-\EE f(\xm))\right)\leq \exp\left(4\lambda^2 L^2\tau^2\right),\quad (\forall \lambda \geq 0).
\end{equation*}
Remarkably, this is a dimension free inequality.
\end{lem}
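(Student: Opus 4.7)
\textbf{Proof proposal for Lemma~\ref{Lip_gaus} (Gaussian Lipschitz concentration).}

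The plan is to prove the standard dimension-free Gaussian concentration inequality via the Herbst entropy argument, using the Gaussian logarithmic Sobolev inequality as the main analytic input. First I would normalize: writing $x_i = \tau z_i$ with $z_i \stackrel{\text{i.i.d.}}{\sim} \mathcal{N}(0,1)$ and setting $g(z) = f(\tau z)$, the function $g$ is $(\tau L)$-Lipschitz with respect to the Euclidean norm on $\mathbb{R}^k$. It therefore suffices to prove that for any $M$-Lipschitz $g:\mathbb{R}^k\to\mathbb{R}$ and standard Gaussian $z$,
\[
\EE\exp\!\bigl(\lambda(g(z)-\EE g(z))\bigr) \le \exp(\lambda^2 M^2/2)\qquad(\forall\lambda\in\mathbb{R}),
\]
from which both displayed conclusions follow (the tail bound by the Chernoff/Markov argument with $\lambda = t/(M^2)$, and the slack constant $4$ in the second display is then automatic).

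Next I would reduce to the smooth case: any $M$-Lipschitz $g$ can be approximated pointwise by $g_\epsilon = g \ast \phi_\epsilon$ for a standard Gaussian mollifier $\phi_\epsilon$, and $g_\epsilon$ is smooth with $\|\nabla g_\epsilon\|_\infty \le M$. Passing to the limit at the end (via dominated convergence in the moment generating function) is routine, so I can assume $g$ is smooth with $\|\nabla g\|_2 \le M$ pointwise. The core step is the Herbst argument: let $\phi(\lambda) = \EE \exp(\lambda g(z))$ and apply the Gaussian log-Sobolev inequality
\[
\mathrm{Ent}_\gamma(h^2) \;\le\; 2\,\EE \|\nabla h\|_2^2
\]
to $h = \exp(\lambda g/2)$. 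A short computation gives $\mathrm{Ent}_\gamma(h^2) = \lambda \phi'(\lambda) - \phi(\lambda)\log\phi(\lambda)$ and $\EE\|\nabla h\|_2^2 = \tfrac{\lambda^2}{4}\EE(\|\nabla g\|_2^2 e^{\lambda g}) \le \tfrac{\lambda^2 M^2}{4}\phi(\lambda)$, so writing $\psi(\lambda) = \tfrac{1}{\lambda}\log\phi(\lambda)$ for $\lambda>0$ yields the differential inequality $\psi'(\lambda) \le M^2/2$. Integrating from $0^+$ (using $\lim_{\lambda\downarrow 0}\psi(\lambda) = \EE g(z)$) gives $\log \phi(\lambda) \le \lambda\EE g(z) + \lambda^2 M^2/2$, which is exactly the sub-Gaussian MGF bound. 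The same argument for $\lambda < 0$ (or by symmetry applied to $-g$) covers the remaining case.

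The main obstacle is the Gaussian logarithmic Sobolev inequality itself, which is a classical but nontrivial fact. If one prefers to avoid invoking it as a black box, an alternative route uses the rotational interpolation trick: introduce an independent copy $z'$ and let $z(\theta) = \sin\theta\,z + \cos\theta\,z'$, then write $g(z) - g(z') = \int_0^{\pi/2}\langle \nabla g(z(\theta)),\, z'(\theta)\rangle\,d\theta$, apply Jensen's inequality in $\theta$ inside the exponential, and use the rotational invariance of the Gaussian measure to compute $\EE\exp(\langle \nabla g(z(\theta)),\, z'(\theta)\rangle)$ explicitly; this yields the same sub-Gaussian MGF bound (up to an absolute constant such as the $4$ appearing in the statement) without invoking log-Sobolev. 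Either route gives the dimension-free estimate, and the tail form then follows by the standard Chernoff bound. Because $f$ (and hence $g$) may not be smooth, the approximation step and the justification that $\EE g(z)$ exists and is finite (using $|g(z)| \le |g(0)| + M\|z\|_2$) deserve a brief sentence, but no heavy machinery beyond the log-Sobolev inequality or its rotational-interpolation substitute is required.
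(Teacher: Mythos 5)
Your proposal is correct, but note that the paper does not actually prove Lemma~\ref{Lip_gaus}: it is imported as a cited classical result (Gaussian concentration for Lipschitz functions, i.e.\ the Tsirelson--Ibragimov--Sudakov inequality), so there is no in-paper argument to compare against. Your Herbst/log-Sobolev derivation is the standard route and the details check out: the normalization $\xm=\tau\zm$ correctly turns the problem into concentration of a $(\tau L)$-Lipschitz function of a standard Gaussian, the entropy computation $\mathrm{Ent}_\gamma(h^2)=\lambda\phi'(\lambda)-\phi(\lambda)\log\phi(\lambda)$ with $h=e^{\lambda g/2}$ is right, and integrating $\psi'(\lambda)\leq M^2/2$ from $0^+$ gives the sharp MGF bound $\log\phi(\lambda)\leq\lambda\,\EE g+\lambda^2M^2/2$, from which the stated tail $\exp(-t^2/(2L^2\tau^2))$ follows by Chernoff with $\lambda=t/M^2$, and the second display (with its slack constant $4$) is immediate. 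The mollification step to handle non-smooth $f$ and the rotational-interpolation alternative are both standard and adequate. One cosmetic point worth flagging: the lemma as printed writes $\EE\left(\lambda(f(\xm)-\EE f(\xm))\right)$ where it plainly means the moment generating function $\EE\exp\left(\lambda(f(\xm)-\EE f(\xm))\right)$ (the left-hand side as written is identically zero); your reading of it as the MGF bound is the intended one, and is how the lemma is used downstream in the proofs of Lemmas~\ref{thm:gradsgfdsgdient12} and~\ref{thm:garadifghtent12}.
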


\begin{proof}[\hypertarget{lemma36}{Proof} of Lemma~\ref{thm:gradsgfdsgdient12}]
We first define a function $g(\xm)=\zm^T\nabla_{\wm}f(\wm,\xm)$ where $\zm\in\Rs{d}$ is a constant vector. Then we have $\nabla_{\xm} g(\xm)=\nabla_{\xm} \left(\zm^T\nabla_{\wm}f(\wm,\xm)\right)=\nabla_{\xm}\nabla_{\wm} f(\wm,\xm)\zm$. Then by Lemma~\ref{exp fsd_gsfasradient}, we can obtain $\|\nabla_{\xm} g(\xm)\|_2=\|\nabla_{\xm}\nabla_{\wm}f(\wm,\xm)\zm\|_2\leq \beta\|\zm\|_2$, where $\beta=\sqrt{\frac{2^6}{3^8}c_yc_r(l+2)\left(dc_r+(l-1)lc_dc_r+lc_d\right)}$ in which $c_y$, $c_d$ and $c_r$ are defined in Lemma~\ref{exp fsd_gsfasradient}.
This means $g(\xm)$ is $\beta\|\zm\|_2$-Lipschitz. Thus, by Lemma~\ref{Lip_gaus}, we have
\begin{equation*}
\EE\left(s \lr\zm,\nabla_{\wm}f(\wm,\xm)-\EE \nabla_{\wm}f(\wm,\xm)\rl \right)=\EE\left(s\left(g(\xm)-\EE g(\xm)\right)\right)\leq \exp\left( 4s^2\beta^2\|\zm\|_2^2\tau^2 \right).
\end{equation*}

Let $\lam=s\zm$. This further gives
\begin{equation*}
\EE\left(\lr\lam ,\nabla_{\wm}f(\wm,\xm)-\EE \nabla_{\wm}f(\wm,\xm)\rl \right)\leq \exp\left(4\beta^2\tau^2\|\lam \|_2^2\right),
\end{equation*}
which means $\lr\lam,\nabla_{\wm}f(\wm,\xm)-\EE \nabla_{\wm}f(\wm,\xm)\rl$ is $8\beta^2\tau^2$-sub-Gaussian.
\end{proof}

%\subsubsection{Proof of Lemma~\ref{thm:garadifghtent12}}
\begin{proof}[\hypertarget{lemma37}{Proof} of Lemma~\ref{thm:garadifghtent12}]
We first define a function $h(\xm)=\zm^T\nabla^2_{\wm}f(\wm,\xm)\zm$ where $\zm\in\SS^{d-1}$, \textit{i.e.} $\|\zm\|_2=1$. Then $h(\wm)$ is a $\gamma$-Lipschitz function, where $\gamma=\|\nabla_{\xm}\nabla^2_{\wm}f(\wm,\xm)\|_{op}$. Note that since the sigmoid function is infinitely differentiable function, $\nabla_{\xm}\nabla^2_{\wm}f(\wm,\xm)$ exists. Also since for any input $x$, $\sigma(x)$ belongs to $[0,1]$. Thus, the values of the entries in $\nabla_{\xm}\nabla^2_{\wm}f(\wm,\xm)$ can be bounded. So according to the definition of the operation norm of a 3-way tensor, the operation norm of $\nabla_{\xm}\nabla^2_{\wm}f(\wm,\xm)$ can be bounded by a constant. Without loss of generality, let $\|\nabla_{\xm}\nabla^2_{\wm}f(\wm,\xm)\|_{\op}\leq\gamma<+\infty$.
%we can bound
% $$\|\nabla_{\xm}\nabla^2_{\wm}f(\wm,\xm)\|_{\op} = \sup_{\|\lam^1\|_2\leq 1, \|\lam^2\|_2\leq 1} \sum_{i,j,k} [\nabla_{\xm}\nabla^2_{\wm}f(\wm,\xm)]_{ijk}\lam^1_i\lam^1_j\lam^2_k=\gamma<+\infty.$$
Thus, by Lemma~\ref{Lip_gaus}, we have
\begin{equation*}
\EE\left(s \lr\zm,\left(\nabla^2_{\wm}f(\wm,\xm)-\EE \nabla^2_{\wm}f(\wm,\xm)\right)\zm\rl \right)=\EE\left(s\left(h(\xm)-\EE h(\xm)\right)\right)\leq \exp\left(\frac{8s^2\gamma^2 \tau^2}{2}\right).
\end{equation*}
This means that the hessian of the loss evaluated on a unit vector is $8\gamma^2\tau^2$-sub-Gaussian.
\end{proof}

\subsubsection{Proof of Lemma~\ref{thm:uniformconvergence1fds_sig}}
%{\bf\noindent{Proof of the uniform convergence of gradient}}
\begin{proof}[\hypertarget{uniforadfmlocasig}{Proof}]
%Also let $\wm_{\epsilon}=\{\wm_1,\cdots,\wm_{\nee}\}$ be the $\epsilon$-covering net of the ball $\Bi{d}{r}$, where $\nee\leq (3r/\epsilon)^d$ is the $\epsilon$-covering number. Let $\wm$ be an arbitrary vector in $\Bi{d}{r}$. Then we can always find a $\wm_{\kt}$ in $\wm_{\epsilon}$ such that $\|\wm-\wm_{\kt}\|_2\leq \epsilon$. Now we also use the decomposition strategy to bound our goal:

%We adopt similar strategy in proofs of Theorem~\ref{thm:stability}.
Recall that  the weight of each layer has magnitude bound separately, \textit{i.e.} $\|\wmi{j}\|_2\leq r$. So here we separately assume $\wm_{\epsilon}^j=\{\wm_1^j,\cdots,\wm_{\nee^j}^j\}$ is the $\epsilon/l$-covering net of the ball $\Bi{\dm_j\dm_{j-1}}{r}$ which corresponds to the weight $\wmi{j}$ of the $j$-th layer. Let $\nee^j$ be the $\epsilon/l$-covering number. By $\epsilon$-covering theory in \cite{VRMT}, we can have $\nee^j\leq (3rl/\epsilon)^{\dm_j\dm_{j-1}}$. Let $\wm\in\Omega$ be an arbitrary vector. Since $\wm=[\wmi{1},\cdots,\wmi{l}]$ where $\wmi{j}$ is the weight of the $j$-th layer, we can always find a vector $\wm^j_{k_j}$ in $\wm_\epsilon^j$ such that $\|\wmi{j}-\wm^j_{k_j}\|_2\leq \epsilon/l$. For brevity, let $j_w\in[\nee^j]$  denote the index of $\wm^j_{k_j}$ in $\epsilon$-net $\wm_\epsilon^j$. Then let $\wm_{\kt}=[\wm^j_{k_1};\cdots;\wm^j_{k_j};\cdots;\wm^j_{k_l}]$. In this case, we can always find a vector $\wm_{\kt}$ such that $\|\wm-\wm_{\kt}\|_2\leq \epsilon$.
Accordingly, we can decompose $\lv\nabla^2 \Jhn(\wm)-\nabla^2\Jm(\wm)\rv_{\op}$ as follows:
\begin{equation*}
\begin{aligned}
&\lv\nabla^2 \Jhn(\wm)-\nabla^2\Jm(\wm)\rv_{\op}\\
=& \lv \frac{1}{n}\sum_{i=1}^n \nabla^2 f(\wm,\xmi{i})-\EE(\nabla^2 f(\wm,\xm))\rv_{\op}\\
=&\Bigg\| \frac{1}{n}\sum_{i=1}^n \left(\nabla^2 f(\wm,\xmi{i})-\nabla f(\wm_{\kt},\xmi{i})\right)+\frac{1}{n}\sum_{i=1}^n \nabla^2 f(\wm_{\kt},\xmi{i})-\EE(\nabla^2 f(\wm_{\kt},\xm)) \\
& \ \ +\EE(\nabla^2 f(\wm_{\kt},\xm))-\EE(\nabla^2 f(\wm,\xm))\Bigg\|_{\op}\\
\leq &\lv \frac{1}{n}\sum_{i=1}^n \left(\nabla^2 f(\wm,\xmi{i})-\nabla^2 f(\wm_{\kt},\xmi{i})\right)\rv_{\op}+\lv\frac{1}{n}\sum_{i=1}^n \nabla^2 f(\wm_{\kt},\xmi{i})-\EE(\nabla^2 f(\wm_{\kt},\xm))\rv_{\op}\\
&\ \ +\Bigg\|\EE(\nabla^2 f(\wm_{\kt},\xm))-\EE(\nabla^2 f(\wm,\xm))\Bigg\|_{\op}.
\end{aligned}
\end{equation*}
Here we also define four events $\Em_0$, $\Em_1$, $\Em_2$ and $\Em_3$ as
\begin{equation*}
\begin{aligned}
&\Em_0=\left\{\sup_{\wm\in\Omega}\lv\nabla^2\Jhn(\wm)-\nabla^2\Jm(\wm)\rv_{\op}\geq t\right\},\\
&\Em_1=\left\{\sup_{\wm\in\Omega}\lv \frac{1}{n}\sum_{i=1}^n \left(\nabla^2 f(\wm,\xmi{i})-\nabla^2 f(\wm_{\kt},\xmi{i})\right)\rv_{\op}\geq \frac{t}{3}\right\},\\
&\Em_2=\left\{\sup_{j_w\in[\nee^j], j=[l]}\lv\frac{1}{n}\sum_{i=1}^n \nabla^2 f(\wm_{\kt},\xmi{i})-\EE(\nabla^2 f(\wm_{\kt},\xm))\rv_{\op}\geq \frac{t}{3}\right\},\\
&\Em_3=\left\{\sup_{\wm\in\Omega}\lv\EE(\nabla^2 f(\wm_{\kt},\xm))-\EE(\nabla^2 f(\wm,\xm))\rv_{\op}\geq \frac{t}{3}\right\}.
\end{aligned}
\end{equation*}
Accordingly, we have
\begin{equation*}
\begin{aligned}
\Pro\left(\Em_0\right)\leq \Pro\left(\Em_1\right)+\Pro\left(\Em_2\right)+\Pro\left(\Em_3\right).
\end{aligned}
\end{equation*}
So we can respectively bound $\Pro\left(\Em_1\right)$, $\Pro\left(\Em_2\right)$ and $\Pro\left(\Em_3\right)$ to bound $\Pro\left(\Em_0\right)$.

{\bf\noindent{Step 1. Bound $\Pro\left(\Em_1\right)$}}: We first bound $\Pro\left(\Em_1\right)$ as follows:
\begin{equation*}
\begin{aligned}
\Pro\left( \Em_1 \right) = & \Pro \left( \sup_{\wm \in \Omega} \lv \frac{1}{n}\sum_{i=1}^n \left(\nabla^2 f(\wm,\xmi{i})-\nabla^2 f(\wm_{\kt},\xmi{i})\right)\rv_2\geq \frac{t}{3} \right) \\
\overset{\text{\ding{172}}}{\leq} &\frac{3}{t}\EE \left(  \sup_{\wm \in \Omega}\lv \frac{1}{n}\sum_{i=1}^n\left(\nabla^2 f(\wm,\xmi{i})-\nabla^2 f(\wm_{\kt},\xmi{i})\right)\rv_2\right) \\
%\leq &\frac{3}{t}\EE \left(  \sup_{\wm \in \Omega}\lv \nabla^2 f(\wm,\xm)-\nabla^2 f(\wm_{\kt},\xm)\rv_2\right) \\
%\end{aligned}
%\end{equation*}
%\begin{equation*}
%\begin{aligned}
\leq &\frac{3}{t}\EE \left(  \sup_{\wm \in \Omega}\frac{\lv \frac{1}{n}\sum_{i=1}^n\left(\nabla^2 f(\wm,\xmi{i})-\nabla^2 f(\wm_{\kt},\xmi{i})\right)\rv_2}{\lv\wm-\wm_{\kt}\rv_2} \sup_{\wm\in\Omega} \lv\wm-\wm_{\kt}\rv_2\right) \\
\leq &\frac{3\epsilon}{t}\EE \left(  \sup_{\wm \in \Omega} \lv \frac{1}{n}\sum_{i=1}^{n}\nabla^3 f(\wm,\xmi{i}) \rv_{\op}\right)\\
%\leq &\frac{3\epsilon}{t}\EE \left(  \sup_{\wm \in \Omega} \lv \nabla^3 f(\wm,\xmi{i}) \rv_2\right),\\
\overset{\text{\ding{173}}}{\leq} &\frac{3\xi\epsilon}{t},\\
\end{aligned}
\end{equation*}
where \ding{172} holds since by Markov inequality and \ding{173} holds because of Lemma~\ref{sig_gradient}.

Therefore, we can set
\begin{equation*}
t \geq \frac{6\xi \epsilon}{\varepsilon}.
\end{equation*}
Then we can bound $\Pro(\Em_1)$:
\begin{equation*}
\Pro(\Em_1)\leq \frac{\varepsilon}{2}.
\end{equation*}

{\bf\noindent{Step 2. Bound $\Pro\left(\Em_2\right)$}}: By Lemma~\ref{lem:opnorm}, we know that for any matrix $\Xm\in\Rss{d}{d}$, its operator norm can be computed as
\begin{equation*}
\|\Xm\|_{\op} \leq \frac{1}{1-2\epsilon} \sup_{\lam \in \lam_\epsilon} \left|\lr \lam,\Xm\lam\rl\right|.
\end{equation*}
where $\lam_\epsilon=\{\lam_1,\dots,\lam_{\kt}\}$ be an $\epsilon$-covering net of $\Bi{d}{1}$.

Let $\lam_{1/4}$ be the $\frac{1}{4}$-covering net of $\Bi{d}{1}$. Recall that we use $j_w$ to denote the index of $\wm^j_{k_j}$ in $\epsilon$-net $\wm_\epsilon^j$ and we have $j_w\in[\nee^j],\ (\nee^j\leq (3rl/\epsilon)^{\dm_j\dm_{j-1}})$. Then we can bound $\Pro\left(\Em_2\right)$ as follows:
\begin{equation*}
\begin{aligned}
\Pro\!\left( \Em_2 \right) \!= & \Pro \left( \sup_{j_w\in[\nee^j], j=[l]} \lv\frac{1}{n}\sum_{i=1}^n \nabla^2 f(\wm_{\kt},\xmi{i})-\EE(\nabla^2 f(\wm_{\kt},\xm))\rv_2\geq \frac{t}{3} \right) \\
= & \Pro \left( \sup_{j_w\in[\nee^j], j=[l], \lam\in\lam_{1/4}} 2\left|\lr\lam, \left(\frac{1}{n}\sum_{i=1}^n \nabla^2 f(\wm_{\kt},\xmi{i})-\EE\left(\nabla^2 f(\wm_{\kt},\xm)\right)\right)\lam\rl\right|\geq \frac{t}{3} \right) \\
\leq & 12^d\left(\frac{3lr}{\epsilon}\right)^{\sum_j\dm_j\dm_{j-1}}\!\! \!\!\!\!\!\!\!\! \!\!\!\!\! \sup_{j_w\in[\nee^j], j=[l], \lam\in\lam_{1/4}}\!\!\! \!\!\Pro\! \left( \left| \frac{1}{n}\sum_{i=1}^n\lr\lam, \left(\nabla^2 f(\wm_{\kt},\xmi{i})\!-\!\EE\left(\nabla^2 f(\wm_{\kt},\xm)\right)\right)\lam\rl\right|\!\geq\! \frac{t}{6}\!\right).
%\leq & \left(\frac{3r}{\epsilon}\right)^d \sup_{j \in [n_\epsilon]} \Pro \left(  \lv\frac{1}{n}\sum_{i=1}^n \nabla f(\wm_{j},\xmi{i})-\EE(\nabla f(\wm_{j},\xm))\rv_2\geq \frac{t}{3}\right) \\
%\overset{\text{\ding{172}}}{\leq} & \left(\frac{3r}{\epsilon}\right)^d \sup_{j \in [n_\epsilon]} \Pro \left(  \frac{1}{n}\sum_{i=1}^n \lv \nabla f(\wm_{j},\xmi{i})-\EE(\nabla f(\wm_{j},\xm))\rv_2\geq \frac{t}{3}\right),
\end{aligned}
\end{equation*}
Since by Lemma~\ref{thm:garadifghtent12}, $\lr\lam,\left(\nabla^2_{\wm}f(\wm,\xm)-\EE \nabla^2_{\wm}f(\wm,\xm)\right)\lam\rl$ where $\lam\in\Bi{d}{1}$ is $8\gamma^2\tau^2$-sub-Gaussian, \textit{i.e.}
\begin{equation*}
\EE\left(s \lr\lam,\left(\nabla^2_{\wm}f(\wm,\xm)-\EE \nabla^2_{\wm}f(\wm,\xm)\right)\lam\rl \right)\leq \exp\left(\frac{8s^2\gamma^2 \tau^2}{2}\right).
\end{equation*}
Thus, $\frac{1}{n}\sum_{i=1}^n \lr\lam,\left(\nabla^2_{\wm}f(\wm,\xm)-\EE \nabla^2_{\wm}f(\wm,\xm)\right)\lam\rl$ is $8\gamma^2\tau^2/n$-sub-Gaussian random variable. So we can obtain
\begin{equation*}
\begin{aligned}
\Pro \left(\left|\frac{1}{n} \sum_{i=1}^n\lr\ym,\left(\nabla^2_{\wm}f(\wm,\xm)-\EE \nabla^2_{\wm}f(\wm,\xm)\right)\ym\rl\right|\geq \frac{t}{6} \right)\leq 2\exp\left(-\frac{n t^2}{72\gamma^2\tau^2}\right).
\end{aligned}
\end{equation*}
Note $d=\sum_j\dm_j\dm_{j-1}$. Then the probability of $\Em_2$ is upper bounded as
\begin{equation*}
\begin{aligned}
\Pro\left(\Em_2\right)\leq 2\exp\left(-\frac{n t^2}{72\gamma^2\tau^2}+ d\log\left(\frac{36lr}{\epsilon}\right)\right).
\end{aligned}
\end{equation*}
Thus, if we set
\begin{equation*}
\begin{aligned}
t\geq \gamma \tau\sqrt{\frac{72\left(d\log(36lr/\epsilon)+\log(4/\varepsilon)\right)}{n}},
\end{aligned}
\end{equation*}
then we have
\begin{equation*}
\Pro\left(\Em_2\right)\leq \frac{\varepsilon}{2}.
\end{equation*}

{\bf\noindent{Step 3. Bound $\Pro\left(\Em_3\right)$}}:
We first bound $\Pro\left(\Em_3\right)$ as follows:
\begin{equation*}
\begin{aligned}
\Pro\left( \Em_3 \right) = & \Pro \left( \sup_{\wm\in\Omega} \lv\EE(\nabla^2 f(\wm_{\kt},\xm))-\EE(\nabla^2 f(\wm,\xm))\rv_2 \geq \frac{t}{3} \right) \\
\leq & \Pro \left(\EE \sup_{\wm\in\Omega} \lv(\nabla^2 f(\wm_{\kt},\xm)- \nabla^2 f(\wm,\xm)\rv_2 \geq \frac{t}{3} \right) \\
=& \Pro \left(\EE \sup_{\wm\in\Omega} \frac{\lv \left(\nabla^2 f(\wm,\xm)-\nabla^2 f(\wm_{\kt},\xm)\right)\rv_2}{\lv\wm-\wm_{\kt}\rv_2} \sup_{\wm\in\Omega} \lv\wm-\wm_{\kt}\rv_2 \geq \frac{t}{3} \right) \\
\leq & \Pro \left(\EE \sup_{\wm\in\Omega} \lv \nabla^3 f(\wm,\xm) \rv_{\op} \geq \frac{t}{3} \right) \\
%= & \Pro \left( \sup_{\wm\in\Omega} \lv\EE\left(\frac{1}{n}\sum_{i=1}^n\left(f(\wm_{\kt},\xm)-f(\wm,\xm)\right)\right)\rv_2 \geq \frac{t}{3} \right) \\
%=&  \Pro\left(\sup_{\wm\in\Omega} \frac{\lv \EE\left(\nabla^2 f(\wm_{\kt},\xm)-\nabla^2 f(\wm,\xm) \rv_2\right)}{\lv\wm-\wm_{\kt}\rv_2} \sup_{\wm\in\Omega} \lv\wm-\wm_{\kt}\rv_2 \geq \frac{t}{3}\right) \\
%\leq&  \Pro\left(\epsilon\EE \sup_{\wm\in\Omega} \lv \nabla^2 \Jhn(\wm,\xm) \rv_2  \geq \frac{t}{3}\right) \\
\leq&  \Pro\left(\xi\epsilon  \geq \frac{t}{3}\right).
\end{aligned}
\end{equation*}
We set $\epsilon$ enough small such that $\xi  \epsilon  < t/3$ always holds. Then it yields $\Pro\left( \Em_3 \right)=0$.

{\bf\noindent{Step 4. Final result}}: To ensure $\Pro(\Em_0)\leq \varepsilon$, we just set $\epsilon=36r /n$ and
\begin{equation*}
\begin{split}
t\!\geq\! \max\left(\!\frac{6\xi\epsilon}{\varepsilon},\gamma \tau\sqrt{\frac{72\left(d\log(36lr/\epsilon)\!+\!\log(4/\varepsilon)\right)}{n}} \right)\!=\!\max\left(\!\frac{108\xi r}{n\varepsilon}, c_4'\gamma \tau \sqrt{\frac{d\log(nl)\!+\!\log(4/\varepsilon)}{n}} \right).
\end{split}
\end{equation*}
Therefore, there exists such two universal constants $c_{m'}$ and $c_m$ such that if $n\geq \frac{c_{m'}\xi^2 r^2}{\gamma^2\tau^2d\log(l)}$, then
\begin{equation*}
\sup_{\wm\in\Omega}\!\left\| \nabla^2\Jhn(\wm)\!-\!\nabla^2\Jm(\wm)\right\|_{\op}\!\leq\! c_m\gamma\tau\sqrt{\frac{d\log(nl)\!+\!\log(4/\varepsilon)}{n}}
\end{equation*}
holds with probability at least $1-\varepsilon$.
\end{proof}

\subsection{Proofs of Main Theories}\label{saf3we543ydf}
\subsubsection{Proof of Theorem~\ref{thm:stability_sig}}
\begin{proof}[\hypertarget{lemmaunisigloss}{Proof}]
%Suppose that $\wm_{\epsilon}=\{\wm_1,\cdots,\wm_{\nee}\}$ is the $\epsilon$-covering net of the ball $\Bi{d}{r}$, where $\nee$ is the $\epsilon$-covering number. By $\epsilon$-covering theory in \cite{VRMT}, we can have $\nee\leq (3r/\epsilon)^d$. Let $\wm$ be an arbitrary vector in $\Bi{d}{r}$. Then we can always find a $\wm_{\kt}$ in $\wm_{\epsilon}$ such that $\|\wm-\wm_{\kt}\|_2\leq \epsilon$. Therefore, we can obtain
%We adopt similar strategy in proofs of Theorem~\ref{thm:stability}.
Recall that  the weight of each layer has magnitude bound separately, \textit{i.e.} $\|\wmi{j}\|_2\leq r$. So here we separately assume $\wm_{\epsilon}^j=\{\wm_1^j,\cdots,\wm_{\nee^j}^j\}$ is the $\epsilon/l$-covering net of the ball $\Bi{\dm_j\dm_{j-1}}{r}$ which corresponds to the weight $\wmi{j}$ of the $j$-th layer. Let $\nee^j$ be the $\epsilon/l$-covering number. By $\epsilon$-covering theory in \cite{VRMT}, we can have $\nee^j\leq (3rl/\epsilon)^{\dm_j\dm_{j-1}}$. Let $\wm\in\Omega$ be an arbitrary vector. Since $\wm=[\wmi{1},\cdots,\wmi{l}]$ where $\wmi{j}$ is the weight of the $j$-th layer, we can always find a vector $\wm^j_{k_j}$ in $\wm_\epsilon^j$ such that $\|\wmi{j}-\wm^j_{k_j}\|_2\leq \epsilon/l$. For brevity, let $j_w\in[\nee^j]$  denote the index of $\wm^j_{k_j}$ in $\epsilon$-net $\wm_\epsilon^j$. Then let $\wm_{\kt}=[\wm^j_{k_1};\cdots;\wm^j_{k_j};\cdots;\wm^j_{k_l}]$. This means that  we can always find a vector $\wm_{\kt}$ such that $\|\wm-\wm_{\kt}\|_2\leq \epsilon$.
Accordingly, we can decompose $\left| \Jhn(\wm)-\Jm(\wm)\right|$ as
\begin{equation*}
\begin{aligned}
&\lvs\Jhn(\wm)-\Jm(\wm)\rvs\! =\! \lvs \frac{1}{n}\sum_{i=1}^n f(\wm,\xmi{i})-\EE(f(\wm,\xm))\rvs \\
=&\Bigg| \frac{1}{n}\!\sum_{i=1}^n\!\! \left(f(\wm,\xmi{i})\!-\!f(\wm_{\kt},\xmi{i})\right)\!+\!\frac{1}{n}\! \sum_{i=1}^n \!\! f(\wm_{\kt},\xmi{i})\!-\!\EE f(\wm_{\kt},\xm)\!+\! \EE f(\wm_{\kt},\xm)\!-\!\EE f(\wm,\xm)\Bigg|\\
\leq &\!\lvs\frac{1}{n}\!\sum_{i=1}^n \!\! \left(f(\wm,\xmi{i})\!-\!f(\wm_{\kt},\xmi{i})\right)\rvs\!+ \!\lvs\frac{1}{n}\!\!\sum_{i=1}^n \!\! f(\wm_{\kt},\xmi{i})\!-\!\EE f(\wm_{\kt},\xm)\rvs\!+\! \Bigg|\EE f(\wm_{\kt},\xm)\!-\!\EE f(\wm,\xm)\Bigg|.
\end{aligned}
\end{equation*}
Then, we define four events $\Em_0$, $\Em_1$, $\Em_2$ and $\Em_3$ as
\begin{equation*}
\begin{aligned}
&\Em_0=\left\{\sup_{\wm\in\Omega}\lvs\Jhn(\wm)-\Jm(\wm)\rvs\geq t\right\},\\
& \Em_1=\left\{\sup_{\wm\in\Omega}\lvs \frac{1}{n}\sum_{i=1}^n \left(f(\wm,\xmi{i})-f(\wm_{\kt},\xmi{i})\right)\rvs\geq \frac{t}{3}\right\},\\
&\Em_2=\left\{\sup_{j_w\in[\nee^j], j=[l]}\lvs\frac{1}{n}\!\sum_{i=1}^n\!\! f(\wm_{\kt},\xmi{i})\!-\!\EE(f(\wm_{\kt},\xm))\rvs\!\geq \! \frac{t}{3}\right\},\\ &\Em_3=\left\{\sup_{\wm\in\Omega}\Bigg|\EE(f(\wm_{\kt},\xm))\!-\!\EE(f(\wm,\xm))\Bigg|\!\geq \! \frac{t}{3}\right\}.
\end{aligned}
\end{equation*}

Accordingly, we have
\begin{equation*}
\begin{aligned}
\Pro\left(\Em_0\right)\leq \Pro\left(\Em_1\right)+\Pro\left(\Em_2\right)+\Pro\left(\Em_3\right).
\end{aligned}
\end{equation*}
So we can respectively bound $\Pro\left(\Em_1\right)$, $\Pro\left(\Em_2\right)$ and $\Pro\left(\Em_3\right)$ to bound $\Pro\left(\Em_0\right)$.

{\bf\noindent{Step 1. Bound $\Pro\left(\Em_1\right)$}}: We first bound $\Pro\left(\Em_1\right)$ as follows:
\begin{equation*}
\begin{aligned}
\Pro\left( \Em_1 \right) = & \Pro \left( \sup_{\wm \in \Omega} \lvs \frac{1}{n}\sum_{i=1}^n \left( f(\wm,\xmi{i})- f(\wm_{\kt},\xmi{i})\right)\rvs\geq \frac{t}{3} \right) \\
\overset{\text{\ding{172}}}{\leq} &\frac{3}{t}\EE \left(  \sup_{\wm \in \Omega}\lvs\frac{1}{n}\sum_{i=1}^n\left(f(\wm,\xmi{i})- f(\wm_{\kt},\xmi{i})\right)\rvs\right) \\
\leq &\frac{3}{t}\EE \left(  \sup_{\wm \in \Omega}\frac{\lvs \frac{1}{n}\sum_{i=1}^n\left( f(\wm,\xmi{i})- f(\wm_{\kt},\xmi{i})\right)\rvs}{\lv\wm-\wm_{\kt}\rv_2} \sup_{\wm\in\Omega} \lv\wm-\wm_{\kt}\rv_2\right) \\
\leq &\frac{3\epsilon}{t}\EE \left(  \sup_{\wm \in \Omega} \lv \nabla \Jhn(\wm,\xm) \rv_2\right),
\end{aligned}
\end{equation*}
where \ding{172} holds since by Markov inequality, for an arbitrary nonnegative random variable $x$, then we have
\begin{equation*}
\begin{aligned}
\Pro(x\geq t)\leq \frac{\EE(x)}{t}.
\end{aligned}
\end{equation*}

Now we only need to bound $\EE \left(  \sup_{\wm \in \Omega} \lv \nabla \Jhn(\wm,\xm) \rv_2\right)$. Then by Lemma~\ref{sig_loss}, we can bound it as follows:
\begin{equation*}
\begin{aligned}
\EE \left(\sup_{\wm \in \Omega} \lv \nabla \Jhn(\wm,\xm) \rv_2\right)\leq& \EE \left(\sup_{\wm \in \Omega} \lv\frac{1}{n}\sum_{i=1}^{n} \nabla f(\wm,\xmi{i})\rv_2 \right)\leq \alpha,
\end{aligned}
\end{equation*}
where $\alpha=\sqrt{\frac{1}{16}c_yc_d\left(1 + c_r (l-1)\right)}$ in which $c_y$, $c_d$ and $c_r$ are defined in Lemma~\ref{sig_loss}.

Therefore, we have
\begin{equation*}
\begin{aligned}
\Pro\left( \Em_1 \right) \leq \frac{3\alpha \epsilon}{t}.
\end{aligned}
\end{equation*}
 We further let
\begin{equation*}
t \geq \frac{6\alpha \epsilon}{\varepsilon}.
\end{equation*}
Then we can bound $\Pro(\Em_1)$:
\begin{equation*}
\Pro(\Em_1)\leq \frac{\varepsilon}{2}.
\end{equation*}

%Let $\lam_{1/2}$ be the $\frac{1}{2}$-covering net of $\Bi{d}{1}$. Recall that we use $j_w$ to denote the index of $\wm^j_{k_j}$ in $\epsilon$-net $\wm_\epsilon^j$ and we have $j_w\in[\nee^j],\ (\nee^j\leq (3rl/\epsilon)^{\dm_j\dm_{j-1}})$.  Then we can bound $\Pro\left(\Em_2\right)$ as follows:j_w\in[\nee^j], j=[l]
{\bf\noindent{Step 2. Bound $\Pro\left(\Em_2\right)$}}: Recall that we use $j_w$ to denote the index of $\wm^j_{k_j}$ in $\epsilon$-net $\wm_\epsilon^j$ and we have $j_w\in[\nee^j],\ (\nee^j\leq (3rl/\epsilon)^{\dm_j\dm_{j-1}})$. We can bound $\Pro\left(\Em_2\right)$ as follows:
\begin{equation*}
\begin{aligned}
\Pro\left( \Em_2 \right) = & \Pro \left( \sup_{j_w\in[\nee^j], j=[l]} \lvs\frac{1}{n}\sum_{i=1}^n f(\wm_{\kt},\xmi{i})-\EE(f(\wm_{\kt},\xm))\rvs \geq \frac{t}{3} \right) \\
\leq & \left(\frac{3lr}{\epsilon}\right)^{\sum_j\dm_j\dm_{j-1}} \sup_{j_w\in[\nee^j], j=[l]} \Pro \left(  \lvs\frac{1}{n}\sum_{i=1}^n f(\wm_{j},\xmi{i})-\EE(f(\wm_{j},\xm))\rvs\geq \frac{t}{3}\right).
%\overset{\text{\ding{172}}}{\leq} & \left(\frac{3r}{\epsilon}\right)^d \sup_{j \in [n_\epsilon]} \Pro \left(  \frac{1}{n}\sum_{i=1}^n \lv f(\wm_{j},\xmi{i})-\EE(f(\wm_{j},\xm))\rv_2\geq \frac{t}{3}\right),
\end{aligned}
\end{equation*}
%where \ding{172} holds because of the Jensen's inequality.
Since when the activation functions are sigmoid functions, the loss $f(\wm,\xm)$ is $\alpha$-Lipschitz. Besides, we assume $\xm$ to be a vector of \textit{i.i.d.} Gaussian variables from $\mathcal{N}(0, \tau^2)$. Then by Lemma~\ref{Lip_gaus}, we know that the variable $f(\xm)-\EE f(\xm)$ is $8\alpha^2\tau^2$-sub-Gaussian. Thus, we have
\begin{equation*}
\Pro\left(\left|f(\xm)-\EE f(\xm)\right|>t\right)\leq 2 \exp\left(-\frac{t^2}{2\alpha^2\tau^2}\right),\quad (\forall t\geq 0),
\end{equation*}
where $\alpha=\sqrt{\frac{1}{16}c_yc_d\left(1 + c_r (l-1)\right)}$ in which $c_y$, $c_d$ and $c_r$ are defined in Lemma~\ref{sig_loss}. Therefore, we can obtain that $\frac{1}{n}\sum_{i=1}^n f(\wm_{j},\xmi{i})-\EE(f(\wm_{j},\xm))$ is $8\alpha^2\tau^2/n$-sub-Gaussian random variable. Thus, we can obtain
\begin{equation*}
\begin{aligned}
\Pro \left( \lvs \frac{1}{n}\sum_{i=1}^n f(\wm_{j},\xmi{i})-\EE(f(\wm_{j},\xm))\rvs\geq \frac{t}{3}\right)\leq 2\exp\left(-\frac{nt^2}{18\alpha^2\tau^2}\right).
\end{aligned}
\end{equation*}
Notice $\sum_j\dm_j\dm_{j-1}=d$. In this case, the probability of $\Em_2$ is upper bounded as
\begin{equation*}
\begin{aligned}
\Pro\left(\Em_2\right)\leq 2\exp\left(-\frac{nt^2}{18\alpha^2\tau^2}+ d\log\left(\frac{3lr}{\epsilon}\right)\right).
\end{aligned}
\end{equation*}
Thus, if we set
\begin{equation*}
\begin{aligned}
t\geq \alpha \tau\sqrt{\frac{18\left(d\log(3lr/\epsilon)+\log(4/\varepsilon)\right)}{n}},
\end{aligned}
\end{equation*}
then we have
\begin{equation*}
\Pro\left(\Em_2\right)\leq \frac{\varepsilon}{2}.
\end{equation*}

{\bf\noindent{Step 3. Bound $\Pro\left(\Em_3\right)$}}:
We first bound $\Pro\left(\Em_3\right)$ as follows:
\begin{equation*}
\begin{aligned}
\Pro\left( \Em_3 \right) = & \Pro \left( \sup_{\wm\in\Omega} \lvs\EE(f(\wm_{\kt},\xm))-\EE(f(\wm,\xm))\rvs \geq \frac{t}{3} \right) \\
%= & \Pro \left( \sup_{\wm\in\Omega} \lv\EE\left(\frac{1}{n}\sum_{i=1}^n\left(f(\wm_{\kt},\xm)-f(\wm,\xm)\right)\right)\rv_2 \geq \frac{t}{3} \right) \\
=&  \Pro\left(\sup_{\wm\in\Omega} \frac{\lvs \EE\left( f(\wm_{\kt},\xm)-f(\wm,\xm)\right)\rvs}{\lv\wm-\wm_{\kt}\rv_2} \sup_{\wm\in\Omega} \lv\wm-\wm_{\kt}\rv_2 \geq \frac{t}{3}\right) \\
\leq&  \Pro\left(\epsilon\EE \sup_{\wm\in\Omega} \lv \nabla\Jm_{\wm}(\wm,\xm) \rv_2  \geq \frac{t}{3}\right) \\
%\leq&  \Pro\left(\epsilon\EE \sup_{\wm\in\Omega} \lv \nabla_{\wm}\Jm(\wm,\xm)-\nabla_{\wm'}\Jm(\wm,\xm) \rv_2 + \lv \nabla_{\wm'}\Jm(\wm,\xm) \rv_2 \geq \frac{t}{3}\right) \\
\overset{\text{\ding{172}}}{\leq}&  \Pro\left(\alpha \epsilon  \geq \frac{t}{3}\right),
\end{aligned}
\end{equation*}
where \ding{172} holds since by Lemma~\ref{sig_loss}, for arbitrary $\xm$ and $\wm\in\Omega$, we have $\|\nabla_{\wm} f(\wm,\xm)\|_2\leq \alpha$.
We set $\epsilon$ enough small such that $\alpha \epsilon  < t/3$ always holds. Then it yields $\Pro\left( \Em_3 \right)=0$.

{\bf\noindent{Step 4. Final result}}: Notice, we have $\frac{6\alpha  \epsilon}{\varepsilon}\geq 3\alpha \epsilon$. To ensure $\Pro(\Em_0)\leq \varepsilon$, we just set $\epsilon=3 r/n$ and
\begin{equation*}
\begin{split}
t\!\geq\! \max\!\left(\!\frac{6\alpha  \epsilon}{\varepsilon}, \alpha \tau\sqrt{\frac{18\left(d\log(3lr/\epsilon)+\log(4/\varepsilon)\right)}{n}} \right)\!\!=\!\max\!\left(\!\frac{18\alpha r}{n\varepsilon},  \alpha \tau\sqrt{\frac{18\left(d\log(nl)\!+\!\log(4/\varepsilon)\right)}{n}}\right).
\end{split}
\end{equation*}

%$\alpha=\sqrt{\frac{1}{16}c_yc_d\left(1 + c_r (l-1)\right)}$

Therefore, if $n\geq 18r^2/(d\tau^2\varepsilon^2\log(l))$, then
%that there exists such a universal constant $c_l$ such that
\begin{equation*}
\sup_{\wm\in\Omega} \lvs  \Jhn(\wm)- \Jm(\wm) \rvs \leq  \tau\sqrt{\frac{9}{8}c_yc_d\left(1 + c_r (l-1)\right)} \sqrt{\frac{d\log(nl)+\log(4/\varepsilon)}{n}}
\end{equation*}
holds with probability at least $1-\varepsilon$, where $c_y$, $c_d$, and $c_r$ are defined as
\begin{equation*}
\begin{split}
\|\vmi{l}-\ym\|_2^2\leq c_y<+\infty,  \quad c_{d}=\max(\dm_0,\dm_1,\cdots,\dm_l)  \quad \text{and}\quad c_r=\max\left(\frac{r^2}{16},\left(\frac{r^2}{16}\right)^{l-1}\right).
\end{split}
\end{equation*}
The proof is completed.
\end{proof}

\subsubsection{Proof of Corollary~\ref{stabiliaftysig}}
\begin{proof}[\hypertarget{cor2}{Proof}]
By Lemma~\ref{stab}, we know $\epsilon_s=\epsilon_g$. Thus, the remaining work is to bound $\epsilon_s$. Actually, we can have
\begin{equation*}
\begin{split}
\left|\EE_{\Ss\sim\D,\Am,(\xmi{1}',\cdots,\xmi{n}'\!)\sim\D}\frac{1}{n}\! \sum_{j=1}^{n}\!\!\left(\!f_j(\wms{j},\!\xmi{j}'\!)\!-\!\!f_j(\wmn,\xmi{j}'\!)\! \right)\!\right| \leq &\EE_{\Ss\sim\D}\left( \sup_{\wm\in\Omega} \left|  \Jhn(\wm)- \Jm(\wm)\right|\right)\\
\leq &\sup_{\wm\in\Omega} \left|  \Jhn(\wm)- \Jm(\wm)\right|\\
\leq& \epsilon_n.
\end{split}
\end{equation*}
Thus, we have $\epsilon_g=\epsilon_s\leq \epsilon_n$. The proof is completed.
\end{proof}

\subsubsection{ Proof of Theorem~\ref{thm:uniformconvergence1_sig}}

\begin{proof}[\hypertarget{uniformconvergencsig}{Proof}]
%We adopt similar strategy in proofs of Theorem~\ref{thm:stability}.
%%\subsection{Proof of the uniform convergence of gradient}
%%{\bf\noindent{Proof of the uniform convergence of gradient}}
%Suppose that $\wm_{\epsilon}=\{\wm_1,\cdots,\wm_{\nee}\}$ is the $\epsilon$-covering net of the ball $\Bi{d}{r}$, where $\nee\leq (3r/\epsilon)^d$ is the $\epsilon$-covering number. Let $\wm$ be an arbitrary vector in $\Bi{d}{r}$. Then we can always find a $\wm_{\kt}$ in $\wm_{\epsilon}$ such that $\|\wm-\wm_{\kt}\|_2\leq \epsilon$. Accordingly, we can decompose $\lv\nabla \Jhn(\wm)-\nabla\Jm(\wm)\rv_2$ as
%We adopt similar strategy in proofs of Theorem~\ref{thm:stability}.
Recall that  the weight of each layer has magnitude bound separately, \textit{i.e.} $\|\wmi{j}\|_2\leq r$. So here we separately assume $\wm_{\epsilon}^j=\{\wm_1^j,\cdots,\wm_{\nee^j}^j\}$ is the $\epsilon/l$-covering net of the ball $\Bi{\dm_j\dm_{j-1}}{r}$ which corresponds to the weight $\wmi{j}$ of the $j$-th layer. Let $\nee^j$ be the $\epsilon/l$-covering number. By $\epsilon$-covering theory in \cite{VRMT}, we can have $\nee^j\leq (3rl/\epsilon)^{\dm_j\dm_{j-1}}$. Let $\wm\in\Omega$ be an arbitrary vector. Since $\wm=[\wmi{1},\cdots,\wmi{l}]$ where $\wmi{j}$ is the weight of the $j$-th layer, we can always find a vector $\wm^j_{k_j}$ in $\wm_\epsilon^j$ such that $\|\wmi{j}-\wm^j_{k_j}\|_2\leq \epsilon/l$. For brevity, let $j_w\in[\nee^j]$  denote the index of $\wm^j_{k_j}$ in $\epsilon$-net $\wm_\epsilon^j$. Then let $\wm_{\kt}=[\wm^j_{k_1};\cdots;\wm^j_{k_j};\cdots;\wm^j_{k_l}]$. This means that we can always find a vector $\wm_{\kt}$ such that $\|\wm-\wm_{\kt}\|_2\leq \epsilon$.
Accordingly, we can decompose $\lv\nabla \Jhn(\wm)-\nabla\Jm(\wm)\rv_2$ as follows:
\begin{equation*}
\begin{aligned}
&\lv\nabla \Jhn(\wm)-\nabla\Jm(\wm)\rv_2\\
=& \lv \frac{1}{n}\sum_{i=1}^n \nabla f(\wm,\xmi{i})-\EE(\nabla f(\wm,\xm))\rv_2\\
=&\Bigg\| \frac{1}{n}\sum_{i=1}^n \left(\nabla f(\wm,\xmi{i})-\nabla f(\wm_{\kt},\xmi{i})\right)+\frac{1}{n}\sum_{i=1}^n \nabla f(\wm_{\kt},\xmi{i})-\EE(\nabla f(\wm_{\kt},\xm)) \\
& \ \ +\EE(\nabla f(\wm_{\kt},\xm))-\EE(\nabla f(\wm,\xm))\Bigg\|_2\\
\leq &\lv \frac{1}{n}\sum_{i=1}^n \left(\nabla f(\wm,\xmi{i})-\nabla f(\wm_{\kt},\xmi{i})\right)\rv_2+\lv\frac{1}{n}\sum_{i=1}^n \nabla f(\wm_{\kt},\xmi{i})-\EE(\nabla f(\wm_{\kt},\xm))\rv_2\\
&\ \ +\Bigg\|\EE(\nabla f(\wm_{\kt},\xm))-\EE(\nabla f(\wm,\xm))\Bigg\|_2.
\end{aligned}
\end{equation*}
Here we also define four events $\Em_0$, $\Em_1$, $\Em_2$ and $\Em_3$ as
\begin{equation*}
\begin{aligned}
&\Em_0=\left\{\sup_{\wm\in\Omega}\lv\nabla\Jhn(\wm)-\nabla\Jm(\wm)\rv_2\geq t\right\},\\
&\Em_1=\left\{\sup_{\wm\in\Omega}\lv \frac{1}{n}\sum_{i=1}^n \left(\nabla f(\wm,\xmi{i})-\nabla f(\wm_{\kt},\xmi{i})\right)\rv_2\geq \frac{t}{3}\right\},\\
&\Em_2=\left\{\sup_{j_w\in[\nee^j], j=[l]} \lv\frac{1}{n}\sum_{i=1}^n \nabla f(\wm_{\kt},\xmi{i})-\EE(\nabla f(\wm_{\kt},\xm))\rv_2\geq \frac{t}{3}\right\},\\
&\Em_3=\left\{\sup_{\wm\in\Omega}\Bigg\|\EE(\nabla f(\wm_{\kt},\xm))-\EE(\nabla f(\wm,\xm))\Bigg\|_2\geq \frac{t}{3}\right\}.
\end{aligned}
\end{equation*}
Accordingly, we have
\begin{equation*}
\begin{aligned}
\Pro\left(\Em_0\right)\leq \Pro\left(\Em_1\right)+\Pro\left(\Em_2\right)+\Pro\left(\Em_3\right).
\end{aligned}
\end{equation*}
So we can respectively bound $\Pro\left(\Em_1\right)$, $\Pro\left(\Em_2\right)$ and $\Pro\left(\Em_3\right)$ to bound $\Pro\left(\Em_0\right)$.

{\bf\noindent{Step 1. Bound $\Pro\left(\Em_1\right)$}}: We first bound $\Pro\left(\Em_1\right)$ as follows:
\begin{equation*}
\begin{aligned}
\Pro\left( \Em_1 \right) = & \Pro \left( \sup_{\wm \in \Omega} \lv \frac{1}{n}\sum_{i=1}^n \left(\nabla f(\wm,\xmi{i})-\nabla f(\wm_{\kt},\xmi{i})\right)\rv_2\geq \frac{t}{3} \right) \\
\overset{\text{\ding{172}}}{\leq} &\frac{3}{t}\EE \left(  \sup_{\wm \in \Omega}\lv \frac{1}{n}\sum_{i=1}^n\left(\nabla f(\wm,\xmi{i})-\nabla f(\wm_{\kt},\xmi{i})\right)\rv_2\right) \\
\leq &\frac{3}{t}\EE \left(  \sup_{\wm \in \Omega}\frac{\lv \frac{1}{n}\sum_{i=1}^n\left(\nabla f(\wm,\xmi{i})-\nabla f(\wm_{\kt},\xmi{i})\right)\rv_2}{\lv\wm-\wm_{\kt}\rv_2} \sup_{\wm\in\Omega} \lv\wm-\wm_{\kt}\rv_2\right) \\
\leq &\frac{3\epsilon}{t}\EE \left(  \sup_{\wm \in \Omega} \lv \nabla^2 \Jhn(\wm,\xm) \rv_2\right),
\end{aligned}
\end{equation*}
where \ding{172} holds because of Markov inequality.
%, we have that for an arbitrary nonnegative random variable $x$, then
%\begin{equation*}
%\begin{aligned}
%\Pro(x\geq t)\leq \frac{\EE(x)}{t}.
%\end{aligned}
%\end{equation*}
Then, we bound $\EE \left(  \sup_{\wm \in \Omega} \lv \nabla^2 \Jhn(\wm,\xm) \rv_2\right)$ as follows:
%\begin{equation*}
%\begin{aligned}
%\EE \left(\sup_{\wm \in \Omega} \lv \nabla^2 \Jhn(\wm,\xm) \rv_2\right)\leq& \EE \left(\sup_{\wm \in \Omega} \lv \frac{1}{n}\sum_{i=1}^n \nabla^2 f(\wm,\xm)\rv_2\right)\\
%=& \EE \left(\sup_{\wm \in \Omega} \lv \nabla^2 f(\wm,\xm)\rv_2 \right)\\
%\overset{\text{\ding{172}}}{\leq} & \varsigma,
%\end{aligned}
%\end{equation*}
\begin{equation*}
\begin{aligned}
\EE\!  \left(\!\sup_{\wm \in \Omega}\! \lv \nabla^2 \! \Jhn(\wm,\xm) \rv_2\!\right)\!\!\leq \!\EE \!\left(\!\sup_{\wm \in \Omega}\! \lv \frac{1}{n}\! \sum_{i=1}^n  \nabla^2\! f(\wm,\!\xm) \rv_2 \right)\!\!=\! \EE \left(\!\sup_{\wm \in \Omega} \!\lv\! \nabla^2\! f(\wm,\!\xm)\rv_2 \!\right)
\!\overset{\text{\ding{172}}}{\leq} \! \varsigma,
\end{aligned}
\end{equation*}
where \ding{172} holds since by Lemma~\ref{sig_gradient}, we have
\begin{equation*}\label{gradient1}
\begin{split}
\left\|\nabla^2_{\wm}f(\wm,\xm)\right\|_{\op}\leq \left\|\nabla^2_{\wm}f(\wm,\xm)\right\|_{F}
\leq& \varsigma,
\end{split}
\end{equation*}
where  $\varsigma=\sqrt{c_{s_1}c_rc_d^2 l^2\left(c_{s_2}c_d^2+l^2c_r\right)}$ in which $c_{d}=\max_i \dm_i$ and $c_r=\max\left(\frac{r^2}{16},\left(\frac{r^2}{16}\right)^{l-1}\right)$. Therefore, we have
\begin{equation*}
\begin{aligned}
\Pro\left( \Em_1 \right) \leq \frac{3\varsigma\epsilon}{t}.
\end{aligned}
\end{equation*}
 We further let
\begin{equation*}
t \geq \frac{6\varsigma\epsilon}{\varepsilon}.
\end{equation*}
Then we can bound $\Pro(\Em_1)$:
\begin{equation*}
\Pro(\Em_1)\leq \frac{\varepsilon}{2}.
\end{equation*}

%Let $\lam_{1/2}$ be the $\frac{1}{2}$-covering net of $\Bi{d}{1}$. Recall that we use $j_w$ to denote the index of $\wm^j_{k_j}$ in $\epsilon$-net $\wm_\epsilon^j$ and we have $j_w\in[\nee^j],\ (\nee^j\leq (3rl/\epsilon)^{\dm_j\dm_{j-1}})$.  Then we can bound $\Pro\left(\Em_2\right)$ as follows:j_w\in[\nee^j], j=[l]

{\bf\noindent{Step 2. Bound $\Pro\left(\Em_2\right)$}}: By Lemma~\ref{lem:2norm}, we know that for any vector $\xm\in\Rs{d}$, its $\ell_2$-norm can be computed as
\begin{equation*}
\|\xm\|_2 \leq \frac{1}{1-\epsilon} \sup_{\lam \in \lam_\epsilon} \lr \lam,\xm\rl.
\end{equation*}
where $\lam_\epsilon=\{\lam_1,\dots,\lam_{\kt}\}$ be an $\epsilon$-covering net of $\Bi{d}{1}$.

 Let $\lam_{1/2}$ be the $\frac{1}{2}$-covering net of $\Bi{d}{1}$. Recall that we use $j_w$ to denote the index of $\wm^j_{k_j}$ in $\epsilon$-net $\wm_\epsilon^j$ and we have $j_w\in[\nee^j],\ (\nee^j\leq (3rl/\epsilon)^{\dm_j\dm_{j-1}})$. Then we can bound $\Pro\left(\Em_2\right)$ as follows:
\begin{equation*}
\begin{aligned}
\Pro\left( \Em_2 \right) = & \Pro \left( \sup_{j_w\in[\nee^j], j=[l]} \lv\frac{1}{n}\sum_{i=1}^n \nabla f(\wm_{\kt},\xmi{i})-\EE(\nabla f(\wm_{\kt},\xm))\rv_2\geq \frac{t}{3} \right) \\
= & \Pro \left( \sup_{j_w\in[\nee^j], j=[l], \lam\in\lam_{1/2}} 2\lr\lam, \frac{1}{n}\sum_{i=1}^n \nabla f(\wm_{\kt},\xmi{i})-\EE\left(\nabla f(\wm_{\kt},\xm)\right)\rl\geq \frac{t}{3} \right) \\
\leq & 6^d\left(\frac{3lr}{\epsilon}\right)^{\sum_j\dm_j\dm_{j-1}}\!\!\!\!\!\!\!\!\!\! \sup_{j_w\in[\nee^j], j=[l], \lam\in\lam_{1/2}} \Pro \left(\frac{1}{n} \sum_{i=1}^n\lr\lam, \nabla f(\wm_{\kt},\xmi{i})-\EE\left(\nabla f(\wm_{\kt},\xm)\right)\rl\geq \frac{t}{6} \right).
%\leq & \left(\frac{3r}{\epsilon}\right)^d \sup_{j \in [n_\epsilon]} \Pro \left(  \lv\frac{1}{n}\sum_{i=1}^n \nabla f(\wm_{j},\xmi{i})-\EE(\nabla f(\wm_{j},\xm))\rv_2\geq \frac{t}{3}\right) \\
%\overset{\text{\ding{172}}}{\leq} & \left(\frac{3r}{\epsilon}\right)^d \sup_{j \in [n_\epsilon]} \Pro \left(  \frac{1}{n}\sum_{i=1}^n \lv \nabla f(\wm_{j},\xmi{i})-\EE(\nabla f(\wm_{j},\xm))\rv_2\geq \frac{t}{3}\right),
\end{aligned}
\end{equation*}

Since by Lemma~\ref{thm:gradsgfdsgdient12}, $\lr \ym,\nabla f(\wm,\xm)\rl$ is $8\beta^2\tau^2$-sub-Gaussian, \textit{i.e.}
\begin{equation*}
\EE\left(\lr\lam ,\nabla_{\wm}f(\wm,\xm)-\EE \nabla_{\wm}f(\wm,\xm)\rl \right)\leq \exp\left(\frac{8\beta^2\tau^2\|\lam \|_2^2}{2}\right),
\end{equation*}
where $\beta=\sqrt{\frac{2^6}{3^8}c_yc_r(l+2)\left(dc_r+(l-1)lc_dc_r+lc_d\right)}$ in which $c_y$, $c_d$ and $c_r$ are defined in Lemma~\ref{thm:gradsgfdsgdient12}. Thus, $\frac{1}{n}\sum_{i=1}^n \lr \ym,\nabla f(\wm,\xm)\rl$ is $8\beta^2\tau^2/n$-sub-Gaussian random variable. Thus, we can obtain
\begin{equation*}
\begin{aligned}
\Pro \left(\frac{1}{n} \sum_{i=1}^n\lr\ym, \nabla f(\wm_{\kt},\xmi{i})-\EE\left(\nabla f(\wm_{\kt},\xm)\right)\rl\geq \frac{t}{6} \right)\leq \exp\left(-\frac{n t^2}{72\beta^2\tau^2}\right).
\end{aligned}
\end{equation*}
Notice, $\sum_{j}\dm_j\dm_{j-1}=d$. In this case, the probability of $\Em_2$ is upper bounded as
\begin{equation*}
\begin{aligned}
\Pro\left(\Em_2\right)\leq \exp\left(-\frac{n t^2}{72\beta^2\tau^2}+ d\log\left(\frac{18r}{\epsilon}\right)\right).
\end{aligned}
\end{equation*}
Thus, if we set
\begin{equation*}
\begin{aligned}
t\geq \beta \tau\sqrt{\frac{72\left(d\log(18lr/\epsilon)+\log(4/\varepsilon)\right)}{n}},
\end{aligned}
\end{equation*}
then we have
\begin{equation*}
\Pro\left(\Em_2\right)\leq \frac{\varepsilon}{2}.
\end{equation*}

{\bf\noindent{Step 3. Bound $\Pro\left(\Em_3\right)$}}:
We first bound $\Pro\left(\Em_3\right)$ as follows:
\begin{equation*}
\begin{aligned}
\Pro\left( \Em_3 \right) = & \Pro \left( \sup_{\wm\in\Omega} \lv\EE(\nabla f(\wm_{\kt},\xm))-\EE(\nabla f(\wm,\xm))\rv_2 \geq \frac{t}{3} \right) \\
%= & \Pro \left( \sup_{\wm\in\Omega} \lv\EE\left(\frac{1}{n}\sum_{i=1}^n\left(f(\wm_{\kt},\xm)-f(\wm,\xm)\right)\right)\rv_2 \geq \frac{t}{3} \right) \\
=&  \Pro\left(\sup_{\wm\in\Omega} \frac{\lv \EE\left( \nabla f(\wm_{\kt},\xm)-\nabla f(\wm,\xm) \rv_2\right)}{\lv\wm-\wm_{\kt}\rv_2} \sup_{\wm\in\Omega} \lv\wm-\wm_{\kt}\rv_2 \geq \frac{t}{3}\right) \\
\leq&  \Pro\left(\epsilon\EE \sup_{\wm\in\Omega} \lv \nabla^2 \Jhn(\wm,\xm) \rv_2  \geq \frac{t}{3}\right) \\
\overset{\text{\ding{172}}}{\leq}&  \Pro\left(\varsigma \epsilon  \geq \frac{t}{3}\right).
\end{aligned}
\end{equation*}
where \ding{172} holds since by Lemma~\ref{sig_gradient}. We set $\epsilon$ enough small such that $\varsigma \epsilon  < t/3$ always holds. Then it yields $\Pro\left( \Em_3 \right)=0$.

{\bf\noindent{Step 4. Final result}}: To ensure $\Pro(\Em_0)\leq \varepsilon$, we just set $\epsilon=18r/n$ and
\begin{equation*}
\begin{split}
t\geq& \max\left(\frac{6\varsigma\epsilon}{\varepsilon},\ \beta \tau\sqrt{\frac{72\left(d\log(18lr/\epsilon)+\log(4/\varepsilon)\right)}{n}}\right)\\
=&\max\left(\frac{108\varsigma r}{n\varepsilon},\ \beta \tau \sqrt{\frac{72\left(d\log(nl)+\log(4/\varepsilon)\right)}{n}} \right).
\end{split}
\end{equation*}
%\begin{equation*}
%\begin{split}
%t\geq \max\left(\frac{6\varsigma\epsilon}{\varepsilon},\ \beta \tau\sqrt{\frac{72\left(d\log(18r/\epsilon)+\log(4/\epsilon)\right)}{n}}\right)=\max\left(\frac{108\varsigma r}{n},\ c_3'\beta \tau \sqrt{\frac{d\log(n/\varepsilon)+\log(4/\varepsilon)}{n}} \right).
%\end{split}
%\end{equation*}
%$\varsigma=\sqrt{c_{s_1}c_rc_d^2 l^2\left(c_{s_2}c_d^2+l^2c_r\right)}$
%
%$\beta=\sqrt{\frac{2^6}{3^8}c_yc_r(l+2)\left(dc_r+(l-1)lc_dc_r+lc_d\right)}$

Note that $\varsigma=\mathcal{O}(\sqrt{lc_d}\beta)$. Therefore, there exists a universal constant $c_{y'}$ such that if $n\geq c_{y'}c_d lr^2/(d\tau^2\varepsilon^2\log(l)$, then
\begin{equation*}
\sup_{\wm\in\Omega}\left\| \nabla\Jhn(\wm)\!-\!\nabla\Jm(\wm)\right\|_2\!\!\leq\! \tau\!\sqrt{\!\frac{512}{729}c_yc_r(l\!+\!2)\left(dc_r\!+\!(l\!-\!1) lc_dc_r\!+\!lc_d\right)} \sqrt{\!\frac{d\log(nl)\!+\!\log(4/\varepsilon)}{n}}
\end{equation*}
holds with probability at least $1-\varepsilon$.
\end{proof}

\subsubsection{ Proof of Theorem~\ref{thm:localminimal_sig}}

\begin{proof}[\hypertarget{uniformdfglocasig}{Proof}]
Suppose that $\{\wmii{1},\wmii{2},\cdots,\wmii{m}\}$ are the non-degenerate critical points of $\Jm(\wm)$. So for any $\wmii{k}$, it obeys
\begin{align*}
\inf_i \left|\lambda_i^k\left(\nabla^2 \Jm(\wmii{k})\right)\right| \geq \zeta,
\end{align*}
where $\lambda_i^k\left(\nabla^2 \Jm(\wmii{k})\right)$ denotes the $i$-th eigenvalue of the Hessian $\nabla^2 \Jm(\wmii{k})$ and $\zeta$ is a constant. We further define a set $D=\{\wm\in\Rs{d}\,|\,\|\nabla\Jm(\wm)\|_{2}\leq \epsilon\ \text{and}\ \inf_i |\lambda_i\left(\nabla^2 \Jm(\wmii{k})\right)| \geq \zeta \}$. According to Lemma~\ref{lemma:Decomposition}, $D = \cup_{k=1}^{\infty} D_k$ where each $D_k$ is a disjoint component with $\wmii{k}\in D_k$ for $k\leq m$  and $D_k$ does not contain any critical point of $\Jm(\wm)$ for $k\geq m+1$. On the other hand, by the continuity of $\nabla\Jm(\wm)$, it yields $\|\nabla\Jm(\wm)\|_2=\epsilon$ for $\wm\in\partial D_k$. Notice, we set the value of $\epsilon$ blow which is actually a function related $n$.

Then by utilizing Theorem~\ref{thm:uniformconvergence1_sig}, we let sample number $n$ sufficient large such that
\begin{equation*}
\sup_{\wm\in\Omega} \left\| \nabla\Jhn(\wm)-\nabla\Jm(\wm)\right\|_2\leq \beta\tau\sqrt{\frac{ d\log(nl)+\log(4/\varepsilon) }{n}}\triangleq \frac{\epsilon}{2}
\end{equation*}
where $\beta=\sqrt{\!\frac{512}{729}c_yc_r(l\!+\!2)\left(dc_r\!+\!(l\!-\!1) lc_dc_r\!+\!lc_d\right)} $,
holds with probability at least $1-\varepsilon$.  This further gives that for arbitrary $\wm\in D_k$, we have
\begin{align}
\inf_{\wm \in D_k}\left\| t \nabla\Jhn(\wm)+(1-t)\nabla\Jm(\wm)\right\|_2=&\inf_{\wm \in D_k}\left\| t \left(\nabla\Jhn(\wm)-\nabla\Jm(\wm)\right)+\nabla\Jm(\wm)\right\|_2 \notag\\
\geq &\inf_{\wm \in D_k} \left\|\nabla\Jm(\wm)\right\|_2- \sup_{\wm \in D_k}t\left\|\nabla\Jhn(\wm)-\nabla\Jm(\wm)\right\|_2\notag\\
\geq &\frac{\epsilon}{2}.\label{grasa23disdenthhhadfaf}
\end{align}
Similarly, by utilizing Lemma~\ref{thm:uniformconvergence1fds_sig}, let $n$ be sufficient large such that
\begin{equation*}
\sup_{\wm\in\Omega} \left\| \nabla^2\Jhn(\wm)-\nabla^2\Jm(\wm)\right\|_{\op}\leq c_m\gamma\tau\sqrt{\frac{d\log(nl)+\log(4/\varepsilon)}{n}}\leq \frac{\zeta}{2}
\end{equation*}
holds with probability at least $1-\varepsilon$.  Assume that $\bmm\in\Rs{d}$ is a vector and satisfies $\bmm^T\bmm=1$. In this case, we can bound $\lambda_i^k\left(\nabla^2 \Jhn(\wm)\right)$ for arbitrary $\wm\in D_k$ as follows:
\begin{equation*}
\begin{split}
\inf_{\wm \in D_k}\left|\lambda_i^k\left(\nabla^2 \Jhn(\wm)\right)\right|=&\inf_{\wm \in D_k}\min_{\bmm^T\bmm=1} \left|\bmm^T \nabla^2 \Jhn(\wm)\bmm\right|\\
=&\inf_{\wm \in D_k}\min_{\bmm^T\bmm=1} \left|\bmm^T \left(\nabla^2 \Jhn(\wm)-\nabla^2 \Jm(\wm)\right)\bmm+\bmm^T \nabla^2 \Jm(\wm)\bmm\right|\\
%=&\inf_{\wm \in D_k}\min_{\ym^T\ym=1} \left|\ym^T \left(\nabla^2 \Jhn(\wm)-\nabla^2 \Jm(\wm)\right)\ym\right|-\min_{\ym^T\ym=1}\left|\ym^T \nabla^2 \Jm(\wm)\ym\right|\\
\geq&\inf_{\wm \in D_k}\min_{\bmm^T\bmm=1}\left|\bmm^T \nabla^2 \Jm(\wm)\bmm\right|-\min_{\bmm^T\bmm=1} \left|\bmm^T \left(\nabla^2 \Jhn(\wm)-\nabla^2 \Jm(\wm)\right)\bmm\right|\\
\geq&\inf_{\wm \in D_k}\min_{\bmm^T\bmm=1}\left|\bmm^T \nabla^2 \Jm(\wm)\bmm\right|-\max_{\bmm^T\bmm=1} \left|\bmm^T \left(\nabla^2 \Jhn(\wm)-\nabla^2 \Jm(\wm)\right)\bmm\right|\\
=& \inf_{\wm \in D_k} \inf_i |\lambda_i^k\left(\nabla^2 f(\wm_{(k)},\xm)\right)-\left\|\nabla^2 \Jhn(\wm)-\nabla^2 \Jm(\wm)\right\|_{\op}\\
\geq &\frac{\zeta}{2}.
\end{split}
\end{equation*}
This means that in each set $D_k$, $\nabla^2\Jhn(\wm)$ has no zero eigenvalues. Then, combining this and Eqn.~\eqref{grasa23disdenthhhadfaf}, by Lemma~\ref{lemma:Stability2} we know that if the population risk $\Jm(\wm)$ has no critical point in $D_k$, then the empirical risk $\Jhn(\wm)$ has also no critical point in $D_k$; otherwise it also holds. By Lemma~\ref{lemma:Stability2}, we can also obtain that in $D_k$, if $\Jm(\wm)$ has a unique critical point $\wmii{k}$ with non-degenerate index $s_{k}$, then $\Jhn(\wm)$ also has a unique critical point $\wmin{k}$ in $D_k$ with the same non-degenerate index $s_k$. The first conclusion is proved.

Now we bound the distance between the corresponding critical points of $\Jm(\wm)$ and $\Jhn(\wm)$. Assume that in $D_k$, $\Jm(\wm)$ has a unique critical point $\wmii{k}$ and $\Jhn(\wm)$ also has a unique critical point $\wmin{k}$.
%In this case, we only need to bound the radius of $D_k$.
Then, there exists $t\in[0,1]$ such that for any $\zm\in\partial \Bi{d}{1}$, we have
\begin{align*}
\epsilon \geq& \|\nabla \Jm(\wmin{k})\|_2\\
=&\max_{\zm^T\zm=1}
\langle \nabla \Jm(\wmin{k}), \zm \rangle\\
 = &\max_{\zm^T\zm=1} \langle \nabla \Jm(\wmii{k}), \zm \rangle +\langle \nabla^2 \Jm(\wmii{k}+t(\wmin{k}-\wmii{k}))(\wmin{k}-\wmii{k}), \zm \rangle\\
\overset{\text{\ding{172}}}{\geq} & \lr \left(\nabla^2 \Jm(\wmii{k})\right)^2(\wmin{k}-\wmii{k}), (\wmin{k}-\wmii{k})\rl^{1/2}\\
\overset{\text{\ding{173}}}{\geq} & \zeta \|\wmin{k}-\wmii{k}\|_2,
\end{align*}
where \ding{172} holds since $\nabla \Jm(\wmii{k})=\bm{0}$ and \ding{173} holds since $\wmii{k}+t(\wmin{k}-\wmii{k})$ is in $D_k$ and for any $\wm \in D_k$ we have $\inf_i |\lambda_i\left(\nabla^2 \Jm(\wm)\right)| \geq \zeta$. Consider the conditions in Lemma~\ref{thm:uniformconvergence1fds_sig} and Theorem~\ref{thm:uniformconvergence1_sig}, we can obtain that
if $n\geq c_s\max(c_d lr^2/(d\tau^2\varepsilon^2\log(l)), d\log(l)/\zeta^2)$ where $c_s$ is a constant, then
\begin{align*}
\|\wmin{k}-\wmii{k}\|_2\leq \frac{2\tau}{\zeta} \sqrt{\!\frac{512}{729}c_yc_r(l\!+\!2)\left(dc_r\!+\!(l\!-\!1) lc_dc_r\!+\!lc_d\right)} \sqrt{\!\frac{d\log(nl)\!+\!\log(2/\varepsilon)}{n}}
\end{align*}
holds with probability at least $1-\varepsilon$. The proof is completed.
\end{proof}

\subsection{Proof of Other Lemmas} \label{definitiondsfadf}
\subsubsection{Proof of Lemma~\ref{asfashdfhfdet}}
\begin{proof}[\hypertarget{lemma21}{Proof}]
Since $\mcode{G}(\umi{i})$ is a diagonal matrix and its diagonal values are upper bounded by $\sigma(\umi{i}_h)(1-\sigma(\umi{i}_h))\leq 1/4$ where $\umi{i}_h$ denotes the $h$-th entry of $\umi{i}$, we can conclude
\begin{equation*}\label{gradient1}
\begin{split}
\|\mcode{G}(\umi{i})\Mm\|_F^2\leq \frac{1}{16}\|\Mm\|_F^2\quad \text{and} \quad \|\Nm\mcode{G}(\umi{i})\|_F^2\leq \frac{1}{16}\|\Nm\|_F^2.
\end{split}
\end{equation*}
Note that $\Pm_k$ is a matrix of size $\dm_k^2\times \dm_k$ whose $((s-1)\dm_k+s,s)$ $(s=1,\cdots,\dm_k)$ entry equal to $\sigma(\umi{k}_{s})(1-\sigma(\umi{k}_{s}))(1-2\sigma(\umi{k}_{s}))$ and rest entries are all $0$. This gives
\begin{equation*}\label{gsfasgsdgsadasgaradient1}
\begin{split}
\sigma(\umi{k}_{s})(1-\sigma(\umi{k}_{s}))(1-2\sigma(\umi{k}_{s}))=&\frac{1}{3} (3\sigma(\umi{k}_{s}))(1-\sigma(\umi{k}_{s}))(1-2\sigma(\umi{k}_{s}))\\
\leq& \frac{1}{3} \left(\frac{3\sigma(\umi{k}_{s})+1-\sigma(\umi{k}_{s})+1-2\sigma(\umi{k}_{s})}{3}\right)^3\\
\leq& \frac{2^3}{3^4}.
\end{split}
\end{equation*}
This means the maximal value in $\Pm_k$ is at most  $\frac{2^3}{3^4}$. Consider the structure in $\Pm_k$, we can obtain
\begin{equation*}\label{gradient1}
\begin{split}
\|\Pm_k\Mm\|_F^2\leq \frac{2^6}{3^8}\|\Mm\|_F^2\quad \text{and}\quad \|\Nm\Pm_k\|_F^2\leq \frac{2^6}{3^8}\|\Nm\|_F^2.
\end{split}
\end{equation*}

As for $\Bm_{s:t}$, we have
\begin{equation*}\label{gradient1}
\begin{split}
\left\|\Bm_{s:t}\right\|_F^2\leq& \left\|\Am_s\right\|_F^2 \left\|\Am_{s+1}\right\|_F^2 \cdots \left\|\Am_{t}\right\|_F^2\\
=&\left\|(\Wm^s)^T\mcode{G}(\umi{s})\right\|_F^2\left\|(\Wmi{s+1})^T\mcode{G}(\umi{s+1})\right\|_F^2\cdots\left\|(\Wmi{t})^T\mcode{G}(\umi{t})\right\|_F^2\\
\leq&\frac{1}{16^{t-s+1}}\left\|\Wmi{s}\right\|_F^2\left\|\Wmi{s+1}\right\|_F^2\cdots\left\|\Wmi{t}\right\|_F^2\\
=&\frac{1}{16^{t-s+1}}\Dm_{s:t}.
\end{split}
\end{equation*}
Since the $\ell_2$-norm of each $\wmi{j}$ is bounded, \emph{i.e.} $\|\wmi{j}\|_2\leq r$, we can obtain
\begin{equation*}\label{gradient1}
\begin{split}
\frac{1}{16^{t-s+1}}\Dm_{s:t} \!\leq\! \frac{1}{16^{t-s+1}} r^{2(t-s+1)}=\left(\frac{r}{4}\right)^{2(t-s+1)}\triangleq \!c_{st}.
\end{split}
\end{equation*}
The proof is completed.
\end{proof}

\subsubsection{Proof of Lemma~\ref{sig_gdgsaghhradsdfsagfasgient}}
\begin{proof}[\hypertarget{lemma22}{Proof}]
By utilizing the chain rule in Eqn.~\eqref{chain_rule} in Sec.~\ref{afsafsaggjx}, we can easily compute $\frac{\partial  f(\wm,\xm)}{\partial \umi{i}}$ and $\frac{\partial  f(\wm,\xm)}{\partial \vmi{i}}$ as follows:
\begin{equation*}\label{gsfasgdasgaradient1}
\begin{split}
\frac{\partial  f(\wm,\xm)}{\partial \umi{i}}=\mcode{G}(\umi{i})\Am_{i+1}\cdots\Am_{l}(\vmi{l}-\ym)=\mcode{G}(\umi{i})\Bm_{i+1:l} (\vmi{l}-\ym)
\end{split}
\end{equation*}
and
\begin{equation*}\label{gsfasgdasgaradient1}
\begin{split}
\frac{\partial  f(\wm,\xm)}{\partial \vmi{i}}=\Am_{i+1}\cdots\Am_{l}(\vmi{l}-\ym)=\Bm_{i+1:l}(\vmi{l}-\ym).
\end{split}
\end{equation*}
Therefore, we can further obtain
\begin{equation*}\label{gsfasgsdgsadasgaradient1}
\begin{split}
&\frac{\partial  f(\wm,\xm)}{\partial \wmi{j}}\\
=&\vect{\left(\mcode{G}(\umi{j}) \Am_{j+1}\Am_{j+2}\cdots \Am_{l}(\vmi{l}-\ym)\right)(\vmi{j-1})^T}\\
%=&\vect{\left(\mcode{G}(\umi{j}) \Am^{j+1}\Am^{j+2}\cdots\Am^{i-1}(\Wmi{i})^T\mcode{G}(\umi{i}) \Am^{i+1}\cdots \Am^{l}(\vmi{l}-\ym)\right)(\vmi{j-1})^T}\\
=&\vect{\left(\mcode{G}(\umi{j}) \Am_{j+1}\Am_{j+2}\cdots\Am_{i-1}(\Wmi{i})^T\right)\! \!\left(\mcode{G}(\umi{i})\Am^{i+1} \cdots \Am_{l}(\vmi{l}-\ym)\right) \!(\vmi{j-1})^T}\\
%=&\vect{ \left(\mcode{G}(\umi{i})\Am^{i+1} \cdots \Am^{l}(\vmi{l}-\ym)\right) \left(\mcode{G}(\umi{j}) \Am^{j+1}\Am^{j+2}\cdots\Am^{i-1}(\Wmi{i})^T\right)(\vmi{j-1})^T}\\
=&\left(\vmi{j-1}\! \otimes \!\left(\mcode{G}(\umi{j}) \Am_{j+1}\Am_{j+2}\cdots\Am_{i-1}(\Wmi{i})^T\right)\! \right)\!\vect{\mcode{G}(\umi{i})\Am_{i+1}\cdots \Am_{l}(\vmi{l}-\ym)}\\
=&\left(\vmi{j-1} \otimes \left(\mcode{G}(\umi{j}) \Am_{j+1}\Am_{j+2}\cdots\Am_{i-1}(\Wmi{i})^T\right) \right)\left(\frac{\partial  f(\wm,\xm)}{\partial \umi{i}}\right).
\end{split}
\end{equation*}
Note that we have $\frac{\partial  f(\wm,\xm)}{\partial \wmi{j}}=\frac{\partial \umi{i}}{\partial \wmi{j}}\left(\frac{\partial  f(\wm,\xm)}{\partial \umi{i}}\right)$. This gives \begin{equation*}\label{gsfasgsdgsadasgaradient1}
\begin{split}
\frac{\partial \umi{i}}{\partial \wmi{j}}
%=(\vmi{j-1})^T\otimes\left(\mcode{G}(\umi{j}) \Am_{j+1}\Am_{j+2}\cdots\Am_{i-1}(\Wmi{i})^T\right)^T
=(\vmi{j-1})^T\otimes\left(\mcode{G}(\umi{j}) \Bm_{j+1:i-1}(\Wmi{i})^T\right)^T\in\Rss{\dm_i}{\dm_j\dm_{j-1}} \ (i>j).
\end{split}
\end{equation*}
When $i=j$, we have
\begin{equation*}\label{gsfasgsdgsadasgaradient1}
\begin{split}
\frac{\partial \umi{i}}{\partial \wmi{i}}
= (\vmi{i-1})^T\otimes \Im_{\dm_i}\in\Rss{\dm_i}{\dm_i\dm_{i-1}}.
\end{split}
\end{equation*}
%On the other hand, we have
%\begin{equation*}\label{gsfasgdasgaradient1}
%\begin{split}
%\frac{\partial  f(\wm,\xm)}{\partial \vmi{i}}=\Am_{i+1}\cdots\Am_{l}(\vmi{l}-\ym).
%\end{split}
%\end{equation*}
Similarly, we can obtain
\begin{equation*}\label{gsfasgsdgsadasgaradient1}
\begin{split}
\frac{\partial \vmi{i}}{\partial \wmi{j}}
\!=\!(\vmi{j-1})^T \!\!\otimes\! \left(\mcode{G}(\umi{j}) \Am_{j+1}\Am_{j+2}\cdots\Am_{i}\right)^T\!\!=\!\!(\vmi{j-1})^T \!\!\otimes\! \left(\mcode{G}(\umi{j}) \Bm_{j+1:i}\right)^T\!\!\!\in\!\!\Rss{\dm_i}{\dm_j\dm_{j-1}} \ (i\!\geq\! j).
\end{split}
\end{equation*}
The proof is completed.
\end{proof}

\subsubsection{Proof of Lemma~\ref{sig_gdgsasdfkjl45asgient}}
\begin{proof}
By Lemma~\ref{sig_gdgsaghhradsdfsagfasgient}, we have
\begin{equation*}\label{gsfasgdasgaradient1}
\begin{split}
\frac{\partial  f(\wm,\xm)}{\partial \umi{i}}=\mcode{G}(\umi{i})\Bm_{i+1:l} (\vmi{l}-\ym)\quad \text{and}\quad \frac{\partial  f(\wm,\xm)}{\partial \vmi{i}}=\Bm_{i+1:l}(\vmi{l}-\ym).
\end{split}
\end{equation*}

Therefore, we can further obtain
\begin{equation*}\label{gsfasgsdgsadasgaradient1}
\begin{split}
\frac{\partial  f(\wm,\xm)}{\partial \umi{1}}=&\mcode{G}(\umi{1})\Am_{2}\cdots\Am_{l}(\vmi{l}-\ym)\\
=&\mcode{G}(\umi{1})\Am_{2}\cdots\Am_{j-1}(\Wm^j)^T\mcode{G}(\umi{j})\Am_{j+1}\cdots\Am_{l}(\vmi{l}-\ym)\\
=& \left(\mcode{G}(\umi{1})\Am_{2}\cdots\Am_{j-1}(\Wm^j)^T\right)
\left(\frac{\partial  f(\wm,\xm)}{\partial \umi{j}}\right).
\end{split}
\end{equation*}
Note that we have $\frac{\partial  f(\wm,\xm)}{\partial \umi{1}}=\left(\frac{\partial \umi{j}}{\partial \umi{1}}\right)^T\left(\frac{\partial  f(\wm,\xm)}{\partial \umi{j}}\right)$. This gives \begin{equation*}\label{gsfasgsdgsadasgaradient1}
\begin{split}
\frac{\partial \umi{j}}{\partial \umi{1}}
=\left(\mcode{G}(\umi{1})\Am_{2}\cdots\Am_{j-1}(\Wm^j)^T \right)^T=\left(\mcode{G}(\umi{1})\Bm_{2:j-1}(\Wm^j)^T \right)^T \in\Rss{\dm_j}{\dm_1} \ (j>1).
\end{split}
\end{equation*}

Similarly, we can obtain
\begin{equation*}\label{gsfasgsdgsadasgaradient1}
\begin{split}
\frac{\partial \vmi{j}}{\partial \umi{1}}
= \left(\mcode{G}(\umi{1})\Am_{2}\cdots\Am_{j} \right)^T= \left(\mcode{G}(\umi{1})\Bm_{2:j}\right)^T \in\Rss{\dm_j}{\dm_{1}} \ (j>1).
\end{split}
\end{equation*}
The proof is completed.
\end{proof}

\end{document}